\newcommand{\arxiv}[1]{\iftoggle{icml}{}{#1}}
\newcommand{\icml}[1]{\iftoggle{icml}{#1}{}}
\global\togglefalse{icml}
\newcommand{\loose}{\looseness=-1}
\newcommand{\multiline}[1]{\parbox[t]{\dimexpr\linewidth-\algorithmicindent}{#1}}
\newcommand{\neutralize}[1]{\expandafter\let\csname c@#1\endcsname\count@}
\declaretheorem[name=Theorem,parent=section]{theorem}
\declaretheorem[name=Lemma,parent=section]{lemma}
\declaretheorem[name=Assumption, parent=section]{assumption}
\declaretheorem[name=Condition, parent=section]{condition}
\declaretheorem[qed=$\triangleleft$,name=Example,style=definition, parent=section]{example}
\declaretheorem[name=Remark,style=definition, parent=section]{remark}
\declaretheorem[name=Proposition, parent=section]{proposition}
  \renewenvironment{proof}[1][Proof]%
  {%
   \par\noindent{\bfseries\upshape {#1.}\ }%
  }%
  {\qed\newline}
\theoremstyle{definition}  %
\newtheorem{corollary}{Corollary}[section]
\theoremstyle{plain}
\newtheorem{definition}{Definition}[section]
\xpatchcmd{\proof}{\itshape}{\normalfont\proofnameformat}{}{}
\newcommand{\proofnameformat}{\bfseries}
\newcommand{\pref}[1]{\cref{#1}}
\newcommand{\pfref}[1]{Proof of \pref{#1}}
\newcommand{\savehyperref}[2]{\texorpdfstring{\hyperref[#1]{#2}}{#2}}
\renewcommand{\eqref}[1]{\texorpdfstring{\hyperref[#1]{(\ref*{#1})}}{(\ref*{#1})}}
\Crefname{assumption}{Assumption}{Assumptions}
\Crefname{subsubsection}{Section}{Sections}
    \let\Cref\crtCref
    \let\cref\crtcref
\newcommand{\creftitle}[1]{\crtcref{#1}}
\DeclareDocumentCommand{\XDeclarePairedDelimiter}{mm}
 {
  \__egreg_delimiter_clear_keys: %
  \keys_set:nn { egreg/delimiters } { #2 }
  \use:x %
   {
    \exp_not:n {\NewDocumentCommand{#1}{sO{}m} }
     {
      \exp_not:n { \IfBooleanTF{##1} }
       {
        \exp_not:N \egreg_paired_delimiter_expand:nnnn
         { \exp_not:V \l_egreg_delimiter_left_tl }
         { \exp_not:V \l_egreg_delimiter_right_tl }
         { \exp_not:n { ##3 } }
         { \exp_not:V \l_egreg_delimiter_subscript_tl }
       }
       {
        \exp_not:N \egreg_paired_delimiter_fixed:nnnnn 
         { \exp_not:n { ##2 } }
         { \exp_not:V \l_egreg_delimiter_left_tl }
         { \exp_not:V \l_egreg_delimiter_right_tl }
         { \exp_not:n { ##3 } }
         { \exp_not:V \l_egreg_delimiter_subscript_tl }
       }
     }
   }
 }
\XDeclarePairedDelimiter{\supnorm}{
  left=\lVert,
  right=\rVert,
  subscript=\infty
  }
\DeclarePairedDelimiter{\abs}{\lvert}{\rvert} %
\DeclarePairedDelimiter{\brk}{[}{]}
\DeclarePairedDelimiter{\crl}{\{}{\}}
\DeclarePairedDelimiter{\prn}{(}{)}
\DeclarePairedDelimiter{\nrm}{\|}{\|}
\DeclarePairedDelimiter{\tri}{\langle}{\rangle}
\DeclareMathOperator{\En}{\mathbb{E}}
\DeclareMathOperator{\Enbar}{\widebar{\mathbb{E}}}
\newcommand{\Eh}{\wh{\bbE}}
\DeclareMathOperator*{\argmin}{arg\,min} %
\DeclareMathOperator*{\argmax}{arg\,max}
\newcommand{\wt}[1]{\widetilde{#1}}
\newcommand{\wh}[1]{\widehat{#1}}
\newcommand{\wb}[1]{\widebar{#1}}
\def\ddefloop#1{\ifx\ddefloop#1\else\ddef{#1}\expandafter\ddefloop\fi}
\def\ddef#1{\expandafter\def\csname bb#1\endcsname{\ensuremath{\mathbb{#1}}}}
\def\ddefloop#1{\ifx\ddefloop#1\else\ddef{#1}\expandafter\ddefloop\fi}
\def\ddef#1{\expandafter\def\csname b#1\endcsname{\ensuremath{\mathbf{#1}}}}
\def\ddef#1{\expandafter\def\csname sf#1\endcsname{\ensuremath{\mathsf{#1}}}}
\def\ddef#1{\expandafter\def\csname c#1\endcsname{\ensuremath{\mathcal{#1}}}}
\def\ddef#1{\expandafter\def\csname h#1\endcsname{\ensuremath{\widehat{#1}}}}
\def\ddef#1{\expandafter\def\csname hc#1\endcsname{\ensuremath{\widehat{\mathcal{#1}}}}}
\def\ddef#1{\expandafter\def\csname t#1\endcsname{\ensuremath{\widetilde{#1}}}}
\def\ddef#1{\expandafter\def\csname tc#1\endcsname{\ensuremath{\widetilde{\mathcal{#1}}}}}
\def\ddefloop#1{\ifx\ddefloop#1\else\ddef{#1}\expandafter\ddefloop\fi}
\def\ddef#1{\expandafter\def\csname scr#1\endcsname{\ensuremath{\mathscr{#1}}}}
\newcommand{\ind}{\mathbbm{1}}    %
\newcommand{\eps}{\epsilon}
\newcommand{\veps}{\varepsilon}
\newcommand{\ldef}{\vcentcolon=}
\newcommand{\rdef}{=\vcentcolon}
\newcommand{\desone}{\savehyperref{item:des1}{(1)}\xspace}
\newcommand{\destwo}{\savehyperref{item:des2}{(2)}\xspace}
\newcommand{\desthree}{\savehyperref{item:des3}{(3)}\xspace}
\newcommand{\npsdp}{n_{\mathsf{psdp}}}
\newcommand{\Nweight}{N_{\mathsf{weight}}}
\newcommand{\Npsdp}{N_{\mathsf{psdp}}}
\newcommand{\Nopt}{N_{\mathsf{opt}}}
\newcommand{\Pibar}{\wb{\Pi}}
\newcommand{\Pbar}{\wb{P}}
\newcommand{\bbPbar}{\wb{\bbP}}
\newcommand{\term}{\mathfrak{t}}
\newcommand{\cAbar}{\wb{\cA}}
\newcommand{\cXbar}{\wb{\cX}}
\newcommand{\PibarRNS}{\Pibar_{\mathrm{RNS}}}
\newcommand{\epsopt}{\eps_{\mathsf{opt}}}
\newcommand{\deltaopt}{\delta_{\mathsf{opt}}}
\newcommand{\epsweight}{\eps_{w}}
\newcommand{\deltaweight}{\delta_{w}}
\newcommand{\nweight}{n_{\mathsf{weight}}}
\newcommand{\estimateweight}{\texttt{EstimateWeightFunction}\xspace}
\newcommand{\policyopt}{\texttt{PolicyOptimization}\xspace}
\newcommand{\pCovbar}{\wb{\obj}_{\mathrm{push};h,\veps}}
\newcommand{\pCovbarell}{\wb{\obj}_{\mathrm{push};\ell,\veps}}
\newcommand{\dest}{d_{\mathsf{est}}}
\newcommand{\Best}{B_{\mathsf{est}}}
\newcommand{\vepspac}{\eps}        %
\newcommand{\decopac}[1][\gamma]{\pdec_{#1}}
\newcommand{\psdp}{\texttt{PSDP}\xspace}
\newcommand{\npg}{\texttt{NPG}\xspace}
\newcommand{\fqi}{\texttt{FQI}\xspace}
\newcommand{\reinforce}{\texttt{REINFORCE}\xspace}
\newcommand{\maxent}{\texttt{MaxEnt}\xspace}
\newcommand{\mountaincar}{\texttt{MountainCar}\xspace}
\newcommand{\pendulum}{\texttt{Pendulum}\xspace}
\newcommand{\EstOn}{\mathrm{\mathbf{Est}}^{\mathsf{on}}_{\mathsf{H}}(T)}
\newcommand{\EstOfft}[1][T]{\mathrm{\mathbf{Est}}^{\mathsf{off}}_{\mathsf{H}}(#1)}
\newcommand{\EstProbOn}{\mathrm{\mathbf{Est}}_{\mathsf{H}}^{\mathsf{on}}(\cM,T,\delta)}
\newcommand{\EstProbOff}{\mathrm{\mathbf{Est}}_{\mathsf{H}}^{\mathsf{off}}(\cM,T,\delta)}
\newcommand{\Ccov}{C_{\mathsf{cov}}}
\newcommand{\mainalg}{\texttt{CODEX}\xspace}  %
\newcommand{\mainalgr}{\texttt{CODEX.R}\xspace} %
\newcommand{\mfalg}{\texttt{CODEX.W}\xspace}  %
\newcommand{\mle}{\texttt{MLE}\xspace}  %
\newcommand{\obj}{\Psi}
\newcommand{\cov}{\mathsf{Cov}}
\newcommand{\Cov}{\obj_{h,\veps}}
\newcommand{\CovM}[1][M]{\obj_{h,\veps}^{\sss{#1}}}
\newcommand{\CovMveps}[2][M]{\obj_{h,#2}^{\sss{#1}}}
\newcommand{\CovOpt}{\cov_{h,\veps}}
\newcommand{\CovOptM}[1][M]{\cov_{h,\veps}^{\sss{#1}}}
\newcommand{\CovOptMmax}[1][M]{\cov_{\veps}^{\sss{#1}}}
\newcommand{\CovOptMmaxzero}[1][M]{\cov_{0}^{\sss{#1}}}
\newcommand{\ICovM}[1][M]{\obj_{\infty;h,\veps}^{\sss{#1}}}
\newcommand{\pCovM}[1][M]{\obj_{\mathsf{push};h,\veps}^{\sss{#1}}}
\newcommand{\pCovOptM}[1][M]{\cov_{\mathsf{push};h,\veps}^{\sss{#1}}}
\newcommand{\Cinf}{C_{\infty}}
\newcommand{\ConeM}[1][M]{C_{\mathsf{avg}}^{\sss{#1}}}
\newcommand{\CinfM}[1][M]{C_{\infty}^{\sss{#1}}}
\newcommand{\CpushM}[1][M]{C_{\mathsf{push}}^{\sss{#1}}}
\newcommand{\CfeatM}[1][M]{C_{\phi}^{\sss{#1}}}
\newcommand{\ConehM}[1][M]{C_{\mathsf{avg};h}^{\sss{#1}}}
\newcommand{\CinfhM}[1][M]{C_{\infty;h}^{\sss{#1}}}
\newcommand{\CpushhM}[1][M]{C_{\mathsf{push};h}^{\sss{#1}}}
\newcommand{\CfeathM}[1][M]{C_{\phi;h}^{\sss{#1}}}
\newcommand{\CfeathoM}[1][M]{C_{\phi;h-1}^{\sss{#1}}}
\newcommand{\mainobj}{$L_1$-Coverage\xspace}
\newcommand{\mainopt}{$L_1$-Coverability\xspace}
\newcommand{\linfcov}{$L_{\infty}$-Coverability\xspace}
\newcommand{\pushforwardcov}{Pushforward Coverability\xspace}
\newcommand{\phistar}{\phi^{\star}}
\newcommand{\Dhelsnu}[2]{D^{2}_{\mathsf{H},\nu}\prn*{#1,#2}}
\newcommand{\wstar}{w^{\star}}
\newcommand{\dtil}{\wt{d}}
\newcommand{\dcheck}{\check{d}}
\newcommand{\sM}{\sss{M}}
\newcommand{\sMstar}{\sss{\Mstar}}
\newcommand{\sMbar}{\sss{\Mbar}}
\newcommand{\sMhat}{\sss{\Mhat}}
\newcommand{\pimhat}{\pi\subs{\Mhat}}
\newcommand{\Jm}[1][M]{J\sups{#1}}
\newcommand{\Jmhat}[1][\Mhat]{J\sups{#1}}
\newcommand{\Jmstar}[1][\Mstar]{J\sups{#1}}
  \newcommand{\sfrak}{\mathfrak{s}}
\newcommand{\rhs}{right-hand side\xspace}
\newcommand{\pdec}{\normalfont{\textsf{p-dec}}}
\newcommand{\Emhat}[1][\pi]{\En^{\sss{\Mhat},#1}}
\renewcommand{\c}{\mathrm{c}}
\newcommand{\fmp}{f\sups{M'}}
\newcommand{\errm}[1][M]{\mathrm{err}\sups{#1}}
\renewcommand{\emptyset}{\varnothing}
\newcommand{\filt}{\mathscr{F}}
\newcommand{\hist}{\mathfrak{H}}
\newcommand{\act}{\pi}
\newcommand{\obs}{o}
\newcommand{\etd}{\textsf{E\protect\scalebox{1.04}{2}D}\xspace}
\newcommand{\M}[1]{^{{\scriptscriptstyle M}}}  %
\newcommand{\sups}[1]{^{{\scriptscriptstyle#1}}}
\newcommand{\subs}[1]{_{{\scriptscriptstyle#1}}}
\newcommand{\sss}[1]{{\scriptscriptstyle#1}}
\newcommand{\fm}[1][M]{f\sups{#1}}
\newcommand{\pim}[1][M]{\pi_{\sss{#1}}}
\newcommand{\fmbar}{f\sups{\Mbar}}
\newcommand{\pimstar}{\pi\subs{\Mstar}}
\newcommand{\pistar}{\pi_{\star}}
\newcommand{\pihat}{\wh{\pi}}
\newcommand{\Mbar}{\wb{M}}
\newcommand{\Rm}[1][M]{R\sups{#1}}
\newcommand{\Pm}[1][M]{P\sups{#1}}
\newcommand{\Pmstar}[1][\Mstar]{P\sups{#1}}
\newcommand{\Rmstar}[1][\Mstar]{R\sups{#1}}
\newcommand{\Pmhat}[1][\Mhat]{P\sups{#1}}
\newcommand{\Rmhat}[1][\Mhat]{R\sups{#1}}
\newcommand{\Pmbar}{P\sups{\Mbar}}
\newcommand{\Rmbar}{R\sups{\Mbar}}
\newcommand{\PiNS}{\Pi_{\mathsf{ns}}} %
\newcommand{\PiRNS}{\Pi_{\mathsf{rns}}} %
\newcommand{\PiGen}{\Pi_{\mathrm{RNS}}} %
\newcommand{\dm}[2]{d^{\sss{#1},#2}}
\newcommand{\dbar}{\bar{d}}
  \newcommand{\AlgEst}{\mathrm{\mathbf{Alg}}_{\mathsf{Est}}}
\newcommand{\Mhat}{\wh{M}}
\newcommand{\Mstar}{M^{\star}}
\newcommand{\trn}{\top}
\newcommand{\approxleq}{\lesssim}
\renewcommand{\ind}[1]{^{{\scriptscriptstyle#1}}}
\newcommand{\bigoh}{O}
\newcommand{\bigoht}{\wt{O}}
\newcommand{\bigom}{\Omega}
\newcommand{\indic}{\mathbb{I}}
\newcommand{\poly}{\mathrm{poly}}
\newcommand{\Dkl}[2]{D_{\mathsf{KL}}\prn*{#1\,\|\,#2}}
\newcommand{\Dhel}[2]{D_{\mathsf{H}}\prn*{#1,#2}}
\newcommand{\Dhels}[2]{D^{2}_{\mathsf{H}}\prn*{#1,#2}}
\newcommand{\DhelX}[3]{D_{\mathsf{H}}\prn[#1]{#2,#3}}
\newcommand{\DhelsX}[3]{D^{2}_{\mathsf{H}}\prn[#1]{#2,#3}}
\newcommand{\Vf}{V}
\newcommand{\Qf}{Q}
\newcommand{\Qhat}{\wh{\Qf}}
\newcommand{\Qbar}{\widebar{Q}}
\newcommand{\Vhat}{\wh{\Vf}}
\newcommand{\unif}{\mathrm{Unif}}
\newcommand{\piunif}{\pi_{\mathrm{unif}}}
\newcommand{\supp}{\mathrm{supp}}
\newcommand{\mathand}{\quad\text{and}\quad}
\def\multiset#1#2{\ensuremath{\left(\kern-.3em\left(\genfrac{}{}{0pt}{}{#1}{#2}\right)\kern-.3em\right)}}
\renewcommand{\emptyset}{\varnothing}
\newcommand{\what}{\wh{w}}
\newcommand{\dfc}[1]{\dfcomment{#1}}
\newcommand{\akc}[1]{\akcomment{#1}}
\newcommand{\cut}[1]{\cutcomment{#1}}
\let\OldStatex\Statex
\renewcommand{\Statex}[1][3]{%
  \setlength\@tempdima{\algorithmicindent}%
  \OldStatex\hskip\dimexpr#1\@tempdima\relax}
\let\oldparagraph\paragraph
\renewcommand{\paragraph}[1]{\oldparagraph{#1.}}
\newcommand{\fakepar}[1]{\arxiv{\paragraph{#1}}\icml{\noindent\textbf{#1.}}}
    \title{Scalable Online Exploration via Coverability}
   \author{Philip Amortila\thanks{Authors listed in alphabetical order.} 
\\
\normalsize
\href{mailto:philipa4@illinois.edu}{\texttt{philipa4@illinois.edu}}
\and
Dylan J. Foster$^*$
\\
\normalsize
\href{mailto:dylanfoster@microsoft.com}{\texttt{dylanfoster@microsoft.com}}
\and
Akshay Krishnamurthy$^*$
\\
\normalsize
\href{mailto:akshaykr@microsoft.com}{\texttt{akshaykr@microsoft.com}}
}
\date{}
\icmltitlerunning{Scalable Online Exploration via Coverability}
\begin{document}

\arxiv{\maketitle

}
\icml{
\twocolumn[
\icmltitle{Scalable Online Exploration via Coverability}

\icmlsetsymbol{equal}{*}

\begin{icmlauthorlist}
\icmlauthor{Philip Amortila}{equal,yyy}
\icmlauthor{Dylan J. Foster}{equal,comp}
\icmlauthor{Akshay Krishnamurthy}{equal,comp}
\end{icmlauthorlist}

\icmlaffiliation{yyy}{University of Illinois, Urbana-Champaign.}
\icmlaffiliation{comp}{Microsoft Research. Full version appears at {\color{red} \href{https://arxiv.org/abs/2403.06571}{[arXiv:2403.06571]}}}

\vskip 0.3in
]
\printAffiliationsAndNotice{}  %
}

\begin{abstract}
Exploration is a major challenge in reinforcement learning,
especially for high-dimensional domains
that require
function approximation. %
We propose \emph{exploration objectives}---policy optimization
  objectives that enable downstream maximization of any reward function---as a conceptual framework to systematize the study of exploration.
\arxiv{Within this framework, we}\icml{We} introduce a new
  objective, \emph{\mainobj}, which generalizes previous exploration
  schemes and supports three fundamental desiderata:\loose

\icml{\begin{enumerate}[leftmargin=*]}
  \arxiv{\begin{enumerate}}
\item \emph{Intrinsic complexity control.} \mainobj is associated with
  a structural parameter, \emph{\mainopt},
  which reflects the
      intrinsic statistical difficulty of the underlying MDP, subsuming Block and Low-Rank MDPs.

\item \akedit{\emph{Efficient planning.} For a known MDP, optimizing \mainobj efficiently reduces to standard policy optimization, 
  allowing flexible integration with off-the-shelf methods such as policy gradient and Q-learning approaches.} 

\item \emph{Efficient exploration.} \mainobj enables the first computationally efficient model-based and model-free algorithms for online (reward-free or reward-driven) reinforcement learning in MDPs with low coverability. \loose %

\end{enumerate}
\akedit{Empirically}, we find that \mainobj effectively drives off-the-shelf policy
optimization algorithms to explore the
state space.\loose

\end{abstract}

\section{Introduction}
\label{sec:intro}

Many applications of reinforcement learning and control
demand agents to maneuver through complex environments with
high-dimensional state spaces that necessitate function approximation \akedit{and sophisticated exploration}.
Toward addressing the high sample complexity of existing empirical paradigms \citep{mnih2015human,
  silver2016mastering,kober2013reinforcement,lillicrap2015continuous,li2016deep}, a recent body of theoretical
research provides structural conditions that facilitate
sample-efficient exploration, as well as an understanding of fundamental limits
\citep{russo2013eluder,jiang2017contextual,wang2020provably,du2021bilinear,jin2021bellman,foster2021statistical,foster2023tight}. Yet,
computational efficiency remains a barrier: outside of simple
settings \citep{azar2017minimax,jin2020provably}, our understanding of
algorithmic primitives for efficient exploration is limited.

In this paper, we propose \emph{exploration objectives} as a
conceptual framework to develop efficient algorithms for
exploration. Informally, an exploration objective is an optimization
objective that incentivizes a policy (or policy ensemble) to explore the
state space and gather data that can be used for downstream tasks
(e.g., policy optimization or evaluation).
To enable practical and efficient exploration, such an objective should satisfy
    three desiderata:
    \begin{enumerate}[leftmargin=*]
    \item \emph{Intrinsic complexity control.} 
      \akedit{Any policy (ensemble) that optimizes the objective
        should cover the state space to the best extent possible,}
      \akdelete{Any policy that
      optimizes the objective should guide the learner toward unseen regions of the state
      space as quickly as possible,} enabling sample complexity
      guarantees for \akedit{downstream} learning that reflect the
      intrinsic statistical difficulty of the underlying MDP. 
      \label{item:des1}
    \item \emph{Efficient planning.} When the MDP of interest is
      \emph{known}, 
      \akedit{it should be possible to optimize the objective in a computationally efficient manner,}
      \akdelete{the objective should support efficient computation and representation of
      exploratory policies,}ideally in a way that flexibly integrates
      with existing pipelines.
      \label{item:des2}
    \item \emph{Efficient exploration.} When the MDP is
      \emph{unknown}, 
      \akedit{it should be possible to optimize the objective in a computationally \emph{and} statistically efficient manner;}
      \akdelete{the objective should support computation- and sample-efficient
      discovery of exploratory policies;}
      the first two desiderata are necessary, but not
      sufficient here.\loose
      
      \label{item:des3}

    \end{enumerate}
Our development of exploration objectives, particularly our emphasis
on integrating with existing pipelines, is motivated by the large body
of empirical research equipping policy gradient methods and
value-based methods with
exploration bonuses
\citep{bellemare2016unifying,tang2017exploration,pathak2017curiosity,martin2017count,burda2018exploration,ash2021anti}.
Although a number of prior theoretical works either implicitly or
explicitly develop exploration objectives, they are either too general
to admit efficient planning and
exploration~\citep{jiang2017contextual,dann2018oracle,jin2021bellman,foster2021statistical,chen2022statistical,xie2023role,liu2023maximize},
or too narrow to apply to practical problems of interest without
losing fidelity to the theoretical
foundations~\citep[e.g.,][]{azar2017minimax,jin2018q,dani2008stochastic,li2010contextual,wagenmaker2022instance}.
The difficulty is that designing exploration objectives is intimately
tied to understanding what makes an MDP easy or hard to
explore, a deep statistical problem.  A useful objective must
succinctly distill such understanding into a usable, operational form,
but finding the right sweet spot between generality and tractability
is challenging.  We believe our approach and our results strike this
balance.\loose

\paragraph{Contributions}
We introduce a new exploration objective, \mainobj, which flexibly
supports computationally and statistically efficient exploration,
satisfying desiderata \savehyperref{item:des1}{(1)}, \savehyperref{item:des2}{(2)}, and \savehyperref{item:des3}{(3)}.
  \mainobj is associated with an intrinsic structural parameter, \akedit{$L_1$-}\emph{Coverability}
  \citep{xie2023role,amortila2023harnessing}, 
  \akedit{which controls the sample complexity of reinforcement learning in nonlinear function approximation settings, subsuming Block and Low-Rank MDPs.
  We prove that data gathered with a policy ensemble optimizing \mainobj supports downstream policy \arxiv{evaluation and }optimization with general function approximation (\cref{app:downstream}).\loose}
  \akdelete{which enables sample-efficient
  reinforcement learning in a general setting that subsumes Block and
  Low-Rank MDPs, yet supports nonlinear function
approximation; we prove that data gathered using these exploratory policies
  supports downstream policy evaluation and optimization with general
  function approximation (\cref{app:downstream}).}

  For planning in a known MDP, \mainobj \akedit{can be optimized efficiently}\akdelete{enables efficient computation of exploratory policies (``policy covers'') that}
  through reduction to (reward-driven)
  policy optimization, allowing for integration with off-the-shelf methods such as policy gradient (e.g., \texttt{PPO}) or Q-learning
  (e.g., \texttt{DQN}). 
  For online reinforcement learning, \mainobj yields
  the first computationally \akedit{and statistically} efficient model-based and model-free algorithms for
  (reward-free/driven) exploration in \akedit{$L_1$-coverable MDPs}. \icml{\cutedit{Technically, these results can be viewed as a successful
algorithmic application of the \emph{Decision-Estimation Coefficient (DEC)}
framework of
\citet{foster2021statistical,foster2023tight}.}}
\arxiv{Technically, these results can be viewed as a successful
algorithmic application of the \emph{Decision-Estimation Coefficient (DEC)}
framework of
\citet{foster2021statistical,foster2023tight}, highlighting
\emph{coverability} as a general setting in which the DEC framework
leads to provably efficient end-to-end algorithms.} \loose

We complement these theoretical results with an empirical validation,
where we find that the \mainobj objective effectively integrates with
off-the-shelf policy optimization algorithms, augmenting them with
the ability to explore the state space widely. \loose

\paragraph{Paper organization}
\icml{In what follows, we formalize exploration objectives
(\cref{sec:setting}), then introduce the \mainobj objective
(\cref{sec:overview}) and give algorithms for efficient planning
(\cref{sec:planning}) and model-based exploration
(\cref{sec:model_based}), along with experiments (\cref{sec:experiments}). \cut{Model-free extensions (\cref{sec:model_free}) and structural results (\cref{sec:structural}) are deferred to the Appendix.}}
\arxiv{In what follows, we formalize exploration objectives
(\cref{sec:setting}), then introduce the \mainobj objective
(\cref{sec:overview}) and give algorithms for efficient planning
(\cref{sec:planning}) and model-based exploration
(\cref{sec:model_based}). We then provide guarantees for model-free
exploration (\cref{sec:model_free}), structural results (\cref{sec:structural}), and experiments (\cref{sec:experiments}).}

\section{Online
  Reinforcement Learning and Exploration Objectives}

\label{sec:setting}

This paper focuses on \emph{reward-free reinforcement learning}
\citep{jin2020reward,wang2020reward,chen2022statistical}, in which the
aim is to compute an exploratory ensemble of policies that enables optimization of any downstream reward function; we consider planning (computing exploratory policies in a known MDP) and
online exploration (discovering exploratory policies in an unknown
MDP).

We work in an episodic finite-horizon setting. With $H$ denoting the horizon, a (reward-free) Markov decision process
  $M=\crl*{\cX, \cA, \crl{\Pm_h}_{h=0}^{H}}$, consists of a
  state space $\cX$, an action space $\cA$, a transition distribution
  $\Pm_h:\cX\times\cA\to\Delta(\cX)$, with the convention that
  $\Pm_0(\cdot\mid{}\emptyset)$ is the initial state distribution.\cut{\footnote{To simplify presentation, we assume that $\cX$ and $\cA$ are
countable; our results extend to handle continuous variables with an appropriate measure-theoretic
treatment.}}
  An episode in the MDP $M$ proceeds according to the following protocol.
  At the beginning of the episode, the learner selects a
  randomized, non-stationary \emph{policy}
  $\pi=(\pi_1,\ldots,\pi_H)$, where $\pi_h:\cX\to\Delta(\cA)$; we let
  $\PiRNS$ denote the set of all such policies, and $\PiNS$ denote the
  subset of deterministic policies.
  The episode evolves through the following process, beginning from
  $x_1\sim{}\Pm_0(\cdot\mid{}\emptyset)$. For $h=1,\ldots,H$:
  $a_h\sim\pi_h(x_h)$ and
  $x_{h+1}\sim{}P_h\sups{M}(\cdot\mid{}x_h,a_h)$. We let
  $\bbP^{\sM,\pi}$ denote the law under this process, and let
  $\En^{\sM,\pi}$ denote the corresponding expectation.

For \emph{planning}, where the underlying
MDP is known, we denote it by $M$. For \emph{online exploration},
where the MDP is unknown, we denote it by $\Mstar$; in this framework,
the learner must explore by interacting with $\Mstar$ in a sequence of
\emph{episodes} in which they execute a policy $\pi$ and observe the trajectory
$(x_1,a_1),\ldots,(x_H,a_H)$ that results.%

\paragraph{Additional notation} To simplify presentation, we assume that $\cX$ and $\cA$ are
countable; our results extend to handle continuous variables with an appropriate measure-theoretic
treatment. For an MDP $M$ and policy $\pi$, we define the induced
\emph{occupancy measure} for layer $h$ via
\icml{
$
d_h^{\sM,\pi}(x,a) = \bbP^{\sM,\pi}\brk{x_h=x,a_h=a}$
}
\arxiv{
\[
d_h^{\sM,\pi}(x,a) = \bbP^{\sM,\pi}\brk{x_h=x,a_h=a}
\]
}
and $d_h^{\sM,\pi}(x) = \bbP^{\sM,\pi}\brk{x_h=x}$. We use $\piunif$
to denote the uniform policy. We
define $\pi\circ_h\pi'$ as the policy that follows $\pi$ for layers
$h'<h$ and follows $\pi'$ for $h'\geq{}h$.\icml{
For a set $\cZ$, we let
        $\Delta(\cZ)$ denote the set of all probability distributions
        over $\cZ$. We write $f=\bigoht(g)$ to denote that $f =
        \bigoh(g\cdot{}\max\crl*{1,\mathrm{polylog}(g)})$.
        }
\arxiv{
For an integer $n\in\bbN$, we let $[n]$ denote the set
  $\{1,\dots,n\}$. For a set $\cZ$, we let
        $\Delta(\cZ)$ denote the set of all probability distributions
        over $\cZ$\padelete{and let $\cZ^{\c}$ denote the complement}. We adopt standard
        big-oh notation, and write $f=\bigoht(g)$ to denote that $f =
        \bigoh(g\cdot{}\max\crl*{1,\mathrm{polylog}(g)})$. \padelete{We use
        $\approxleq$ only in informal statements to emphasize the most
        notable elements of an inequality.}
        }

\subsection{Exploration Objectives}
\newcommand{\Obj}[1][M]{\Phi^{\sss{#1}}}%
\newcommand{\ObjOpt}[1][M]{\mathsf{Opt}^{\sss{#1}}}%
We introduce \emph{exploration objectives} as a conceptual framework to
study exploration in reinforcement learning. Exploration objectives
are policy optimization objectives; \akedit{they are defined over ensembles of policies (\emph{policy covers}), represented as distributions $p \in \Delta(\Pi)$ for a class $\Pi \subset \PiRNS$ of interest.} 
\akdelete{; we represent exploratory ensembles of policies (\emph{policy
  covers}) as mixture policies $p\in\Delta(\Pi)$ for a class
$\Pi\subseteq\PiRNS$ of interest. }
The defining
property of an exploration objective is to incentivize policy
ensembles $p\in\Delta(\Pi)$ to explore the state space and gather data that can be used for downstream policy
optimization or evaluation (i.e., offline RL).
\begin{definition}[Exploration objective]
  \label{def:exploration_objective}
For a reward-free MDP $M$ and policy class $\Pi$, a function $\Obj:\Delta(\Pi)\to\bbR_{+}$
is an \emph{exploration objective} if for any policy ensemble
$p\in\Delta(\Pi)$, one can optimize any downstream reward function
$R$ to precision $\vepspac$ in $M$ \dfedit{(i.e., produce $\pihat$ such that
  $\max_{\pi}J_R^{\sM}(\pi)-J_R^{\sM}(\pihat)\leq\eps$)\footnote{$J^{\sM}_R(\pi)$ denotes the expected reward of $\pi$ in $M$ under $R$.}} using
$\poly(\Obj(p),\vepspac^{-1})$
trajectories drawn from $\pi\sim{}p$ (under standard function
approximation assumptions).\loose
\end{definition}
Note that we allow the exploration objective to depend on the
underlying MDP $M$; if $M$ is unknown, then evaluating $\Obj(p)$
may be impossible without first exploring. 
\akedit{We also consider \emph{per-layer} objectives denoted as $\Obj_h$, optimized by a collection $\{p_h\}_{h=1}^H$ of policy ensembles.} 
We deliberately leave the details for reward
optimization vague in \cref{def:exploration_objective}, as this will
depend on the policy optimization and function approximation
techniques under consideration; see \cref{app:downstream} for
examples.

With \cref{def:exploration_objective} in mind, the desiderata in \cref{sec:intro} can
be restated formally as follows.
\begin{enumerate}[leftmargin=*]
\item \emph{Intrinsic complexity control.}
  The optimal objective value
  $\ObjOpt\ldef{}\inf_{p\in\Delta(\Pi)}\Obj(p)$ is bounded
  for a large class of MDPs $M$ of interest, ideally in a way that
  depends on intrinsic structural properties of the MDP. %
\item \emph{Efficient planning.} When the MDP $M$ is known, we can
  solve $\argmin_{p\in\Delta(\Pi)}\Obj(p)$ approximately (up to multiplicative or
  additive approximation factors) in a computationally
  efficient fashion, ideally in a way that reduces to standard
  computational primitives such as reward-driven policy optimization.
\item \emph{Efficient exploration.} When the MDP $M$ is unknown, we
  can approximately solve $\argmin_{p\in\Delta(\Pi)}\Obj(p)$ in a
  sample-efficient fashion, with sample complexity polynomial in the
  optimal objective value $\ObjOpt$, and with computational efficiency
   comparable to planning.
\end{enumerate}

As a basic example, for tabular MDPs where $\abs{\cX}$ is small, the work of \citet{jin2020reward} can be viewed as optimizing
the per-layer objective
\begin{equation}
  \Obj_h(p) =
\max_{x\in\cX}\frac{1}{\En_{\pi\sim{}p}\brk{d^{\pi}_h(x)}}\label{eq:tabular_obj}
\end{equation}
for each
layer $h\in\brk{H}$; the optimal value for this objective satisfies
$\ObjOpt_h = \bigoh(\abs{\cX}/\eta)$, where
$\eta\ldef{}\min_{x\in\cX}\max_{\pi\in\PiRNS}d^{\pi}_h(x)$ is a
\emph{reachability parameter}.  \akedit{This objective supports
  efficient planning and exploration, satisfying desiderata \destwo
  and \desthree, but is restrictive in terms of intrinsic complexity
  (desideratum \desone) because the optimal value, which scales with
  $|\cX|$, does not sharply control the sample complexity of
  reinforcement learning in the function approximation regime. Other (implicit or explicit)
  objectives studied in prior work similarly do not satisfy desideratum
  \desone~\citep{hazan2019provably,jin2020provably,agarwal2020flambe,modi2021model,uehara2022representation,zhang2022efficient,mhammedi2023representation},
  or are too general to admit computationally efficient planning and
  exploration, failing to satisfy desiderata \destwo and
  \desthree~\citep{jiang2017contextual,dann2018oracle,jin2021bellman,foster2021statistical,chen2022statistical,xie2023role,liu2023maximize}. See~\pref{app:additional} for further discussion.}

\begin{remark}
  Many existing algorithms---for example, those that use count-based
  bonuses \citep{azar2017minimax,jin2018q} or elliptic bonuses
  \citep{auer2002using,dani2008stochastic,li2010contextual,jin2020provably}---
  implicitly allude to specific exploration objectives, but do not
  appear to explicitly optimize them. We hope that by separating
  algorithms from objectives, our definition can bring clarity and
  better guide algorithm design going forward.
\end{remark}

\section{The \mainobj Objective}
\label{sec:overview}
This section introduces our main exploration objective,
\mainobj. Throughout the section, we work with a fixed (known) MDP
$M$, and an arbitrary set of policies $\Pi\subseteq\PiRNS$.

\paragraph{\mainobj objective}
\label{sec:teaser}

For a policy ensemble $p\in\Delta(\PiRNS)$ and  parameter
$\veps\in\brk{0,1}$ we define the \emph{\mainobj} objective by
\begin{align}
  \label{eq:mainobj}
    \CovM(p) =
  \sup_{\pi\in\Pi}\En^{\sM,\pi}\brk*{\frac{d^{\sM,\pi}_h(x_h,a_h)}{d_h^{\sM,p}(x_h,a_h)+\veps\cdot{}d_h^{\sM,\pi}(x_h,a_h)}},
  \end{align}
where we slightly overload notation
for occupancy measures and write $d_h^{\sM,p}(x,a) \ldef
\En_{\pi\sim{}p}\brk[\big]{d_h^{\sM,\pi}(x,a)}$.\footnote{Likewise, %
$\En^{\sM,p}$ and $\bbP^{\sM,p}$ denote the expectation and law for
the process where we sample policy $\pi\sim{}p$ and execute it in
$M$.} This objective\akdelete{, which is closely related to several standard notions of
coverage in \emph{offline} reinforcement learning \citep{farahmand2010error,xie2020q,zanette2021provable}}
encourages the ensemble $p\in\Delta(\Pi)$ to cover the state space at
least as well as any individual policy $\pi\in\Pi$, but in an
\emph{average-case} sense (with respect to the state distribution
induced by $\pi$ itself) that discounts hard-to-reach
states. Importantly, \mainobj only considers the \emph{relative probability} of
visiting states (that is, the ratio of occupancies), which is
\akedit{fundamentally different from ``tabular'' objectives such as
  \cref{eq:tabular_obj} from prior work~\citep{jin2020reward,hazan2019provably} and is}
essential to drive exploration in large state spaces. The approximation parameter $\veps>0$
allows one to
discard regions of the state space that have low relative probability for all
policies, \akedit{removing the reachability assumption required by \cref{eq:tabular_obj}}.

\akedit{\mainobj is closely related to previous \emph{optimal
  design}-based objectives in online RL, and to several standard notions
  of coverage in offline RL. Indeed, \mainobj can be viewed as a
  generalization of previously proposed optimal design-based
  objectives~\citep{wagenmaker2022beyond,wagenmaker2022instance,li2023reward,mhammedi2023efficient};
  see \cref{sec:structural,app:additional} for details.
  Regarding the latter, when
  $\veps=0$, $\CovM(p)$ coincides with
$L_1$-concentrability~\citep{farahmand2010error,xie2020q,zanette2021provable}, and is equivalent to the $\chi^2$-divergence
  between $d^{\pi}$ and $d^{p}$ up to a constant shift.}

Before turning to algorithmic development, we first show that \mainobj
is indeed a valid exploration objective in the sense of
\cref{def:exploration_objective}, then show that it satisfies
desideratum \desone, providing meaningful control over the intrinsic
complexity of exploration.

\akdelete{
\begin{remark}
  \mainobj can be viewed as a generalization of previous \emph{optimal
    design}-based objectives
  \citep{wagenmaker2022beyond,wagenmaker2022instance,li2023reward,mhammedi2023efficient};
  see \cref{app:additional,sec:structural} for details. When
  $\veps=0$, the value $\CovM(p)$ coincides with
$L_1$-concentrability, a widely used notion of coverage in offline
reinforcement learning
\citep{farahmand2010error,xie2020q,zanette2021provable}, and is equivalent to the $\chi^2$-divergence
  between $d^{\pi}$ and $d^{p}$ up to a constant shift.
\end{remark}
}

\paragraph{\mainobj enables downstream policy optimization}
\akedit{We prove that \mainobj enables downstream policy optimization}
through a
change-of-measure lemma, which shows that it is possible to transfer
the expected value for any function $g$ of interest (e.g., Bellman
error) under any policy $\pi$ to the expected value under $p$.
\begin{restatable}[Change of measure for \mainobj]{proposition}{com}
  \label{lem:com}
For any distribution $p\in\Delta(\PiRNS)$, we have that for all functions
$g:\cX\times\cA\to{}\brk{0,B}$, all $\pi\in\Pi$, and all $\veps>0$,%
\footnote{This result is
  meaningful in the parameter regime where
  $\CovM(p)<\nicefrac{1}{\veps}$. We refer to this regime as
  \emph{non-trivial}, as $\CovM(p)\leq{}\nicefrac{1}{\veps}$ holds
  vacuously for all $p$.}
  \icml{
\begin{align}
  \label{eq:com2}
  &\En^{\sM,\pi}\brk*{g(x_h,a_h)}\\
  &\leq{} 2\sqrt{\CovM(p)\cdot{}\En^{\sM,p}\brk*{g^2(x_h,a_h)}
  }
  + \CovM(p)\cdot{}(\veps{}B).\notag
\end{align}

}
\arxiv{
\begin{align}
  \label{eq:com2}
  \En^{\sM,\pi}\brk*{g(x_h,a_h)}\leq{} 2\sqrt{\CovM(p)\cdot{}\En^{\sM,p}\brk*{g^2(x_h,a_h)}
  }
  + \CovM(p)\cdot{}(\veps{}B).
\end{align}
}

\end{restatable}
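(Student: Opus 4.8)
The plan is to establish \cref{eq:com2} through a single Cauchy--Schwarz step followed by a self-bounding (AM--GM) argument. Throughout, I would abbreviate $d^{\pi} \ldef d_h^{\sM,\pi}$ and $d^{p} \ldef d_h^{\sM,p}$, and introduce the auxiliary measure $\mu(x,a) \ldef d^{p}(x,a) + \veps\, d^{\pi}(x,a)$. Since $\cX$ and $\cA$ are countable, I would first write $\En^{\sM,\pi}\brk*{g(x_h,a_h)} = \sum_{x,a} d^{\pi}(x,a)\, g(x,a)$ (restricting the sum to $(x,a)$ with $\mu(x,a) > 0$, since the remaining terms vanish because $\mu(x,a)=0$ forces $d^{\pi}(x,a)=0$), insert the factor $\mu(x,a)^{1/2}/\mu(x,a)^{1/2}$, and apply Cauchy--Schwarz:
\[
\En^{\sM,\pi}\brk*{g(x_h,a_h)} \;\leq\; \left(\sum_{x,a}\frac{d^{\pi}(x,a)^2}{\mu(x,a)}\right)^{1/2}\left(\sum_{x,a}\mu(x,a)\, g(x,a)^2\right)^{1/2}.
\]

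Next I would identify the two factors. Expanding the expectation in \cref{eq:mainobj} as a sum, the definition of $\CovM(p)$ gives $\sum_{x,a} d^{\pi}(x,a)^2/\mu(x,a) = \En^{\sM,\pi}\brk*{d^{\pi}(x_h,a_h)/\mu(x_h,a_h)} \leq \CovM(p)$ for every $\pi\in\Pi$. For the second factor, expanding $\mu = d^{p} + \veps\, d^{\pi}$ yields $\sum_{x,a}\mu(x,a)\, g(x,a)^2 = \En^{\sM,p}\brk*{g^2(x_h,a_h)} + \veps\,\En^{\sM,\pi}\brk*{g^2(x_h,a_h)}$. Combining these and using $\sqrt{s+t}\leq\sqrt{s}+\sqrt{t}$, I obtain
\[
\En^{\sM,\pi}\brk*{g(x_h,a_h)} \;\leq\; \sqrt{\CovM(p)\cdot\En^{\sM,p}\brk*{g^2(x_h,a_h)}} + \sqrt{\veps\,\CovM(p)\cdot\En^{\sM,\pi}\brk*{g^2(x_h,a_h)}}.
\]

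Finally, since $g$ is $[0,B]$-valued, $\En^{\sM,\pi}\brk*{g^2(x_h,a_h)} \leq B\cdot\En^{\sM,\pi}\brk*{g(x_h,a_h)}$, so writing $Z \ldef \En^{\sM,\pi}\brk*{g(x_h,a_h)}$, $A \ldef \sqrt{\CovM(p)\cdot\En^{\sM,p}\brk*{g^2(x_h,a_h)}}$, and $C \ldef \veps B\,\CovM(p)$, the preceding display reads $Z \leq A + \sqrt{CZ}$. Applying AM--GM in the form $\sqrt{CZ} \leq \tfrac12(C + Z)$ and rearranging gives $Z \leq 2A + C$, which is exactly \cref{eq:com2}. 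I do not anticipate a genuine obstacle: the computation is short, and the only steps requiring mild care are the self-bounding manipulation at the end and---if one drops the countability assumption---the measure-theoretic version of the Cauchy--Schwarz step, both of which are routine.
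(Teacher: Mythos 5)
Your proposal is correct and follows essentially the same route as the paper's proof: the same Cauchy--Schwarz step with the auxiliary measure $d_h^{\sM,p}+\veps\,d_h^{\sM,\pi}$, the same split via $\sqrt{s+t}\le\sqrt{s}+\sqrt{t}$, and the same use of $g^2\le Bg$ with AM--GM and rearrangement to absorb the $\frac12\En^{\sM,\pi}[g]$ term. The only (harmless) difference is that you explicitly handle the $\mu(x,a)=0$ terms, which the paper leaves implicit.
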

\vspace{-.25em}
Using this result, one can prove that\akedit{, using data gathered from $p$,} standard offline reinforcement
learning algorithms such as Fitted Q-Iteration (\fqi) succeed 
\akedit{with sample complexity scaling with $\max_h \CovM(p)$.}
\akedit{One can similarly analyze} hybrid offline/online methods (online methods that require access to
exploratory data) such as \psdp \citep{bagnell2003policy} and \npg \citep{agarwal2021theory}; see \cref{app:downstream}
for details. Such methods will prove to be critical for
the reward-free reinforcement learning guarantees we present in the sequel.\loose

In light of \cref{lem:com}, which shows that \mainobj satisfies
\cref{def:exploration_objective}, we refer to any $p\in\Delta(\PiRNS)$ that
(approximately) optimizes the \mainobj objective as a \emph{policy
  cover} going forward.\footnote{This definition generalizes most notions
  of policy cover found in prior work
  \citep{du2019latent,misra2019kinematic,mhammedi2023representation,mhammedi2023efficient,huang2023reinforcement}.}

\paragraph{\mainopt provides intrinsic complexity control}
\label{sec:properties}
Of course, the guarantee in \cref{lem:com} is only useful if
desideratum \desone is satisfied, i.e. there
exist distributions $p\in\Delta(\Pi)$ for which the \mainobj
objective is bounded. To this
      end, we define the optimal value for the \mainobj objective,
      which we refer to as \emph{\mainopt}, as\arxiv{\footnote{When the MDP $M$ is clear from context, we drop dependence on $M$
  and write $\Cov(p)\equiv\CovM(p)$, $\CovOpt\equiv\CovOptM$, etc.}}
  \begin{align}
    \label{eq:mainobj_opt}
        \CovOptM = \inf_{p\in\Delta(\Pi)}\CovM(p),
  \end{align}
  and define $\CovOptMmax=\max_{h\in\brk{H}}\CovOptM$.

  We show that the \mainopt value $\CovOptM$ can be interpreted as an intrinsic
  structural parameter for the MDP $M$, and is bounded for standard
  MDP classes of interest.
  
  To do so, we draw a connection a structural parameter introduced by
  \citet{xie2023role} as a means to bridge online and offline
  RL, which we refer to as \emph{$L_{\infty}$-Coverability}:
  \begin{align}
    \label{eq:cinf}
    \CinfhM = \inf_{\mu\in\Delta(\cX\times\cA)}  \sup_{\pi\in\Pi}\sup_{(x,a)\in\cX\times\cA}\crl*{\frac{d_h^{\sM,\pi}(x,a)}{\mu(x,a)}},
  \end{align}
  with $\CinfM=\max_{h\in\brk{H}}\CinfhM$.

  $L_\infty$-Coverability measures the best
  possible \akedit{(worst-case) density ratio}\akdelete{concentrability value} that can be achieved if \akedit{one}\akdelete{an oracle}
  optimally designs the data distribution $\mu$ with knowledge of the
  underlying MDP. The main differences between the value $\CinfhM$ and
  the \mainopt value in \cref{eq:mainobj_opt} are that (i) \cref{eq:cinf} considers worst-case ($L_{\infty}$-type) rather than average-case coverage, and (ii)
\cref{eq:cinf} allows the distribution $\mu\in\Delta(\cX\times\cA)$ to
be arbitrary, while \cref{eq:mainobj_opt} requires the distribution to be
realized as a mixture of occupancies (in other words, \cref{eq:cinf}
allows for \emph{non-admissible} mixtures).\footnote{We adopt the notation
  $\mathsf{Cov}_{\cdot}^{\sM}$ for admissible
    variants of coverability and $C^{\sM}_{\cdot}$ for non-admissible
    variants of coverability.} Due to the latter
difference, it is unclear at first glance whether one can relate the
two objectives, since allowing for non-admissible mixtures could
potentially make the objective \eqref{eq:cinf} much smaller. Nonetheless, the following result, which uses a non-trivial
application of the minimax theorem\arxiv{ inspired by
\citet{dudik2011efficient,agarwal2014taming}}, shows that \mainopt
is
indeed bounded by $\CinfM$.
\begin{restatable}{proposition}{linf}
  \label{prop:linf}
  For all $\veps>0$, we have $      \CovOptM \leq{} \CinfhM$.
\end{restatable}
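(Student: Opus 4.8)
The idea is to relate the two quantities through a minimax exchange, in the style of \citet{dudik2011efficient,agarwal2014taming}. The subtlety is that \eqref{eq:cinf} permits \emph{non-admissible} $\mu$, so there need not be any single natural ensemble $p$ realizing the bound; the minimax theorem is precisely what lets us bypass this. Throughout, fix $\veps>0$ and $h$.

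First I would rewrite the objective so that both players range over distributions over policies. Since $\CovM(p)$ from \eqref{eq:mainobj} is a supremum over $\Pi$ of a bounded functional, it is unchanged if we instead maximize the (linear) extension of that functional over $q\in\Delta(\Pi)$; that is, $\CovM(p)=\sup_{q\in\Delta(\Pi)}\Phi(p,q)$, where
\[
\Phi(p,q)\ldef\En_{\pi\sim q}\En^{\sM,\pi}\brk*{\frac{d_h^{\sM,\pi}(x_h,a_h)}{d_h^{\sM,p}(x_h,a_h)+\veps\, d_h^{\sM,\pi}(x_h,a_h)}},
\]
so that $\CovOptM=\inf_{p\in\Delta(\Pi)}\sup_{q\in\Delta(\Pi)}\Phi(p,q)$ by \eqref{eq:mainobj_opt}. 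The map $p\mapsto\Phi(p,q)$ is convex (since $d_h^{\sM,p}(x,a)$ is affine in $p$, $t\mapsto(t+c)^{-1}$ is convex on $(-c,\infty)$, and convexity is preserved by the nonnegative averages defining $\Phi$), while $q\mapsto\Phi(p,q)$ is linear, and $0\le\Phi\le\veps^{-1}$ everywhere. I would then invoke a minimax theorem (Sion's theorem), handling the compactness of $\Delta(\Pi)$ as in the cited works, to obtain $\CovOptM=\sup_{q\in\Delta(\Pi)}\inf_{p\in\Delta(\Pi)}\Phi(p,q)$.

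The remaining steps are elementary. Bounding the inner infimum by the choice $p=q$ gives $\CovOptM\le\sup_{q\in\Delta(\Pi)}\Phi(q,q)$, so it suffices to show $\Phi(q,q)\le\CinfhM$ for every $q\in\Delta(\Pi)$. Expanding the occupancy measures and using Tonelli,
\[
\Phi(q,q)=\sum_{x,a}\En_{\pi\sim q}\brk*{\frac{\prn*{d_h^{\sM,\pi}(x,a)}^2}{d_h^{\sM,q}(x,a)+\veps\, d_h^{\sM,\pi}(x,a)}}.
\]
On the set where $d_h^{\sM,q}(x,a)=0$ the summand vanishes (there $d_h^{\sM,\pi}(x,a)=0$ for $q$-a.e.\ $\pi$); elsewhere, I would discard the $\veps$-term from the denominator and apply $\prn*{d_h^{\sM,\pi}(x,a)}^2\le\bar{d}_h(x,a)\cdot d_h^{\sM,\pi}(x,a)$ with $\bar{d}_h(x,a)\ldef\sup_{\pi\in\Pi}d_h^{\sM,\pi}(x,a)$, so each summand is at most $\bar{d}_h(x,a)\cdot\En_{\pi\sim q}\brk*{d_h^{\sM,\pi}(x,a)}/d_h^{\sM,q}(x,a)=\bar{d}_h(x,a)$. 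Hence $\Phi(q,q)\le\sum_{x,a}\bar{d}_h(x,a)$. Finally, for any $\mu\in\Delta(\cX\times\cA)$, summing the elementary bound $\bar{d}_h(x,a)\le\prn*{\sup_{x',a'}\bar{d}_h(x',a')/\mu(x',a')}\mu(x,a)$ over $(x,a)$ yields $\sum_{x,a}\bar{d}_h(x,a)\le\sup_{x,a}\bar{d}_h(x,a)/\mu(x,a)=\sup_{\pi\in\Pi}\sup_{x,a}d_h^{\sM,\pi}(x,a)/\mu(x,a)$; taking the infimum over $\mu$ gives $\sum_{x,a}\bar{d}_h(x,a)\le\CinfhM$ by \eqref{eq:cinf}, closing the argument.

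\textbf{Main obstacle.} The one genuinely delicate point is the minimax exchange: $\Delta(\Pi)$ need not be compact when $\Pi$ is infinite, so legitimizing the swap of $\inf_p$ and $\sup_q$ requires either restricting to finitely supported ensembles together with an approximation/limiting argument, or equipping $\Delta(\Pi)$ with a suitable weak topology and checking lower semicontinuity of $p\mapsto\Phi(p,q)$ (which follows from dominated convergence, the summand being bounded by $\veps^{-1}d_h^{\sM,\pi}(x,a)$). Once the swap is justified, the rest is the short calculation above.
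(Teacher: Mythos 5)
Your proof is correct, and it reaches the bound by a route that differs from the paper's in a meaningful way. Both arguments share the same core idea—a Sion-type minimax exchange over $(p,q)\in\Delta(\Pi)\times\Delta(\Pi)$ followed by the diagonal choice $p=q$—but they differ in how $\CinfhM$ enters. The paper first invokes a change-of-measure lemma (\cref{lem:pi_to_mu}) to replace the numerator $d_h^{\sM,\pi}$ by a fixed (near-)optimal covering distribution $\mu$, paying a factor $(1+\delta/\veps)\CinfhM$; it then applies \cref{lem:sion} to this $\mu$-numerator objective (which is manifestly convex--concave and uniformly bounded away from a vanishing denominator), bounds the diagonal value by $1$, and finally sends $\delta\to 0$. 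You instead apply the minimax theorem directly to the \mainobj objective itself—whose convexity in $p$ and linearity in $q$ you correctly verify—and close with the elementary estimate that the diagonal value is at most $\sum_{x,a}\sup_{\pi\in\Pi}d_h^{\sM,\pi}(x,a)\leq\CinfhM$ (your final display is essentially the identity, due to \citet{xie2023role}, that this sum equals the $L_\infty$-coverability coefficient). Your route dispenses with the auxiliary lemma, the $\delta\to0$ limit, and the need to fix a near-optimal $\mu$ up front, since the infimum over $\mu$ is only used at the very end; the paper's route buys a slightly more regular function at the point where the minimax theorem is invoked. The caveat you flag—compactness of $\Delta(\Pi)$ and the semicontinuity hypotheses of Sion's theorem—is equally present in the paper's own proof (which asserts joint TV-Lipschitzness and applies \cref{lem:sion} without dwelling on compactness), so it is not a gap relative to the paper, and your dominated-convergence observation together with a finite-support or weak-topology argument is an adequate way to discharge it.
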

Examples
for which $\CinfM$---and consequently $\CovOptMmax$---is bounded by small problem-dependent
constants include Block MDPs \citep{xie2023role}
($\paedit{\CinfM}\leq{}\abs{\cS}\abs{\cA}$, where $\cS$ is the number of latent
states), linear/low-rank MDPs \citep{huang2023reinforcement}
($\paedit{\CinfM}\leq{}d\abs{\cA}$, where $d$ is the feature dimension), and analytically sparse low-rank MDPs
\citep{golowich2023exploring} ($\paedit{\CinfM}\leq{}k\abs{\cA}$, where $k$ is
the sparsity level). Importantly, these
examples (particularly Block MDPs and Low-Rank MDPs) \akedit{require}\akdelete{support} nonlinear function
approximation. See \arxiv{\cref{sec:linf_relaxation}}\icml{\cref{sec:planning_examples}} for
details; see also
\citet{xie2023role,amortila2023harnessing}. \pacomment{are there examples where $\Ccov$ is not bounded by $\CovOptMmax$ is?}

\akedit{We note that $L_1$- and $L_\infty$-Coverability are less
  general than structural parameters defined in terms of Bellman
  errors, such as Bellman rank~\citep{jiang2017contextual}, Bellman-Eluder dimension~\citep{jin2021bellman}, and Bilinear
  rank~\citep{du2021bilinear}, which are known to not admit computationally efficient
  learning algorithms~\citep{dann2018oracle}. In this sense, \mainobj strikes a balance
  between generality and tractability. We discuss this in more detail
  and discuss connections to other structural parameters in
  \cref{sec:structural}.}

\akdelete{In what follows, we turn to algorithms, and show that (i) \mainobj can
be optimized efficiently when the MDP $M$ is known
(\cref{sec:planning}), and (ii) can be optimized in a computation- and
sample-efficient fashion when $M$ is unknown
(\cref{sec:model_based}).
We refer to \cref{sec:structural} for connections to other structural
parameters, including feature coverage and
non-admissible variants of coverability.}

\section{Optimizing \mainobj: Efficient Planning}
\label{sec:planning}

Directly optimizing the \mainobj objective \eqref{eq:mainobj}
presents
challenges because the objective is quadratic in the occupancy
$d^{\sM,\pi}$. To address this issue, this section provides two
\emph{relaxations}---that is, relaxed objectives that upper bound
\mainobj---that are directly amenable to optimization (via reduction to standard reward-driven
 policy optimization), yet are still bounded for MDPs of interest. \cref{sec:linf_relaxation} presents a relaxation based on a
  connection to \emph{$L_\infty$-Coverability} (\cref{eq:cinf}), and
  \cref{sec:pushforward_relaxation} presents a relaxation based on a connection to \emph{Pushforward Coverability}.
The first relaxation is tighter, but requires stronger knowledge of the
underlying MDP. \arxiv{A more general recipe for deriving relaxations is
  presented in \cref{sec:relaxation_recipe}.

}To motivate our results, recall that given a $C$-approximate
minimizer $p\in\Delta(\Pi)$ for which
$\CovM(p)\leq{}C\cdot{}\CovOptM$, the sample complexity in \cref{lem:com}
degrades only by an $O(C)$ factor.

\subsection{The \linfcov Relaxation}
\label{sec:linf_relaxation}

  \newcommand{\muCov}{\obj_{\mu;h,\veps}}
  \newcommand{\muCovM}[1][M]{\obj_{\mu;h,\veps}^{\sss{#1}}}
  \newcommand{\muCovOpt}{\cov_{\mu;h,\veps}}
  \newcommand{\muCovOptM}[1][M]{\cov_{\mu;h,\veps}^{\sss{#1}}}
  \newcommand{\vepsapx}{\veps_{\mathrm{opt}}}

    \begin{algorithm}[tp]
    \setstretch{1.3}
     \begin{algorithmic}[1]
       \State \textbf{input}:
        Layer $h\in\brk{H}$, precision\arxiv{ parameter}
       $\veps\in\brk{0,1}$, 
       distribution $\mu\in\Delta(\cX\times\cA)$ w/
       $\Cinf\equiv\CinfhM(\mu)$, \arxiv{optimization
         tolerance}\icml{opt. tol.} $\vepsapx>0$.\loose
       \State Set $T=\frac{1}{\veps}$.
  \For{$t=1, 2, \cdots, T$} \label{line:linf_outer_loop}
  \State Compute $\pi\ind{t}\in\Pi$ such that 
      \arxiv{\begin{small}
          \begin{align}
      \En^{M,\pi\ind{t}}\brk*{\frac{\mu(x_h,a_h)}{\sum_{i<t}d_h^{\sM,\pi\ind{i}}(x_h,a_h)+\Cinf{}\mu(x_h,a_h)}}
      \geq \sup_{\pi\in\Pi}\En^{M,\pi}\brk*{\frac{\mu(x_h,a_h)}{\sum_{i<t}d_h^{\sM,\pi\ind{i}}(x_h,a_h)+\Cinf{}\mu(x_h,a_h)}}-\vepsapx.
           \end{align}\end{small}\label{line:linf_opt}
       }
       \icml{\begin{footnotesize}
          \begin{align*}
            &\En^{M,\pi\ind{t}}\brk*{\frac{\mu(x_h,a_h)}{\sum_{i<t}d_h^{\sM,\pi\ind{i}}(x_h,a_h)+\Cinf{}\mu(x_h,a_h)}}
            \geq\\
            &\sup_{\pi\in\Pi}\En^{M,\pi}\brk*{\frac{\mu(x_h,a_h)}{\sum_{i<t}d_h^{\sM,\pi\ind{i}}(x_h,a_h)+\Cinf{}\mu(x_h,a_h)}}-\vepsapx.
           \end{align*}\end{footnotesize}\label{line:linf_opt}
         }
  \EndFor
  \arxiv{\vspace{-10pt}}
    \icml{\vspace{-15pt}}
\State  Return $p=\unif(\pi\ind{1},\ldots,\pi\ind{T})$.
\end{algorithmic}
\caption{Approximate Policy Cover Computation via
  $L_{\infty}$-Coverability Relaxation}
\label{alg:linf_relaxation}
\end{algorithm}

Our first relaxation of the \mainobj objective assumes access to a distribution $\mu\in\Delta(\cX\times\cA)$ for which the
\emph{$L_{\infty}$-concentrability coefficient} $\CinfhM(\mu) \ldef
\sup_{\pi\in\Pi}\sup_{(x,a)\in\cX\times\cA}\crl*{\frac{d_h^{\sM,\pi}(x,a)}{\mu(x,a)}}$
\citep{chen2019information} is bounded. For such a distribution $\mu$, abbreviating
$\Cinf\equiv\CinfhM(\mu)$, we define
\icml{\begin{align}
\label{eq:linf_relaxation}
    \muCovM(p) \ldef
    \sup_{\pi\in\Pi}\En^{\sM,\pi}\brk*{\frac{\mu(x_h,a_h)}{d_h^{\sM,p}(x_h,a_h)+\veps\cdot{}\Cinf\mu(x_h,a_h)}}, 
      \end{align}
      and $\muCovOptM \ldef \inf_{p\in\Delta(\Pi)}\muCovM(p)$.%
       }%
\arxiv{\begin{align}
  \label{eq:linf_relaxation}
    \muCovM(p) =
    \sup_{\pi\in\Pi}\En^{\sM,\pi}\brk*{\frac{\mu(x_h,a_h)}{d_h^{\sM,p}(x_h,a_h)+\veps\cdot{}\Cinf\mu(x_h,a_h)}},
    \mathand \muCovOptM = \inf_{p\in\Delta(\Pi)}\muCovM(p).
       \end{align}
       }
       This objective upper bounds \mainobj,
and any $p\in\Delta(\PiRNS)$ that
optimizes it has $\CovM(p)\leq2\CinfhM(\mu)$.\loose
  \begin{restatable}{proposition}{linfrelaxation}
    \label{prop:linf_relaxation}
    For a distribution $\mu$ with $\Cinf\equiv\CinfhM(\mu)$, it holds
    that for all $p\in\Delta(\PiRNS)$,
    \begin{align}
      \CovM(p)\leq{}2\Cinf\cdot{}\muCovM(p).
    \end{align}
Furthermore, $\muCovOptM \leq{}1$ for all $\veps>0$.
\end{restatable}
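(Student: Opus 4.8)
The plan is to prove the two claims of \pref{prop:linf_relaxation} separately: the relaxation inequality by a one-line pointwise comparison, and the bound $\muCovOptM\le 1$ by a minimax argument together with a ``copy the adversary'' choice of $p$.

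\textbf{The relaxation bound.} To show $\CovM(p)\le 2\Cinf\cdot\muCovM(p)$ with $\Cinf=\CinfhM(\mu)$, I would argue pointwise and then integrate. By definition of $\Cinf$, every $\pi\in\Pi$ satisfies $d_h^{\sM,\pi}(x,a)\le\Cinf\mu(x,a)$ at every $(x,a)$. The only ingredient is that for fixed $d\ge 0$ and $\veps>0$ the map $t\mapsto t/(d+\veps t)$ is nondecreasing on $[0,\infty)$; taking $d=d_h^{\sM,p}(x,a)$ and using $d_h^{\sM,\pi}(x,a)\le\Cinf\mu(x,a)$,
\[
  \frac{d_h^{\sM,\pi}(x,a)}{d_h^{\sM,p}(x,a)+\veps\, d_h^{\sM,\pi}(x,a)}\;\le\;\frac{\Cinf\mu(x,a)}{d_h^{\sM,p}(x,a)+\veps\,\Cinf\mu(x,a)}.
\]
Multiplying both sides by $d_h^{\sM,\pi}(x,a)$ and summing over $(x,a)$ --- that is, applying $\En^{\sM,\pi}$, which is precisely summation against $d_h^{\sM,\pi}$ --- turns the left-hand side into the expectation appearing in $\CovM(p)$ and the right-hand side into $\Cinf$ times the expectation appearing in $\muCovM(p)$; taking $\sup_{\pi\in\Pi}$ then gives $\CovM(p)\le\Cinf\cdot\muCovM(p)$, which already implies the claim. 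I expect no difficulty here.

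\textbf{The bound $\muCovOptM\le 1$.} Here I would pass to a minimax formulation. Since the integrand is affine in the occupancy $d_h^{\sM,q}=\En_{\pi\sim q}[d_h^{\sM,\pi}]$, extending the inner supremum from $\pi\in\Pi$ to mixtures $q\in\Delta(\Pi)$ leaves it unchanged, so
\[
  \muCovOptM=\inf_{p\in\Delta(\Pi)}\ \sup_{q\in\Delta(\Pi)}\ \sum_{x,a}d_h^{\sM,q}(x,a)\cdot\frac{\mu(x,a)}{d_h^{\sM,p}(x,a)+\veps\,\Cinf\mu(x,a)}.
\]
The objective is convex in $p$ (the occupancy $d_h^{\sM,p}$ is affine in $p$ and $d\mapsto 1/(d+\veps\Cinf\mu(x,a))$ is convex) and linear in $q$, over convex index sets with $\Delta(\Pi)$ compact, so the minimax theorem permits exchanging $\inf_p$ and $\sup_q$. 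For each fixed $q$ I would bound the inner infimum using the diagonal choice $p=q$:
\[
  \inf_p\sum_{x,a}d_h^{\sM,q}(x,a)\frac{\mu(x,a)}{d_h^{\sM,p}(x,a)+\veps\Cinf\mu(x,a)}\ \le\ \sum_{x,a}\mu(x,a)\cdot\frac{d_h^{\sM,q}(x,a)}{d_h^{\sM,q}(x,a)+\veps\Cinf\mu(x,a)}\ \le\ \sum_{x,a}\mu(x,a)=1,
\]
using $d_h^{\sM,q}/(d_h^{\sM,q}+\veps\Cinf\mu)\le 1$ and $\mu\in\Delta(\cX\times\cA)$. Taking $\sup_q$ preserves the bound, so $\muCovOptM\le 1$.

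\textbf{Main obstacle.} The substantive work is entirely in the second part, with two things to get right: (i) the legitimacy of the $\inf$--$\sup$ exchange when $\Pi$ is infinite, which needs the same mild compactness/approximation care as the proof of \pref{prop:linf} (e.g.\ reducing to finite subsets of $\Pi$, or working directly over the convex set of achievable occupancies, and passing to the limit); and (ii) recognizing that after the exchange the minimizing player can simply match the adversary's mixture, so that the $\veps\Cinf\mu$ term in the denominator is only helpful and the objective collapses to $\sum_{x,a}\mu(x,a)=1$. A constructive counterpart --- producing an explicit near-optimal $p$, hence via the first part an explicit $p$ with $\CovM(p)=O(\Cinf)$ --- should follow by a potential/telescoping analysis of the greedy updates in \pref{alg:linf_relaxation}.
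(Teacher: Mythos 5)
Your proposal is correct. The second claim is proved exactly as in the paper: the paper simply points to the proof of \cref{prop:linf}, which is your argument verbatim---extend the inner supremum to mixtures, invoke Sion's minimax theorem on the convex--concave (and TV-Lipschitz, since $\veps>0$) objective, and then take the diagonal choice $p=q$ so that $\sum_{x,a}d_h^{\sM,q}(x,a)\mu(x,a)/\prn*{d_h^{\sM,q}(x,a)+\veps\Cinf\mu(x,a)}\leq\sum_{x,a}\mu(x,a)=1$; the compactness/semicontinuity caveat you flag is treated at the same level of detail in the paper. For the first inequality your route differs slightly and is in fact sharper: the paper applies \cref{lem:pi_to_mu} (with $\delta=\veps\Cinf$), which splits the objective into two terms, each then bounded by $\Cinf\cdot\muCovM(p)$ using $d_h^{\sM,\pi}\leq\Cinf\mu$---hence the factor $2$---whereas your pointwise use of the monotonicity of $t\mapsto t/(d+\veps t)$ combined with $d_h^{\sM,\pi}(x,a)\leq\Cinf\mu(x,a)$ directly gives $\CovM(p)\leq\Cinf\cdot\muCovM(p)$, which implies the stated bound with room to spare. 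What the paper's detour buys is only the reuse of \cref{lem:pi_to_mu}, a change-of-denominator lemma it needs elsewhere (e.g., in \cref{prop:linf,prop:lone}); your argument is the more elementary and tighter one for this particular proposition.
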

Notably, given access to a distribution
$\mu\in\Delta(\cX\times\cA)$ that achieves the
$L_{\infty}$-Coverability value $\CinfhM$ in \cref{eq:cinf}, any
distribution $p\in\Delta(\PiRNS)$ that optimizes the relaxation in
\cref{eq:linf_relaxation} achieves \mainobj value $\CovM(p)\leq{}2\CinfhM$. However,
the relaxation supports arbitrary distributions
$\mu\in\Delta(\cX\times\cA)$, allowing one to trade off approximation
value and computation. Indeed, in some cases, it may be simpler to compute a
distribution $\mu\in\Delta(\cX\times\cA)$ that has suboptimal, yet
reasonable concentrability. Because $\mu$ is not required to be admissible, such a distribution can be
computed easily or in closed form for many MDP families of interest\akedit{.}
For example, in tabular MDPs we can simply take $\mu=\unif(\cX\times\cA)$\arxiv{; we outline more examples at the end of this section.}\icml{. See \cref{sec:planning_examples} for more examples.}

\paragraph{The algorithm}
\cref{alg:linf_relaxation} provides an iterative algorithm to compute
a distribution $p\in\Delta(\PiRNS)$ that optimizes the
$L_{\infty}$-relaxation in \cref{eq:linf_relaxation}. The algorithm
proceeds in $T$ steps. At each step $t\in\brk{T}$, given a sequence of
policies $\pi\ind{1},\ldots,\pi\ind{t-1}$ computed so far, the
algorithm computes a new policy $\pi\ind{t}$ by solving the policy
optimization problem
\icml{
  \begin{small}
    \begin{align}
      \label{eq:linf_opt}
      \pi\ind{t}=  \argmax_{\pi\in\Pi}\En^{\sM,\pi}\brk*{\frac{\mu(x_h,a_h)}{\sum_{i<t}d_h^{\sM,\pi\ind{i}}(x_h,a_h)+\Cinf{}\mu(x_h,a_h)}}
    \end{align}%
  \end{small}
 }%
\arxiv{
\begin{align}
  \label{eq:linf_opt}
\pi\ind{t}=  \argmax_{\pi\in\Pi}\En^{\sM,\pi}\brk*{\frac{\mu(x_h,a_h)}{\sum_{i<t}d_h^{\sM,\pi\ind{i}}(x_h,a_h)+\Cinf{}\mu(x_h,a_h)}}
\end{align}
}%
in \cref{line:linf_opt} (up to tolerance $\vepsapx>0$). After all $T$
rounds conclude, the algorithm returns the uniform mixture
$p=\unif(\pi\ind{1},\ldots,\pi\ind{T})$ as a policy cover.

The optimization problem \eqref{eq:linf_opt} aims to find a policy
$\pi\ind{t}$ that explores regions of the state space not already
covered by $\pi\ind{1},\ldots,\pi\ind{t-1}$. Critically, it is a standard \emph{reward-driven policy optimization} problem
with reward function\loose
\[
r\ind{t}_h(x,a) \ldef \frac{\mu(x,a)}{\sum_{i<t}d_h^{\sM,\pi\ind{i}}(x,a)+\Cinf{}\mu(x,a)}.
\]
In practice, one can solve \cref{eq:linf_opt} using standard policy
optimization algorithms \padelete{like PPO} (we take this approach in
\cref{sec:experiments}). However, since the MDP $M$ is known, we can
also take advantage of the vast
literature on algorithms for planning with a known model, as well as
algorithms like Policy Search by
Dynamic Programming (\citet{bagnell2003policy}) or Natural
Policy Gradient (\citet{agarwal2021theory}) that can use $\mu$ itself as a high-quality reset distribution.\loose

The following theorem shows that \cref{alg:linf_relaxation} converges
to a policy cover $p\in\Delta(\PiRNS)$ that optimizes
the relaxation in \cref{eq:linf_relaxation} (up to a small
$\log(\veps^{-1})$ multiplicative approximation factor) in a small number of iterations. 
\begin{restatable}{theorem}{linfoptimization}
  \label{thm:linf_optimization}
  For any $\veps\in\brk*{0,1}$ and $h\in\brk{H}$,
  given a distribution $\mu$ with $\Cinf\equiv\CinfhM(\mu)$, whenever
  $\vepsapx\leq{}\veps\log(2\veps^{-1})$, \cref{alg:linf_relaxation}
  with $T=\veps^{-1}$
  produces a distribution $p\in\Delta(\Pi)$ with
  $\abs*{\supp(p)}\leq{}\veps^{-1}$ such that
\begin{align}
  \muCovM(p) \leq{} 3\log(2\veps^{-1}),
\end{align}
and consequently $\CovM(p)\leq{}6\Cinf\log(2\veps^{-1})$.
\end{restatable}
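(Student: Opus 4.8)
The plan is to run an iterative potential-function argument that crucially exploits the fact that the algorithm is instantiated with $\veps = 1/T$. First, set $D_0\equiv 0$ and $D_t(x,a)\ldef\sum_{i\leq t}d_h^{\sM,\pi\ind{i}}(x,a)$, so that the reward optimized in \cref{line:linf_opt} at round $t$ is $r\ind{t}_h(x,a)=\mu(x,a)/\bigl(D_{t-1}(x,a)+\Cinf\mu(x,a)\bigr)$, and the returned ensemble $p=\unif(\pi\ind{1},\dots,\pi\ind{T})$ satisfies $d_h^{\sM,p}=\tfrac1T D_T$. Since $\veps=1/T$, the quantity to control is
\[
  \muCovM(p) = \sup_{\pi\in\Pi}\En^{\sM,\pi}\!\left[\frac{\mu(x_h,a_h)}{\tfrac1T D_T(x_h,a_h)+\tfrac1T\Cinf\mu(x_h,a_h)}\right] = T\cdot\sup_{\pi\in\Pi}\En^{\sM,\pi}\!\left[r\ind{T+1}_h(x_h,a_h)\right].
\]
The first key observation is that the rewards $r\ind{t}_h$ are pointwise nonincreasing in $t$ (because $D_{t-1}$ is nondecreasing), so for every fixed $\pi$ we have $\En^{\sM,\pi}[r\ind{T+1}_h]\leq\tfrac1T\sum_{t=1}^{T}\En^{\sM,\pi}[r\ind{t}_h]$. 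Combining this with the $\vepsapx$-approximate optimality of $\pi\ind{t}$ from \cref{line:linf_opt}, which gives $\En^{\sM,\pi}[r\ind{t}_h]\leq\En^{\sM,\pi\ind{t}}[r\ind{t}_h]+\vepsapx$ for every $\pi$, yields $\muCovM(p)\leq\sum_{t=1}^{T}\En^{\sM,\pi\ind{t}}[r\ind{t}_h]+T\vepsapx$.

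It then remains to bound the cumulative ``reward'' $\sum_{t=1}^{T}\En^{\sM,\pi\ind{t}}[r\ind{t}_h]$ collected by the algorithm, which I would do via the log-potential $\Phi_t\ldef\sum_{(x,a)}\mu(x,a)\log\!\bigl(D_t(x,a)+\Cinf\mu(x,a)\bigr)$. Its per-round increment is $\Phi_t-\Phi_{t-1}=\sum_{(x,a)}\mu(x,a)\log(1+z_t(x,a))$ with $z_t(x,a)=d_h^{\sM,\pi\ind{t}}(x,a)/\bigl(D_{t-1}(x,a)+\Cinf\mu(x,a)\bigr)$. The definition of $\CinfhM(\mu)$ guarantees $d_h^{\sM,\pi}(x,a)\leq\Cinf\mu(x,a)$ for all $\pi$ and $(x,a)$, hence $z_t(x,a)\in[0,1]$; the elementary bound $\log(1+z)\geq z/2$ on $[0,1]$ then gives $\Phi_t-\Phi_{t-1}\geq\tfrac12\sum_{(x,a)}d_h^{\sM,\pi\ind{t}}(x,a)\,r\ind{t}_h(x,a)=\tfrac12\En^{\sM,\pi\ind{t}}[r\ind{t}_h]$. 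Telescoping, $\sum_{t=1}^{T}\En^{\sM,\pi\ind{t}}[r\ind{t}_h]\leq 2(\Phi_T-\Phi_0)=2\sum_{(x,a)}\mu(x,a)\log\!\bigl(1+D_T(x,a)/(\Cinf\mu(x,a))\bigr)$, and a second use of $d_h^{\sM,\pi}\leq\Cinf\mu$ gives $D_T(x,a)\leq T\Cinf\mu(x,a)$, so $\Phi_T-\Phi_0\leq\log(1+T)$ (since $\mu$ is a probability measure) and therefore $\sum_{t=1}^{T}\En^{\sM,\pi\ind{t}}[r\ind{t}_h]\leq 2\log(1+T)$.

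Putting the pieces together gives $\muCovM(p)\leq 2\log(1+T)+T\vepsapx$. Plugging in $T=\veps^{-1}$, using $1+\veps^{-1}\leq 2\veps^{-1}$ for $\veps\leq 1$, and invoking the hypothesis $\vepsapx\leq\veps\log(2\veps^{-1})$ (so that $T\vepsapx\leq\log(2\veps^{-1})$) yields $\muCovM(p)\leq 3\log(2\veps^{-1})$; the support bound $\abs{\supp(p)}\leq\veps^{-1}$ is immediate since $p$ is uniform over $T=\veps^{-1}$ policies. The final claim $\CovM(p)\leq 6\Cinf\log(2\veps^{-1})$ then follows from \cref{prop:linf_relaxation} ($\CovM(p)\leq 2\Cinf\cdot\muCovM(p)$). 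I expect the delicate points to be organizational rather than conceptual: keeping the indexing of $D_t$ versus $r\ind{t}_h$ consistent, and tracking exactly where the two applications of the coverability bound $d_h^{\sM,\pi}\leq\Cinf\mu$ enter (once to force $z_t\in[0,1]$ for the per-round potential gain, once to cap $D_T$ for the total potential range). The choice $\veps=1/T$ is what makes the reduction in the first paragraph an exact identity; everything downstream is a textbook potential argument.
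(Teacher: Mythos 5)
Your proposal is correct and follows essentially the same route as the paper's proof: use monotonicity of the reward in $t$ to bound the final objective by the average over rounds, invoke the $\vepsapx$-approximate optimality of each $\pi\ind{t}$, and control the cumulative reward $\sum_t\En^{\sM,\pi\ind{t}}[r\ind{t}_h]$ by an elliptic-potential argument using $d_h^{\sM,\pi}\leq\Cinf\mu$. The only difference is cosmetic: the paper cites its per-state elliptic potential lemma (\cref{lem:elliptic_potential}, giving $2\log(2T)$) as a black box, whereas you re-derive the same bound from scratch via the log-potential $\Phi_t$, obtaining the marginally sharper $2\log(1+T)$.
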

\icml{See \cref{sec:planning_examples} for examples. \cutedit{Note that the algorithmic template in
    \cref{alg:linf_relaxation} can be applied in unknown MDPs as long as \cref{line:linf_opt} can be implemented sample-efficiently; this observation is the basis for the model-free online exploration
     algorithm given in \cref{sec:model_free}.}}
\arxiv{\cref{thm:linf_optimization} is proven using a
per-state/action elliptic potential argument inspired by the coverability-based regret bounds in \cite{xie2023role}. Note that the iteration complexity $T$ and the support size of the
resulting policy cover scale inversely with the approximation
parameter $\veps$ in the objective $\muCovM(p)$\icml{, leading to a
  computational-statistical tradeoff.}\arxiv{, leading to a
  computational-statistical tradeoff. \padelete{. This leads to a
computational-statistical tradeoff, where a tighter approximation
factor (which leads to tighter sample complexity guarantees) requires
more runtime and memory compared to a looser
approximation.} \pacomment{kept the more condensed icml version here}}}

\arxiv{
\begin{remark}
    Note that while the results in this section are presented
    for the setting in which the underlying MDP $M$ is ``known'' to
    the learner (planning), the algorithmic template in
    \cref{alg:linf_relaxation} can be applied even when $M$ is
    unknown, as long as the policy optimization step in
    \cref{line:linf_opt} can be implemented in a sample-efficient
    fashion. This observation is the basis for the model-free
    exploration algorithm given in \cref{sec:model_free}.
  \end{remark}
}

  \arxiv{
\paragraph{Examples}
As discussed above, the relaxation \eqref{eq:linf_relaxation} used by
\cref{alg:linf_relaxation} can be optimized efficiently whenever a
(non-admissible) state-action distribution
$\mu\in\Delta(\cX\times\cA)$ with low $L_\infty$-concentrability
$\CinfhM$ can be computed efficiently for the MDP $M$. Examples of MDP
classes that admit efficiently computable distributions with low
concentrability include:
\arxiv{\begin{itemize}}
  \icml{\begin{itemize}[leftmargin=*]}
\item When $M$ is a tabular MDP, the distribution $\mu(x,a) =
  \frac{1}{\abs*{\cX}\abs*{\cA}}$ (which clearly admits a closed form
  representation) achieves $\CinfhM(\mu)\leq\abs*{\cX}\abs*{\cA}$.
\item For a Block MDPs
  \citep{du2019latent,misra2019kinematic,zhang2022efficient,mhammedi2023representation}
  with latent state space $\cS$, emission distribution
  $q:\cS\to\Delta(\cX)$, and decoder $\phi^{\star}:\cX\to\cS$, the
  distribution
  $\mu(x,a)\ldef{}q(x\mid{}\phistar(x))\cdot{}\frac{1}{\abs*{\cS}\abs*{\cA}}$
  achieves $\CinfhM(\mu)\leq\abs*{\cS}\abs*{\cA}$
  \citep{xie2023role}. Again, this distribution admits a closed form
  representation when $M$ is explicitly specified.
\item For low-rank MDPs with the structure in \cref{eq:low_rank}, the
  distribution given by $\mu(x,a) =
  \frac{\nrm*{\psi_h(x)}_2}{\int\nrm*{\psi_h(x')}_2dx'}_\cdot\frac{1}{\abs{\cA}}$
  achieves $\CinfhM(\mu)\leq{}B\abs*{\cA}$ under the standard
  normalization assumption that
  $\int\nrm*{\psi_h(x')}_2dx'\leq{}B$ and $\nrm*{\phi(x,a)}_2\leq{}1$ for some (typically
  dimension-dependent) constant $B>0$
  \citep{golowich2023exploring}.
\end{itemize}
These examples highlight that for many settings of interest, computing a
covering distribution $\mu\in\Delta(\cX\times\cA)$ when the model is
known is significantly simpler than computing an explicit policy cover
$p\in\Delta(\PiRNS)$, showcasing the utility of
\cref{alg:linf_relaxation}.
}

\subsection{The \pushforwardcov Relaxation}
\label{sec:pushforward_relaxation}

\arxiv{
\arxiv{\begin{algorithm}[tp]}
  \icml{\begin{algorithm}[htp]}
    \setstretch{1.3}
     \begin{algorithmic}[1]
       \State \textbf{input}:
       Layer $h\in\brk{H}$, precision parameter
,       $\veps\in\brk{0,1}$, optimization tolerance $\vepsapx>0$.
       \State Set $T=\frac{1}{\veps}$.
  \For{$t=1, 2, \cdots, T$}
  \State Compute $\pi\ind{t}\in\Pi$ such that %
\begin{small}
    \begin{align*}
      \En^{M,\pi\ind{t}}\brk*{\frac{P_{h-1}^{\sM}(x_h\mid{}x_{h-1},a_{h-1})}{\sum_{i<t}d_h^{\sM,\pi\ind{i}}(x_h)+P_{h-1}^{\sM}(x_h\mid{}x_{h-1},a_{h-1})}}
      \geq \sup_{\pi\in\Pi}\En^{M,\pi}\brk*{\frac{P_{h-1}^{\sM}(x_h\mid{}x_{h-1},a_{h-1})}{\sum_{i<t}d_h^{\sM,\pi\ind{i}}(x_h)+P_{h-1}^{\sM}(x_h\mid{}x_{h-1},a_{h-1})}}-\vepsapx.
    \end{align*}
  \end{small}
  \label{line:pushforward_opt}
  \EndFor
  \vspace{-10pt}
\State  Return $p=\unif(\pi\ind{1},\ldots,\pi\ind{T})$.
\end{algorithmic}
\caption{Approximate Policy Cover Computation via
  Pushforward Coverability Relaxation}
\label{alg:pushforward_relaxation}
\end{algorithm}

}

The main drawback behind the $L_{\infty}$-Coverability relaxation in
the prequel is the assumption of access to a covering distribution
$\mu\in\Delta(\cX\times\cA)$.\footnote{Note that since we consider
  planning, this is a computational assumption, not a statistical assumption.} The next relaxation, which is inspired
by the notion of \emph{pushforward concentrability} in offline reinforcement learning
\citep{xie2021batch,foster2022offline}, removes this assumption at the
cost of giving a looser upper bound. This objective takes the form
\icml{
  {\begin{small}
    \begin{align}
      \label{eq:pushforward_relaxation}
      \pCovM(p) =
      \sup_{\pi\in\Pi}\En^{\sM,\pi}\brk*{\frac{P_{h-1}^{\sM}(x_h\mid{}x_{h-1},a_{h-1})}{d_h^{\sM,p}(x_h)+\veps\cdot{}P^{\sM}_{h-1}(x_h\mid{}x_{h-1}a_{h-1})}},
    \end{align}
  \end{small}}with $\pCovOptM \ldef \inf_{p\in\Delta(\Pi)}\pCovM(p)$. }%
\arxiv{
\begin{align}
  \label{eq:pushforward_relaxation}
    \pCovM(p) =
\sup_{\pi\in\Pi}\En^{\sM,\pi}\brk*{\frac{P_{h-1}^{\sM}(x_h\mid{}x_{h-1},a_{h-1})}{d_h^{\sM,p}(x_h)+\veps\cdot{}P^{\sM}_{h-1}(x_h\mid{}x_{h-1}a_{h-1})}},
    \mathand \pCovOptM = \inf_{p\in\Delta(\Pi)}\pCovM(p).
\end{align}
}%
    This objective replaces the covering distribution $\mu$ with the
    transition distribution $P^{\sM}_{h-1}$ for $M$, making it more
    practical to compute.
    Unlike the $L_{\infty}$-Coverability relaxation \eqref{eq:linf_relaxation}, the optimal value $\pCovOptM$ may not be bounded by
    \arxiv{the $L_{\infty}$-Coverability parameter }\icml{the
      parameter }$\CinfhM$. However, we show
    that the value can be controlled by a related \emph{pushforward
      coverability} parameter given by
      \icml{$\CpushhM = \inf_{\mu\in\Delta(\cX)}
        \sup_{(x,a,x')\in\cX\times\cA\times\cX}\crl*{\frac{P_{h-1}^{\sM}(x'\mid{}x,a)}{\mu(x')}}$,
      }
  \arxiv{\begin{align}
  \label{eq:cpush}
\CpushhM = \inf_{\mu\in\Delta(\cX)}  \sup_{(x,a,x')\in\cX\times\cA\times\cX}\crl*{\frac{P_{h-1}^{\sM}(x'\mid{}x,a)}{\mu(x')}},
\end{align}}
with $\CpushM=\max_{h\in\brk{H}}\CpushhM$.\footnote{This\arxiv{ definition} is
  inspired by\arxiv{ the notion of} \emph{pushforward
    concentrability}\icml{ \citep{xie2021batch}; $\CinfhM\leq\abs*{\cA}\cdot\CpushhM$ but the converse is not true.}\arxiv{,
  defined for a distribution $\mu$ by $\CpushhM(\mu) =
  \sup_{(x,a,x')\in\cX\times\cA\times\cX}\crl[\big]{\frac{P_{h-1}^{\sM}(x'\mid{}x,a)}{\mu(x')}}$
  \citep{xie2021batch}. Note that
$\CinfhM\leq\abs*{\cA}\cdot\CpushhM$ but the converse is not true.}} 
  \begin{restatable}{proposition}{pushforwardrelaxation}
    \label{prop:pushforward_relaxation}
    \akedit{Fix $h \in [H]$.} For any $p\in\Delta(\PiRNS)$, if we define $p'\in\Delta(\PiRNS)$
    as the distribution induced by sampling $\pi\sim{}p$ and executing
    $\pi\circ_{h}\piunif$, we have that for all $\veps>0$,
    \begin{align}
      \CovM(p')\leq{}\abs*{\cA}\cdot\pCovM(p).
    \end{align}
Furthermore, $\pCovOptM \leq{}\CpushhM$ for all $\veps>0$.
\end{restatable}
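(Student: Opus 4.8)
The plan is to handle the two claims separately. For $\CovM(p')\le\abs{\cA}\cdot\pCovM(p)$, I would argue directly by a one-step ``peeling'' of layer $h-1$. First I would record two elementary facts about $p'$: since $p'$ draws $\pi\sim p$ and runs $\pi\circ_h\piunif$, the layer-$h$ state marginal is unchanged, $d_h^{\sM,p'}(x_h)=d_h^{\sM,p}(x_h)$, while the layer-$h$ action is uniform, so $d_h^{\sM,p'}(x_h,a_h)=\tfrac{1}{\abs{\cA}}d_h^{\sM,p}(x_h)$. Substituting into the definition of $\CovM(p')$, I would use monotonicity of $z\mapsto z/(c+\veps z)$ on $\bbR_+$ together with $d_h^{\sM,\pi}(x_h,a_h)\le d_h^{\sM,\pi}(x_h)$ to replace the numerator inside the fraction by $d_h^{\sM,\pi}(x_h)$, and then marginalize the outer expectation over $a_h$; for every $\pi\in\Pi$ this produces a bound of the form $\abs{\cA}\sum_{x_h}d_h^{\sM,\pi}(x_h)^2/(d_h^{\sM,p}(x_h)+\veps\abs{\cA}\,d_h^{\sM,\pi}(x_h))$, and since $\abs{\cA}\ge1$ I can replace $\veps\abs{\cA}$ by $\veps$ in the denominator (this only increases the quantity).

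The crux is then to write $d_h^{\sM,\pi}(x_h)=\En_{(x_{h-1},a_{h-1})\sim d_{h-1}^{\sM,\pi}}[P_{h-1}^{\sM}(x_h\mid x_{h-1},a_{h-1})]$, i.e.\ as an average of one-step transition probabilities, and to apply Jensen's inequality to $z\mapsto z^2/(c+\veps z)$, which is convex on $\bbR_+$ for every fixed $c\ge0$ (visible from $z^2/(c+\veps z)=z/\veps-c/\veps^2+c^2/(\veps^2(c+\veps z))$). Taking $c=d_h^{\sM,p}(x_h)$ gives, for each $x_h$,
\[
\frac{d_h^{\sM,\pi}(x_h)^2}{d_h^{\sM,p}(x_h)+\veps\,d_h^{\sM,\pi}(x_h)}\;\le\;\En_{(x_{h-1},a_{h-1})\sim d_{h-1}^{\sM,\pi}}\brk*{\frac{P_{h-1}^{\sM}(x_h\mid x_{h-1},a_{h-1})^2}{d_h^{\sM,p}(x_h)+\veps\,P_{h-1}^{\sM}(x_h\mid x_{h-1},a_{h-1})}}.
\]
Summing over $x_h$, exchanging sum and expectation (all terms nonnegative), and observing that the inner $x_h$-sum weighted by $P_{h-1}^{\sM}(\cdot\mid x_{h-1},a_{h-1})$ is exactly the expectation over $x_h\sim P_{h-1}^{\sM}(\cdot\mid x_{h-1},a_{h-1})$, the right-hand side collapses to $\En^{\sM,\pi}[P_{h-1}^{\sM}(x_h\mid x_{h-1},a_{h-1})/(d_h^{\sM,p}(x_h)+\veps P_{h-1}^{\sM}(x_h\mid x_{h-1},a_{h-1}))]\le\pCovM(p)$, and a supremum over $\pi\in\Pi$ finishes this part.

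For $\pCovOptM\le\CpushhM$, the plan is a minimax argument in the spirit of the proof of \cref{prop:linf}. I would set $F(p,q)=\En_{\pi\sim q}\En^{\sM,\pi}[P_{h-1}^{\sM}(x_h\mid x_{h-1},a_{h-1})/(d_h^{\sM,p}(x_h)+\veps P_{h-1}^{\sM}(x_h\mid x_{h-1},a_{h-1}))]$; since this inner objective is linear in the mixing distribution $q$, $\pCovM(p)=\sup_{q\in\Delta(\Pi)}F(p,q)$, while $F(\cdot,q)$ is convex ($d_h^{\sM,\cdot}$ is affine in the mixing distribution and $c\mapsto a/(c+\veps a)$ is convex on $\bbR_+$) and $F(p,\cdot)$ is linear. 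The minimax theorem then gives $\pCovOptM=\inf_p\sup_q F(p,q)=\sup_q\inf_p F(p,q)\le\sup_q F(q,q)$. To bound $F(q,q)$, I would fix $q$, pick $\mu^{\star}\in\Delta(\cX)$ nearly achieving $\CpushhM$ so that $P_{h-1}^{\sM}(x'\mid x,a)\le(\CpushhM+\delta)\mu^{\star}(x')$ for all $(x,a,x')$, drop the $\veps$ term from the denominator (terms with $d_h^{\sM,q}(x_h)=0$ vanish), bound $P_{h-1}^{\sM}(x_h\mid x_{h-1},a_{h-1})^2\le(\CpushhM+\delta)\mu^{\star}(x_h)P_{h-1}^{\sM}(x_h\mid x_{h-1},a_{h-1})$, and telescope $\sum_{x_{h-1},a_{h-1}}d_{h-1}^{\sM,q}(x_{h-1},a_{h-1})P_{h-1}^{\sM}(x_h\mid x_{h-1},a_{h-1})=d_h^{\sM,q}(x_h)$, obtaining $F(q,q)\le(\CpushhM+\delta)\sum_{x_h}\mu^{\star}(x_h)=\CpushhM+\delta$; letting $\delta\downarrow0$ concludes.

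I expect the one genuinely delicate point to be justification of the minimax exchange when $\Pi$ is infinite --- precisely the subtlety already negotiated for \cref{prop:linf} --- which I would handle with the same compactness/continuity apparatus (or the same auxiliary minimax lemma) used there; the convex--concave structure of $F$ is exactly as required. The other place that needs care is the Jensen step in the first part: the argument genuinely relies on convexity of $z\mapsto z^2/(c+\veps z)$, and it is important that the extra factor $\abs{\cA}$ which appears in the denominator upon marginalizing the action can be discarded without loss.
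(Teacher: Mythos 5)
Your proposal is correct. For the first claim it is essentially the paper's argument: the paper asserts the reduction $\CovM(p')\leq\abs{\cA}\cdot\sup_{\pi\in\Pi}\En^{\sM,\pi}\brk[\big]{d_h^{\sM,\pi}(x_h)/(d_h^{\sM,p}(x_h)+\veps d_h^{\sM,\pi}(x_h))}$ without detail and you fill it in correctly (using $d_h^{\sM,p'}(x,a)=\tfrac{1}{\abs{\cA}}d_h^{\sM,p}(x)$, monotonicity of $z\mapsto z/(c+\veps z)$, and discarding the harmless extra $\abs{\cA}$ in the $\veps$-term), and then the key step is identical: write $d_h^{\sM,\pi}(x_h)$ as an average of one-step kernels and apply Jensen to the convex map $z\mapsto z^2/(c+\veps z)$ (the paper's \cref{lem:convex_f}, which you verify by an algebraic decomposition rather than the second derivative). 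For the second claim your route differs mildly from the paper's: the paper first invokes the pushforward analogue of \cref{lem:pi_to_mu} to trade the $\veps P_{h-1}$ regularizer for $\delta\cdot\CpushhM\,\mu(x_h)$ at multiplicative cost $(1+\delta/\veps)\CpushhM$, applies Sion to that $\mu$-based objective, and bounds the diagonal value by $1$; you instead apply Sion directly to the $\veps$-regularized pushforward objective and bound the diagonal $F(q,q)$ by $\CpushhM+\delta$ using a near-optimal pushforward covering distribution and the telescoping identity $\sum_{x_{h-1},a_{h-1}}d_{h-1}^{\sM,q}(x_{h-1},a_{h-1})P_{h-1}(x_h\mid x_{h-1},a_{h-1})=d_h^{\sM,q}(x_h)$ (your parenthetical about dropping the $\veps$-term is fine, since any triple with positive weight forces $d_h^{\sM,q}(x_h)>0$). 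Your version saves the intermediate change-of-regularizer lemma and the $(1+\delta/\veps)$ bookkeeping, at the price of a slightly longer diagonal computation; both versions lean on the same minimax exchange, and the compactness/continuity caveat you flag is exactly the one the paper itself glosses over (bounded integrand, convex--concave structure, Lipschitz/continuity in total variation for $\veps>0$), so you are no worse off than the paper on that point.
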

In particular, any
distribution $p\in\Delta(\PiRNS)$ that optimizes the relaxation in
\cref{eq:pushforward_relaxation} achieves \mainobj value
$\CovM(p')\leq\abs*{\cA}\cdot\CpushhM$.\footnote{The dependence on
  $\abs{\cA}$ in this result is natural, as pushforward coverability
  only \icml{considers state occupancies.}\arxiv{grants control over density ratios for state occupancies as
  opposed to state-action occupancies.}} Notable special cases where pushforward coverability is
bounded include:
\icml{(i) Tabular MDPs have $\CpushM \leq \abs*{\cX}$, (ii) Block MDPs
  \citep{du2019latent,misra2019kinematic,zhang2022efficient,mhammedi2023representation}
  with latent state space $\cS$ have $\CpushM\leq\abs*{\cS}$, (iii) Low-Rank MDPs \eqref{eq:low_rank} have
  $\CpushM\leq{}d$ \citep{xie2021batch}.
}
\arxiv{
\begin{itemize}
\item Tabular MDPs have $\CpushM \leq \abs*{\cX}$.
\item Block MDPs
  \citep{du2019latent,misra2019kinematic,zhang2022efficient,mhammedi2023representation}
  with latent state space $\cS$ have $\CpushM\leq\abs*{\cS}$.
\item Low-rank MDPs with the structure in \cref{eq:low_rank} have
  $\CpushM\leq{}d$ \citep{xie2021batch}.
\end{itemize}
}

\icml{
\paragraph{The algorithm}
An iterative algorithm to optimize the
pushforward coverability relaxation is given in
\cref{alg:pushforward_relaxation}\icml{ (deferred to
  \cref{sec:planning_omitted} for space)}. The algorithm follows the same
template as \cref{alg:linf_relaxation}, but at each step $t$, the policy $\pi\ind{t}$ is computed by solving
an objective based on \cref{eq:pushforward_relaxation}. The main
guarantee is as follows.
}

\arxiv{
\paragraph{The algorithm}
An iterative algorithm to compute
a distribution $p\in\Delta(\PiRNS)$ that optimizes the
pushforward coverability relaxation in
\cref{eq:pushforward_relaxation} is given in
\cref{alg:pushforward_relaxation}\icml{ (deferred to
  \cref{sec:planning_omitted} for space)}. The algorithm follows the same
template as \cref{eq:linf_relaxation}, with the only difference being that at each step $t$, given policies $\pi\ind{1},\ldots,\pi\ind{t-1}$ computed so far, the
algorithm computes the new policy $\pi\ind{t}$ by solving the alternative
policy optimization problem
\icml{$\pi\ind{t} = $
  \begin{small}
    \begin{align*}
      \label{eq:pushfoward_opt}
\argmax_{\pi\in\Pi}  \En^{\sM,\pi}\brk*{\frac{P_{h-1}^{\sM}(x_h\mid{}x_{h-1},a_{h-1})}{\sum_{i<t}d_h^{\sM,\pi\ind{i}}(x_h)+P_{h-1}^{\sM}(x_h\mid{}x_{h-1},a_{h-1})}}
    \end{align*}\end{small}}%
\arxiv{\begin{align}
  \label{eq:pushfoward_opt}
\pi\ind{t}=\argmax_{\pi\in\Pi}  \En^{\sM,\pi}\brk*{\frac{P_{h-1}^{\sM}(x_h\mid{}x_{h-1},a_{h-1})}{\sum_{i<t}d_h^{\sM,\pi\ind{i}}(x_h)+P_{h-1}^{\sM}(x_h\mid{}x_{h-1},a_{h-1})}}
\end{align}}in \cref{line:pushforward_opt} (up to tolerance $\vepsapx>0$). This
objective is a policy optimization problem with \emph{stochastic} rewards given by
\icml{
  \begin{small}
    \begin{equation}
      r\ind{t}_{h-1} \ldef \frac{P_{h-1}^{\sM}(x_h\mid{}x_{h-1},a_{h-1})}{\sum_{i<t}d_h^{\sM,\pi\ind{i}}(x_h)+P_{h-1}^{\sM}(x_h\mid{}x_{h-1},a_{h-1})}
      \label{eq:pushforward_reward}
    \end{equation}
  \end{small}%
}%
\arxiv{\begin{equation}
  r\ind{t}_{h-1} \ldef \frac{P_{h-1}^{\sM}(x_h\mid{}x_{h-1},a_{h-1})}{\sum_{i<t}d_h^{\sM,\pi\ind{i}}(x_h)+P_{h-1}^{\sM}(x_h\mid{}x_{h-1},a_{h-1})}
  \label{eq:pushforward_reward}
\end{equation}
}
for $x_h\sim{}P_{h-1}^{\sM}(\cdot\mid{}x_{h-1},a_{h-1})$. As before,
when the MDP $M$ is known, this is a standard
reward-driven planning problem. In addition, compared to the previous relaxation,
the reward \eqref{eq:pushforward_reward} involves only simple,
easy-to-compute quantities for the MDP $M$.

The following theorem shows that \cref{alg:pushforward_relaxation} converges
to a policy cover $p\in\Delta(\PiRNS)$ that optimizes
the relaxation in \cref{eq:linf_relaxation} up to a small
$\log(\veps^{-1})$ approximation factor.%
}

\begin{restatable}{theorem}{pushforwardoptimization}
  \label{thm:pushforward_optimization}
    For any $\veps\in\brk*{0,1}$ and $h\in\brk{H}$,
    whenever $\vepsapx\leq{}\CpushhM\cdot\veps\log(2\veps^{-1})$, \cref{alg:pushforward_relaxation}
  produces a distribution $p\in\Delta(\Pi)$ with
  $\abs*{\supp(p)}\leq{}\veps^{-1}$ such that
\begin{align}
  \pCovM(p) \leq{} 5\CpushhM\log(2\veps^{-1}).
\end{align}
Consequently, if we define $p'\in\Delta(\PiRNS)$
    as the distribution induced by sampling $\pi\sim{}p$ and executing
    $\pi\circ_{h}\piunif$, we have that $\CovM(p')\leq{}5\abs*{\cA}\CpushhM\log(2\veps^{-1})$.
\end{restatable}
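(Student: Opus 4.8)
The plan is to follow the elliptic-potential template used for \cref{thm:linf_optimization}, inserting one extra step to cope with the fact that the pushforward reward is quadratic, rather than linear, in the transition law. Throughout, abbreviate $\Cpush \ldef \CpushhM$ and fix a distribution $\mu\in\Delta(\cX)$ attaining (up to vanishing slack) the pushforward coverability parameter $\CpushhM$, in the sense that $P_{h-1}^{\sM}(x'\mid x,a)\le\Cpush\mu(x')$ for all $(x,a,x')$, and hence $d_h^{\sM,\pi}(x')=\En^{\sM,\pi}\brk*{P_{h-1}^{\sM}(x'\mid x_{h-1},a_{h-1})}\le\Cpush\mu(x')$ for every $\pi$. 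Write $D_t(x)=\sum_{i<t}d_h^{\sM,\pi\ind{i}}(x)$, and for a trajectory under a policy $\pi$ abbreviate $P(x)\ldef P_{h-1}^{\sM}(x\mid x_{h-1},a_{h-1})$, so that the optimization step of \cref{alg:pushforward_relaxation} produces $\pi\ind{t}\in\Pi$ with $g_t(\pi\ind{t})\ge V_t-\vepsapx$, where $g_t(\pi)\ldef\En^{\sM,\pi}\brk*{\frac{P(x_h)}{D_t(x_h)+P(x_h)}}$ and $V_t\ldef\sup_{\pi\in\Pi}g_t(\pi)$.

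First I would record two elementary reductions. (i) Since $p=\unif(\pi\ind{1},\dots,\pi\ind{T})$ with $T=\veps^{-1}$, we have $d_h^{\sM,p}(x)=\veps D_{T+1}(x)$, and clearing the common factor $\veps$ from numerator and denominator shows $\pCovM(p)=\tfrac1\veps V_{T+1}=T\,V_{T+1}$, where $V_{T+1}$ uses $D_{T+1}$ in the denominator. (ii) $D_t(x)$ is nondecreasing in $t$, so $g_t(\pi)$ is nonincreasing in $t$ for each fixed $\pi$; hence $V_{T+1}\le V_t\le g_t(\pi\ind{t})+\vepsapx$ for every $t\le T$, and averaging over $t$ gives $\pCovM(p)\le\sum_{t=1}^T g_t(\pi\ind{t})+T\vepsapx$. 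Under the hypothesis $\vepsapx\le\Cpush\veps\log(2\veps^{-1})$ one has $T\vepsapx\le\Cpush\log(2\veps^{-1})$, so the theorem reduces to proving $\sum_{t=1}^T g_t(\pi\ind{t})\le 4\Cpush\log(2\veps^{-1})$.

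For that bound I would condition the expectation defining $g_t(\pi\ind{t})$ on the layer-$(h-1)$ state--action pair, so the inner conditional expectation becomes $\sum_{x}\frac{P(x)^2}{D_t(x)+P(x)}$, and then apply the pointwise inequality
\[
\frac{P(x)^2}{D_t(x)+P(x)} \;=\; P(x)\cdot\frac{P(x)}{D_t(x)+P(x)} \;\le\; P(x)\cdot\frac{\Cpush\mu(x)}{D_t(x)+\Cpush\mu(x)} \;=\; \Cpush\,\mu(x)\cdot\frac{P(x)}{D_t(x)+\Cpush\mu(x)},
\]
which uses $P(x)\le\Cpush\mu(x)$ and monotonicity of $y\mapsto y/(D_t(x)+y)$. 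Summing over $x$, taking the outer expectation over $\pi\ind{t}$'s layer-$(h-1)$ occupancy (which converts $\En^{\sM,\pi\ind{t}}\brk*{P(x)}$ into the increment $d_h^{\sM,\pi\ind{t}}(x)$), and summing over $t$ yields
\[
\sum_{t=1}^T g_t(\pi\ind{t}) \;\le\; \Cpush\sum_{x\in\cX}\mu(x)\sum_{t=1}^T\frac{d_h^{\sM,\pi\ind{t}}(x)}{D_t(x)+\Cpush\mu(x)}.
\]
For each fixed $x$ the inner sum is a familiar sum of increments over running totals: since $d_h^{\sM,\pi\ind{t}}(x)\le\Cpush\mu(x)\le D_t(x)+\Cpush\mu(x)$, the elementary bound $z/S\le\tfrac1{\log2}\log(1+z/S)$ (valid for $z/S\in[0,1]$) telescopes to $\tfrac1{\log2}\log\!\big(1+\tfrac{D_{T+1}(x)}{\Cpush\mu(x)}\big)\le\tfrac1{\log2}\log(1+T)$, using $D_{T+1}(x)\le T\Cpush\mu(x)$. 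Since $\mu$ is a probability distribution, this gives $\sum_{t=1}^T g_t(\pi\ind{t})\le\tfrac{\Cpush}{\log2}\log(1+T)\le\tfrac{\Cpush}{\log2}\log(2\veps^{-1})$ (using $1+\veps^{-1}\le2\veps^{-1}$), which is within budget. Assembling the pieces gives $\pCovM(p)\le\big(1+\tfrac1{\log2}\big)\Cpush\log(2\veps^{-1})\le5\Cpush\log(2\veps^{-1})$, and $\abs{\supp(p)}\le T=\veps^{-1}$ is immediate; the claim about $p'$ then follows from \cref{prop:pushforward_relaxation}, which gives $\CovM(p')\le\abs{\cA}\cdot\pCovM(p)$.

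The main obstacle is the pointwise inequality in the third paragraph. After conditioning on $(x_{h-1},a_{h-1})$, the pushforward reward is quadratic in the transition law $P$, so the elliptic-potential argument behind \cref{thm:linf_optimization} --- which is tailored to a reward linear in a fixed covering distribution --- does not transfer verbatim; bounding $P^2/(D_t+P)$ by $\Cpush\,\mu\cdot P/(D_t+\Cpush\mu)$ is precisely the step that ``linearizes'' the reward at the cost of a single factor $\Cpush$ (rather than $\Cinf$), which is also the source of the slightly weaker guarantee relative to \cref{thm:linf_optimization}. Everything after that is routine per-state telescoping together with the two reductions above; the only point needing care is that all sums converge, which is guaranteed by the countability of $\cX$ and the boundedness of the summands.
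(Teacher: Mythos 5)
Your proof is correct and takes essentially the same route as the paper's: reduce $\pCovM(p)$ to a sum of per-iteration values via monotonicity of the objective in $t$, absorb the optimization error using $T\vepsapx\le\CpushhM\log(2\veps^{-1})$, compare the pushforward reward against a covering distribution $\mu$ attaining $\CpushhM$, and finish with a per-state potential argument, concluding the $p'$ claim from \cref{prop:pushforward_relaxation}. The only differences are cosmetic: your pointwise bound $\frac{P^2}{D_t+P}\le \Cpush\,\mu\cdot\frac{P}{D_t+\Cpush\mu}$ after conditioning on $(x_{h-1},a_{h-1})$ replaces the paper's appeal to \cref{lem:pi_to_mu_pushforward} (and saves a factor of $2$), and your explicit telescoping via $u\le\log(1+u)/\log 2$ replaces the black-box use of \cref{lem:elliptic_potential}; both variants give the same structure and land within the stated constant.
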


\arxiv{\section{Efficient Model-Based Exploration via \mainobj}}
\icml{\section{\mbox{Efficient Online Exploration via \mainobj}}}
\label{sec:model_based}

In this section, we turn our attention to sample-efficient
\emph{online exploration} for the setting in which the underlying MDP $\Mstar$ is
unknown.
Throughout the section, we work with an arbitrary user-specified subset of policies
$\Pi\subseteq\PiRNS$.

    \paragraph{Model-based reinforcement learning setup}

We focus on \emph{model-based reinforcement learning},
  and assume access
  to a \emph{model class} $\cM$ that contains the true MDP $\Mstar$.
\begin{assumption}[Realizability]
  \label{ass:realizability}
  The learner has access to a class $\cM$ containing the true model $\Mstar$.
\end{assumption}
The class $\cM$ is user-specified, and can be parameterized by
deep neural networks or any other flexible function class, with the
best choice depending on the problem domain.%

For $M\in\cM$, we use $M(\pi)$ as shorthand for the law of the trajectory
$(x_1,a_1),\ldots,(x_H,a_H)$ for policy $\pi$ in $M$.
We define the squared Hellinger distance for probability measures $\bbP$ and $\bbQ$ \icml{by $\Dhels{\bbP}{\bbQ}=\int\prn[\big]{\sqrt{d\bbP}-\sqrt{d\bbQ}}^{2}$.}%
\arxiv{with a common
dominating measure $\omega$ by
\begin{equation}
  \label{eq:hellinger}
  \Dhels{\bbP}{\bbQ}=\int\prn[\bigg]{\sqrt{\frac{d\bbP}{d\omega}}-\sqrt{\frac{d\bbQ}{d\omega}}}^{2}d\omega.
\end{equation}
}

\paragraph{Estimation oracles}
Our algorithms and regret bounds use
the primitive of an \emph{estimation oracle}, denoted by $\AlgEst$, a user-specified
algorithm for estimation that is used to estimate the underlying model
$\Mstar$ from data
\arxiv{\citep{foster2020beyond,foster2023lecture,foster2021statistical,foster2023tight}}
\icml{\citep{foster2020beyond,foster2023lecture}} sequentially. At each episode $t$, given the data
$\hist\ind{t-1}=(\pi\ind{1},o\ind{1}),\ldots,(\pi\ind{t-1},o\ind{t-1})$
collected so far, \dfedit{where $o\ind{t}\ldef{}(x_1\ind{t},a_1\ind{t}),\ldots,(x_H\ind{t},a_H\ind{t})$,}
the estimation oracle constructs an estimate
\icml{$\Mhat\ind{t}=\AlgEst\prn[\big]{ \crl*{(\act\ind{i},
\obs\ind{i})}_{i=1}^{t-1} }$}
\arxiv{\[
\Mhat\ind{t}=\AlgEst\prn*{ \crl*{(\act\ind{i},
\obs\ind{i})}_{i=1}^{t-1} }
\]}
for the true MDP $\Mstar$.
\icml{We assume that $\AlgEst$ is an
  \emph{offline} estimation oracle, in the sense that each estimator $\Mhat\ind{t}$ has good out-of-sample
performance on the historical dataset $\hist\ind{t-1}$.\loose}

\arxiv{\padelete{We present sample complexity bounds based on two notions of
performance for the estimation oracle: \emph{online estimation} performance
and \emph{offline estimation} performance. For offline estimation, we
assume that each estimator $\Mhat\ind{t}$ has good out-of-sample
performance on the historical dataset $\hist\ind{t-1}$.}
We assume that $\AlgEst$ is an
  \emph{offline estimation oracle}, in the sense that each estimator $\Mhat\ind{t}$ has good out-of-sample
performance on the historical dataset $\hist\ind{t-1}$. \pacomment{deleted mentions of online estimation here}}
\begin{assumption}[Offline estimation oracle\arxiv{ for $\cM$}]
    \label{ass:offline_oracle}
	At each time $t\in[T]$, an offline estimation oracle
        $\AlgEst$ for $\cM$, given
        \arxiv{$$\hist\ind{t-1}=(\pi\ind{1},o\ind{1}),\ldots,(\pi\ind{t-1},o\ind{t-1})$$}%
        \icml{$\hist\ind{t-1}=(\pi\ind{1},o\ind{1}),\ldots,(\pi\ind{t-1},o\ind{t-1})$} with $o\ind{i}\sim\Mstar(\pi\ind{i})$ and
        $\pi\ind{i}\sim p\ind{i}$, returns an estimator
        $\Mhat\ind{t}\in\cM$ such that 
              \icml{$\EstOfft[t]\ldef$
        \begin{align*}
          \sum_{i<t}\En_{\act\ind{i}\sim{}p\ind{i}}\brk[\big]{\DhelsX{\big}{\Mhat\ind{t}(\pi\ind{i})}{\Mstar(\pi\ind{i})}}
          \leq \EstProbOff,
        \end{align*}}
        \arxiv{
        \begin{align*}
              \EstOfft[t] \ldef{}
          \sum_{i<t}\En_{\act\ind{i}\sim{}p\ind{i}}\brk[\big]{\DhelsX{\big}{\Mhat\ind{t}(\pi\ind{i})}{\Mstar(\pi\ind{i})}}
          \leq \EstProbOff,
        \end{align*}}
	for all $t\in\brk{T}$ with probability at least $1-\delta$ whenever $\Mstar\in\cM$, where $\EstProbOff$ is a
        known upper bound.\loose
      \end{assumption}
      As an example, the standard maximum likelihood estimator
      (\mle) satisfies \cref{ass:offline_oracle} with
      $\EstProbOff\leq\bigoh(\log(\abs{\cM}T/\delta))$ (e.g., \citet{foster2023lecture}).

 \padelete{\arxiv{
\paedit{
For \emph{online estimation}, we measure the oracle's estimation
performance in terms of cumulative Hellinger error, which we assume is
bounded as follows.
\begin{assumption}[Online estimation oracle for $\cM$]
    \label{ass:online_oracle}
	At each time $t\in[T]$, an online estimation oracle
        $\AlgEst$ for $\cM$ returns,
        given \arxiv{$$\hist\ind{t-1}=(\pi\ind{1},o\ind{1}),\ldots,(\pi\ind{t-1},o\ind{t-1})$$}\icml{$\hist\ind{t-1}=(\pi\ind{1},o\ind{1}),\ldots,(\pi\ind{t-1},o\ind{t-1})$}
        with $o\ind{i}\sim\Mstar(\pi\ind{i})$ and
        $\pi\ind{i}\sim p\ind{i}$, an estimator
        $\Mhat\ind{t}\in\cM$ such that whenever $\Mstar\in\cM$,
                \icml{
        \begin{align*}
              \EstOn &\ldef{}
          \sum_{t=1}^{T}\En_{\act\ind{t}\sim{}p\ind{t}}\brk[\big]{\DhelsX{\big}{\Mhat\ind{t}(\pi\ind{t})}{\Mstar(\pi\ind{t})}}\\
          &\leq \EstProbOn,
        \end{align*}
        }
        \arxiv{
        \begin{align*}
              \EstOn \ldef{}
          \sum_{t=1}^{T}\En_{\act\ind{t}\sim{}p\ind{t}}\brk[\big]{\DhelsX{\big}{\Mhat\ind{t}(\pi\ind{t})}{\Mstar(\pi\ind{t})}}
          \leq \EstProbOn,
        \end{align*}
        }
	with probability at least $1-\delta$, where $\EstProbOn$ is a
        known upper bound.%
      \end{assumption}
      See Section 4 of
      \citet{foster2021statistical} or \citet{foster2023lecture} for
      further background on online estimation.\\
      Our algorithms support both types of
      estimator, but are stated most directly stated in terms of online
      estimation error, and give tighter sample complexity bounds in
      this case. The requirement in \cref{ass:online_oracle} that
      the online estimator is \emph{proper} (i.e., has
      $\Mhat\ind{t}\in\cM$) is quite stringent, as generic online
      estimation algorithms (e.g., Vovk's aggregating algorithm) are
      improper, and proper algorithms are only known for specialized
      MDP classes such as tabular MDPs (see discussion in
      \citet{foster2021statistical}).\footnote{On the statistical
        side, it is straightforward to extend the results in this
        section to accommodate improper online estimators; we impose
        this restriction for \emph{computational} reasons, as this
        enables the application of the efficient planning results in
        \cref{sec:planning}.} This contrasts with offline estimation,
      where most standard algorithms such as \mle are proper. As such, our bounds based on online
      estimation are best thought of as secondary results, with our
      bounds based on offline estimation serving as the main results.\\
On the technical side, our interest in proper online estimation arises from
the following structural result, which shows that whenever the
\mainopt parameter is bounded, any algorithm with low offline estimation
error also enjoys low online estimation error (with polynomial loss in rate).
      \begin{restatable}[Offline-to-online]{lemma}{loneofflineonline}
        \label{lem:lone_offline_online}
          Any offline estimator $\Mhat\ind{t}$ that satisfies
  \cref{ass:offline_oracle} with estimation error bound $\EstProbOff$
  satisfies \cref{ass:online_oracle} with \arxiv{estimation error bound}
    \icml{
      \mbox{$\EstProbOn \leq \bigoht\prn[\big]{H\prn[\big]{\ConeM[\Mstar](1+\EstProbOff)}^{1/3}T^{2/3}}$}.}
  \arxiv{
  \begin{align}
    \EstProbOn \leq \bigoh\prn[\Big]{H\log H \prn*{\ConeM[\Mstar](1+\EstProbOff)}^{1/3}T^{2/3}
    }.
  \end{align}
  }
\end{restatable}
Note that $\ConeM[\Mstar]\leq\CovOptMmaxzero[\Mstar]$; we leave an
extension to $\CovOptMmax[\Mstar]$ for $\veps>0$ to future
work. \dfc{Can we handle the general $\veps$ extension?}\\
We also make
use of a tighter offline-to-online lemma based on the (larger)
$L_{\infty}$-Coverability parameter $\CinfM[\Mstar]$.
\begin{lemma}[\citet{xie2023role}]
  \label{lem:linf_offline_online}
  Any offline estimator $\Mhat\ind{t}$ that satisfies
  \cref{ass:offline_oracle} with estimation error bound $\EstProbOff$
  satisfies \cref{ass:online_oracle} with \arxiv{estimation error
    bound}
    \icml{
      \mbox{$\EstProbOn \leq \bigoht\prn[\big]{H\prn[\big]{\CinfM[\Mstar]
    T\cdot\EstProbOff}^{1/2} + H\CinfM[\Mstar] 
    }$}.}
  \arxiv{
  \begin{align}
    \EstProbOn \leq \bigoh\prn*{H\log H \sqrt{\CinfM[\Mstar]
    T\log{}T\cdot\EstProbOff} + H\log{}H\cdot\CinfM[\Mstar] 
    }.
  \end{align}}
\end{lemma}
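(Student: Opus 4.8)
The plan is to recover this estimate by following the coverability-based in-sample-to-out-of-sample argument of \citet{xie2023role}. The conclusion is a per-layer phenomenon, so the first step is to reduce, via a standard decomposition of the trajectory Hellinger distance into per-layer transition errors, to bounding for each $h\in[H]$ the quantity $\sum_{t=1}^{T}\En^{\sMstar,p\ind{t}}\brk*{g\ind{t}_h(x_h,a_h)}$, where $g\ind{t}_h(x,a)\ldef\Dhels{\Phat\ind{t}_h(\cdot\mid{}x,a)}{\Pstar_h(\cdot\mid{}x,a)}\in\brk{0,2}$ is the per-state/action transition error of the round-$t$ estimate; correspondingly, \cref{ass:offline_oracle} supplies the per-layer in-sample bound $\sum_{i<t}\En^{\sMstar,p\ind{i}}\brk*{g\ind{t}_h(x_h,a_h)}\lesssim\EstProbOff$ for every $t$.

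Fix a layer $h$ and let $\mu_h\in\Delta(\cX\times\cA)$ attain the $L_{\infty}$-coverability value, so that $d_h^{\sMstar,\pi}(x,a)\le\CinfhM[\Mstar]\,\mu_h(x,a)$ for all $\pi$, hence for all mixtures $p\ind{t}$. Setting $D\ind{t}(x,a)\ldef\sum_{i<t}d_h^{\sMstar,p\ind{i}}(x,a)+\CinfhM[\Mstar]\,\mu_h(x,a)$, I would bound each round's out-of-sample error by inserting this ``inverse-coverability'' reweighting and applying Cauchy--Schwarz over $(x,a)$:
\[
\En^{\sMstar,p\ind{t}}\brk*{g\ind{t}_h(x_h,a_h)}
\;\le\; \sqrt{\sum_{x,a}\frac{d_h^{\sMstar,p\ind{t}}(x,a)^2}{D\ind{t}(x,a)}}\cdot\sqrt{\sum_{x,a}D\ind{t}(x,a)\,g\ind{t}_h(x,a)^2}.
\]
Since $d_h^{\sMstar,p\ind{t}}\le\CinfhM[\Mstar]\mu_h\le D\ind{t}$, the first factor is a genuine elliptic-potential term, and the per-$(x,a)$ telescoping argument underlying the coverability potential bound of \citet{xie2023role} yields $\sum_{t=1}^{T}\sum_{x,a}\frac{d_h^{\sMstar,p\ind{t}}(x,a)^2}{D\ind{t}(x,a)}\lesssim\CinfhM[\Mstar]\log T$. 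Using $g\ind{t}_h=O(1)$, the second factor is at most $\sum_{i<t}\En^{\sMstar,p\ind{i}}\brk*{g\ind{t}_h(x_h,a_h)}+\CinfhM[\Mstar]\En_{\mu_h}\brk*{g\ind{t}_h}\lesssim\EstProbOff+\CinfhM[\Mstar]$, by the in-sample bound.

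To finish, I would apply Cauchy--Schwarz over $t$, substitute the two estimates, and sum over $h\in[H]$, obtaining a bound of the form $\bigoht\prn*{H\sqrt{\CinfM[\Mstar]\,T\cdot\EstProbOff}+H\CinfM[\Mstar]\sqrt{T}}$ on the online error $\EstOn$. The leading term already matches the claim; tightening the additive term to the stated $\bigoht(H\CinfM[\Mstar])$ is where the real work lies, since the crude accounting above charges $\CinfhM[\Mstar]$ to every round via the ``uncovered'' contribution $\CinfhM[\Mstar]\En_{\mu_h}[g\ind{t}_h]$. The fix is to isolate a short initial phase of rounds in which $D\ind{t}$ is still dominated by its $\CinfhM[\Mstar]\mu_h$ component, bound those rounds by the trivial $O(1)$, and argue that the potential argument limits the length of this phase, so that the uncovered contribution sums to $O(\CinfhM[\Mstar])$ overall. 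I expect this bookkeeping, together with ensuring the trajectory-to-per-layer reduction in the first step loses only constant (not $\poly(H)$) factors, to be the main obstacle.
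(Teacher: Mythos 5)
Your route is the same one the paper takes: decompose the trajectory-level Hellinger error into per-layer transition errors (the paper uses Lemma A.11 of \citet{foster2021statistical}, which costs only a $\log H$ factor, so your worry about losing $\poly(H)$ in this reduction is moot---it is exactly where the $H\log H$ in the statement comes from), convert \cref{ass:offline_oracle} into the per-layer in-sample condition via \cref{lem:hellinger_pair}, and then run a per-layer coverability potential argument. The only difference is that the paper invokes Theorem 1 of \citet{xie2023role} for that last step as a black box (restated as \cref{lem:linf_potential_var}), whereas you re-derive it. Your derivation correctly produces the leading term $\bigoht\prn[\big]{H\sqrt{\CinfM[\Mstar]\,T\cdot\EstProbOff}}$, but, as you yourself flag, only an additive term of order $H\CinfM[\Mstar]\sqrt{T\log T}$ rather than the stated $\bigoht(H\CinfM[\Mstar])$.

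The gap is in the proposed repair. A ``short initial phase of rounds'' after which $D\ind{t}$ is dominated by its $\CinfhM[\Mstar]\mu_h$ component does not exist uniformly over $(x,a)$: pairs with small $\mu_h(x,a)$ or small occupancy can remain uncovered for all $T$ rounds, so a global round-index split bounded trivially by $O(1)$ per round cannot give an $O(\CinfhM[\Mstar])$ total. The correct mechanism---the one in the cited Theorem 1 of \citet{xie2023role}, and spelled out for the $L_1$ analogue in the paper's \cref{lem:lone_potential}---is a \emph{per-state-action} burn-in time $\tau(x,a)\ldef\min\crl*{t:\ \sum_{i<t}d_h^{\sMstar,p\ind{i}}(x,a)\ge\CinfhM[\Mstar]\mu_h(x,a)}$. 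The contribution of rounds with $t<\tau(x_h,a_h)$ is at most $2\sum_{x,a}\sum_{t<\tau(x,a)}d_h^{\sMstar,p\ind{t}}(x,a)\le O(\CinfhM[\Mstar])$ in total (using $g\ind{t}_h\le 2$), since each pair's accumulated occupancy at its own burn-in time is at most $\CinfhM[\Mstar]\mu_h(x,a)$ up to one extra round, and $\mu_h$ sums to one. For the remaining rounds you must keep the indicator $\indic\crl*{t\ge\tau(x_h,a_h)}$ \emph{inside} the Cauchy--Schwarz: on that event $\CinfhM[\Mstar]\mu_h(x,a)\le\sum_{i<t}d_h^{\sMstar,p\ind{i}}(x,a)$, hence $D\ind{t}(x,a)\le 2\sum_{i<t}d_h^{\sMstar,p\ind{i}}(x,a)$, so the second factor is controlled by the in-sample error $O(\EstProbOff)$ alone and the per-round $\CinfhM[\Mstar]$ charge disappears. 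With that substitution your argument yields the stated bound and coincides with the paper's.
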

Both lemmas lead to a degradation in rate with respect to
$T$, but lead to sublinear online estimation error whenever the
offline estimation error bound is sublinear\arxiv{; \citet{foster2023online}
show that some degradation in rate is unavoidable}.
}
\pacomment{move all of this}
}
}

\begin{algorithm}[tp]
  \arxiv{\setstretch{1.3}}
    \begin{algorithmic}[1]
      \icml{\State \textbf{input}: Estimation oracle $\AlgEst$, number of episodes $T\in\bbN$, approximation
       parameters $C\geq{}1$, $\veps\in\brk{0,1}$.}
       \arxiv{\State \textbf{input}:
       \Statex[1] Estimation oracle $\AlgEst$.
       \Statex[1] Number of episodes $T\in\bbN$, approximation
       parameters $C\geq{}1$, $\veps\in\brk{0,1}$.}
  \For{$t=1, 2, \cdots, T$}
  \State \mbox{\arxiv{Compute   estimated model}\icml{Estimate model:} $\Mhat\ind{t} = \AlgEst\ind{t}\prn[\big]{
    \crl*{(\act\ind{i},\obs\ind{i})}_{i=1}^{t-1} }$.}
\icml{
  \Statex[1]                   \algcommentlight{Plug-in
                    approximation to \mainobj objective.}
          \State \multiline{For each $h\in\brk{H}$, compute $(C,\veps)$-approx.
  policy cover $p_h\ind{t}$ for $\Mhat\ind{t}$: $      \CovM[\Mhat\ind{t}](p_h\ind{t})  =$
  \begin{small}
    \begin{align}
\sup_{\pi\in\Pi}\En^{\sMhat\ind{t},\pi}\brk*{\frac{\dm{\Mhat\ind{t}}{\pi}_{h}(x_h,a_h)}{\dm{\Mhat\ind{t}}{p_h\ind{t}}_{h}(x_h,a_h)+\veps\cdot{}d_h^{\sMhat\ind{t},\pi}(x_h,a_h)}}\leq{}C.\label{eq:cover_objective_model_based}
    \end{align}
  \end{small}}
                  }
\arxiv{
        \State For each $h\in\brk{H}$, compute $(C,\veps)$-approximate
  policy cover $p_h\ind{t}$ for $\Mhat\ind{t}$: %
        \begin{align}
\CovM[\Mhat\ind{t}](p_h\ind{t})  = \sup_{\pi\in\Pi}\En^{\sMhat\ind{t},\pi}\brk*{\frac{\dm{\Mhat\ind{t}}{\pi}_{h}(x_h,a_h)}{\dm{\Mhat\ind{t}}{p_h\ind{t}}_{h}(x_h,a_h)+\veps\cdot{}d_h^{\sMhat\ind{t},\pi}(x_h,a_h)}}\leq{}C.\label{eq:cover_objective_model_based}
        \end{align}
                  \hfill\algcommentlight{Plug-in
                    approximation to \mainobj objective.}
                  }
                  \icml{\State \multiline{Sample
                    $\pi\ind{t}\sim{}q\ind{t}=\unif(p_1\ind{t},\ldots,p_H\ind{t})$
                    and observe trajectory
                    $o\ind{t}=(x_1\ind{t},a_1\ind{t}),\ldots,(x_H\ind{t},a_H\ind{t})$.}
                  }
                  \arxiv{\State Let $q\ind{t}=\unif(p_1\ind{t},\ldots,p_H\ind{t})$
                  \State Sample $\pi\ind{t}\sim{}q\ind{t}$, \arxiv{observe trajectory}\icml{observe} $o\ind{t}=(x_1\ind{t},a_1\ind{t}),\ldots,(x_H\ind{t},a_H\ind{t})$.}
\EndFor
\State  \textbf{return} \arxiv{policy covers }$(p_1,\ldots,p_H)$, where $p_h\ldef\unif(p_h\ind{1},\ldots,p_h\ind{T})$.
\end{algorithmic}
\caption{Coverage-Driven Exploration (\mainalg)}
\label{alg:model_based_reward_free}
\end{algorithm}

\arxiv{\subsection{Algorithm}}
\icml{\fakepar{Algorithm}}
Our main algorithm for
reward-free reinforcement learning, Coverage-Driven Exploration
(\mainalg; \cref{alg:model_based_reward_free}), is based on a simple ``plug-in'' estimation-optimization paradigm: Repeatedly compute an estimate $\Mhat\ind{t}$
  for the true model $\Mstar$, then compute a policy cover
  $p\in\Delta(\Pi)$ that optimizes the \mainobj objective for $\Mhat\ind{t}$ (a
  \emph{plug-in} approximation to the true \mainobj objective) and use
  this to collect data; proceed until this process arrives at a
  high-quality cover for $\Mstar$.\loose

In more detail, \cref{alg:model_based_reward_free} proceeds
in $T$ episodes. At each episode $t$, the algorithm invokes the
user-specified estimation oracle $\AlgEst$ to produce an estimate
$\Mhat\ind{t}$ for the model $\Mstar$ based on the data collected so
far. Given this estimate, for each layer $h\in\brk{H}$, the algorithm
computes a $(C,\veps)$-approximate policy cover $p_h\ind{t}\in\Delta(\Pi)$ for
$\Mhat\ind{t}$:
\icml{$      \CovM[\Mhat\ind{t}](p_h\ind{t})  =$

  \vspace{-0.5cm}
  \begin{small}
    \begin{equation}
      \label{eq:cover_objective_inline}
\sup_{\pi\in\Pi}\En^{\sMhat\ind{t},\pi}\brk*{\frac{\dm{\Mhat\ind{t}}{\pi}_{h}(x_h,a_h)}{\dm{\Mhat\ind{t}}{p_h\ind{t}}_{h}(x_h,a_h)+\veps\cdot{}d_h^{\sMhat\ind{t},\pi}(x_h,a_h)}}\leq{}C.
    \end{equation}%
  \end{small}%
}%
\arxiv{\begin{equation}
  \label{eq:cover_objective_inline}
\CovM[\Mhat\ind{t}](p_h\ind{t})  = \sup_{\pi\in\Pi}\En^{\sMhat\ind{t},\pi}\brk*{\frac{\dm{\Mhat\ind{t}}{\pi}_{h}(x_h,a_h)}{\dm{\Mhat\ind{t}}{p_h\ind{t}}_{h}(x_h,a_h)+\veps\cdot{}d_h^{\sMhat\ind{t},\pi}(x_h,a_h)}}\leq{}C,
\end{equation}}
where $C>0$ is made sufficiently large to ensure
\cref{eq:cover_objective_inline} is feasible. This is a plug-in approximation to the true \mainobj
objective $\CovM[\Mstar](p)$. Given the approximate covers
$p_1\ind{t},\ldots,p_H\ind{t}$, the algorithm collects a new
trajectory $o\ind{t}$ by sampling
$\pi\ind{t}\sim{}q\ind{t}\ldef\unif(p_1\ind{t},\ldots,p_H\ind{t})$. This
trajectory is used to update the estimator $\Mhat\ind{t}$, and the
algorithm proceeds to the next episode. Once all episodes
conclude, the algorithm returns
$p_h\ldef{}\unif(p_h\ind{1},\ldots,p_h\ind{T})$ as the final cover for
each layer $h$.\loose

The plug-in \mainobj objective in \cref{eq:cover_objective_inline} can be solved efficiently using the relaxation-based
methods in \cref{sec:planning}, since the model
$\Mhat\ind{t}\in\cM$ is known, making this a pure (non-statistical) planning
problem. We leave the approximation parameter $C\geq{}1$\arxiv{ in
\cref{eq:cover_objective_inline}} as a
free parameter to accommodate the approximation factors these
relaxations incur (the sample complexity\arxiv{ for
\cref{alg:model_based_reward_free}} degrades linearly with $C$).

\arxiv{\subsection{Main Result}}
\label{sec:model_based_reward_free}
\arxiv{In its most general form, \cref{alg:model_based_reward_free} leads to
sample complexity guarantees based on \mainopt. However, we begin with a slightly tighter sample complexity bound at the cost of scaling with $L_{\infty}$-Coverability instead of \mainopt. To state the guarantees in the most compact form possible, we make the following assumption on the estimation oracle's error rate.
\begin{assumption}[Parametric estimation rate]
  \label{ass:parametric}
  The offline estimation oracle satisfies
  $\EstProbOff\leq\bigoh(\dest\log(\Best{}T/\delta))$ for parameters $\dest,\Best\in\bbN$.
\end{assumption}
    \begin{restatable}[Guarantee for
      \mainalg under $L_\infty$-Coverability]{theorem}{mbrfthree}
      \label{cor:mbrf3}
       \pacomment{changed the theorem caption}
      Let $\veps>0$ be given. Let $\Cinf\equiv\CinfM[\Mstar]$, and
      suppose that (i) we restrict $\cM$ such that all $M\in\cM$ have
      $\CinfM\leq\Cinf$, and (ii) we solve
      \cref{eq:cover_objective_model_based} with
      $C=\Cinf$ for all $t$.\footnote{We can take $\CinfM\leq\Cinf$
        \arxiv{without loss of generality}\icml{w.l.o.g.} when $\Cinf$ is known. In this
        case, solving \cref{eq:cover_objective_model_based} with
      $C=\Cinf$ is feasible by
        \cref{prop:linf}.} Then, given an offline estimation
      oracle satisfying \cref{ass:offline_oracle,ass:parametric}, using
      $T=\bigoht\prn*{\frac{H^8(\CinfM[\Mstar])^3\dest\log(\Best/\delta)}{\veps^{4}}}$
      episodes, \cref{alg:model_based_reward_free} produces policy
      covers $p_1,\ldots,p_H\in\Delta(\Pi)$ such that\loose
      \begin{align}
        \label{eq:mbrf3}
        \forall{}h\in\brk{H}:\quad\CovM[\Mstar](p_h)
        \leq 12H\cdot{}\CinfM[\Mstar]
      \end{align}
      \arxiv{with probability}\icml{w.p.} at least $1-\delta$. For a
      finite class $\cM$, if we use \mle as the estimator, we can take
      $T=\bigoht\prn*{\frac{H^8(\CinfM[\Mstar])^3\log(\abs*{\cM}/\delta)}{\veps^{4}}}$.
    \end{restatable}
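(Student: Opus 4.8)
The plan is to show that the per-round covers $p_h\ind{t}$ computed in \cref{alg:model_based_reward_free} — which by construction are good covers for the \emph{estimated} model $\Mhat\ind{t}$ — are, on average over $t\in[T]$, also good covers for $\Mstar$, and then pass to the uniform mixture. First, since $p_h=\unif(p_h\ind{1},\dots,p_h\ind{T})$ we have $d_h^{\sMstar,p_h}=\tfrac1T\sum_t d_h^{\sMstar,p_h\ind{t}}$, and the map $b\mapsto\tfrac{a^2}{b+\veps a}$ is convex on $b\ge 0$, so Jensen's inequality gives, for every $\pi$,
\[
\sum_{x,a}\frac{\bigl(d_h^{\sMstar,\pi}(x,a)\bigr)^2}{d_h^{\sMstar,p_h}(x,a)+\veps d_h^{\sMstar,\pi}(x,a)}\ \le\ \frac1T\sum_{t=1}^{T}\sum_{x,a}\frac{\bigl(d_h^{\sMstar,\pi}(x,a)\bigr)^2}{d_h^{\sMstar,p_h\ind{t}}(x,a)+\veps d_h^{\sMstar,\pi}(x,a)},
\]
and taking $\sup_{\pi\in\Pi}$ (which only loses going across the average) yields $\CovM[\Mstar](p_h)\le\frac1T\sum_t\CovM[\Mstar](p_h\ind{t})$. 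By hypothesis (ii) and \cref{prop:linf}, $\CovM[\Mhat\ind{t}](p_h\ind{t})\le C=\CinfM[\Mstar]$ for all $t,h$, so it suffices to bound the \emph{plug-in error} $\CovM[\Mstar](p_h\ind{t})-\CovM[\Mhat\ind{t}](p_h\ind{t})$, on average over $t$, by $O(H\CinfM[\Mstar])$.

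\textbf{Plug-in error (the crux).} Fix $t,h$, suppress the superscript $t$ (write $\Mhat\equiv\Mhat\ind{t}$, $q\equiv q\ind{t}=\unif(p_1\ind{t},\dots,p_H\ind{t})$), and set $e_t\ldef{}\En_{\pi\sim q}\brk{\Dhels{\Mhat(\pi)}{\Mstar(\pi)}}$, the round-$t$ on-policy Hellinger error. Since $\CovM[M](p_h\ind{t})=\sup_\pi\sum_{x,a}\tfrac{(d_h^{\sM,\pi})^2}{d_h^{\sM,p_h\ind{t}}+\veps d_h^{\sM,\pi}}$ depends on $M$ only through the two layer-$h$ occupancies, it is $\poly(\veps^{-1})$-Lipschitz in them (in the appropriately weighted $\ell_1$ sense), so the plug-in error is $\lesssim\poly(\veps^{-1})\bigl(\sup_{\pi\in\Pi}\nrm{d_h^{\sMstar,\pi}-d_h^{\sMhat,\pi}}_1+\nrm{d_h^{\sMstar,p_h\ind{t}}-d_h^{\sMhat,p_h\ind{t}}}_1\bigr)$. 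The obstacle is the supremum over \emph{all} $\pi\in\Pi$ — including policies the algorithm never runs — whereas the estimation oracle only certifies $\Mhat$ on $q$. This is resolved using the policy-cover property of $p_{h'}\ind{t}$ for $\Mhat$ together with \cref{lem:com}: by the reference-$\Mhat$ version of the simulation lemma,
\[
\nrm{d_h^{\sMstar,\pi}-d_h^{\sMhat,\pi}}_1\ \le\ 2\sum_{h'<h}\En^{\sMhat,\pi}\bigl[\sqrt{g_{h'}(x_{h'},a_{h'})}\bigr],\qquad g_{h'}(x,a)\ldef{}\Dhels{P_{h'}^{\sMhat}(\cdot\mid x,a)}{P_{h'}^{\sMstar}(\cdot\mid x,a)}\in[0,2],
\]
and \cref{lem:com} applied with $M=\Mhat$, the cover $p_{h'}\ind{t}$ (for which $\obj_{h',\veps}^{\sss{\Mhat}}(p_{h'}\ind{t})\le\CinfM[\Mstar]$), and the function $\sqrt{g_{h'}}$ (so $(\sqrt{g_{h'}})^2=g_{h'}$) gives $\En^{\sMhat,\pi}[\sqrt{g_{h'}}]\lesssim\sqrt{\CinfM[\Mstar]\cdot\En^{\sMhat,p_{h'}\ind{t}}[g_{h'}]}+\CinfM[\Mstar]\veps$, \emph{uniformly in $\pi$}. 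Finally, since $q=\unif(p_1\ind{t},\dots,p_H\ind{t})$ we have $\En^{\sMhat,p_{h'}\ind{t}}[g_{h'}]\le H\cdot\En^{\sMhat,q}[g_{h'}]$, and a second simulation-lemma bound (plus a small occupancy-shift correction, itself $\lesssim\sqrt{He_t}$) gives $\En^{\sMhat,q}[g_{h'}]\lesssim e_t$; the term $\nrm{d_h^{\sMstar,p_h\ind{t}}-d_h^{\sMhat,p_h\ind{t}}}_1\lesssim\sqrt{He_t}$ similarly. Collecting terms, $\CovM[\Mstar](p_h\ind{t})\le\CinfM[\Mstar]+O(H\CinfM[\Mstar])+\poly(H,\CinfM[\Mstar],\veps^{-1})\cdot(e_t^{1/4}+\sqrt{e_t})$, where the $O(H\CinfM[\Mstar])$ comes from the irreducible $\veps$-bias of \cref{lem:com} (a $\poly(\veps^{-1})\cdot H\CinfM[\Mstar]\veps$ contribution) and the remaining term vanishes on average as $T\to\infty$.

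\textbf{Aggregation and choice of $T$.} Averaging the previous display over $t$ and using Cauchy--Schwarz, $\frac1T\sum_t\CovM[\Mstar](p_h\ind{t})\le O(H\CinfM[\Mstar])+\poly(H,\CinfM[\Mstar],\veps^{-1})\cdot\bigl((\tfrac1T\sum_t e_t)^{1/4}+(\tfrac1T\sum_t e_t)^{1/2}\bigr)$. It remains to bound the cumulative \emph{online} error $\sum_t e_t=\sum_t\En_{\pi\sim q\ind{t}}[\Dhels{\Mhat\ind{t}(\pi)}{\Mstar(\pi)}]$: \cref{ass:offline_oracle} only controls the cumulative \emph{offline} error, which by \cref{ass:parametric} is $\EstProbOff\le\bigoh(\dest\log(\Best T/\delta))$, but since we restricted $\cM$ so that every $M\in\cM$ — in particular $\Mstar$ — has $L_\infty$-coverability at most $\CinfM[\Mstar]$, the offline-to-online conversion for coverable MDPs (\citet{xie2023role}) upgrades this to $\sum_t e_t\le\bigoht\bigl(H\sqrt{\CinfM[\Mstar]\,T\,\EstProbOff}+H\CinfM[\Mstar]\bigr)$ with probability at least $1-\delta$. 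Substituting makes the vanishing term $\le O(H\CinfM[\Mstar])$ as soon as $T=\bigoht\bigl(H^8(\CinfM[\Mstar])^3\dest\log(\Best/\delta)/\veps^4\bigr)$; tuning absolute constants then gives $\CovM[\Mstar](p_h)\le 12H\CinfM[\Mstar]$ for every $h\in[H]$ simultaneously (the union over $h$ is free — it is the same $1-\delta$ event). For a finite $\cM$, \mle satisfies \cref{ass:offline_oracle} with $\dest=O(\log\abs{\cM})$ and $\Best=O(1)$ (inside the $\bigoht$), giving the stated $T$. The technically delicate step is the plug-in error bound: the \mainobj supremum over all of $\Pi$ must be tamed using only the on-policy (under $q\ind{t}$) estimation guarantee, and the key point is that the policy-cover inequality $\CovM[\Mhat\ind{t}](p_h\ind{t})\le\CinfM[\Mstar]$ \emph{itself} certifies that any such adversarial policy is well-covered — precisely the hypothesis needed for \cref{lem:com} — with everything else being careful bookkeeping of the $\veps^{-1}$, $H$, and $\CinfM[\Mstar]$ factors through the two simulation-lemma steps and the change of measure.
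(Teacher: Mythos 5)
Your overall architecture matches the paper's: Jensen over rounds to reduce to the per-round covers $p_h\ind{t}$; a uniform-in-$\pi$ transfer of expectations from $\Mhat\ind{t}$ to $\Mstar$ that uses the simulation lemma together with the change-of-measure lemma (\cref{lem:com}) applied under $\Mhat\ind{t}$ with the plug-in covers (this is exactly the content of the paper's \cref{lem:dec_bound_reward_free}); and the coverability-based offline-to-online conversion (\cref{lem:linf_offline_online}) to control $\sum_t e_t$ before tuning $T$. Your treatment of the \emph{numerator} occupancy ($d_h^{\sMstar,\pi}\to d_h^{\sMhat\ind{t},\pi}$) via a $2/\veps$-Lipschitz bound plus \cref{lem:com} produces the same $\frac{1}{\veps}\sqrt{H^3\Cinf\, e_t}+O(H\Cinf)$ error as the paper, and your feasibility, union-over-$h$, and \mle remarks are fine.

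The gap is in the \emph{denominator} swap $d_h^{\sMstar,p_h\ind{t}}\to d_h^{\sMhat\ind{t},p_h\ind{t}}$. Bounding this perturbation by the $1/\veps^2$ Lipschitz constant times $\nrm{d_h^{\sMstar,p_h\ind{t}}-d_h^{\sMhat\ind{t},p_h\ind{t}}}_1\lesssim\sqrt{H e_t}$ gives an additive error of order $\frac{1}{\veps^2}\sqrt{H e_t}$, i.e.\ a \emph{first} power of the Hellinger distance. Averaging over $t$ and plugging in $\EstProbOn\lesssim H\sqrt{\CinfM[\Mstar]\,T\,\dest\log(\cdot)}$, this term is of order $\frac{H(\CinfM[\Mstar]\dest)^{1/4}}{\veps^2\,T^{1/4}}$, and forcing it below $O(H\CinfM[\Mstar])$ requires $T=\Omega\prn[\big]{\dest/(\veps^{8}(\CinfM[\Mstar])^{3})}$ — an $\veps^{-8}$ dependence that is not covered by the stated $T=\bigoht\prn[\big]{H^8(\CinfM[\Mstar])^3\dest\log(\Best/\delta)/\veps^{4}}$ (your own "collecting terms" display carries a $\poly(\veps^{-1})\cdot\sqrt{e_t}$ term, and "vanishes as $T\to\infty$" is not enough for the claimed rate). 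The paper avoids this by \cref{lem:dp_com}: it writes the denominator mismatch as an expectation of a function bounded by $1/\veps^2$ under the $p$-induced occupancies and applies the \emph{multiplicative} Hellinger change of measure (\cref{lem:mp_min}), yielding only a constant factor $3$ on the main term plus an additive $\frac{4}{\veps^2}\En_{\pi\sim p}\brk[\big]{\DhelsX{\big}{\Mhat\ind{t}(\pi)}{\Mstar(\pi)}}$ — a \emph{squared}-Hellinger correction of order $\frac{H}{\veps^2}e_t$, which is what makes the $\veps^{-4}$ bookkeeping close. (The paper also uses \cref{lem:val_smoothing} to keep the cross ratio bounded by $1/\veps$ before the numerator transfer, a step your Lipschitz route sidesteps.) Replacing your $\ell_1$/TV bound for the denominator with an argument of the \cref{lem:dp_com} type is the missing ingredient; as written, your proof establishes the conclusion only with a strictly larger sample complexity than the theorem states. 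A minor additional point: the $e_t^{1/4}$ term comes from an unnecessary "occupancy-shift correction" — since $e_t$ is already the on-policy error under $q\ind{t}$, \cref{lem:hellinger_pair} bounds $\En^{\sMhat\ind{t},q\ind{t}}\brk{g_{h'}}\lesssim e_t$ directly.
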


    Our next result gives an analogous sample complexity  guarantee
    based on the \mainopt value itself.

\begin{restatable}[Guarantee for \mainalg under \mainopt]{theorem}{mbrftwo}
  \label{cor:mbrf2}
  Let $\veps>0$ be given.
    Suppose that (i) we restrict $\cM$ such that all $M\in\cM$ have
    $\CovOptMmax\leq\CovOptMmax[\Mstar]$, and (ii) we solve
    \cref{eq:cover_objective_model_based} with
    $C=\CovOptMmax[\Mstar]$ for all $t$\padelete{ (which is always
    feasible)}. Then, given access to an offline estimation oracle
    satisfying \cref{ass:offline_oracle,ass:parametric}, using
    $T=\bigoht\prn*{\frac{H^{12}(\CovOptMmaxzero[\Mstar])^4\dest\log(\Best/\delta)}{\veps^{6}}}$
    episodes, \cref{alg:model_based_reward_free} produces policy
    covers $p_1,\ldots,p_H\in\Delta(\Pi)$ such that
    \begin{align}
      \label{eq:mbrf2}
      \forall{}h\in\brk{H}:\quad\CovM[\Mstar](p_h)
      \leq 12H\cdot{}\CovOptMmax[\Mstar]
    \end{align}
    with probability at least $1-\delta$. In particular, for a finite
    class $\cM$, if we use \mle as the estimator, we can take
    $T=\bigoht\prn*{\frac{H^{12}(\CovOptMmaxzero[\Mstar])^4\log(\abs*{\cM}/\delta)}{\veps^{6}}}$.
  \end{restatable}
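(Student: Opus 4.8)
The plan is to mirror the proof of \cref{cor:mbrf3}, the only substantive change being that we invoke the \mainopt-based offline-to-online conversion (\cref{lem:lone_offline_online})---which turns an offline estimation bound into a cumulative on-policy error bound at a $T^{2/3}$ rate in terms of $\ConeM[\Mstar]\leq\CovOptMmaxzero[\Mstar]$---in place of the $L_\infty$-based one; this is exactly what degrades the polynomial factors. Write $\Gamma\ldef\CovOptMmax[\Mstar]$. First I would check feasibility and set up a convexity reduction. By hypothesis every $M\in\cM$ has $\CovOptMmax\leq\Gamma$, hence $\CovOptMmax[\Mhat\ind{t}]\leq\Gamma$, so the plug-in cover computation \eqref{eq:cover_objective_model_based} with $C=\Gamma$ is feasible at every round and \mainalg produces, for all $t$ and $h$, a distribution $p_h\ind{t}$ with $\CovM[\Mhat\ind{t}](p_h\ind{t})\leq\Gamma$. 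Because $p\mapsto\CovM[M](p)$ is convex---the integrand $u/(v+\veps u)$ is convex in the occupancy $v=d_h^{\sM,p}$---the returned cover $p_h=\unif(p_h\ind{1},\dots,p_h\ind{T})$ satisfies $\CovM[\Mstar](p_h)\leq\tfrac1T\sum_{t=1}^T\CovM[\Mstar](p_h\ind{t})$, so it suffices to bound each $\CovM[\Mstar](p_h\ind{t})$, i.e.\ to transfer the guarantee ``$p_h\ind{t}$ covers $\Mhat\ind{t}$'' into ``$p_h\ind{t}$ approximately covers $\Mstar$.''

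Second I would extract the estimation guarantees. \cref{ass:offline_oracle,ass:parametric} give $\EstOfft[T]\leq\bigoh(\dest\log(\Best T/\delta))$ with probability at least $1-\delta$; feeding this into \cref{lem:lone_offline_online} and using $\ConeM[\Mstar]\leq\CovOptMmaxzero[\Mstar]$ yields the cumulative on-policy Hellinger bound $\sum_{t=1}^T\En_{\pi\sim q\ind{t}}\brk[\big]{\Dhels{\Mhat\ind{t}(\pi)}{\Mstar(\pi)}}\leq\EstProbOn=\bigoht\prn[\big]{H\prn[\big]{\CovOptMmaxzero[\Mstar](1+\dest\log(\Best T/\delta))}^{1/3}T^{2/3}}$, where $q\ind{t}=\unif(p_1\ind{t},\dots,p_H\ind{t})$ is the distribution \mainalg actually executes.

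The technical core is the transfer step. Fix $h$ and a test policy $\pi\in\Pi$; I would write out the expectation defining $\CovM[\Mstar](p_h\ind{t})$ and compare it, round by round, with its counterpart under $\Mhat\ind{t}$, controlling the difference by the layer-$h$ total variation between $\Mstar$ and $\Mhat\ind{t}$ and then by the trajectory-level Hellinger distance. For the covering policies this Hellinger distance is controlled by the on-policy bound above (each $p_h\ind{t}$ carries mass $1/H$ inside $q\ind{t}$, which is where a factor of $H$ enters); for the test policy $\pi$, which need not lie in $\supp(q\ind{t})$, I would apply the change-of-measure inequality \cref{lem:com} to $\Mhat\ind{t}$ with the cover $p_h\ind{t}$ (legitimate since $\CovM[\Mhat\ind{t}](p_h\ind{t})\leq\Gamma$), transferring the per-state discrepancy to an expectation under $p_h\ind{t}$ and hence under $q\ind{t}$. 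To aggregate over $t$ I would run a per-$(x,a)$ elliptic-potential argument in the spirit of the proof of \cref{thm:linf_optimization}: the ``coverage'' contributions telescope to $\bigoh(\Gamma\log T)$, while the residual ``estimation'' contributions sum to terms of order $\sqrt{\Gamma\cdot\EstProbOn\cdot T}$ and $\Gamma\veps T$. Dividing by $T$ and invoking the convexity reduction gives $\CovM[\Mstar](p_h)\leq 12H\Gamma$ once $T$ is large enough that these residual terms are of lower order than $\veps\Gamma$, which---after balancing against the $T^{2/3}$ rate of \cref{lem:lone_offline_online}---yields $T=\bigoht\prn[\big]{H^{12}(\CovOptMmaxzero[\Mstar])^4\dest\log(\Best/\delta)/\veps^6}$; a union bound over $h\in\brk{H}$ completes the proof, and for a finite class with \mle one replaces $\dest\log(\Best T/\delta)$ by $\log(\abs{\cM}T/\delta)$, absorbed into $\bigoht(\cdot)$.

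I expect the transfer step to be the main obstacle. The oracle controls estimation error only along the executed distributions $q\ind{t}$, whereas $\CovM[\Mstar](\cdot)$ probes arbitrary test policies $\pi$, and the change-of-measure remedy is prima facie circular---it needs a cover to certify a cover. The resolution is the potential accounting, which plays off rounds where the plug-in cover genuinely improves coverage against rounds where $\Mhat\ind{t}$ is provably inaccurate, so that the cumulative Hellinger bound caps the number of ``bad'' rounds; getting the final polynomial dependence to come out cleanly requires careful tracking of how the $\veps$-regularization in the denominator interacts with the Hellinger-to-total-variation conversions.
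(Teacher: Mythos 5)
Your high-level route is the paper's: reduce via Jensen/convexity to the per-round quantities $\CovM[\Mstar](p_h\ind{t})$, transfer these to the plug-in values $\CovM[\Mhat\ind{t}](p_h\ind{t})\leq C$ plus on-policy Hellinger error, bound the cumulative Hellinger error by feeding the offline oracle into \cref{lem:lone_offline_online} together with $\ConeM[\Mstar]\leq\CovOptMmaxzero[\Mstar]$, and choose $T$ so the residuals are lower order (the paper packages the transfer-and-sum step as \cref{thm:model_based_reward_free} and then derives the present statement in a few lines). However, the technical core --- the per-round transfer inequality --- is not actually established in your sketch, and the mechanism you propose for it does not work as described. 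There is no per-$(x,a)$ elliptic potential to run across rounds of \mainalg: each $p_h\ind{t}$ is computed afresh against a different model $\Mhat\ind{t}$, so there is no accumulating denominator or fixed covering distribution against which ``coverage contributions telescope to $O(\Gamma\log T)$'' (and indeed the final bound $12H\CovOptMmax[\Mstar]$ has no $\log T$). The potential argument you invoke belongs to the planning subroutine (\cref{thm:linf_optimization}) used \emph{inside} each round; across rounds the paper needs nothing of the sort, because the algorithm's constraint already gives $\CovM[\Mhat\ind{t}](p_h\ind{t})\leq C$ deterministically, and the analysis just sums the per-round inequality and applies Cauchy--Schwarz to $\sum_t\sqrt{\En_{\pi\sim q\ind{t}}\brk{\Dhels{\Mhat\ind{t}(\pi)}{\Mstar(\pi)}}}$.

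The piece your sketch leaves genuinely open is how to compare $\CovM[\Mstar](p_h\ind{t})$ with $\CovM[\Mhat\ind{t}](p_h\ind{t})$ when \emph{both} the numerator occupancy of the test policy \emph{and} the denominator occupancy $d_h^{\sM,p_h\ind{t}}$ must be switched between $\Mstar$ and $\Mhat\ind{t}$. The paper handles the numerator by a smoothing step (\cref{lem:val_smoothing}) followed by two applications of a reward-free DEC-type bound (\cref{lem:dec_bound_reward_free}, i.e., simulation lemma plus \cref{lem:com} under $\Mhat\ind{t}$) with the occupancy ratio as a reward of range $B=\veps^{-1}$, which is what produces the $\frac{1}{\veps}\sqrt{H^3C\cdot{}\EstProbOn/T}$ term; and it handles the denominator mismatch $d_h^{\sMstar,p}\to d_h^{\sMhat,p}$ by a separate change-of-measure (\cref{lem:dp_com}, via \cref{lem:mp_min} and data processing), which costs $\frac{H}{\veps^2}\cdot\EstProbOn/T$. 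Your residual bookkeeping ($\sqrt{\Gamma\cdot\EstProbOn\cdot T}$ and $\Gamma\veps T$) omits exactly these $\veps^{-1}$ and $\veps^{-2}$ blow-ups, which are what force the stated $\veps^{-6}$ sample complexity once combined with the $T^{2/3}$ rate of \cref{lem:lone_offline_online}; you flag the $\veps$-interaction as a difficulty but do not resolve it, so as written the proof of the transfer step (and hence of \eqref{eq:mbrf2} with the stated constants) is missing.
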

  \dfcomment{It seems hard to improve the result above so that 1) we have $\CovOptMmax[\Mstar]$ in the sample complexity instead of $\CovOptMmaxzero[\Mstar]$, and 2) so that we have $\CovOptM[\Mstar]$ on the rhs of \cref{eq:mbrf2} instead of $\CovOptMmax[\Mstar]$. do people find these issues offputting?}

\cref{cor:mbrf3} and \cref{cor:mbrf2} are derived as special cases of
a more general result (\cref{thm:model_based_reward_free}) which
allows for \emph{online estimation} oracles; we show in particular
that under \mainopt and $L_\infty$-coverability, any offline estimation
oracle is also an online estimation oracle
(cf. \cref{app:model_based}).

These results show for the
first time that it is possible to perform sample-efficient and
computationally-efficient reward-free exploration under
coverability. \icml{See \cref{app:model_based_proof} for the proof,
  which is based on a connection to the Decision-Estimation
  Coefficient of \citet{foster2021statistical,foster2023tight}.} Some key features are as follows.\loose
\icml{\begin{itemize}[leftmargin=*]}
\arxiv{\begin{itemize}}
\item \emph{Computational efficiency.} \pacomment{replaced this caption, was ``Simplicity and practicality''} Since maximum likelihood estimation (\mle) is
  a valid estimation oracle, \cref{alg:model_based_reward_free} is
  computationally efficient whenever 1) \mle can be performed
  efficiently, and 2) the plug-in \mainobj objective in \cref{eq:cover_objective_model_based} can be approximately optimized
  efficiently. As optimizing the objective involves only the
  estimated model $\Mhat\ind{t}$ (and hence is a computational
  problem), we can appeal to the relaxations in \cref{sec:planning} to
  accomplish this efficiently (via off-the-shelf planning methods). 
  
  Regarding the latter point, note that while \cref{cor:mbrf3} assumes for simplicity that
  \cref{eq:cover_objective_model_based} is solved with
$C=\CinfM[\Mstar]$, \arxiv{it should be clear that }if we solve the
objective for $C>\CinfM[\Mstar]$ the result continues to hold
with $\CinfM[\Mstar]$ replaced by $C$ in the sample complexity
bound and approximation guarantee. Consequently, if we solve
\cref{eq:cover_objective_model_based} using
\cref{alg:linf_relaxation}, the guarantees in \cref{cor:mbrf3} continue
to hold up to an $\bigoht(1)$ approximation factor.

\item \emph{Statistical efficiency.} \cref{cor:mbrf2} is the first
  result we are aware of that provides statistically efficient reward-free
  exploration under bounded \mainopt \paedit{(or even bounded $L_\infty$-Coverability)}. In particular, since the sample
  complexity scales as
  $\poly(\CovOptMmax[\Mstar],H,\log\abs{\cM},\veps^{-1})$, this shows
  that \mainopt is a sufficiently powerful structural parameter to
  enable sample-efficient learning with nonlinear function approximation. We expect that the precise sample
  complexity guarantees can be improved; \paedit{in particular, it would be
  interesting to remove the lossiness incurred by passing from offline
  to online estimation.} \pacomment{this should maybe be chopped now that we only mention offline-to-online briefly}
\end{itemize}
    }

    \icml{
      \icml{\fakepar{Main result}}
In its most general form, \cref{alg:model_based_reward_free} leads to
sample complexity guarantees based on \mainopt. Due to space
constraints, the main result we present here is a simpler guarantee
based on $L_{\infty}$-Coverability.
To state the guarantee in the most compact form possible, we make the following assumption on the estimation error rate.
\begin{assumption}[Parametric estimation rate]
  \label{ass:parametric}
  The offline estimation oracle satisfies
  $\EstProbOff\leq\bigoh(\dest\log(\Best{}T/\delta))$ for parameters $\dest,\Best\in\bbN$.
\end{assumption}
    \begin{restatable}[Main guarantee for
      \mainalg]{theorem}{mbrfthree}
      \label{cor:mbrf3}
      Let $\veps>0$ be given. Let $\Cinf\equiv\CinfM[\Mstar]$, and
      suppose that (i) we restrict $\cM$ such that all $M\in\cM$ have
      $\CinfM\leq\Cinf$, and (ii) we solve
      \cref{eq:cover_objective_model_based} with
      $C=\Cinf$ for all $t$.\footnote{We can take $\CinfM\leq\Cinf$
        \arxiv{without loss of generality}\icml{w.l.o.g.} when $\Cinf$ is known. In this
        case, solving \cref{eq:cover_objective_model_based} with
      $C=\Cinf$ is feasible by
        \cref{prop:linf}.} Then, given an offline estimation
      oracle satisfying \cref{ass:offline_oracle,ass:parametric}, using
      $T=\bigoht\prn*{\frac{H^8(\CinfM[\Mstar])^3\dest\log(\Best/\delta)}{\veps^{4}}}$
      episodes, \cref{alg:model_based_reward_free} produces policy
      covers $p_1,\ldots,p_H\in\Delta(\Pi)$ such that\loose
      \begin{align}
        \label{eq:mbrf3}
        \forall{}h\in\brk{H}:\quad\CovM[\Mstar](p_h)
        \leq 12H\cdot{}\CinfM[\Mstar]
      \end{align}
      \arxiv{with probability}\icml{w.p.} at least $1-\delta$. For a
      finite class $\cM$, if we use \mle as the estimator, we can take
      $T=\bigoht\prn*{\frac{H^8(\CinfM[\Mstar])^3\log(\abs*{\cM}/\delta)}{\veps^{4}}}$.
    \end{restatable}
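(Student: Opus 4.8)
The plan is to derive \cref{cor:mbrf3} from a general regret bound for \cref{alg:model_based_reward_free} (\cref{thm:model_based_reward_free}), analyzed through the lens of the Estimation-to-Decisions meta-algorithm and its Decision-Estimation Coefficient \citep{foster2021statistical,foster2023tight}. The first step is to upgrade the given offline estimation oracle (\cref{ass:offline_oracle}) to an \emph{online} estimation guarantee. Since we have restricted $\cM$ so that $\CinfM\leq\Cinf$ for all $M\in\cM$, \cref{lem:linf_offline_online} yields that, with probability at least $1-\delta$, the plug-in iterates $\Mhat\ind{1},\dots,\Mhat\ind{T}$ produced along the run of \cref{alg:model_based_reward_free} satisfy
\begin{align*}
  \sum_{t=1}^{T}\En_{\pi\sim{}q\ind{t}}\brk[\big]{\Dhels{\Mhat\ind{t}(\pi)}{\Mstar(\pi)}}
  &\leq \bigoht\prn[\big]{H\sqrt{\Cinf\,T\cdot\EstProbOff}+H\Cinf},
\end{align*}
and, invoking the parametric rate of \cref{ass:parametric} (so $\EstProbOff\leq\bigoh(\dest\log(\Best{}T/\delta))$), this cumulative on-policy Hellinger error is of order $\bigoht\prn[\big]{H\sqrt{\Cinf\,\dest\log(\Best/\delta)\,T}}$.

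Second, I would run the DEC-style decomposition. Fix a layer $h\in\brk{H}$; the goal is to bound $\CovM[\Mstar](p_h)$ for $p_h=\unif(p_h\ind{1},\dots,p_h\ind{T})$. Two ingredients combine. (i) \emph{DEC / elliptic potential:} at each round $p_h\ind{t}$ is a $(C,\veps)$-cover for $\Mhat\ind{t}$ with $C=\Cinf$, so the ``noiseless'' contribution to the coverage of the running mixture is controlled by the coverage Decision-Estimation Coefficient of $\Mhat\ind{t}$, which under the restriction $\CinfM\leq\Cinf$ is $O(H\Cinf)$ up to $\bigoht(1)$ factors; this is the same per-state--action elliptic potential argument behind \cref{thm:linf_optimization}, now carried out along the algorithm's trajectory, with the extra factor $H$ coming from $q\ind{t}=\unif(p_1\ind{t},\dots,p_H\ind{t})$ spending only a $1/H$ fraction of each episode at layer $h$. (ii) \emph{Estimation-error transfer:} passing from the plug-in models $\Mhat\ind{t}$ to $\Mstar$ incurs a perturbation that, via a change-of-measure argument in the spirit of \cref{lem:com} followed by a telescoping/simulation step, is bounded by the cumulative on-policy Hellinger error from the first step, with a polynomial amplification in $\veps^{-1}$ and $H$ stemming from the $\veps$-regularized occupancy ratios. (Convexity of $p\mapsto\CovM(p)$ --- each summand has the form $A/(\langle\text{affine in }p\rangle+B)$ with $A,B\geq0$, so a supremum of such is convex --- lets one reduce coverage of the returned mixture to a suitable average along the trajectory.)

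Combining (i) and (ii) and normalizing by $T$ gives $\CovM[\Mstar](p_h)\leq O(H\Cinf)+\poly(H,\Cinf,\veps^{-1})\cdot\bigoht\prn[\big]{\sqrt{\Cinf\,\dest\log(\Best/\delta)/T}}$; choosing $T=\bigoht\prn[\big]{H^{8}\Cinf^{3}\dest\log(\Best/\delta)/\veps^{4}}$ drives the estimation contribution below the $O(H\Cinf)$ term (using $\Cinf\geq1$), and accounting for the constants yields $\CovM[\Mstar](p_h)\leq12H\Cinf$ for every $h$ simultaneously, with probability at least $1-\delta$. For the \mle instantiation, a finite class admits $\EstProbOff\leq\bigoh(\log(\abs{\cM}T/\delta))$, i.e.\ $\dest=\bigoh(1)$ and $\Best=\bigoh(\abs{\cM})$ in \cref{ass:parametric}, so $T=\bigoht\prn[\big]{H^{8}\Cinf^{3}\log(\abs{\cM}/\delta)/\veps^{4}}$ suffices.

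The main obstacle is the estimation-error transfer in ingredient (ii). Coverage is a ratio of occupancy measures and is not Lipschitz in the trajectory law, so an off-the-shelf Hellinger bound does not suffice; the remedy is to exploit that the $\veps$ in the denominator of $\CovM$ clips every ratio at $1/\veps$, and that $\CinfM\leq\Cinf$ bounds occupancy ratios uniformly, so that on-policy closeness of $\Mhat\ind{t}$ to $\Mstar$ translates into closeness of the clipped ratios that actually appear in the objective. Getting the $\veps$-, $H$-, and $\Cinf$-dependence of this amplification right, and matching the resulting coverage DEC bound to the elliptic potential estimate of \cref{thm:linf_optimization}, is the delicate part; the remaining steps are routine bookkeeping.
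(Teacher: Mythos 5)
Your architecture is the paper's: convert the offline oracle to an online guarantee via \cref{lem:linf_offline_online}, feed this into a general trajectory-level bound for \mainalg of the form \cref{eq:mbrf1} in \cref{thm:model_based_reward_free} (roughly $\CovM[\Mstar](p_h)\le 11HC+\frac{12}{\veps}\sqrt{H^3C\,\EstProbOn/T}+\frac{8H}{\veps^2}\EstProbOn/T$), then plug in the parametric rate and the stated $T$; your use of convexity/Jensen to pass from the returned mixture to a per-round average, and your final bookkeeping (including the \mle case) match the paper. Two points, however. First, the elliptic potential plays no role at the trajectory level, and the mechanism you cite for ingredient (i) would fail if taken literally: the covers $p_h\ind{t}$ are optimized against the changing plug-in models $\Mhat\ind{t}$, not greedily against the accumulated mixture $\sum_{i<t}d^{\sMstar,p_h\ind{i}}_h$, so there is no telescoping structure of the kind used in \cref{thm:linf_optimization}. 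The potential argument is already spent inside \cref{lem:linf_offline_online}; at the trajectory level the ``noiseless'' $O(HC)$ term comes directly from the feasibility constraint $\CovM[\Mhat\ind{t}](p_h\ind{t})\le C$ together with the additive $\veps$-terms generated by the model transfer. You do also state the plug-in constraint, so the conclusion survives, but not by the route you describe.

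Second, the ``estimation-error transfer'' you defer as the delicate part is where essentially all of the paper's work lies, and it is exactly what fixes the $\poly(H,\Cinf,\veps^{-1})$ amplification you leave unspecified. The paper needs three ingredients there: a smoothing step (\cref{lem:val_smoothing}) that inserts $d^{\sMhat\ind{t},\pi}_h$ into the denominator so the occupancy-ratio ``rewards'' are genuinely bounded by $1/\veps$; a reward-free DEC-type bound (\cref{lem:dec_bound_reward_free}), obtained by applying \cref{lem:com} layer-by-layer to Hellinger errors together with the simulation lemma and invoked twice to swap the numerator occupancy between $\Mstar$ and $\Mhat\ind{t}$, which produces the $\frac{1}{\veps}\sqrt{H^3C\cdot\mathrm{Est}}$ and $HC$ terms; and a separate change of measure for the denominator $d_h^{\sMstar,p}\to d_h^{\sMhat\ind{t},p}$ costing $\frac{4}{\veps^2}$ times the on-policy Hellinger error (\cref{lem:dp_com}). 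These determine the exponents ($\veps^{-4}$ in $T$, the $H^8\Cinf^3$) that your proposal takes as given, so as written you have identified the correct target and decomposition but not supplied the argument that reaches it.
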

    
\cref{cor:mbrf3} shows for the
first time that it is possible to perform sample-efficient and
computationally efficient reward-free exploration under
coverability. We refer to \cref{app:model_based_proof} for the proof,
  which is based on a connection to the Decision-Estimation
  Coefficient of \citet{foster2021statistical,foster2023tight}. Some key features of the result are as follows.

\noindent
 \emph{Computational efficiency.} As maximum likelihood estimation (\mle) is
  a valid estimation oracle, \cref{alg:model_based_reward_free} is
  computationally efficient whenever 1) \mle can be performed
  efficiently, and 2) the plug-in \mainobj objective in \cref{eq:cover_objective_model_based} can be approximately optimized
  efficiently. As the objective involves only the
  estimated model $\Mhat\ind{t}$ (and hence is a computational
  problem), we can use the relaxations in \cref{sec:planning} to
  \arxiv{accomplish}\icml{do} this efficiently (via off-the-shelf \akdelete{planning }methods).\loose

  Regarding the latter point, note that while \cref{cor:mbrf3} assumes for simplicity that
  \cref{eq:cover_objective_model_based} is solved with
$C=\CinfM[\Mstar]$, \arxiv{it should be clear that }if we solve the
objective for $C>\CinfM[\Mstar]$ the result continues to hold
with $\CinfM[\Mstar]$ replaced by $C$ in the sample complexity
bound and approximation guarantee. Consequently, if we solve
\cref{eq:cover_objective_model_based} using
\cref{alg:linf_relaxation}, the guarantees in \cref{cor:mbrf3} continue
to hold up to an $\bigoht(1)$ approximation factor.\loose

  \noindent
  \emph{Statistical efficiency.} \mainalg achieves sample complexity
  guarantees based on $L_{\infty}$-Coverability in a computationally
  eficient fashion for the first time. Compared to previous
  inefficient algorithms based on coverability
  \citep{xie2023role,liu2023provable,amortila2023harnessing} our
  theorem has somewhat looser sample complexity, and requires model-based function
  approximation. See \cref{sec:model_free} for a model-free
  counterpart based on weight function learning. \pacomment{winner of the ``biggest result with the least text in main body'' award}

  \mainalg is not limited to $L_{\infty}$-Coverability. \cref{sec:model_based_general} gives more
  general results (\cref{thm:model_based_reward_free}, \cref{cor:mbrf2}) which
  achieve sample complexity
  $\poly(\CovOptMmax[\Mstar],H,\log\abs{\cM},\veps^{-1})$, showing that \mainopt is itself a sufficiently powerful structural parameter to
  enable sample-efficient learning with nonlinear function approximation.\loose
  
  \pacomment{swap this out for $L_1$ result}

}

  \icml{
      \fakepar{Application to downstream policy optimization}
By \cref{lem:com}, the policy covers
$p_{1:H}$ returned by \cref{alg:model_based_reward_free} can
be used to optimize any downstream reward function using \arxiv{standard
offline RL algorithms}\icml{offline RL}, giving end-to-end guarantees for reward-driven
online RL. We sketch an example using \arxiv{maximum likelihood}\icml{\mle} for policy optimization in
\cref{app:rf_example} (\arxiv{see also}\icml{cf.} \cref{app:downstream}).\loose
}
  
\arxiv{
  \paragraph{Application to downstream policy optimization}
\dfc{Remind how reward-driven RL is defined; can give pointer to
  definition in the appendix. Specifically want to explain that this
  gives a bound on online sample complexity for reward-driven}  
By \cref{lem:com} (see also \cref{app:downstream}), the policy covers
$p_1,\ldots,p_H$ returned by \cref{alg:model_based_reward_free} can
be used to optimize any downstream reward function using standard
offline RL algorithms. For concreteness, we sketch an example which
uses maximum likelihood (\mle) for offline policy optimization; see
\cref{app:downstream} for further examples and details.

We now sketch some basic examples in which our main results can be
applied to give end-to-end guarantees
\begin{example}[Tabular MDPs]
  For tabular MDPs with $\abs*{\cX}\leq{}S$ and $\abs*{\cA}\leq{}A$,
  we can construct online estimators for which
  $\EstProbOn=\bigoht(HS^2A)$, so that
  \cref{thm:model_based_reward_free} gives sample complexity
$T=\frac{\poly(H,S,A)}{\veps^2}$ to compute policy covers such that $\CovM[\Mstar](p_h)
        \leq 12H\cdot{}\CinfM[\Mstar]$. \akc{$\EstProbOn$ not defined, maybe it's ok}
      \end{example}
      \begin{example}[Low-Rank MDPs]
        Consider the Low-Rank MDP model in \cref{eq:low_rank} with
        dimension $d$ and
        suppose, following
        \citet{agarwal2020flambe,uehara2022representation}, that we
        have access to classes $\Phi$ and $\Psi$ such that
        $\phi_{h}\in\Phi$ and $\psi_h\in\Psi$. Then \mle achieves
        $\EstProbOff=\bigoht(\log(\abs{\Phi}\abs{\Psi}))$, and we can
        take $\CovOptM[\Mstar]\leq{}d\abs*{\cA}$, so \cref{cor:mbrf2}
        gives sample complexity $T=\frac{\poly(H,d,\abs*{\cA},\log(\abs{\Phi}\abs{\Psi}))}{\veps^6}$ to compute policy covers such that $\CovM[\Mstar](p_h)
        \leq 12H\cdot{}\CinfM[\Mstar]$. \akc{Change $\CinfM[\Mstar]$ to $\CovOptM[\Mstar]$?}
      \end{example}
    }

    \arxiv{As an extension, in \cref{app:model_based_reward_based} we give a reward-driven counterpart to
\cref{alg:model_based_reward_free}, which directly optimizes a given
reward function online. This does not improve upon \paedit{the approach above}, but the analysis is slightly more direct.
}\loose

\arxiv{  \paragraph{Overview of proof}
\cref{alg:model_based_reward_free} can be viewed as an
application of (a reward-free variant of) the Estimation-to-Decisions framework of
\citet{foster2021statistical,foster2023tight}\icml{.}\arxiv{; we make this connection
more explicit through the reward-driven results in
\cref{app:model_based_reward_based}.} %
\icml{
Leveraging this perspective, the
crux of the analysis is to show that each episode, one of two good
possibilities occurs: Either 1) each plug-in maximizer $p_h\ind{t}$ has
good coverage for the true MDP $\Mstar$, or 2) the resulting
trajectory $o\ind{t}$ has high information content, allowing us to
improve the estimate at the next episode. As long as the estimation
oracle $\Mhat\ind{t}$ is consistent, the latter situation cannot occur
too many times before we arrive at a near-optimal policy cover.  
}
\arxiv{Briefly, the key step in the proof for \cref{cor:mbrf3,cor:mbrf2} is to
show that for each round $t$, we have that
\begin{align*}
  \CovOptM[\Mstar](p_h\ind{t})
  \approxleq 
  \max_{h}\CovOptM[\Mhat\ind{t}](p_h\ind{t})
  +
  \frac{1}{\veps}\sqrt{H^3C\cdot{}\En_{\pi\sim{}q\ind{t}}\brk[\big]{\DhelsX{\big}{\Mhat\ind{t}(\pi)}{\Mstar(\pi)}}}
  + \frac{H}{\veps^2}\cdot\En_{\pi\sim{}q\ind{t}}\brk[\big]{\DhelsX{\big}{\Mhat\ind{t}(\pi)}{\Mstar(\pi)}}
\end{align*}
for $C>0$ defined as in \cref{alg:model_based_reward_free}. This equation can be thought of as a reward-free analogue of a bound
on the Decision-Estimation Coefficient (DEC) of
\citet{foster2021statistical,foster2023tight}, and makes precise the
reasoning that by optimizing the plug-in approximation to the \mainobj
objective, we either 1) cover the true MDP $\Mstar$ well, or 2)
achieve large information gain (as quantified by the instantaneous
estimation error
$\En_{\pi\sim{}q\ind{t}}\brk[\big]{\DhelsX{\big}{\Mhat\ind{t}(\pi)}{\Mstar(\pi)}}$).}
}

\arxiv{
\section{Efficient Model-Free Exploration via \mainobj}
\label{sec:model_free}

Our algorithms in the previous section show that the \mainobj
objective and \mainopt parameter enable sample-efficient online
reinforcement learning, but one potential drawback is that they
require model-based realizability, a strong form of function
approximation that may not always be realistic. In this section, we
give \emph{model-free} algorithms to perform reward-free exploration
and optimize the \mainobj objective that do not require model
realizability, and instead require a weaker form of \emph{density
  ratio} or \emph{weight
  function} realizability, a modeling
approach that has been widely used in offline reinforcement learning \citep{liu2018breaking,uehara2020minimax,yang2020off,uehara2021finite,jiang2020minimax,xie2020q,zhan2022offline,chen2022offline,rashidinejad2022optimal,ozdaglar2023revisiting}
and recently adapted to the online setting
\citep{amortila2023harnessing}. The main algorithm we present computes
a policy cover that achieves a bound on the \mainopt objective that
scales with the pushforward coverability parameter (\cref{sec:pushforward_relaxation}), 
but the weaker modeling assumptions make it
applicable in a broader range of settings.

Throughout this section, we take $\Pi=\PiNS$ as the set of all
non-stationary policies.

\subsection{Algorithm}

\paedit{Our model-free algorithm}, \mfalg, is presented in \cref{alg:model_free}. The
algorithm builds a collection of policy covers
$p_1,\ldots,p_H\in\Delta(\PiRNS)$ layer-by-layer in an inductive
fashion. For each layer $h\in\brk{H}$, given policy
covers $p_1,\ldots,p_{h-1}$ for the preceding layers, the algorithm
computes $p_h$ by (approximately) implementing the iterative algorithm
for policy cover construction given in
\cref{alg:pushforward_relaxation} (\cref{sec:pushforward_relaxation}),
in a data-driven fashion.\icml{\footnote{Note that while the results in
  \cref{sec:planning} are presented
    for the setting in which the underlying MDP $M$ is ``known'' to
    the learner (planning), the algorithmic template in
    \cref{alg:linf_relaxation} can be applied even when $M$ is
    unknown, as long as the policy optimization step in
    \cref{line:linf_opt} can be implemented in a sample-efficient
    fashion. This observation is the basis for the algorithms we
    present in this section.
    }}

In more detail, recall the \emph{pushforward coverability} relaxation
\begin{align*}
    \pCovM[\Mstar](p) =
\sup_{\pi\in\Pi}\En^{\sMstar,\pi}\brk*{\frac{P_{h-1}^{\sMstar}(x_h\mid{}x_{h-1},a_{h-1})}{d_h^{\sMstar,p}(x_h)+\veps\cdot{}P^{\sMstar}_{h-1}(x_h\mid{}x_{h-1}a_{h-1})}}
\end{align*}
for the \mainobj objective given in \cref{sec:pushforward_relaxation}. For layer $h$, \cref{alg:model_free}
approximately minimizes this objective by computing a sequence of policies $\pi\ind{h,1},\ldots,\pi\ind{h,T}$,
where each policy 
\begin{align}
\pi\ind{h,t}\approx\argmax_{\pi\in\Pi}\En^{\sMstar,\pi}\brk*{
  \frac{\Pmstar_{h-1}(x_h\mid{}x_{h-1},a_{h-1})}{\sum_{i<t}d_h\ind{\pi\ind{h,i}}(x_h)+\Pmstar_{h-1}(x_h\mid{}x_{h-1},a_{h-1})}}
  \label{eq:mf_argmax}
\end{align}
is computed in a data-driven, online fashion that makes use of the
preceding policy covers $p_1,\ldots,p_{h-1}$. The algorithm then computes the
cover $p_h$ via
$p_h=\unif(\pi\ind{h,1}\circ_h\piunif,\ldots,\pi\ind{h,T}\circ_h\piunif)$.

Our
planning analysis in \cref{sec:pushforward_relaxation} shows that as
long as the approximation error in \cref{eq:mf_argmax} is small, $p_h$
will indeed be an approximate policy cover that minimizes
$\pCovM[\Mstar](p_h)$. To achieve this, \cref{alg:model_free} makes
use of two subroutines. The first subroutine, \estimateweight
(\cref{alg:estimate_weight}), invoked in \cref{line:estimate_weight},
uses function approximation to estimate a weight function
$\what\ind{t}_h$ such that
\begin{align}
  \label{eq:weight_est}
  \what_h\ind{t}(x_h\mid{}x_{h-1},a_{h-1})
  \approx w_h\ind{t}(x_h\mid{}x_{h-1},a_{h-1})
  \ldef{} \frac{\Pmstar_{h-1}(x_h\mid{}x_{h-1},a_{h-1})}{\sum_{i<t}d_h\ind{\pi\ind{h,i}}(x_h)+\Pmstar_{h-1}(x_h\mid{}x_{h-1},a_{h-1})}.
\end{align}
The second subroutine, \policyopt, is a hyperparameter to the algorithm,
and approximately solves the policy optimization problem
\begin{align*}
\pi\ind{h,t}\approx\argmax_{\pi\in\Pi}\En^{\sMstar,\pi}\brk*{
\what_h\ind{t}(x_h\mid{}x_{h-1},a_{h-1})},
\end{align*}
treating the estimated weight function $\what_h\ind{t}$ as a
reward. The \policyopt subroutine makes use of exploratory data
collected using preceding policy
covers $p_1,\ldots,p_{h-1}$, and hence does not have to perform
systematic exploration. Indeed, we show that any hybrid offline/online
method (that is, any online method that requires access to an
exploratory policy) that satisfies a certain ``local'' policy
optimization guarantee is sufficient, with \psdp{} (\cref{alg:psdp}; 
\citep{bagnell2003policy}) and Natural Policy Gradient
\citep{agarwal2021theory} being natural choices; for our analysis, we make use of \psdp.

\begin{algorithm}[tp]
    \setstretch{1.3}
     \begin{algorithmic}[1]
       \State \textbf{parameters}:
       \Statex[1] Weight function class $\cW=\cW_{1:H}$.
       \Statex[1] \mbox{Policy opt. subroutine
       $\policyopt_h(r_{1:h},p_{1:h},\eps,\delta)$.\hfill\algcommentlight{Optimizes reward 
         $r_h$ using policy covers $p_{1:h}$.}}
       \Statex[1] Approximation
       parameter $\veps\in\prn{0,1/2}$, failure probability $\delta\in(0,e^{-1})$.
       \State Set $T=\frac{1}{\veps}$ and $p\ind{1}=\piunif$.
       \State Set $\epsweight=c\cdot(\CpushM[\Mstar]/\abs{\cA})^{1/2}\veps^{1/2}$, $\epsopt=c'\cdot{}\veps^2$, 
       and $\deltaweight=\deltaopt=\delta/(2HT)$, for suff.
       small constants $c,c'>0$.
   \For{$h=2, \cdots, H$}
   \For{$t=1,\ldots,T$}
   \State
   $\what\ind{t}_h\gets\estimateweight_{h,t}(p_{h-1},\crl*{\pi\ind{h,i}}_{i<t};\epsweight,\deltaweight,\cW)$. \hfill
   \algcommentlight{\cref{alg:estimate_weight}.} \label{line:estimate_weight}
   \Statex\hfill\algcommentlight{Estimate for
     $w_h\ind{t}(x_h\mid{}x_{h-1},a_{h-1}) = \frac{\Pmstar_{h-1}(x_h\mid{}x_{h-1},a_{h-1})}{\sum_{i<t}d_h\ind{\pi\ind{h,i}}(x_h)+\Pmstar_{h-1}(x_h\mid{}x_{h-1},a_{h-1})}$.}
   \State \mbox{Define reward function $r\ind{h,t}$ via
   $r_{h-1}\ind{h,t}(x_{h}\mid{}x_{h-1},a_{h-1})=\what\ind{t}_h(x_h\mid{}x_{h-1},a_{h-1})$
   and $r_{h'}\ind{h,t}=0\;\;\forall{}h'\neq{}h-1$.}\label{line:mf_reward}
      \Statex\hfill\algcommentlight{$r_{h-1}\ind{h,t}$ can be interpreted as a
     stochastic reward for layer $h-1$.}
   \State $\pi\ind{h,t}\gets \policyopt_{h-1}(p_{1:h-1}, r\ind{h,t};\epsopt,\deltaopt)$.\label{line:policy_opt}
   \Statex\hfill \algcommentlight{Approximately solve $\argmax_{\pi\in\Pi}\En^{\sMstar,\pi}\brk*{ \frac{\Pmstar_{h-1}(x_h\mid{}x_{h-1},a_{h-1})}{\sum_{i<t}d_h\ind{\pi\ind{h,i}}(x_h)+\Pmstar_{h-1}(x_h\mid{}x_{h-1},a_{h-1})}}$.}
      \EndFor
      \State Set
      $p_h=\unif(\pi\ind{h,1}\circ_h\piunif,\ldots,\pi\ind{h,T}\circ_h\piunif)$.
\EndFor
\State  \textbf{return} Policy covers $(p_1,\ldots,p_H)$.
\end{algorithmic}
\caption{Coverage-Driven Exploration via Weight Function Estimation (\mfalg)}
\label{alg:model_free}
\end{algorithm}

In what follows, we describe the \estimateweight and \policyopt
subroutines and the corresponding statistical assumptions in more detail.

\paragraph{Weight function estimation and realizability}
To perform weight function estimation, we assume access to a weight function class $\cW=\cW_{1:H}$, with
$\cW_h\subseteq(\cX\times\cA\times\cX\to\bbR_{+})$ that is capable of
representing the weight function $w_h\ind{t}$ in
\cref{eq:weight_est}. While we can directly assume that the weight function in \cref{eq:weight_est} is realized by $\cW$ (cf. \cref{ass:weight_realizability_strong}), it turns out (cf. \cref{prop:weight_realizability}) that the following weaker form of weight function realizability is sufficient.
\begin{assumption}[Weight function realizability]
  \label{ass:weight_realizability_weak}
  \pacomment{removed ``weak version'' from caption} For all $h\geq{}2$ and all $\pi\in\PiNS$
  \begin{align*}
w_h^{\pi}(x'\mid{}x,a)\ldef{}\frac{\Pmstar_{h-1}(x'\mid{}x,a)}{d_h^{\sMstar,\pi}(x')}
    \in \cW_h.
  \end{align*}
\end{assumption}

\cref{ass:weight_realizability_weak}
is new to the best of our knowledge, and is naturally suited to the
pushforward coverability objective. 
While this assumption involves the
forward transition probability, it is weaker than
model-based realizability because it only requires modeling the
\emph{relative} transition probability, as the following example shows.\loose

\begin{example}\label{ex:block-mdp-example}
  For a Block MDP  \citep{du2019latent,misra2019kinematic,zhang2022efficient,mhammedi2023representation}
  with latent state space $\cS$ and decoder class
  $\Phi\subseteq(\cX\to\cS)$, we can satisfy
  \cref{ass:weight_realizability_weak} with
  $\log\abs*{\cW}\leq\bigoht\prn{\abs*{\cS}^2\abs*{\cA}+\log\abs*{\Phi}}$,\footnote{Formally
    this requires a standard covering argument; we omit the details.}
  yet any realizable model class must have
  $\log\abs*{\cM}=\bigom(\abs*{\cX})\gg\abs*{\cS}$ in general.\loose
\end{example}
See \citet{amortila2023harnessing} for further discussion around
weight function realizability in online RL.

\begin{algorithm}[tp]
    \setstretch{1.3}
     \begin{algorithmic}[1]
       \State \textbf{parameters}:
       \Statex[1] Layer $h\geq{}2$, iteration $t\in\bbN$.
       \Statex[1] Distribution $p_{h-1}\in\Delta(\PiRNS)$ , policies $\pi\ind{1},\ldots,\pi\ind{t-1}\in\PiNS$.
       \Statex[1] Error tolerance $\eps\in(0,1)$, failure probability
       $\delta\in(0,1)$.
       \Statex[1] Weight function class $\cW=\cW_{1:H}$ with $\cW_h\subseteq(\cX\times\cA\times\cX\to\brk{0,1})$.
       \State Let $n=\nweight(\eps,\delta)\ldef{}\frac{40\log(\abs{\cW}\delta^{-1})}{\eps^2}$.
       \State Let
       $q\ldef{}\frac{1}{2}p_{h-1}+\frac{1}{2(t-1)}\sum_{i<t}\pi\ind{i}\circ_{h-1}\piunif$
       if $t\geq{}1$ and $q\ldef{}p_{h-1}$ otherwise.
       \State Let $\cD_1=\cD_2=\emptyset$.
       \State For each $j\in\brk{n}$, draw $\pi\sim{}q$ and sample
       $(x\ind{j}_{h-1},a\ind{j}_{h-1},x\ind{j}_h)\sim{}\pi$. Add $(x\ind{j}_{h-1},a\ind{j}_{h-1},x\ind{j}_h)$
       to both $\cD_1$ and $\cD_2$.
       \For{$i<t$}
       \State Draw $n$ samples
       $\crl*{(x_{h-1}\ind{j},a_{h-1}\ind{j},x_h\ind{j})}_{j\in\brk{n}}$
       independently by drawing $\pi\sim{}q$ and
       $(x_{h-1}\ind{j},a_{h-1}\ind{j},x_h\ind{j})\sim\pi$.
       \State Draw $n$ samples
       $\crl*{(\tilde{x}_{h-1}\ind{j},\tilde{a}_{h-1}\ind{j},\tilde{x}_h\ind{j})}_{j\in\brk{n}}$
       by sampling
       $(\tilde{x}_{h-1}\ind{j},\tilde{a}_{h-1}\ind{j},\tilde{x}_h\ind{j})\sim\pi\ind{i}$.
       \State Add
       $\crl*{(x_{h-1}\ind{j},a_{h-1}\ind{j},x_h\ind{j})}_{j\in\brk{n}}$
       to $\cD_1$ and add
       $\crl*{(x_{h-1}\ind{j},a_{h-1}\ind{j},\tilde{x}_h\ind{j})}_{j\in\brk{n}}$
       to $\cD_2$.
       \EndFor
       \State Set $\what \ldef \argmax_{w\in\cW_h}
       \Eh_{\cD_1}\brk*{\log(w(x_h\mid{}x_{h-1},a_{h-1}))} -
       t\cdot\Eh_{\cD_2}\brk*{w(x_h\mid{}x_{h-1},a_{h-1})}$.\label{line:weight_alg}\hfill\algcommentlight{See \cref{eq:weight_estimator_log}.}
       \State  \textbf{return} $\what$.
\end{algorithmic}
\caption{Weight Function Estimation ($\estimateweight_{h,t}(p_{h-1},\crl*{\pi\ind{i}}_{i<t};\eps,\delta,\cW)$)}
\label{alg:estimate_weight}
\end{algorithm}

Our weight function estimation subroutine, \estimateweight, is given
in \cref{alg:estimate_weight}. To motivate the algorithm, consider the
following abstract setting. Let $\cZ$ be a set. We receive samples
$\cD_1=\crl{z_\mu\ind{1},\ldots,z_\mu\ind{n}}\in\cZ$
and $\cD_2=\crl{z_\nu\ind{1},\ldots,z_\nu\ind{n}}\in\cZ$, where $z_\mu\ind{t}\sim\mu\ind{t}\in\Delta(\cZ)$ and
$z_\nu\ind{t}\sim\nu\ind{t}\in\Delta(\cZ)$. The distributions $\mu\ind{t}$ and $\nu\ind{t}$ can be chosen
in an adaptive fashion based on $z_\mu\ind{1},z_\nu\ind{1},\ldots,z_\mu\ind{t-1},z_\nu\ind{t-1}$. We define
$\mu=\frac{1}{n}\sum_{t=1}^{n}\mu\ind{t}$ and
$\nu=\frac{1}{n}\sum_{t=1}^{n}\nu\ind{t}$, and our goal is to estimate
the density ratio
$\wstar(z) \ldef \frac{\mu(z)}{\nu(z)}$.
Given \emph{realizable} weight function class $\cW$ with
$w^{\star}\in\cW$, \citet{nguyen2010estimating} (see also
\citet{katdare2023marginalized}) propose the estimator
\begin{align}
  \label{eq:weight_estimator_body}
\what \ldef \argmax_{w\in\cW} \Eh_{\cD_1}\brk*{\log(w)} - \Eh_{\cD_2}\brk*{w},
\end{align}
where $\Eh_\cD\brk{\cdot}$ denotes the empirical expectation with
respect to a dataset $\cD$. We show
(\cref{thm:weight_estimator_log} in \cref{app:model_free}) that this
estimator ensures that
\begin{align}\label{eq:estimateweight-guarantee}
  \En_{z\sim\nu}\brk[\big]{\prn[\big]{\sqrt{\what(z)}-\sqrt{w^{\star}(z)}}^2}
  \approxleq \bigoh(\nrm*{w^{\star}}_{\infty})\cdot\frac{\log(\abs{\cW}\delta^{-1})}{n}
\end{align}
with high probability. \cref{alg:estimate_weight} simply applies this
technique to estimate the weight function in \cref{eq:weight_est};
\paedit{to ensure realizability of \cref{eq:weight_est} under
  \cref{ass:weight_realizability_weak}, we apply the method with an
  expanded weight function class (defined in
  \cref{prop:weight_realizability}); see \cref{sec:mf_general} for details.} \padelete{the
main guarantee for this method is given in \cref{lem:weight_function_mf} in \cref{app:model_free}}

\begin{remark}[Comparison to contrastive learning \citep{misra2019kinematic}]

We remark our weight function estimation subroutine (\cref{alg:estimate_weight}) can be viewed as a form of contrastive learning, with the target (population) weight function in Eq. \eqref{eq:weight_est} bearing strong similarity to the target (Bayes-optimal) function from the regression problem used in the \texttt{HOMER} algorithm (cf. Line 11 of Algorithm 3 and Lemma 9 of \citeauthor{misra2019kinematic}). We also note that both \mfalg and \texttt{HOMER} find policy covers inductively via policy optimization on exploratory reward functions, and thus we can view \mfalg as a natural generalization of the \texttt{HOMER} algorithm to settings beyond the Block MDP. Additionally, our analysis improves dependencies present in \texttt{HOMER}'s sample complexity (notably, the dependence on the minimal visitation probability). 
\end{remark}

\paragraph{Policy optimization subroutine}
The policy optimization subroutine, \policyopt, is a hyperparameter to the algorithm,
and can be any subroutine that approximately solves the policy optimization problem
\begin{align*}
\pi\ind{h,t}\approx\argmax_{\pi\in\Pi}\En^{\sMstar,\pi}\brk*{
\what_h\ind{t}(x_h\mid{}x_{h-1},a_{h-1})},
\end{align*}
which treats the estimated weight function $\what_h\ind{t}$ as a
reward for layer $h-1$. \policyopt does not need to perform global
policy optimization---instead, we only require a certain form of
``local'' guarantee with respect to the approximate policy covers
$p_{1},\ldots,p_{h-1}$; see \cref{ass:policy_opt} for details.

For concreteness, we make use \psdp \citep{bagnell2003policy},
described in \cref{app:policy_opt}, as the \policyopt subroutine. \psdp
optimizes a given reward function $r_{h-1}$ by collecting exploratory
data with $p_1,\ldots,p_{h-1}$ and applying approximate dynamic
programming with a user-specified value function class $\cQ$. We make the following
value-based realizability assumption, which asserts that $\cQ$ is
expressive enough to allow \psdp to optimize any weight function in
the class $\cW$.
\begin{assumption}
  \label{ass:qpi_weight}
  We have access to a value function class $\cQ=\cQ_{1:H}$ with
  $\cQ_h\subseteq(\cX\times\cA\to\brk{0,1})$ such that for all
  $h\in\brk{H}$, $w\in\cW_h$, and $\pi\in\PiNS$, we have
  $Q_\ell^{\sMstar,\pi}(\cdot,\cdot;w)\in\cQ_\ell$ for all $\ell\leq{}h-1$, where 
  \begin{align*}
  Q_{\ell}^{\sMstar,\pi}(x,a;w)=\En^{\sMstar,\pi}\brk*{
  w_h(x_h\mid{}x_{h-1},a_{h-1})\mid{}x_\ell=x,a_\ell=a
  }
  \end{align*}
  is the Q-function for $\pi$ under the (stochastic) reward function defined via
$r_{h-1}=w(x_h\mid{}x_{h-1},a_{h-1})$ and $r_{h'} = 0$ for $h' \neq h-1$. 
\end{assumption}

\subsection{Main Result}

The main guarantee for \cref{alg:model_free} applied with \psdp is as follows.
\begin{theorem}[Main result for \cref{alg:model_free} with \psdp]\label{thm:model_free_psdp}
  \label{cor:model_free_psdp}
    Let $\veps\in(0,1/2)$ and $\delta\in(0,e^{-1})$ be given, and
    suppose that we have a weight function class $\cW$ and value
    function class $\cQ$ such that \cref{ass:weight_realizability_weak,ass:qpi_weight} are satisfied.     Then with
  \psdp (\cref{alg:psdp}) as a policy optimization subroutine, \paedit{and using an expanded weight function class defined in \cref{prop:weight_realizability}}, \cref{alg:model_free}
  produces policy covers $p_1,\ldots,p_H\in\Delta(\PiRNS)$ such that with probability at least $1-\delta$, for all $h\in\brk{H}$,
\[
  \pCovM[\Mstar](p_h)   \leq{} 170H\log(\veps^{-1})\cdot{}\CpushM[\Mstar],
\]
and does so using at most 
\begin{align*}
  N \leq \bigoht\prn*{\frac{H\abs*{\cA}\log(\abs{\cW}\delta^{-1})}{\veps^4}
        +\frac{H^4\abs*{\cA}\log(\abs{\cQ}\delta^{-1})}{\veps^5}}
\end{align*}
episodes.
\end{theorem}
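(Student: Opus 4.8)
The plan is to exhibit \mfalg (\cref{alg:model_free}) as a sample-based implementation of the idealized planning algorithm \cref{alg:pushforward_relaxation} and then invoke its guarantee, \cref{thm:pushforward_optimization}. I would argue by induction on the layer index $h$, with inductive hypothesis that the covers $p_1,\dots,p_{h-1}$ built so far satisfy the pushforward-coverage conclusion of \cref{thm:pushforward_optimization} at their respective layers (the base case $h=1$ is immediate since $p_1=\piunif$). By \cref{prop:pushforward_relaxation} together with \cref{lem:com}, this hypothesis guarantees that $p_1,\dots,p_{h-1}$ form a layerwise-exploratory ensemble whose $L_1$-coverage is controlled by $\poly(H,\abs{\cA},\CpushM[\Mstar],\log\veps^{-1})$, which is exactly the quality of exploratory data the \policyopt ($=$\psdp) subroutine requires in order to satisfy its ``local'' optimization guarantee (\cref{ass:policy_opt}).

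For the inductive step it suffices to show that the policies $\pi\ind{h,1},\dots,\pi\ind{h,T}$ returned in \cref{line:policy_opt} solve the argmax subproblems \cref{eq:mf_argmax} up to an additive tolerance below the threshold $\CpushhM[\Mstar]\cdot\veps\log(2\veps^{-1})$ demanded by \cref{thm:pushforward_optimization}; the theorem then applies verbatim to $p_h=\unif(\pi\ind{h,1}\circ_h\piunif,\dots,\pi\ind{h,T}\circ_h\piunif)$ and re-establishes the hypothesis. Two errors enter each subproblem. \emph{(i) Weight estimation.} \estimateweight (\cref{alg:estimate_weight}) replaces the true reward $w_h\ind{t}$ of \cref{eq:weight_est} by $\what_h\ind{t}$. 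The key observation (which also explains the rescaling by $t$ in \cref{line:weight_alg}) is that, with the mixture $q=\tfrac12 p_{h-1}+\tfrac1{2(t-1)}\sum_{i<t}\pi\ind{i}\circ_{h-1}\piunif$ used to collect data, the density ratio estimated by the Nguyen-type objective \cref{eq:weight_estimator_body} equals $t\cdot w_h\ind{t}$; after rescaling, and invoking \cref{thm:weight_estimator_log} with the expanded realizable class of \cref{prop:weight_realizability} (which realizes \cref{eq:weight_est} under \cref{ass:weight_realizability_weak}), one gets $\En_{z\sim q}\brk*{(\sqrt{\what_h\ind{t}}-\sqrt{w_h\ind{t}})^2}\lesssim \log(\abs{\cW}\delta^{-1})/\nweight\asymp\epsweight^2$ with probability $1-\deltaweight$. \emph{(ii) Policy optimization.} Using \cref{ass:qpi_weight} and the local guarantee of \psdp run on the exploratory data $p_{1:h-1}$ from the inductive hypothesis, the returned $\pi\ind{h,t}$ is $O(H\epsopt)$-suboptimal for the reward $\what_h\ind{t}$, with probability $1-\deltaopt$.

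The hard part is converting the in-distribution bound (i) from the training distribution $q$ into a bound on $\abs{\En^{\sMstar,\pi}\brk*{\what_h\ind{t}}-\En^{\sMstar,\pi}\brk*{w_h\ind{t}}}$ that holds \emph{uniformly} over all competitors $\pi\in\PiNS$, as \cref{eq:mf_argmax} ranges over arbitrary $\pi$. The plan here is a change-of-measure argument: since $\what_h\ind{t},w_h\ind{t}\in[0,1]$, Cauchy--Schwarz gives $\En^{\sMstar,\pi}\abs{\what_h\ind{t}-w_h\ind{t}}\leq 2\bigl(\En^{\sMstar,\pi}(\sqrt{\what_h\ind{t}}-\sqrt{w_h\ind{t}})^2\bigr)^{1/2}$, and the inner quantity depends only on the layer-$(h{-}1)$ occupancy of $\pi$ (since $x_h\sim\Pmstar_{h-1}(\cdot\mid x_{h-1},a_{h-1})$ is ``fresh''), so it can be re-expressed under $q$ at the cost of a coverage factor. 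Two features keep that factor polynomial: every policy in $\supp(p_{h-1})$ plays $\piunif$ at layer $h{-}1$, so the state--action density ratio is at most $\abs{\cA}$ times the state-occupancy ratio; and the pushforward-coverability bound from the inductive hypothesis controls that state-occupancy ratio worst-case, because $\Pmstar_{h-1}(x'\mid x,a)\leq\CpushhM[\Mstar]\cdot\mu(x')$ propagates to all occupancies. Combining (i) and (ii), the subproblem tolerance is of order $\bigl(\poly(H,\abs{\cA},\CpushM[\Mstar])\cdot\epsweight^2\bigr)^{1/2}+H\epsopt$, and the choices $\epsweight=c(\CpushM[\Mstar]/\abs{\cA})^{1/2}\veps^{1/2}$, $\epsopt=c'\veps^2$ (for small enough absolute constants) push this below $\CpushhM[\Mstar]\cdot\veps\log(2\veps^{-1})$ throughout $\veps\in(0,1/2)$; the crude constants accumulated in this step are what inflate the final bound to $170H\log(\veps^{-1})\CpushM[\Mstar]$. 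A union bound over the $HT=H/\veps$ calls to each subroutine, with $\deltaweight=\deltaopt=\delta/(2HT)$, gives overall failure probability $\delta$.

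Finally, the episode count is bookkeeping. \estimateweight is invoked $HT$ times; its $t$-th invocation within a layer collects $O(\nweight\cdot t)$ trajectories (one base batch plus $O(t)$ batches of size $\nweight$), so the total over all layers is $O\bigl(H\sum_{t\le T}\nweight\, t\bigr)=O(H\,\nweight\,T^2)$; substituting $\nweight=\Theta(\log(\abs{\cW}\delta^{-1})/\epsweight^2)$, $\epsweight^2\asymp\CpushM[\Mstar]\veps/\abs{\cA}$, and $T=\veps^{-1}$ yields the first term $\bigoht\bigl(H\abs{\cA}\log(\abs{\cW}\delta^{-1})/\veps^4\bigr)$. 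Each of the $HT$ \psdp calls runs $O(H)$ regression steps, each using $O\bigl(\poly(H,\abs{\cA},\CpushM[\Mstar],\log\veps^{-1})\cdot\log(\abs{\cQ}\delta^{-1})/\epsopt^2\bigr)$ samples (the polynomial absorbing the horizon and the coverage parameter of $p_{1:h-1}$); summing over all calls with $\epsopt=\Theta(\veps^2)$ and $T=\veps^{-1}$ gives the second term $\bigoht\bigl(H^4\abs{\cA}\log(\abs{\cQ}\delta^{-1})/\veps^5\bigr)$. Adding the two bounds gives the stated count $N$.
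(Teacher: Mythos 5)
Your overall architecture (layer-by-layer induction, the Nguyen-type weight estimator on the expanded class from \cref{prop:weight_realizability}, \psdp as a local optimizer, then feeding the resulting approximate argmaxes into the pushforward-relaxation analysis) matches the paper's, but the step you yourself flag as ``the hard part'' contains a genuine gap, and it is exactly the step the paper's proof is built around. You transfer the weight-estimation error and the \psdp guarantee from the data distributions ($q$, resp.\ $p_{1:h-1}$) to an \emph{arbitrary} competitor $\pi\in\PiNS$ by asserting that pushforward coverability plus the inductive hypothesis ``controls the state-occupancy ratio worst-case.'' It does not: pushforward coverability gives the upper bound $d_{h-1}^{\pi}(x)\leq \CpushM[\Mstar]\,\mu(x)$ for a \emph{non-admissible} $\mu$, but provides no lower bound on $d_{h-1}^{p_{h-1}}(x)$, and the inductive hypothesis only controls an average-case ($L_1$-type) objective. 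A competitor can concentrate on states that $p_{h-1}$ reaches with arbitrarily small probability, so the ratio $d^{\pi}_{h-1}(x,a)/d^{q}_{h-1}(x,a)$ --- your ``coverage factor'' --- is unbounded in general; \cref{prop:linf_impossibility} is precisely a counterexample showing that admissible worst-case ratios can be as large as $1/\veps$ even when non-admissible coverability is $O(1)$. The same issue breaks the claim that \psdp's local guarantee (an average over the distributions $p_{1:h-1}$, as in \cref{ass:policy_opt}) yields $O(H\epsopt)$-suboptimality of $\pi^{(h,t)}$ against all of $\PiNS$, which is what solving \cref{eq:mf_argmax} to the tolerance required by \cref{thm:pushforward_optimization} would demand.

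The paper closes this gap with a truncation device rather than a change of measure: it passes to an extended MDP with a terminating action and a truncated benchmark class $\Pibar_{\alpha}$ in which a competitor switches to the terminal action on any state where its density ratio against the learned cover exceeds $\alpha$. Against this restricted class the relevant ratios are at most $\alpha$ \emph{by construction}, so the weight-estimation and optimization errors are amplified by at most $\alpha\abs{\cA}$ (this is why the tolerances $\epsweight^2\asymp\CpushM[\Mstar]\veps/\abs{\cA}$ and $\epsopt\asymp\veps^2$ are needed with $\alpha=\veps^{-1}$), and the elliptic-potential argument of \cref{thm:pushforward_optimization} is re-proved in the extended MDP with these error terms appearing explicitly (\cref{lem:outer_level}), rather than applied verbatim. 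The cost of truncating the benchmark is then bounded separately by the probability of exceeding the threshold, which is itself bounded by $\tfrac{2}{\alpha}$ times the pushforward objective of the earlier covers at scale $\alpha^{-1}$, summed over layers $\ell<h$; this recursion, resolved by the choice $\alpha=\veps^{-1}$, is what produces the extra factor of $H$ in the final bound $170H\log(\veps^{-1})\CpushM[\Mstar]$. Without some such mechanism your uniform transfer step fails, so the proposal as written does not go through. (A minor point: your episode count for the weight-estimation term reaches $1/\veps^4$ only if $\log\abs{\cW}$ is replaced by $\log\abs{\cW'}\asymp\veps^{-1}\log\abs{\cW}$ for the expanded class; that bookkeeping is otherwise consistent with the paper.)
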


\cref{thm:model_free_psdp} is a special case of a more general result
(\cref{thm:model_free}) which allows for general policy optimization
algorithms that satisfy a certain ``local optimality'' guarantee
(\cref{ass:policy_opt}); we obtain the result by verifying that \psdp
satisfies this condition.  Let us discuss some key features.
\begin{itemize}
\item \emph{Sample efficiency.} The sample complexity in \cref{cor:model_free_psdp} scales as
  $\poly(H,\abs*{\cA},\log\abs{\cW},\log\abs*{\cQ},\veps^{-1})$, and
  hence is efficient for large state spaces; we
  expect that the precise polynomial factors can be tightened, as can
  the approximation ratio in the objective value, though we leave this
  for future work. As with our results in preceding sections, the
  resulting policy covers $p_1,\ldots,p_H$ can be directly used for
  downstream policy optimization.
\item \emph{Computational efficiency.} With \psdp as the \policyopt subroutine, \cref{alg:model_free}
  is computationally efficient whenever i) the weight function
  estimation objective in \cref{line:weight_alg} of
  \cref{alg:estimate_weight} can be solved efficiently over $\cW$, and
  ii) square loss regression over the class $\cQ$ can be solved
  efficiently. We consider these to be fairly mild assumptions.
\item \emph{Practicality.} In practice, we expect that the subroutine
  \policyopt can be implemented using off-the-shelf deep RL methods
  (e.g., PPO or SAC), which are known to perform well given access to
  exploratory data. In this sense, \cref{alg:model_free} can be viewed
  as a new approach to equipping existing deep RL methods with
  exploration, with the weight function-based rewards in
  \cref{line:mf_reward} acting as exploration bonuses.\loose
\end{itemize}

As a concrete example, we can instantiate \cref{cor:model_free_psdp}
for Block MDPs to derive end-to-end guarantees under standard assumptions.

\begin{example}[Sample complexity for Block MDP]
    For a Block MDP  \citep{du2019latent,misra2019kinematic,zhang2022efficient,mhammedi2023representation}
  with latent state space $\cS$ and decoder class
  $\Phi\subseteq(\cX\to\cS)$, we can satisfy
  \cref{ass:weight_realizability_weak} with
  $\log\abs*{\cW}\leq\bigoht\prn{\abs*{\cS}^2\abs*{\cA}+\log\abs*{\Phi}}$
  and $\log\abs*{\cQ}\leq\bigoht\prn{\abs*{\cS}\abs*{\cA}+\log\abs*{\Phi}}$,
  so \cref{cor:model_free_psdp} gives sample complexity
  $N \leq \bigoht\prn*{\frac{H^4\abs*{\cS}^2\abs*{\cA}^2\log(\abs{\Phi}\delta^{-1})}{\veps^5}}$.
\end{example}

We view the results in this section as a proof of concept, showing
that the planning methods derived in \cref{sec:planning} for iteratively optimizing \mainobj 
can be implemented in a data-driven
fashion. We leave sample-efficient counterparts for other relaxations
for future work.

}

\arxiv{
\section{\mainobj: Structural Properties}
\label{sec:structural}
In this section we draw connections between \mainopt, the structural parameter induced by \mainobj, and other structural parameters and objectives, focusing on (i) a non-admissible variant of \mainopt (\cref{sec:non_admissible}),  and (ii) feature coverage (\cref{sec:feature_coverage}). We also show that alternative exploration objectives \emph{do not} induce meaningful structural parameters in the same fashion as \mainobj (\cref{app:insufficient}).

\subsection{Connection to Non-Admissible \mainobj}
\label{sec:non_admissible}

Inspired by \cref{eq:cinf}, we can also define a non-admissible counterpart
to \mainopt as follows.
  \begin{align}
    \label{eq:cone}
    \ConehM= \inf_{\mu\in\Delta(\cX\times\cA)}  \sup_{\pi\in\Pi}\En^{\sM,\pi}\brk*{\frac{d_h^{\sM,\pi}(x_h,a_h)}{\mu(x_h,a_h)}},
  \end{align}
  and $\ConeM=\max_{h\in\brk{H}}\ConehM$. This quantity was used to
  provide sample complexity bounds for online reinforcement learning by
  \citet{liu2023provable} (generalizing the results of
  \citet{xie2023role}), and the associated concentrability coefficient $\ConehM(\mu)= \sup_{\pi\in\Pi}\En^{\sM,\pi}\brk*{\frac{d_h^{\sM,\pi}(x_h,a_h)}{\mu(x_h,a_h)}}$
  is widely used in offline reinforcement
  learning \citep{farahmand2010error,xie2020q}. The following result, which
  uses the minimax theorem in a similar fashion to \cref{prop:linf}, shows
  that it is possible to bound \mainopt by this quantity in spite of non-admissibility, albeit with
  some loss in rate.
\begin{restatable}{proposition}{lone}
  \label{prop:lone}
    For all $\veps>0$, it holds that $        \CovOptM \leq{} 1 + 2\sqrt{\frac{\ConehM}{\veps}}$.
\end{restatable}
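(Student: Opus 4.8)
The plan is to establish the bound via a minimax argument parallel to the one behind \cref{prop:linf}, combined with an elementary two-region estimate. Write $d^\pi\equiv d_h^{\sM,\pi}$ and, for a mixture $q\in\Delta(\Pi)$, $d^q\equiv d_h^{\sM,q}=\En_{\pi\sim q}[d^\pi]$, and define
\[
  f(p,q) \ldef \En_{\pi\sim q}\En^{\sM,\pi}\brk*{\frac{d^\pi(x_h,a_h)}{d^p(x_h,a_h)+\veps\, d^\pi(x_h,a_h)}}
  \;=\; \En_{\pi\sim q}\sum_{x,a}\frac{(d^\pi(x,a))^2}{d^p(x,a)+\veps\, d^\pi(x,a)}.
\]
Since the supremum of a bounded function over $\Pi$ equals its supremum over $\Delta(\Pi)$, we have $\CovM(p)=\sup_{q\in\Delta(\Pi)}f(p,q)$, hence $\CovOptM=\inf_{p}\sup_{q}f(p,q)$. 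The map $p\mapsto f(p,q)$ is convex (each summand has the form $c/(\ell(p)+c')$ with $c,c'\ge 0$ and $\ell$ affine and nonnegative in $p$, and this remains true after applying $\En_{\pi\sim q}$), while $q\mapsto f(p,q)$ is linear. I would therefore invoke the minimax theorem exactly as in the proof of \cref{prop:linf} — supplying the requisite compactness/semicontinuity via the same device used there, working with the convex set of realizable occupancies $\mathrm{co}\{d^\pi:\pi\in\Pi\}\subseteq\Delta(\cX\times\cA)$ — to obtain $\CovOptM = \sup_{q\in\Delta(\Pi)}\inf_{p\in\Delta(\Pi)} f(p,q)$.

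Next I would fix an arbitrary $q$ and bound $\inf_p f(p,q)$ by plugging in $p=q$; this is the crucial move, since it makes the occupancy in the denominator the \emph{average} occupancy $d^q$ of the adversary's mixture (admissible and controllable) rather than a single uncontrolled $d^\pi$. Assuming $\ConehM<\infty$ (else the claim is vacuous) and fixing $\delta>0$, let $\mu\in\Delta(\cX\times\cA)$ satisfy $\sup_{\pi\in\Pi}\sum_{x,a}(d^\pi(x,a))^2/\mu(x,a)\le\ConehM+\delta$, which exists by definition of $\ConehM$. For a parameter $\theta>0$, split the state--action space into the ($\pi$-independent) set $A\ldef\crl{(x,a):d^q(x,a)\ge\theta\,\mu(x,a)}$ and its complement. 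On $A$, bounding the denominator of $f(q,q)$ below by $\theta\mu(x,a)$ gives, for every $\pi$, $\sum_{(x,a)\in A}\frac{(d^\pi(x,a))^2}{d^q(x,a)+\veps d^\pi(x,a)}\le\frac1\theta\sum_{x,a}\frac{(d^\pi(x,a))^2}{\mu(x,a)}\le\frac{\ConehM+\delta}{\theta}$, so the $A$-part of $f(q,q)$ is at most $(\ConehM+\delta)/\theta$. On $A^c$, bounding the denominator below by $\veps d^\pi(x,a)$ and then taking $\En_{\pi\sim q}$ gives an $A^c$-part at most $\frac1\veps\sum_{(x,a)\in A^c}\En_{\pi\sim q}[d^\pi(x,a)]=\frac1\veps\sum_{(x,a)\in A^c}d^q(x,a)\le\frac\theta\veps\sum_{x,a}\mu(x,a)\le\frac\theta\veps$, using $d^q<\theta\mu$ on $A^c$ and $\mu\in\Delta(\cX\times\cA)$.

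Combining the two parts, $f(q,q)\le\frac{\ConehM+\delta}{\theta}+\frac\theta\veps$; choosing $\theta=\sqrt{(\ConehM+\delta)\veps}$ yields $f(q,q)\le 2\sqrt{(\ConehM+\delta)/\veps}$. Letting $\delta\downarrow 0$ and then taking the supremum over $q$ gives $\CovOptM\le 2\sqrt{\ConehM/\veps}\le 1+2\sqrt{\ConehM/\veps}$, which is the claim (in fact with a slightly sharper constant). The only genuinely delicate step is justifying the minimax swap for a possibly infinite policy class $\Pi$; this is handled exactly as in \cref{prop:linf}, and everything else is the elementary two-region bound above.
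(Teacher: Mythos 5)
Your proof is correct, but it takes a genuinely different route from the paper's. The paper never looks at the diagonal $p=q$: it fixes a distribution $\mu$ attaining $\ConehM$, splits the objective (in the spirit of \cref{lem:pi_to_mu}) into a term whose denominator also contains $\veps\mu$ plus the auxiliary quantity $\En^{\sM,\pi}\brk{\mu/(d_h^{\sM,p}+\veps\mu)}$, applies Cauchy--Schwarz against the definition of $\ConehM$ to the first term, and only then invokes the minimax theorem---applied to the auxiliary objective with the \emph{fixed} numerator $\mu$, i.e.\ exactly the object already handled in \cref{prop:linf}---to show that $V=\inf_{p}\sup_{\pi}\En^{\sM,\pi}\brk{\mu/(d_h^{\sM,p}+\veps\mu)}\leq 1$, which yields $\CovOptM\leq V+2\sqrt{\ConehM V/\veps}$ and hence the stated bound. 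You instead apply Sion directly to the $\veps$-regularized \mainobj objective itself (legitimate: it is convex in $p$ because $d_h^{\sM,p}$ is affine and $t\mapsto c^2/(t+\veps c)$ is convex, linear in $q$, and bounded by $1/\veps$), substitute $p=q$, and finish with an elementary two-region estimate on $\{d_h^{\sM,q}\geq\theta\mu\}$ optimized over $\theta$. What each buys: your argument is shorter, avoids the Cauchy--Schwarz step, treats possible non-attainment of the infimum defining $\ConehM$ cleanly via the $\delta$-slack (the paper simply takes an attaining $\mu$), and gives the marginally sharper constant $2\sqrt{\ConehM/\veps}$ without the additive $1$; the paper's route confines the minimax swap to the simpler fixed-numerator objective, so the Lipschitz/semicontinuity justification is recycled verbatim from \cref{prop:linf} and the quadratic dependence on $d^{\sM,\pi}$ never enters the convex-concave structure. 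Your swap needs the same (informally treated) compactness and semicontinuity caveats for $\Delta(\Pi)$ that the paper itself glosses over; since Sion requires compactness only for the inf-variable, reducing $p$ to the set of realizable occupancies is the right device, and your treatment is on par with the paper's level of rigor rather than a gap.
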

Note that while the bound in \cref{prop:lone} grows as
$\sqrt{\frac{1}{\veps}}$, it still leads to non-trivial sample
complexity bounds through our main results (which give meaningful guarantees whenever $\CovOptM$ grows
sublinearly with $\veps^{-1}$), though the resulting rates
are worse than in the case where $\CovOptM$ is bounded by an absolute
constant.\loose

\subsection{Connection to Feature Coverage}
\label{sec:feature_coverage}
Another well-studied notion of coverage from offline reinforcement
learning is \emph{feature coverage} \citep{jin2021pessimism,zanette2021provable,wagenmaker2023leveraging}. Consider the Low-Rank MDP
framework
\citep{RendleFS10,YaoSPZ14,agarwal2020flambe,modi2021model,zhang2022efficient,mhammedi2023efficient},
in which the transition distribution is assumed to factorize as
\begin{align}
  \label{eq:low_rank}
    P^{\sM}_{h-1}(x_{h}\mid{}x_{h-1},a_{h-1}) = \tri*{\phi_{h-1}(x_{h-1},a_{h-1}),\psi_h(x_h)},
\end{align}
where $\phi_{h-1}(x,a),\psi_h(x)\in\bbR^{d}$ are (potentially unknown)
feature maps. For offline reinforcement learning in Low-Rank MDPs \citep{jin2021pessimism,zanette2021provable,wagenmaker2023leveraging}, feature coverage for a distribution
$\mu\in\Delta(\cX\times\cA)$ is given by $\CfeathM(\mu) =
\sup_{\pi\in\Pi}\nrm*{\En^{\sM,\pi}\brk*{\phi_{h}(x_h,a_h)}}^2_{\Sigma_\mu^{-1}}$. We
define the associated feature coverability coefficient by
\begin{align}
  \label{eq:feature_coverability}
\CfeathM = \inf_{\mu\in\Delta(\cX\times\cA)}  \sup_{\pi\in\Pi}\nrm*{\En^{\sM,\pi}\brk*{\phi_{h}(x_h,a_h)}}^2_{\Sigma_\mu^{-1}},
\end{align}
where
$\Sigma_\mu\ldef{}\En_{(x,a)\sim\mu}\brk*{\phi_h(x,a)\phi_h(x,a)^{\trn}}$,
and define $\CfeatM=\max_{h\in\brk{H}}\CfeathM$. Note that one always
has $\CfeatM\leq{}d$, as a consequence of the existence of G-optimal
designs \citep{kiefer1960equivalence,huang2023reinforcement}. The
following result shows that \mainopt is always bounded by feature coverability.
\begin{restatable}{proposition}{featurecoverage}
  \label{prop:feature_coverage}
  Suppose the MDP $M$ obeys the low-rank structure in
  \cref{eq:low_rank}. Then for all $h\in\brk{H}$, we have $\ConehM\leq\abs{\cA}\cdot\CfeathoM$,
and consequently $    \CovOptM \leq{} 1 + 2\sqrt{\frac{\abs{\cA}\cdot\CfeathoM}{\veps}}$.
\end{restatable}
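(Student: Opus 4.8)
The plan is to get the ``consequently'' clause for free from \cref{prop:lone}, which gives $\CovOptM \le 1 + 2\sqrt{\ConehM/\veps}$; substituting the first inequality $\ConehM \le \abs{\cA}\cdot\CfeathoM$ immediately yields $\CovOptM \le 1 + 2\sqrt{\abs{\cA}\cdot\CfeathoM/\veps}$. So the real content is the bound $\ConehM \le \abs{\cA}\cdot\CfeathoM$, which I would prove by exhibiting an explicit near-optimal covering distribution in \cref{eq:cone}. Fix $h\in\brk{H}$; we may assume $\CfeathoM<\infty$. Given $\epsilon>0$, let $\nu\in\Delta(\cX\times\cA)$ (over layer-$(h-1)$ state--action pairs) be $\epsilon$-optimal for the feature-coverability objective \cref{eq:feature_coverability}: writing $\Sigma_\nu \ldef \En_{(x',a')\sim\nu}\brk*{\phi_{h-1}(x',a')\phi_{h-1}(x',a')^{\trn}}$ and $\theta_\pi\ldef\En^{\sM,\pi}\brk*{\phi_{h-1}(x_{h-1},a_{h-1})}$, we have $\sup_{\pi\in\Pi}\nrm{\theta_\pi}_{\Sigma_\nu^{-1}}^2 \le \CfeathoM + \epsilon$ (finiteness of $\CfeathoM$ forces each $\theta_\pi$ into $\mathrm{range}(\Sigma_\nu)$, so $\Sigma_\nu^{-1}$ is read as a pseudoinverse).

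Next I would choose $\mu$ to be the \emph{pushforward of $\nu$ through the dynamics with uniformized actions}: $\mu(x,a)\ldef\frac{1}{\abs{\cA}}\bar d_h(x)$, where $\bar d_h(x)\ldef\En_{(x',a')\sim\nu}\brk*{P^{\sM}_{h-1}(x\mid x',a')}$, which is a valid distribution since $P^{\sM}_{h-1}$ is a Markov kernel. Plugging into \cref{eq:cone} and using $d_h^{\sM,\pi}(x,a)=d_h^{\sM,\pi}(x)\pi_h(a\mid x)$ together with $\sum_a\pi_h(a\mid x)^2\le\sum_a\pi_h(a\mid x)=1$,
\begin{align*}
\En^{\sM,\pi}\brk*{\frac{d_h^{\sM,\pi}(x_h,a_h)}{\mu(x_h,a_h)}}
&=\sum_{x,a}\frac{d_h^{\sM,\pi}(x,a)^2}{\mu(x,a)}
=\abs{\cA}\sum_{x}\frac{d_h^{\sM,\pi}(x)^2}{\bar d_h(x)}\sum_a\pi_h(a\mid x)^2
\le\abs{\cA}\sum_{x}\frac{d_h^{\sM,\pi}(x)^2}{\bar d_h(x)},
\end{align*}
so it remains to show $\sum_{x}d_h^{\sM,\pi}(x)^2/\bar d_h(x)\le\nrm{\theta_\pi}_{\Sigma_\nu^{-1}}^2$ for each $\pi$; taking $\sup_\pi$ and then $\epsilon\downarrow0$ gives $\ConehM\le\abs{\cA}\cdot\CfeathoM$.

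The hard part is this last inequality, and the key idea is that in a low-rank MDP the layer-$h$ occupancy depends on $\pi$ only through the $d$-dimensional vector $\theta_\pi$: by \cref{eq:low_rank}, $d_h^{\sM,\pi}(x)=\tri{\theta_\pi,\psi_h(x)}$ and $\bar d_h(x)=\tri{\En_{(x',a')\sim\nu}[\phi_{h-1}(x',a')],\psi_h(x)}$. I would introduce the scalar weight $c(x',a')\ldef\phi_{h-1}(x',a')^{\trn}\Sigma_\nu^{-1}\theta_\pi$, which ``reproduces'' $\theta_\pi$ against $\nu$: $\En_{(x',a')\sim\nu}[c\,\phi_{h-1}]=\Sigma_\nu\Sigma_\nu^{-1}\theta_\pi=\theta_\pi$ and $\En_{(x',a')\sim\nu}[c^2]=\theta_\pi^{\trn}\Sigma_\nu^{-1}\theta_\pi=\nrm{\theta_\pi}_{\Sigma_\nu^{-1}}^2$. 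Hence $d_h^{\sM,\pi}(x)=\En_{(x',a')\sim\nu}\brk*{c(x',a')\,P^{\sM}_{h-1}(x\mid x',a')}$, and weighted Cauchy--Schwarz against the measure $P^{\sM}_{h-1}(x\mid x',a')\,d\nu$ gives $d_h^{\sM,\pi}(x)^2\le \En_{\nu}\brk*{c^2 P^{\sM}_{h-1}(x\mid x',a')}\cdot\En_{\nu}\brk*{P^{\sM}_{h-1}(x\mid x',a')}$ (expectations over $(x',a')\sim\nu$). Dividing by $\bar d_h(x)=\En_\nu[P^{\sM}_{h-1}(x\mid x',a')]$, summing over $x$, and using $\sum_x P^{\sM}_{h-1}(x\mid x',a')=1$ collapses the state sum:
\begin{align*}
\sum_x\frac{d_h^{\sM,\pi}(x)^2}{\bar d_h(x)}
&\le\sum_x\En_{\nu}\brk*{c^2\, P^{\sM}_{h-1}(x\mid x',a')}
=\En_{\nu}\brk*{c^2\sum_x P^{\sM}_{h-1}(x\mid x',a')}
=\En_{\nu}\brk*{c^2}
=\nrm{\theta_\pi}_{\Sigma_\nu^{-1}}^2 .
\end{align*}
(States with $\bar d_h(x)=0$ contribute nothing: then $P^{\sM}_{h-1}(x\mid\cdot)=0$ $\nu$-a.s., so $\psi_h(x)\perp\mathrm{range}(\Sigma_\nu)\ni\theta_\pi$ and $d_h^{\sM,\pi}(x)=0$.)

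I expect the real obstacle to be \emph{finding} this argument, not executing it. Pushing the \emph{full} occupancy $d_{h-1}^{\sM,\pi}$ forward and invoking data processing for $\chi^2$-divergence only gives a bound in terms of average-case coverability one layer earlier (rather than feature coverability), which is far too lossy; and a crude Cauchy--Schwarz keeping the feature structure, $\tri{\theta_\pi,\psi_h(x)}^2\le\nrm{\theta_\pi}_{\Sigma_\nu^{-1}}^2\cdot\psi_h(x)^{\trn}\Sigma_\nu\psi_h(x)$, leaks a spurious factor of $d$. What makes it fit is to pass to the reproducing representation $\theta_\pi=\En_\nu[c\,\phi_{h-1}]$ and apply Cauchy--Schwarz \emph{inside} the layer-$(h-1)$ expectation, so that $\sum_x P^{\sM}_{h-1}(x\mid x',a')=1$ eliminates the $x$-sum. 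The remaining points --- non-attainment of the infimum in $\CfeathoM$ (absorbed by $\epsilon$), rank-deficiency of $\Sigma_\nu$, and the degenerate states above --- are routine.
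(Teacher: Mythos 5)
Your proposal is correct and follows essentially the same route as the paper: reduce to state occupancies at the cost of a factor $\abs{\cA}$ via uniform actions, take $\mu$ to be the pushforward of a (near-)optimal design $\nu$ at layer $h-1$ through $P^{\sM}_{h-1}$, bound $\En^{\sM,\pi}\brk*{d^{\sM,\pi}_h(x_h)/\mu(x_h)}$ by $\nrm*{\En^{\sM,\pi}\brk*{\phi_{h-1}(x_{h-1},a_{h-1})}}_{\Sigma_\nu^{-1}}^2$ via Cauchy--Schwarz in feature space, and conclude with \cref{prop:lone}. The only difference is in how that Cauchy--Schwarz step is executed --- the paper uses an inner-product/AM--GM bound plus Jensen and then rearranges a self-bounding inequality, while you apply weighted Cauchy--Schwarz directly inside the layer-$(h-1)$ expectation through the reproducing weight $c=\phi_{h-1}^{\trn}\Sigma_\nu^{-1}\theta_\pi$ --- which is an equivalent computation, and your explicit treatment of rank-deficiency and non-attainment of the infimum is a harmless refinement.
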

We remark that if we restrict the distribution
$\mu\in\Delta(\cX\times\cA)$ in \cref{eq:feature_coverability} to be
realized as a mixture of occupancies, \cref{prop:feature_coverage} can
be strengthened to give $\CovOptM \leq{}
\bigoh\prn[\big]{\abs{\cA}\cdot\CfeathoM}$. Note that through our main
results, \cref{prop:feature_coverage}, gives guarantees that scale
with $\abs{\cA}$. This is necessary in the setting where the feature
map $\phi_h$ is unknown, which is covered by our results, but is suboptimal in the setting where
$\phi$ is known.

\subsection{Insufficiency of Alternative Notions of Coverage}
\label{app:insufficient}

\mainobj measures coverage of the mixture policy $p\in\Delta(\PiRNS)$
with respect to the $L_1(d_h^{\sM,\pi})$-norm. It is also reasonable to
consider variants of the objective based on $L_q$-norms for $q>1$,
which provide stronger coverage, but may have larger optimal value. In
particular, a natural $L_{\infty}$-type analogue of \cref{eq:mainobj}
is given by
\begin{align}
  \label{eq:linf_realizable}
  \ICovM(p) =
  \sup_{\pi\in\Pi}\sup_{(x,a)\in\cX\times\cA}\crl*{\frac{d^{\sM,\pi}_h(x,a)}{d_h^{\sM,p}(x,a)+\veps\cdot{}d_h^{\sM,\pi}(x,a)}}.
\end{align}
is identical to the $L_{\infty}$-coverability
coefficient $\CinfM$ \eqref{eq:cinf} studied in
\citet{xie2023role}, except that we restrict the data distribution
$\mu\in\Delta(\cX\times\cA)$ to be admissible, i.e. realized by a mixture policy
$p\in\Delta(\PiRNS)$ (we also incorporate the term
$\veps\cdot{}d_h^{\sM,\pi}(x,a)$ in the denominator to ensure the
ratio is well-defined). The following lemma shows, perhaps
surprisingly, that in stark contrast to \mainobj, it is not possible
to bound the optimal value of the admissible $L_{\infty}$-Coverage objective
in \cref{eq:linf_realizable} in terms of the non-admissible
coverability coefficient $\CinfM$.
\begin{restatable}{proposition}{linfimpossibility}
  \label{prop:linf_impossibility}
  There exists an MDP $M$ and policy class
  $\Pi\subset\PiRNS$ with horizon $H=1$ such that $\CinfhM\leq{}2$ (and hence
  $\CovOptM\leq{}2$ as well), yet for all
  $\veps>0$,
  \begin{align}
    \label{eq:linf_impossibility}
    \inf_{p\in\Delta(\Pi)}\ICovM(p) \geq \frac{1}{\veps},
  \end{align}
  and in particular $\inf_{p\in\Delta(\Pi)}\obj_{\infty;h,0}^{\sM}(p)=\infty$.
\end{restatable}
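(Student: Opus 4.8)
The plan is to exhibit a concrete horizon-one MDP with a countably infinite state space, two actions, and a policy class of ``indicator'' policies. Take $H=1$ (so only layer $h=1$ matters), $\cX=\bbN$, $\cA=\crl{1,2}$, and let the initial-state distribution $\Pm_0$ be any distribution on $\bbN$ with full support (say $\Pm_0(x)=2^{-x}$); the layer-$1$ transition is irrelevant and may be chosen arbitrarily. For each $k\in\bbN$, let $\pi_k\in\PiNS$ be the deterministic policy that plays action $1$ at state $k$ and action $2$ at every other state, and set $\Pi=\crl{\pi_k:k\in\bbN}\subset\PiRNS$. Since $H=1$, every policy has occupancy $d_1^{\sM,\pi}(x,a)=\Pm_0(x)\cdot\pi_1(a\mid{}x)$, so $d_1^{\sM,\pi_k}(x,a)=\Pm_0(x)\cdot\mathbbm{1}\crl{\pi_k(x)=a}$.

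First I would verify $\CinfhM\leq{}2$ using the (non-admissible) reference distribution $\mu(x,a)=\tfrac12\Pm_0(x)$, which is a valid element of $\Delta(\cX\times\cA)$ precisely because $\abs{\cA}=2$. Indeed, $d_1^{\sM,\pi_k}(x,a)/\mu(x,a)=2\cdot\mathbbm{1}\crl{\pi_k(x)=a}\leq{}2$ for every $k$ and every $(x,a)$, so $\CinfhM\leq{}2$; together with \cref{prop:linf} this also gives $\CovOptM\leq\CinfhM\leq{}2$.

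Next I would lower bound $\ICovM(p)$ for an arbitrary $p\in\Delta(\Pi)$, which we identify with a distribution on $\bbN$ via the weight $p(k)$ it assigns to $\pi_k$. Since $\pi_k(j)=1$ iff $k=j$, we have $d_1^{\sM,\pi_j}(j,1)=\Pm_0(j)$ and $d_1^{\sM,p}(j,1)=\En_{k\sim{}p}\brk{\Pm_0(j)\cdot\mathbbm{1}\crl{k=j}}=\Pm_0(j)\,p(j)$. Restricting the supremum in the definition of $\ICovM(p)$ to the choice $\pi^{\star}=\pi_j$, $(x,a)=(j,1)$ and cancelling the common factor $\Pm_0(j)>0$ gives
\[
\ICovM(p)\;\geq\;\sup_{j\in\bbN}\frac{d_1^{\sM,\pi_j}(j,1)}{d_1^{\sM,p}(j,1)+\veps\cdot d_1^{\sM,\pi_j}(j,1)}\;=\;\sup_{j\in\bbN}\frac{1}{p(j)+\veps}\;=\;\frac{1}{\veps},
\]
where the last step uses that any probability distribution on the countably infinite set $\bbN$ must satisfy $\inf_{j}p(j)=0$. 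Since $p\in\Delta(\Pi)$ was arbitrary, this proves $\inf_{p\in\Delta(\Pi)}\ICovM(p)\geq\veps^{-1}$, i.e.\ \eqref{eq:linf_impossibility}. The ``in particular'' claim is the same computation with $\veps=0$: the restriction above gives $\obj_{\infty;h,0}^{\sM}(p)\geq\sup_{j}\tfrac{1}{p(j)}=\infty$ for every $p\in\Delta(\Pi)$.

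The construction is not delicate, so I do not anticipate a genuine obstacle; the one point that needs care is squaring $\CinfhM\leq{}2$ with the fact that both $\cX$ and $\Pi$ are infinite. This works because each policy's occupancy puts mass at most $\Pm_0(x)$ on any single state-action pair and there are only two actions, so the single ``dense'' non-admissible reference $\mu(x,a)=\tfrac12\Pm_0(x)$ covers all of $\Pi$ at ratio $2$; by contrast, an \emph{admissible} mixture $d_1^{\sM,p}$ must split each state's mass $\Pm_0(j)$ between the two actions, and since $\pi_j$ concentrates all of state $j$'s mass on action $1$, the adversary can always pick a state $j$ with $p(j)$ arbitrarily small and push the coverage ratio up to $\veps^{-1}$.
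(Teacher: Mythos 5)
Your proof is correct and rests on essentially the same mechanism as the paper's: a countable family of policies, each uniquely covering a distinguished state-action pair, a single non-admissible reference distribution $\mu$ that covers every policy at ratio at most $2$, and the observation that any admissible mixture $p\in\Delta(\Pi)$ must assign weight $p(j)$ arbitrarily close to zero to some index, so restricting the supremum to the $j$-th policy and its distinguished pair yields $\sup_j \frac{1}{p(j)+\veps}=\frac{1}{\veps}$ (and $\infty$ when $\veps=0$). The only difference is that your construction is the ``dual'' of the paper's: the paper uses a single state with action space $\bbN\cup\{\perp\}$ and randomized policies placing mass $\frac{1}{2i^2}$ on a unique action, whereas you use countably many states with two actions and deterministic indicator policies, letting the initial-state distribution supply the decaying masses — both instantiations reduce to the identical final computation.
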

Note that one trivially has $\inf_{p\in\Delta(\Pi)}\ICovM(p)\leq\frac{1}{\veps}$, and hence
\cref{eq:linf_impossibility} shows that the optimal value is vacuously large for the MDP in this example. In contrast, we
have $\CovOptM\leq{}2$ even when $\veps=0$. \padelete{This highlights that
average-case notions of coverage are critical when our goal is to
build a policy cover (i.e., a concrete ensemble of policies that
covers the state space).} More generally, one can show that similar
failure modes hold for admissible $L_q$-Coverage for any $q > 1$.

}

\section{Experimental Evaluation}
\label{sec:experiments}
We present proof-of-concept experiments to
\paedit{validate our theoretical results}.\footnote{Code \arxiv{is }available at \arxiv{\url{https://github.com/philip-amortila/l1-coverability}}\icml{\href{https://github.com/philip-amortila/l1-coverability}{github.com/philip-amortila/l1-coverability}}.} We focus on the \emph{planning} problem
(\cref{sec:planning}), and consider the classical \mountaincar
environment~\citep{brockman2016openai}. We optimize the \mainobj objective via
\cref{alg:linf_relaxation} and compare this to the \maxent
algorithm \citep{hazan2019provably} and uniform exploration. We
find that \mainobj %
explores the
state space more quickly and effectively than the baselines.

\icml{\fakepar{Experimental setup}}\arxiv{\paragraph{Experimental setup}}
\mountaincar is a continuous domain with two-dimensional states and actions\arxiv{ \akedit{in which an agent must use momentum to navigate a car out of a valley}}. We consider a deterministic starting
state near the bottom of the ``valley'' in the environment, so
that deliberate exploration is required.

We optimize \mainobj using \cref{alg:linf_relaxation}, approximating
the occupancies $d^{\pi\ind{i}}$ via count-based estimates on a
discretization of the state-action space. We set $\mu$ to the
uniform distribution and $\Cinf$ to the number of state-action pairs
(in the discretization).
For \maxent, we use the implementation
from \citet{hazan2019provably}. \cref{alg:linf_relaxation} and \maxent
both require access to a subroutine for reward-driven planning
(%
to solve the %
problem in \cref{eq:linf_opt} or %
an analogous
subproblem). For both, we use a policy gradient
method (\reinforce; \citet{sutton1999policy}) as the reward-driven
planner, following \citet{hazan2019provably}; our policy class is parameterized as a two layer neural network.
For details on the environment, architectures, and hyperparameters, see \cref{app:exp-details}.

\icml{\fakepar{Results}}\arxiv{\paragraph{Results}}
Results are visualized in \cref{fig:exp-results}. We measure
the number of unique \paedit{discretized} states discovered by each algorithm's policy covers, and
find that \mainobj (\cref{alg:linf_relaxation}) outperforms the \maxent and uniform exploration
baselines by a
factor of two.\arxiv{\footnote{\paedit{Interestingly, while it reaches fewer states, the \maxent baseline finds a policy cover with
    higher entropy than \mainobj, indicating that entropy may
    not be the best proxy for exploration.}}} We also perform a
qualitative comparison by visualizing the occupancies induced by the learned policy
covers, and find that the cover obtained with
\mainobj explores a much larger portion of the state space than
the baselines; notably, \mainobj explores both hills in the
\mountaincar environment, while the baselines fail to do so. We find
these results promising, and plan to perform a
large-scale evaluation in future work; see
\cref{app:exp-details} for further experimental results.

\arxiv{
\begin{figure}[tp]
\centering
\begin{subfigure}{.37\textwidth}
\includegraphics[width=\linewidth]{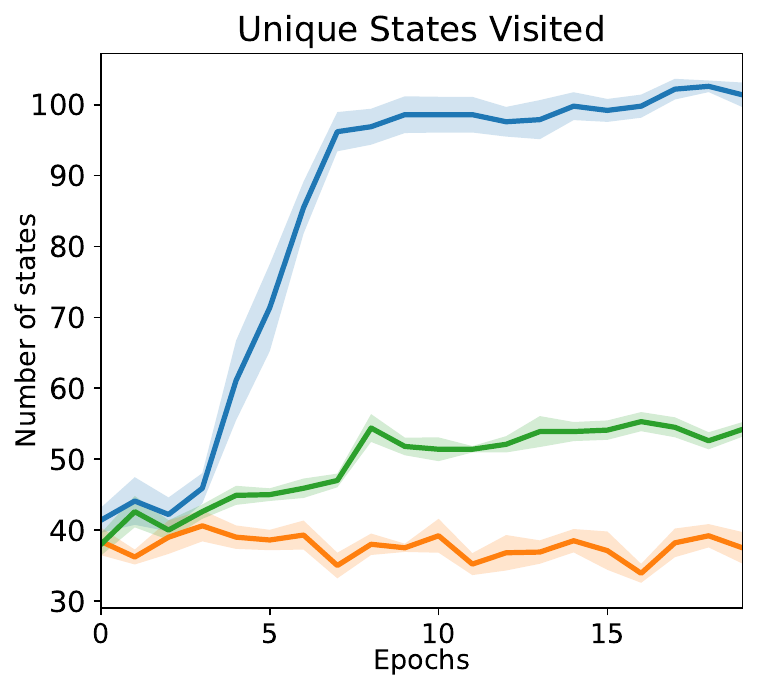}
  \label{fig:states-plot}
\end{subfigure}
\hspace{1.55cm}
\begin{subfigure}{.33\textwidth}
    \raisebox{0cm}{\includegraphics[width=\linewidth]{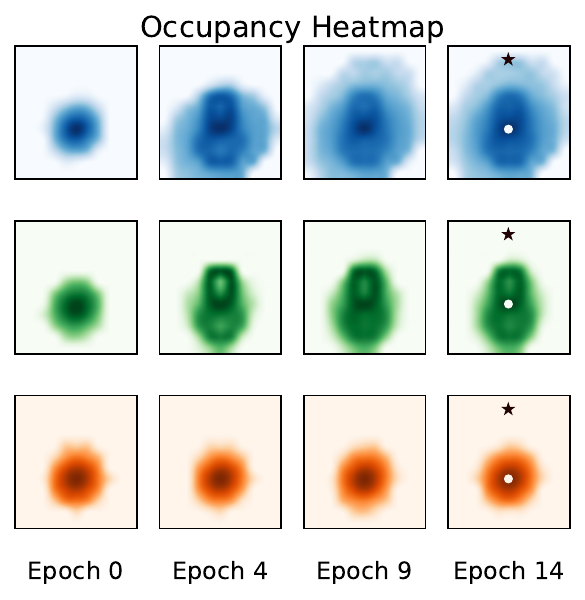}}
  \label{fig:heatmaps}
\end{subfigure}\\
\vspace{-.4cm}
\begin{subfigure}{\linewidth}
\centering
\hspace{0.75cm}
\includegraphics[width=.35\linewidth]{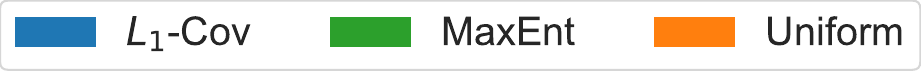}
\label{fig:sub3}
\end{subfigure}
\caption{
 Number of unique discrete states visited (mean/standard error over 10 runs) and
 occupancy heatmaps
for each policy cover obtained by \mainobj
 (\cref{alg:linf_relaxation}), \maxent, and uniform exploration. 
Each epoch
 comprises a single policy update in \cref{alg:linf_relaxation}
 and \maxent, obtained through 1000 steps of \reinforce with rollouts of length 400. Heatmap legend: velocity (x-axis), position (y-axis), start state ($\bullet$), goal state with $0$ velocity ($\star$).}
\label{fig:exp-results}
\end{figure}
}
\icml{
\begin{figure}[htp]
\centering
\begin{subfigure}{.27\textwidth}
\includegraphics[width=\linewidth]{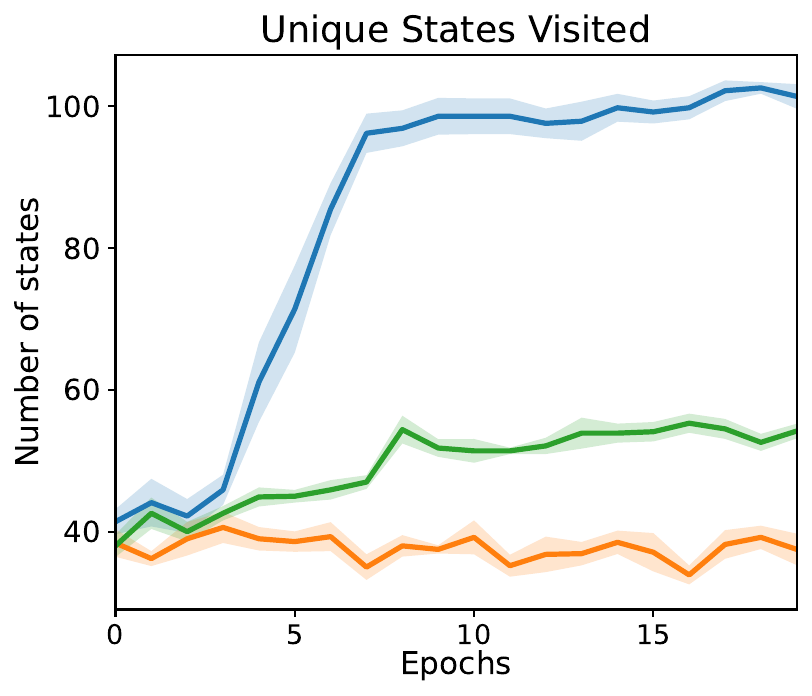}
  \label{fig:states-plot}

\end{subfigure}
\begin{subfigure}{.2\textwidth}
    \raisebox{0.144cm}{\includegraphics[width=\linewidth]{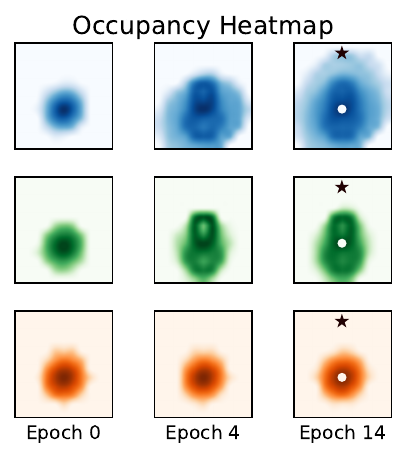}}
  \label{fig:heatmaps}
\end{subfigure}\\
\vspace{-.4cm}
\begin{subfigure}{\linewidth}
\centering
\includegraphics[width=.6\linewidth]{legend_all.pdf}
\label{fig:sub3}
\end{subfigure}
\caption{
Unique discrete states visited (mean/standard error over 10 runs) and
 occupancy heatmaps
for each policy cover obtained by \mainobj
 (\cref{alg:linf_relaxation}), \maxent, and uniform actions. 
Each epoch comprises one policy cover update\icml{.}\arxiv{in \cref{alg:linf_relaxation} and \maxent,  obtained through 1000 steps of \reinforce with rollouts of length 400.}
 Heatmap legend: velocity (x-axis), position (y-axis), start state ($\bullet)$, goal state \arxiv{(with $0$ velocity)} 
 ($\star$)\icml{.} 	%
 	}
\label{fig:exp-results}
\end{figure}
}

\section{Discussion and Open Problems}
\label{sec:discussion}
Our results show that the \mainobj objective serves as a scalable
primitive for exploration in reinforcement learning, providing
sample efficiency, computational efficiency, and flexibility. On the
theoretical side, our results raise a number of interesting questions
for future work: \icml{(i) What are the weakest representation conditions under which coverability
  leads to computationally efficient online reinforcement learning
  guarantees? (ii) Can we generalize the \mainobj objective further while still
  allowing for practical/computationally efficient optimization?}
\arxiv{
    \begin{itemize}
\item What are the weakest representation conditions under which coverability
  leads to computationally efficient online reinforcement learning
  guarantees? \arxiv{Our results in \cref{sec:model_free} show that
  weight function realizability suffices, but it would be interesting
  to see how far this assumption can be relaxed.}
\item Can we generalize the \mainobj objective further while still
  allowing for practical/computationally efficient optimization? \arxiv{For
  example, it is natural to
  consider notions of coverage that reflect the structure of a value
  function class \citep{xie2021bellman} for reward-driven RL.}
\end{itemize}
}
\arxiv{
On the empirical side, we plan to evaluate whether out-of-the box deep
reinforcement learning methods (e.g., PPO) equipped with the \mainobj
objective can be competitive
with state-of-the-art online exploration techniques
\citep{ecoffet2019go,ecoffet2021first,badia2020never,badia2020agent57,guo2022byol}.}

\icml{
\fakepar{Additional results}
Several additional results are deferred to the appendix due to space
constraints. \cref{sec:model_free} builds on \cref{sec:planning,sec:model_based} to
    provide efficient algorithms for reward-free exploration under a weaker form of \emph{model-free} function
    approximation based on density ratio realizability, and
    \cref{sec:structural} provides connections between \mainopt and
    other intrinsic structural properties.\loose
}

\clearpage
\icml{
\section*{Impact Statement}
This paper presents work whose goal is to advance the theoretical
foundations of reinforcement learning. There are many potential societal consequences of our work, none which we feel must be specifically highlighted here.
}
\bibliography{refs} 
\icml{
\bibliographystyle{icml2024}
}

\clearpage

\appendix  

\icml{
\onecolumn
}

\arxiv{
\renewcommand{\contentsname}{Contents of Appendix}
\addtocontents{toc}{\protect\setcounter{tocdepth}{2}}
{\hypersetup{hidelinks}
\tableofcontents
}
}

\icml{
\renewcommand{\contentsname}{Contents of Appendix}
\addtocontents{toc}{\protect\setcounter{tocdepth}{2}}
{\hypersetup{hidelinks}
\tableofcontents
}
}

\clearpage

\section{Additional Related Work}
\label{app:additional}

This section discusses additional related work not already covered in detail.

\paragraph{Theoretical objectives for exploration}
On the computational side, our work can be viewed as building on a recent line of research that tries to
make exploration computationally efficient by inductively building policy covers
and using them to guide exploration. Up to this point, Block MDPs \citep{du2019latent,misra2019kinematic,zhang2022efficient,mhammedi2023representation} and Low-Rank
MDPs
\citep{agarwal2020flambe,modi2021model,uehara2022representation,zhang2022efficient,mhammedi2023representation}
are the most general classes that have been addressed by this line of
research. Our work expands the scope of problems for which efficient
exploration is possible beyond these classes to include the more
general coverability framework. This is meaningful generalization, as it allows for nonlinear transition
dynamics for the first time.

The \mainobj objective can be viewed as a generalization of
\emph{optimal design}, an exploration objective which has previously
been studied in the context of tabular and Low-Rank MDPs
\citep{wagenmaker2022beyond,wagenmaker2022instance,li2023reward,mhammedi2023efficient};
\cref{sec:feature_coverage} makes this connection explicit. In particular, \citet{li2023reward} consider an optimal design objective for tabular MDPs which is equivalent to \cref{eq:linf_relaxation} with $\mu=\unif(\cX)$. Outside of
this paper, the only work we are aware of that considers optimal
design-like objectives for nonlinear settings beyond Low-Rank MDPs is
\citet{foster2021statistical}, but their algorithms are not efficient
in terms of offline estimation oracles.

Lastly, a theoretical exploration objective worth mentioning is maximum
entropy exploration \citep{hazan2019provably,jin2020reward}. Existing
theoretical guarantees for this objective are limited to tabular MDPs,
and we suspect the objective is not sufficiently powerful to allow for
downstream policy optimization in more general classes of MDPs.

\paragraph{Coverability}
Compared to previous guarantees based on coverability
\citep{xie2023role,liu2023provable,amortila2023harnessing} our guarantees have
somewhat looser sample complexity and require stronger function
approximation. However, these works only consider reward-driven
exploration and are computationally inefficient. A second
contribution of our work is to show that \mainopt, which can be
significantly weaker than $L_{\infty}$-Coverability, is sufficient for
sample-efficient online reinforcement learning on its
own.\footnote{\citet{liu2023provable} also provide guarantees based on
  \mainopt, but their regret bounds contain a lower-order term that
  can be as large as the number of states in general.}

\paragraph{Instance-dependent algorithms and complexity}
\citet{wagenmaker2022beyond,wagenmaker2022instance} provide
instance-dependent regret bounds for tabular MDPs and linear MDPs that
scale with problem-dependent quantities closely related to
\mainopt. These results are tailored to the linear setting, and their
bounds contain lower-order terms that scale explicitly with the
dimension and/or number of states.

\paragraph{General-purpose complexity measures}
A long line of research provides structural complexity measures that
enable sample-efficient exploration in reinforcement learning
\citep{russo2013eluder,jiang2017contextual,sun2019model,wang2020provably,du2021bilinear,jin2021bellman,foster2021statistical,xie2023role,foster2023tight}. 
Coverability is incomparable to most prior complexity measures
\citep{xie2023role}, but is subsumed by the Decision-Estimation
Coefficient \citep{foster2021statistical}, as well as the (less general) Sequential Extrapolation
Coefficient and related complexity measures
\citep{liu2023maximize}. The main contrast between these works and our
own is that they do not provide computationally efficient algorithms
in general.

A handful of works extend the approaches above to accommodate
reward-free reinforcement learning, but are still computationally
inefficient, and do not explicitly suggest exploration objectives \citep{chen2022statistical,xie2023role,chen2022unified}.

\paragraph{Further related work}
Coverability is closely related to a notion of \emph{smoothness} used
in a line of recent work on smoothed online learning and related
problems
\citep{haghtalab2020smoothed,haghtalab2022oracle,haghtalab2022smoothed,block2022smoothed,block2023sample,daskalakis2023smooth}. We
are not aware of explicit technical connections between our techniques
and these works, but it would be interesting to explore this in more detail.

\section{Experimental Evaluation: Details and Additional Results}\label{app:exp-details}

\dfc{Explain how we evaluate unique states visited (incl number of samples we draw to evaluate) and why it's not monotonic.}

\subsection{Experimental Details}\label{sec:exp-details}
\paragraph{\maxent baseline}  We implement our algorithm by building
on top of the codebase for \maxent from \citet{hazan2019provably}. The
\maxent algorithm follows a similar template to ours, in that it 
iteratively defines a reward function based on past occupancies and
optimizes it to find a new exploratory policy. In \maxent, the reward function is defined as 
\[
	r_t = \frac{d\Dkl{\unif}{X}}{d X} \Big|_{X = \wh{d}^{p\ind{t}}},
\]
where $\wh{d}^{p\ind{t}}$ is the estimated occupancy for the policy
cover $p\ind{t}$ at time $t$ \akedit{ and $\unif$ denotes the uniform distribution over $\cX\times\cA$}. We use the same neural
network architecture (described in detail below) to represent policies, the same algorithm for
approximate planning, the same discretization scheme to approximate
occupancies, and the exact same hyperparameters for discretization
resolution, number of training steps, length of rollouts, learning
rates, and so on. We found that our method worked ``out-of-the-box''
without any modifications to their architecture or hyperparameters. 

Starting from their codebase, we obtain an implementation of \cref{alg:linf_relaxation} simply by modifying the reward function given to the planner from the \maxent reward function to the \mainobj reward function (\cref{line:linf_opt} in \cref{alg:linf_relaxation}). \akdelete{The only substantial difference between our implementation and theirs is that we modify the \mountaincar environment to have a deterministic start state, which requires more deliberate exploration to find a high-quality cover.}
 
\paragraph{Environment}

We evaluate on the \texttt{MountainCarContinuous-v0} environment
(henceforth simply \mountaincar) from the OpenAI Gym \citep{brockman2016openai}. The state space of the environment is two-dimensional, with a position value (denoted by $\xi$) that is in the interval $[-1.2,0.6]$ and a velocity value (denoted by $\rho$) that is in the interval $[-0.07, 0.07]$. The dynamics are deterministic and defined by the physics of a sinusoidal valley, we refer the reader to the documentation for the precise equations.\footnote{\url{https://www.gymlibrary.dev/environments/classic_control/mountain_car_continuous/}} The goal state (the ``flag'') is at the top of the right hill, with a position $\xi=+0.45$. The bottom of the valley corresponds to the coordinate $\xi=-\pi/6 \approx -0.52$. We modify the environment \akdelete{so that the starting distribution is }\akedit{to have }a deterministic starting state with a position of $\xi = -0.5$ and a velocity of $\rho = 0$\akedit{, so that more deliberate exploration is required to find a high-quality cover}. This means that occupancies and covers are evaluated when rolling out from this deterministic starting state. The action space is continuous in the interval $[-1,1]$, with negative values corresponding to forces applied to the left and positive values corresponding to forces applied to the right. To simplify, we will consider that the environment only has 3 actions, namely we only allow actions in $\{-1, 0, 1\}$. Finally, we take a horizon of $H=200$, meaning that we terminate rollouts after $200$ steps (if the goal state has not been reached yet).

\paragraph{Implementation of \cref{alg:linf_relaxation}} We make a few changes to \cref{alg:linf_relaxation}. Firstly, while \mountaincar has a time horizon and is thus
  non-stationary, we approximate the dynamics as stationary. Thus, we replace $d^{M,\pi}_h$ in \cref{line:linf_opt} of \cref{alg:linf_relaxation} by a stationary analogue $d^{M,\pi}_\mathtt{stat}$, and only invoke \cref{alg:linf_relaxation} once rather than at each layer $h$. Namely, we define 
  \[
  	d^{M,\pi}_\mathtt{stat} \coloneqq \frac{1}{H}\sum_h d^{M,\pi}_h,
  \]    
  where $H=200$ is the horizon which we define for \mountaincar. We will henceforth simply write $d^\pi \coloneqq d^{M,\pi}_\mathtt{stat}$ for compactness. Similarly, we choose a stationary distribution $\mu$ (to be described shortly) to pass as input to the algorithm. Secondly, rather than solving the policy optimization problem for the reward function 
  \[
  	r(x,a)  = \frac{\mu(x,a)}{\sum_{i<t}d^{\sM,\pi\ind{i}}(x,a)+\Cinf{}\mu(x,a)}
  \] 
  as written \cref{line:linf_opt}, we instead apply some regularization and solve the policy optimization problem for the more general \mainobj-based reward function defined in \cref{eq:linf_relaxation}, namely
  \begin{equation}\label{eq:eps-reward}
  	r(x,a) = \frac{\mu(x,a)}{\sum_{i<t}d^{\sM,\pi\ind{i}}(x,a)+ \varepsilon \Cinf{}\mu(x,a)},
  \end{equation}
  for a small parameter $\varepsilon>0$. This has the effect of
  magnifying the reward difference between visited and unvisited
  states. We also (linearly) renormalize so that this reward function
  lies in the interval $[0,1]$.  We found that regularizing with
  $\varepsilon$ helps our approximate planning subroutine (discussed shortly) solve the policy optimization and recover a better policy for the \mainobj-based reward function.
  
\paragraph{Approximating $d^\pi$ and $\mu$} To approximate $d^\pi$ and choose $\mu$, we define a discretized state space for \mountaincar. Namely, we discretize the position space $[-1.2, 0.6]$ in $12$ evenly-spaced intervals and the velocity space $[-0.07,0.07]$ in $11$ evenly-spaced intervals. This gives a tabular discretization with $12 \times 11 = 132$ states. We then approximate $d^\pi(x,a)$ via a simple count-based estimate on the discretized space. That is, letting the discretized space be denoted by $\cB = \{ b_1, \ldots, b_{132}\}$ where each $b_i$ is a bin, and given trajectories $(x\ind{n}_h, a\ind{n}_h)_{n \in [N], h \in [H]}$, we estimate 
	\[
		d^\pi_{\texttt{disc}}(b_i,a) \coloneqq \frac{1}{H} \sum_{h=1}^H 		d^\pi_{\texttt{disc}, h}(b_i,a), \quad \text{ where } \quad d^\pi_{\texttt{disc}, h}(b_i,a) = \frac{1}{N} \sum_{n=1}^N \indic\{x\ind{n}_h \in b_i, a\ind{n}_h = a \}
	\]
	and then for any $x,a$ we assign
	\[
		d^\pi(x,a) = d^\pi_{\texttt{disc}}(b_{i_x}, a),
	\]	
	where $i_x$ is the index of the bin $b_i$ in which state $x$ lies. For the distribution $\mu$ and the $L_\infty$ coverability parameter $\Cinf$, we take $\mu$ to be uniform over the discretized state-action space, that is we take 
	\[
		\mu(x,a) = \frac{1}{|\cB||\cA|} = \frac{1}{396},
	\]	
	and 
	\[
		\Cinf = |\cB||\cA| = 396.
	\]
	
\paragraph{Approximate planner} As an approximate planner, we take a simple implementation of the \reinforce algorithm \citep{sutton1999policy} given in the PyTorch \citep{paszke2019pytorch} GitHub repository\footnote{\url{https://github.com/pytorch/examples/blob/main/reinforcement_learning/reinforce.py}}, which only differs from the classical \reinforce in that it applies variance-smoothing (with some parameter $\sigma$) to the returns. When solving the policy optimization problem, we allow \reinforce to use the original stochastic reset distribution for \mountaincar, that is the reset distribution which samples $\xi \in [-0.6,-0.4]$ uniformly and has $\rho = 0$. 
\paragraph{Architecture, optimizer, hyperparameters} The policy class we use, and which \reinforce optimizes over, is obtained by a set of fully-connected feedforward neural nets with ReLU activation, 1 hidden layer, and 128 hidden units. The input is 2-dimensional (corresponding to the 2-dimensional state space of \mountaincar) and the output is a 3-dimensional vector; we obtain a distribution over the action set $\{-1,0,1\}$ by taking a softmax over the output. We use Xavier initialization \citep{glorot2010understanding}.

For \reinforce, we take a discount factor of $0.99$, and a variance smoothing parameter of $\sigma = 0.05$. We train \reinforce with horizons of length $400$. We take $\pi\ind{t}$, the policy which approximates \cref{line:linf_opt} of \cref{alg:linf_relaxation}, to be the policy returned after $1000$ \reinforce updates, with one update after each rollout. The update in \reinforce use the Adam optimizer \citep{kingma2014adam} with a learning rate of $10^{-3}$.

We estimate all occupancies with $N=100$ rollouts of length $H=200$. We calculate the mixture occupancies $d^p$, for $p = \unif(\pi\ind{1}, \ldots,\pi\ind{t})$, by estimating the occupancy for each $d^{\pi_i}$ separately and averaging via
	\[
		d^p(x,a) = \frac{1}{t} \sum_{i=1}^t d^{\pi\ind{i}}(x,a).
	\]	
\akdelete{For the discretization resolution, we divide the position interval into 12 bins and the velocity interval into 11 bins.} 
We train for 20 epochs, corresponding to $T=20$ in the loop of \cref{line:linf_outer_loop} of \cref{alg:linf_relaxation}. For the regularized reward of \cref{eq:eps-reward}, we take $\varepsilon = 10^{-4}$.

\subsection{Additional Experimental Results} 

\subsubsection{MountainCar}

In addition to the results of \cref{sec:experiments}, in \cref{fig:more-exp-results} we report the
entropy and \mainopt of the state distributions for the policy
covers found by the three algorithms
throughout training. Namely, for each cover $p$, we estimate its occupancy $d^p$ via the procedure defined above, and measure the entropy of our estimate for $d^p$. In the second plot, we take the estimate of $d^p$ and measure its objective value $\muCovM(p)$, where $\muCovM(p)$ is the $L_\infty$-Coverability relaxation of \mainopt which we recall is defined via
\begin{align}
  \label{eq:linf_relaxation_appendix}
  \muCovM(p) =
    \sup_{\pi\in\Pi}\En^{\sM,\pi}\brk*{\frac{\mu(x_h,a_h)}{d_h^{\sM,p}(x_h,a_h)+\veps\cdot{}\Cinf\mu(x_h,a_h)}}.
 \end{align}
We measure this quantity with $\veps = 10^{-4}$, and in the same way that we approximate \cref{line:linf_opt} in \cref{alg:linf_relaxation}, namely by calling \reinforce with a stochastic starting distribution as an approximate planner, with the same parameters and architecture. We note that there are two sources of approximation error in our measurements of $\muCovM(p)$, namely the estimation error of $d^p$ (via a count-based estimate on the discretized space) and the optimization error of \reinforce, and thus values we report should be taken as approximations of the true \mainopt values.\footnote{For instance, notice the variability over epochs of the uniform random baseline, which has a constant $\muCovM(p)$ value.}

 \begin{figure}[tp]
\centering
\begin{subfigure}{.37\textwidth}
\includegraphics[width=\linewidth]{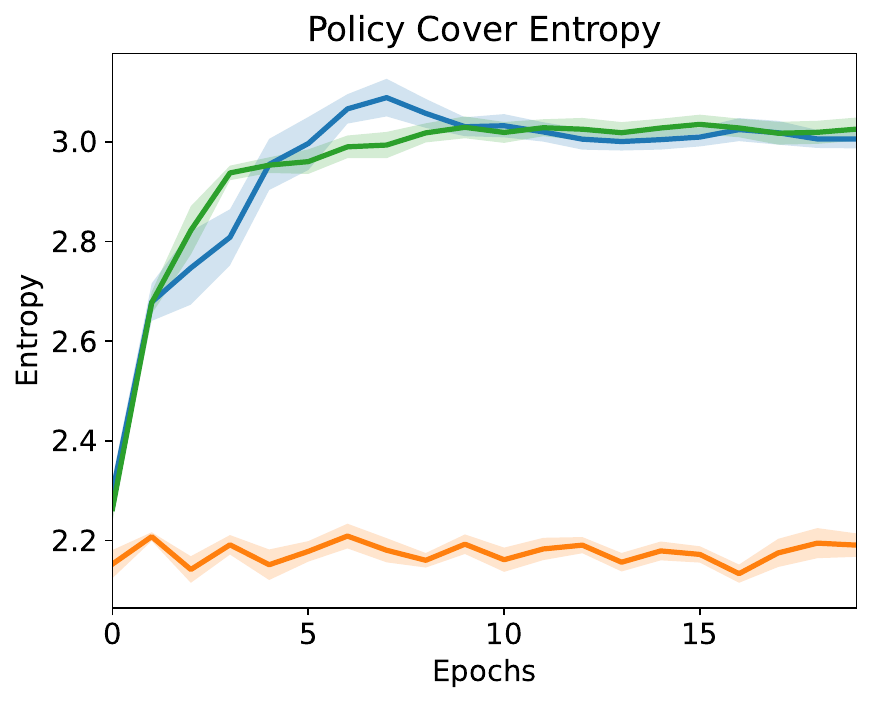}
  \label{fig:ent-plot}
\end{subfigure}
\hspace{1cm}
\begin{subfigure}{.37\textwidth}
\includegraphics[width=\linewidth]{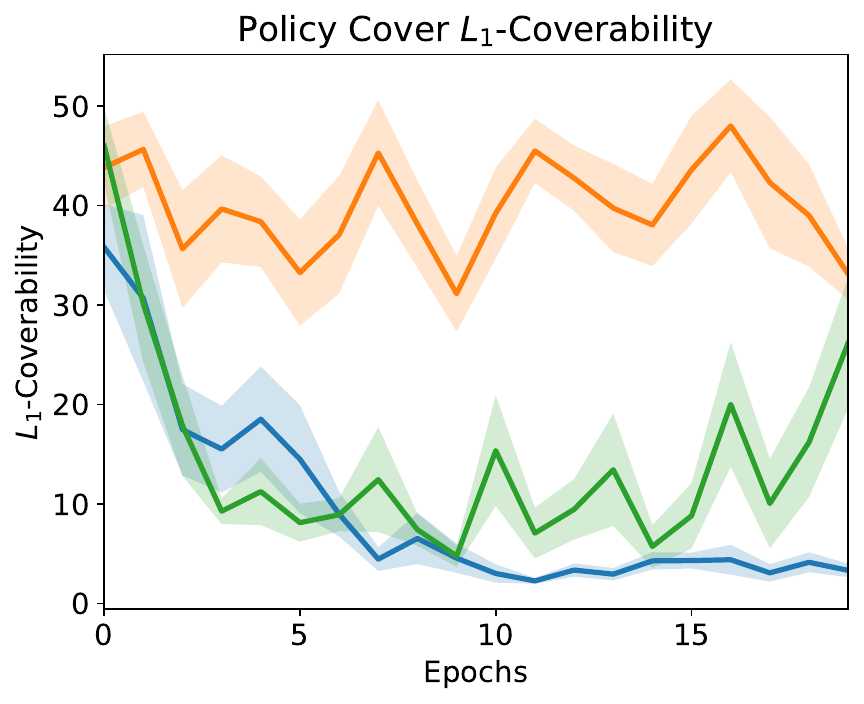}
  \label{fig:ent-plot}
\end{subfigure}
\begin{subfigure}{\linewidth}
\vspace{-0.4cm}
\centering
\hspace{0.75cm}
\includegraphics[width=.35\linewidth]{legend_all.pdf}
\label{fig:sub3}
\end{subfigure}
\caption{Entropy and \mainopt measured on each policy cover obtained from \mainobj (\cref{alg:linf_relaxation}), \maxent, and uniform exploration on the \mountaincar environment. %
	We plot the mean and standard error across 10 runs. Each epoch corresponds to a single policy update in \cref{alg:linf_relaxation}
 and \maxent, obtained through 1000 steps of \reinforce with rollouts of length 400.}
\label{fig:more-exp-results}
\end{figure}

\paragraph{Results}
We find that \mainobj and \maxent recover policy covers with similar entropy values, with the entropy of the \maxent cover being slightly larger (despite the \maxent cover visiting fewer states, as seen in our results in \cref{sec:experiments}). \icml{Interestingly, the \maxent baseline has higher entropy while visiting fewer states (as seen in our results in \cref{sec:experiments}), indicating
that entropy may not be the best proxy for exploration.} For \mainobj,
we find that \cref{alg:linf_relaxation} attains the smallest \mainobj
values, indicating a better cover\akdelete{, although these values are only an
approximation the true \mainobj objective due to the use of \reinforce
as an approximate planner for solving the maximization problem in \cref{eq:linf_relaxation_appendix}}. 

\subsubsection{Pendulum}

\begin{figure}[tp]
\centering
\begin{subfigure}{.37\textwidth}
\includegraphics[width=\linewidth]{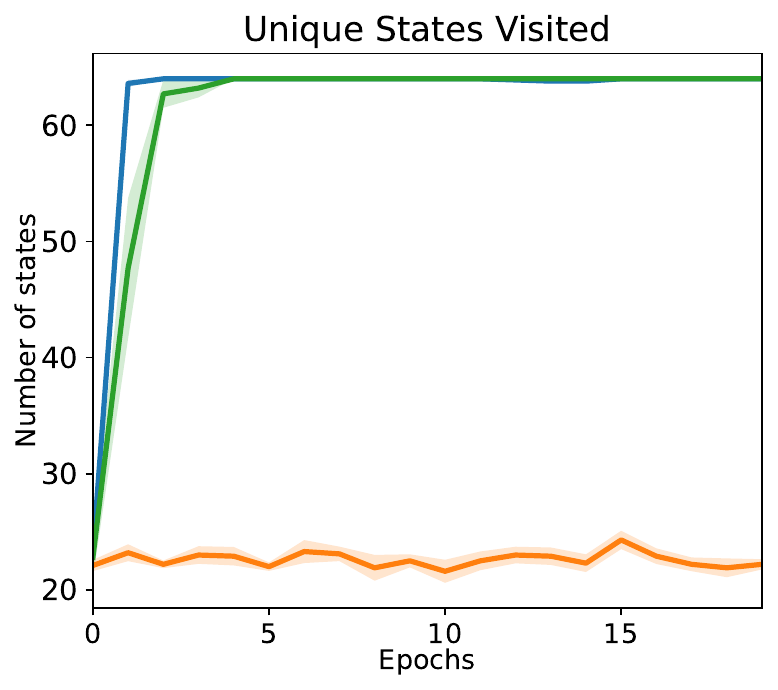}
  \label{fig:ent-plot}
\end{subfigure}
\hspace{1.55cm}
\begin{subfigure}{.33\textwidth}
 \raisebox{0.03cm}{\includegraphics[width=\linewidth]{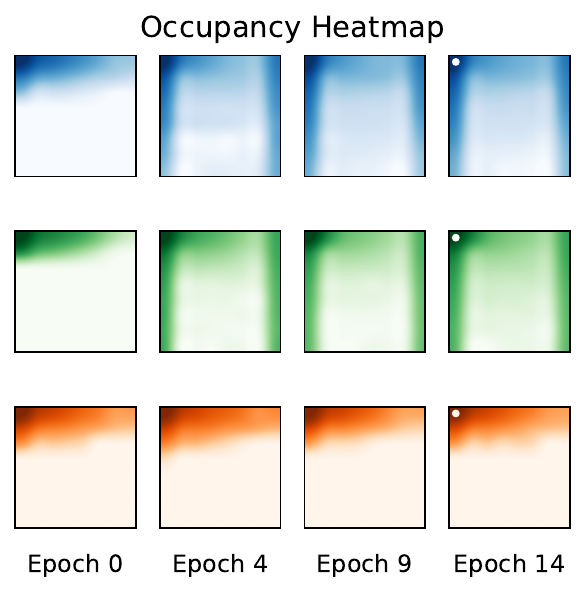}}
  \label{fig:ent-plot}
\end{subfigure}

\begin{subfigure}{\linewidth}
\vspace{-0.4cm}
\centering
\hspace{0.75cm}
\includegraphics[width=.35\linewidth]{legend_all.pdf}
\label{fig:sub3}
\end{subfigure}
\caption{Number of discrete states visited (mean and standard error over 10 runs) and
 occupancy heatmaps for each policy cover obtained from \mainobj
 (\cref{alg:linf_relaxation}), \maxent, and uniform exploration  in the \pendulum environment.
Each epoch
 comprises a single policy update in \cref{alg:linf_relaxation}
 and \maxent, obtained through 1000 steps of \reinforce with rollouts of length 400. Heatmap axes: torque (x-axis) and angle (y-axis). Start state indicated by $\bullet$. }
\label{fig:more-exp-results-pendulum-states}
\end{figure}

\begin{figure}[tp]
\centering
\begin{subfigure}{.37\textwidth}
\includegraphics[width=\linewidth]{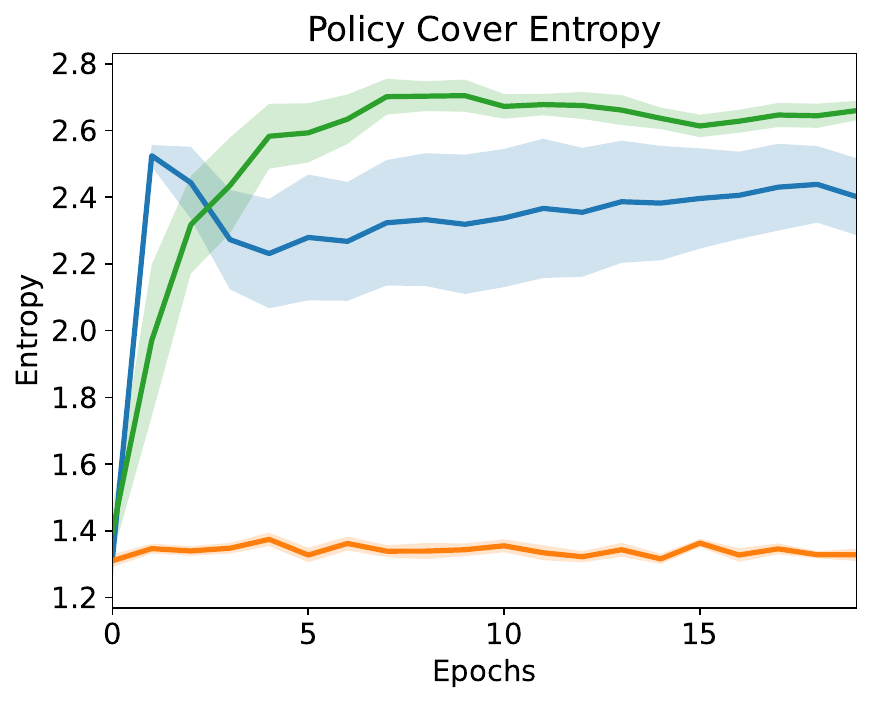}
  \label{fig:ent-plot}
\end{subfigure}
\hspace{1cm}
\begin{subfigure}{.37\textwidth}
\includegraphics[width=\linewidth]{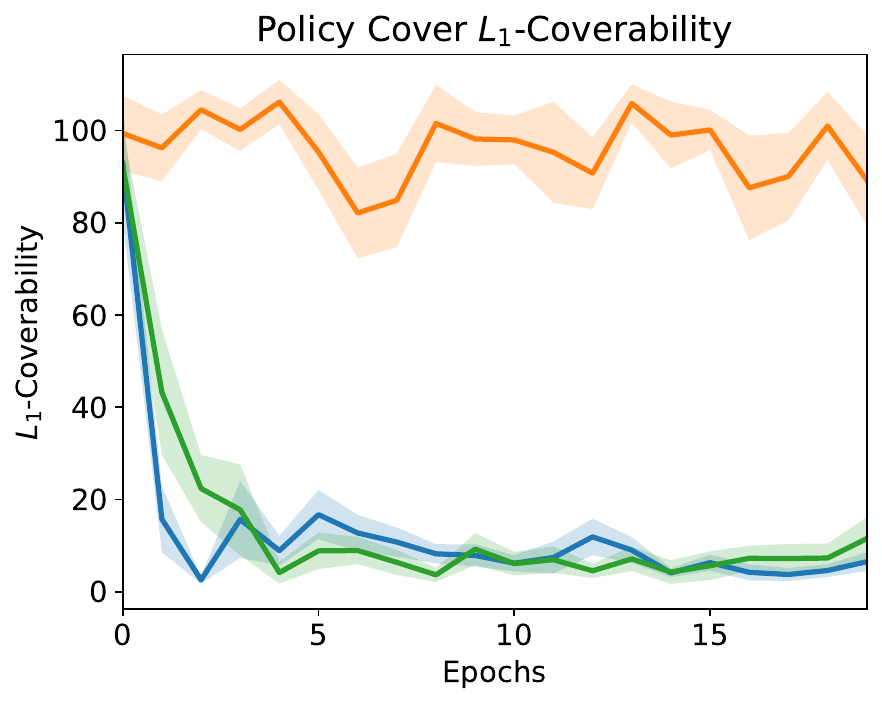}
  \label{fig:ent-plot}
\end{subfigure}
\begin{subfigure}{\linewidth}
\vspace{-0.4cm}
\centering
\hspace{0.75cm}
\includegraphics[width=.35\linewidth]{legend_all.pdf}
\label{fig:sub3}
\end{subfigure}
\caption{Entropy and \mainopt measured on each policy cover obtained from \mainobj (\cref{alg:linf_relaxation}), \maxent, and uniform exploration on the \pendulum environment. We plot the mean and standard error across 10 runs. Each epoch
 corresponds to a single policy update in \cref{alg:linf_relaxation}
 and \maxent, obtained through 1000 steps of \reinforce with rollouts of length 400.}
\label{fig:more-exp-results-pendulum-ent}
\end{figure}

\paragraph{Environment}

We evaluate on the \texttt{Pendulum-v0} environment (henceforth simply
\pendulum) from the OpenAI Gym \citep{brockman2016openai}. The state space of the environment is two-dimensional, with an angle value (denoted by $\theta$) that is in the interval $[-\pi,\pi]$ and a velocity value (denoted by $\rho$) that is in the interval $[-8, 8]$. The dynamics are deterministic and defined by the physics of an inverted pendulum. %
    We modify the starting distribution to be a deterministic starting
    state with a position of $\theta = \pi$ and a velocity of $\rho =
    0$. This means that occupancies and covers are evaluated when
    rolling out from this deterministic starting state. The action
    space is continuous in the interval $[-2,2]$, with negative values
    corresponding to torque applied to the left and positive values
    corresponding to torque applied to the right. To simplify, we only
    allow actions in $\{-1, 0, 1\}$, so that $\abs{\cA}=3$. Finally, we take a horizon of $H=200$, meaning that we terminate rollouts after $200$ steps.

\paragraph{Discretization and other hyperparameters}

We apply all the same implementation details as in \cref{sec:exp-details}. We take a discretization resolution for \pendulum of $8 \times 8$. We take a \reinforce planner with a uniform starting distribution with $\theta \in [-\pi,\pi]$ and $\rho \in [-1,1]$. The architecture, optimizer, and hyperparameters are the same as for \mountaincar. 

\paragraph{Results}
As with \mountaincar, we measure the number of unique states visited
and the occupancy heatmaps
(\cref{fig:more-exp-results-pendulum-states}) as well as the entropy
and \mainobj values
(\cref{fig:more-exp-results-pendulum-ent}). Overall, we find that
\mainobj and \maxent obtain similar values and are able to explore
every state in the (discretized) \pendulum environment, indicating
that exploration is somewhat easier than for the \mountaincar environment.

\subsection{Additional Discussion}

While the \mountaincar and \pendulum environments are fairly simple, we note that our
algorithm exhibits robustness in several different ways, indicating
that it may scale favorably to more challenging domains. Firstly, we
do not expect that \reinforce is finding a near-optimal policy for the
planning problem in \cref{line:linf_opt}, which indicates that our
method is robust to optimization errors. We also note that the choice
of $\mu$ used in our implementation of \cref{alg:linf_relaxation},
which is defined as uniform over the discretized space, is a
heuristic and is not guaranteed to have good coverage with respect to the
true continuous MDP. This indicates that our method is robust to the
choice of $\mu$ and $\Cinf$. Lastly, our method worked ``out-of-the-box'' with the same hyperparameters used in the \maxent implementation
in \citet{hazan2019provably}, and was found to behave similarly with different hyperparameters, which indicates that our method is robust to the choice of hyperparameters.

\clearpage

\section{Technical Tools}
\label{app:technical}

\subsection{Minimax Theorem}

\begin{lemma}[Sion's Minimax Theorem \citep{sion1958minimax}]
  \label{lem:sion}
  Let $\cX$ and $\cY$ be convex sets in linear topological
  spaces, and assume $\cX$ is compact. Let
  $F:\cX\times\cY\to\bbR$ be such that (i) $f(x, \cdot)$ is
  concave and upper semicontinuous over $\cY$ for all
  $x\in\cX$ and (ii) $f(\cdot,y)$ is convex and lower
  semicontinuous over $\cX$ for all $y\in\cY$. Then
  \begin{equation}
    \label{eq:minimax_theorem}
    \inf_{x\in\cX}\sup_{y\in\cY}f(x,y) = \sup_{y\in\cY}\inf_{x\in\cX}f(x,y).
  \end{equation}
\end{lemma}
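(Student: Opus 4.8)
The plan is to follow the standard argument for Sion's minimax theorem; since this is a classical result one can simply invoke \citet{sion1958minimax}, but here is the route I would take to reprove it (I write $f$ for the payoff, matching the hypotheses and the conclusion). The inequality $\sup_{y\in\cY}\inf_{x\in\cX}f(x,y)\le\inf_{x\in\cX}\sup_{y\in\cY}f(x,y)$ is immediate from weak duality: for any $x_0\in\cX$, $y_0\in\cY$ we have $\inf_{x}f(x,y_0)\le f(x_0,y_0)\le\sup_{y}f(x_0,y)$, and taking a supremum over $y_0$ on the left and an infimum over $x_0$ on the right gives it. All the content is in the reverse inequality, which I would prove by contradiction: set $v:=\sup_{y}\inf_{x}f(x,y)$, suppose $\inf_{x}\sup_{y}f(x,y)>v$, and fix $c$ with $v<c<\inf_{x}\sup_{y}f(x,y)$. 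For each $y$ the set $U_y:=\{x\in\cX:f(x,y)>c\}$ is open by lower semicontinuity of $f(\cdot,y)$, and $\{U_y\}_{y\in\cY}$ covers $\cX$ since $\sup_{y}f(x,y)\ge\inf_{x'}\sup_{y}f(x',y)>c$ for every $x$. As $\cX$ is compact, a finite subfamily $U_{y_1},\dots,U_{y_n}$ already covers $\cX$, i.e.\ $\max_{i\le n}f(x,y_i)>c$ for all $x\in\cX$.

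The heart of the matter is then a finite minimax claim: there is $\bar y$ in the convex hull of $\{y_1,\dots,y_n\}$ with $f(x,\bar y)\ge c$ for all $x\in\cX$. Granting it, $v=\sup_{y}\inf_{x}f(x,y)\ge\inf_{x}f(x,\bar y)\ge c>v$, a contradiction, completing the proof. I would establish the finite claim by induction on $n$ (equivalently, via the KKM lemma on the simplex of convex combinations of $y_1,\dots,y_n$); the base case $n=1$ is vacuous, and the essential ingredient is the two-point case: if $y_1,y_2\in\cY$ satisfy $\max(f(x,y_1),f(x,y_2))>c$ for all $x\in\cX$, then there is $\bar y$ on the segment $[y_1,y_2]$ with $f(x,\bar y)\ge c$ for all $x$.

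For the two-point case I would parametrize the segment by $y_t=(1-t)y_1+ty_2$, $t\in[0,1]$, and study the sublevel sets $A_t:=\{x\in\cX:f(x,y_t)\le c\}$, which are closed and convex by lower semicontinuity and convexity of $f(\cdot,y_t)$. If some $A_t$ is empty we are done (take $\bar y=y_t$), so assume all $A_t\neq\emptyset$. Concavity of $f(x,\cdot)$ gives $f(x,y_t)\ge\min(f(x,y_1),f(x,y_2))$, whence $A_t\subseteq A_0\cup A_1$ for every $t$ (here $A_0,A_1$ denote the endpoint sublevel sets $\{f(\cdot,y_1)\le c\}$, $\{f(\cdot,y_2)\le c\}$), while the hypothesis gives $A_0\cap A_1=\emptyset$; moreover $T_0\cap T_1=\emptyset$ where $T_j:=\{t:A_t\subseteq A_j\}$, else some $A_t$ would be empty. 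Since each $A_t$ is connected (being convex) and lies inside the disjoint union $A_0\sqcup A_1$ of closed sets, it is contained wholly in $A_0$ or wholly in $A_1$, so $[0,1]=T_0\sqcup T_1$ with $0\in T_0$ and $1\in T_1$. The remaining step --- showing that $T_0$ and $T_1$ are both closed, so that this partition contradicts connectedness of $[0,1]$ --- is the one I expect to be the main obstacle, since it must be argued carefully from semicontinuity (rather than continuity) of $f$ together with convexity of $\cX$; in a write-up I would carry it out following \citet{sion1958minimax} (or Komiya's elementary exposition), or, exploiting that the hypotheses here assume genuine convexity/concavity rather than only the quasi-versions, replace the whole two-point argument by a Hahn--Banach separation argument.
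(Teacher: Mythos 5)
The paper does not prove this lemma at all: it is imported as a known classical result, with the proof delegated entirely to the citation \citet{sion1958minimax}. So your opening remark---that one can simply invoke the reference---is precisely what the paper does, and anything beyond that is extra. Your sketch of an actual proof is the standard Komiya-style argument for Sion's theorem, and it is correct in structure: weak duality, the contradiction setup with a level $c$ strictly between the two values, open superlevel sets $U_y=\{x:f(x,y)>c\}$ from lower semicontinuity, compactness of $\cX$ to extract $y_1,\dots,y_n$, reduction to the finite/two-point minimax claim, and the connectedness argument on the sublevel sets $A_t$. The one step you leave open---closedness of $T_0$ and $T_1$, which is genuinely the delicate point since $f$ is only semicontinuous---is explicitly deferred to Sion/Komiya, which is acceptable here but means your write-up is not yet self-contained; as stated it proves nothing the citation does not already give. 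Your proposed alternative is the cleaner way to close the gap given the hypotheses as stated: because the lemma assumes genuine convexity in $x$ and concavity in $y$ (not merely the quasi-versions Sion treats), the whole two-point/induction machinery can be replaced by a finite-dimensional separation argument---separate the convex set $\{z\in\bbR^n:\exists x\in\cX,\ z_i\ge f(x,y_i)\ \forall i\}$ from the open set $\{z:z_i<c\ \forall i\}$ to obtain weights $\lambda\in\Delta([n])$ with $\sum_i\lambda_i f(x,y_i)\ge c$ for all $x$, and then concavity in $y$ gives $f(x,\bar y)\ge c$ for $\bar y=\sum_i\lambda_i y_i$---which yields a complete, elementary proof under the stated assumptions, at the cost of not covering the quasi-convex generality of Sion's original theorem.
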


\subsection{Concentration}

      \begin{lemma}
    \label{lem:martingale_chernoff}
    For any sequence of real-valued random variables $\prn{X_t}_{t\leq{}T}$ adapted to a
    filtration $\prn{\filt_t}_{t\leq{}T}$, it holds that with probability at least
    $1-\delta$, for all $T'\leq{}T$,
    \begin{equation}
      \label{eq:martingale_chernoff}
\sum_{t=1}^{T'}-\log\prn*{\En_{t-1}\brk*{e^{-X_t}}}      \leq \sum_{t=1}^{T'}X_t
      + \log(\delta^{-1}).
    \end{equation}
  \end{lemma}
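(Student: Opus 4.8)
The plan is to realize $\exp$ of the gap in \eqref{eq:martingale_chernoff} as a nonnegative martingale and then apply Ville's maximal inequality. First I would dispense with a degenerate case: since $e^{-X_t}>0$ pointwise, $\En_{t-1}\brk{e^{-X_t}}$ is a well-defined $\filt_{t-1}$-measurable random variable taking values in $(0,+\infty]$; on the event $\crl{\En_{t-1}\brk{e^{-X_t}}=+\infty}$ the corresponding summand $-\log\prn{\En_{t-1}\brk{e^{-X_t}}}$ equals $-\infty$, so \eqref{eq:martingale_chernoff} holds trivially there for all $T'\geq{}t$; accordingly, one can ``freeze'' the process below at the first such index without affecting the claim, and I will treat all the conditional expectations as finite and positive going forward.

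Next, define $Z_0 = 1$ and, for $T'\geq{}1$,
\[
  Z_{T'} \ldef \prod_{t=1}^{T'}\frac{e^{-X_t}}{\En_{t-1}\brk{e^{-X_t}}}.
\]
Each $Z_{T'}$ is $\filt_{T'}$-measurable and nonnegative, and
\[
  \En_{t-1}\brk{Z_t} = Z_{t-1}\cdot\En_{t-1}\brk*{\frac{e^{-X_t}}{\En_{t-1}\brk{e^{-X_t}}}} = Z_{t-1},
\]
so $(Z_{T'})_{T'\geq{}0}$ is a nonnegative martingale with $\En\brk{Z_0}=1$. Note that this step does not require $X_t$ itself to be integrable: the martingale property is verified directly for $Z_{T'}$, not inferred from properties of $\sum_t X_t$.

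Then I would invoke Ville's inequality for nonnegative supermartingales: for any $\lambda>0$, $\bbP\brk*{\sup_{T'\geq{}0}Z_{T'}\geq{}\lambda}\leq{}\lambda^{-1}\En\brk{Z_0}=\lambda^{-1}$. Taking $\lambda=\delta^{-1}$ gives that with probability at least $1-\delta$, $Z_{T'}<\delta^{-1}$ for every $T'$ (in particular for every $T'\leq{}T$). On this event, taking logarithms yields $\sum_{t=1}^{T'}\prn{-X_t-\log\En_{t-1}\brk{e^{-X_t}}}<\log(\delta^{-1})$ for all $T'\leq{}T$, which rearranges to \eqref{eq:martingale_chernoff}.

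The only point requiring care — and the closest thing to an obstacle — is the bookkeeping around possibly-infinite conditional expectations together with the lack of integrability of $X_t$: one must set up $Z_{T'}$ path-by-path (with the freezing convention above) so that it is a genuine nonnegative supermartingale on the whole probability space, after which the conclusion is immediate from Ville's inequality.
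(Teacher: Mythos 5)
Your proof is correct, and it is exactly the standard argument behind this lemma (which the paper states without proof as a known technical tool): the normalized exponential process $Z_{T'}=\prod_{t\leq T'} e^{-X_t}/\En_{t-1}[e^{-X_t}]$ is a nonnegative mean-one martingale, and Ville's maximal inequality gives the anytime bound with a single $\log(\delta^{-1})$, after which taking logarithms and rearranging yields \eqref{eq:martingale_chernoff}. Your handling of the degenerate case $\En_{t-1}[e^{-X_t}]=+\infty$ and the observation that no integrability of $X_t$ is needed are both sound.
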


  \begin{lemma}[Freedman's inequality (e.g., \citet{agarwal2014taming})]
  \label{lem:freedman}
  Let $(X_t)_{t\leq{T}}$ be a real-valued martingale difference
  sequence adapted to a filtration $\prn{\filt_t}_{t\leq{}T}$. If
  $\abs*{X_t}\leq{}R$ almost surely, then for any $\eta\in(0,1/R)$, with probability at least $1-\delta$,
    \[
      \sum_{t=1}^{T}X_t \leq{} \eta\sum_{t=1}^{T}\En_{t-1}\brk*{X_t^{2}} + \frac{\log(\delta^{-1})}{\eta}.
    \]
  \end{lemma}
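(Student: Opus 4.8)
The plan is to prove Freedman's inequality via the standard exponential (Chernoff) method, which becomes especially clean here because the time-uniform bound of \cref{lem:martingale_chernoff} is already available. First I would apply \cref{lem:martingale_chernoff} to the adapted sequence $\prn*{-\eta X_t}_{t\leq{}T}$ (substituting $-\eta X_t$ for the generic variable in that lemma). Taking $T'=T$ in \eqref{eq:martingale_chernoff} then yields, with probability at least $1-\delta$,
\[
\eta\sum_{t=1}^{T}X_t \leq{} \sum_{t=1}^{T}\log\prn*{\En_{t-1}\brk*{e^{\eta X_t}}} + \log(\delta^{-1}).
\]
After this reduction, the entire task is to control the per-step conditional log-moment-generating function $\log\En_{t-1}\brk*{e^{\eta X_t}}$.

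The key step is a pointwise bound on this conditional MGF. Since $\abs*{X_t}\leq{}R$ and $\eta\in(0,1/R)$, we have $\abs*{\eta X_t}<1$, so we may apply the elementary inequality $e^{u}\leq{}1+u+u^2$, valid for all $u\leq{}1$, with $u=\eta X_t$. Taking $\En_{t-1}$ and using the martingale-difference property $\En_{t-1}\brk*{X_t}=0$ to annihilate the linear term gives
\[
\En_{t-1}\brk*{e^{\eta X_t}}\leq{}1+\eta^2\En_{t-1}\brk*{X_t^2}\leq{}\exp\prn*{\eta^2\En_{t-1}\brk*{X_t^2}},
\]
where the final step uses $1+z\leq{}e^{z}$. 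Hence $\log\En_{t-1}\brk*{e^{\eta X_t}}\leq{}\eta^2\En_{t-1}\brk*{X_t^2}$ for every $t$.

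Substituting this into the first display and dividing through by $\eta>0$ yields exactly
\[
\sum_{t=1}^{T}X_t \leq{} \eta\sum_{t=1}^{T}\En_{t-1}\brk*{X_t^2} + \frac{\log(\delta^{-1})}{\eta},
\]
which is the claim. The only genuinely nontrivial ingredient is the numerical inequality $e^{u}\leq{}1+u+u^2$ for $u\leq{}1$, which I would verify by a short calculus argument: the function $g(u)=1+u+u^2-e^{u}$ satisfies $g(0)=0$, and one checks $g'(u)\geq{}0$ on $[0,1]$ while $g(u)\geq{}0$ on $(-\infty,0]$, so $g\geq{}0$ throughout $(-\infty,1]$. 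The role of the hypothesis $\eta<1/R$ is precisely to force $u=\eta X_t$ into the admissible range $(-1,1)\subset(-\infty,1]$, so the inequality applies term-by-term; this is the main (and essentially the only) obstacle, and it is a routine one.
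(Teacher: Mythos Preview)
Your proof is correct. Note, however, that the paper does not actually supply a proof of \cref{lem:freedman}; it is stated with a citation to \citet{agarwal2014taming} as a known tool. So there is no ``paper's own proof'' to compare against. Your argument is the standard exponential-supermartingale derivation, and it is a nice touch that you route it through the paper's \cref{lem:martingale_chernoff} rather than redoing the Ville/Markov step from scratch; this makes the write-up self-contained within the paper's toolkit. The calculus verification of $e^{u}\leq 1+u+u^{2}$ on $(-\infty,1]$ is fine as sketched.
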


  \begin{lemma}[Corollary of \cref{lem:freedman}]
      \label{lem:multiplicative_freedman}
            Let $(X_t)_{t\leq{T}}$ be a sequence of random
      variables adapted to a filtration $\prn{\filt_{t}}_{t\leq{}T}$. If
  $0\leq{}X_t\leq{}R$ almost surely, then with probability at least
  $1-\delta$,
  \begin{align*}
    &\sum_{t=1}^{T}X_t \leq{}
                        \frac{3}{2}\sum_{t=1}^{T}\En_{t-1}\brk*{X_t} +
                        4R\log(2\delta^{-1}),
    \intertext{and}
      &\sum_{t=1}^{T}\En_{t-1}\brk*{X_t} \leq{} 2\sum_{t=1}^{T}X_t + 8R\log(2\delta^{-1}).
  \end{align*}
    \end{lemma}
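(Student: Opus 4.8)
The plan is to reduce this corollary to Freedman's inequality (\cref{lem:freedman}) by centering. First I would introduce the martingale difference sequence $Y_t \ldef X_t - \En_{t-1}\brk*{X_t}$, which is adapted to $\filt_t$ with $\En_{t-1}\brk*{Y_t}=0$. Since $0\leq{}X_t\leq{}R$ almost surely, we get $0\leq\En_{t-1}\brk*{X_t}\leq{}R$ and hence $\abs*{Y_t}\leq{}R$, so Freedman's boundedness hypothesis holds. The key preliminary estimate I would record is that the predictable quadratic variation of $Y$ is controlled by its predictable \emph{mean}: using $X_t^2\leq{}R\cdot{}X_t$,
\[
\En_{t-1}\brk*{Y_t^2}=\En_{t-1}\brk*{X_t^2}-\prn*{\En_{t-1}\brk*{X_t}}^2\leq\En_{t-1}\brk*{X_t^2}\leq{}R\cdot{}\En_{t-1}\brk*{X_t}.
\]

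With this in hand I would invoke \cref{lem:freedman} twice, each time with step size $\eta=\tfrac{1}{2R}$ and failure probability $\delta/2$. Applied to $(Y_t)_{t\leq{}T}$, it gives, with probability at least $1-\delta/2$,
\[
\sum_{t=1}^{T}\prn*{X_t-\En_{t-1}\brk*{X_t}}\leq\eta\sum_{t=1}^{T}\En_{t-1}\brk*{Y_t^2}+\frac{\log(2\delta^{-1})}{\eta}\leq\tfrac12\sum_{t=1}^{T}\En_{t-1}\brk*{X_t}+2R\log(2\delta^{-1}),
\]
using the variance bound and $\eta{}R=\tfrac12$; rearranging yields the first claim (with additive constant $2R$, comfortably within the stated $4R$). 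Applied to $(-Y_t)_{t\leq{}T}$ with the same parameters, it gives $\prn*{1-\eta{}R}\sum_{t}\En_{t-1}\brk*{X_t}\leq\sum_{t}X_t+\eta^{-1}\log(2\delta^{-1})$, and substituting $\eta=\tfrac{1}{2R}$ and multiplying through by $2$ yields the second claim. A union bound over the two failure events, each of probability at most $\delta/2$, shows both bounds hold simultaneously with probability at least $1-\delta$.

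There is no genuine obstacle here; this is a routine corollary. The only point needing care — and the only place where the \emph{one-sided} boundedness $X_t\in\brk{0,R}$ (rather than merely $\abs{X_t}\leq{}R$) is used — is the variance-to-mean bound $\En_{t-1}\brk*{Y_t^2}\leq{}R\En_{t-1}\brk*{X_t}$, which is what collapses the quadratic term in Freedman's inequality onto the predictable quantity $\sum_t\En_{t-1}\brk*{X_t}$ that appears in the statement. The choice $\eta=\tfrac{1}{2R}$ is then the natural one: it keeps the multiplicative factor at $1+\eta{}R=\tfrac32$ in the first bound (and keeps $1-\eta{}R=\tfrac12$ bounded away from zero in the second) while making the additive term $\eta^{-1}\log(2\delta^{-1})$ as small as possible subject to $\eta{}R\leq\tfrac12$.
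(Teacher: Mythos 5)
Your proof is correct and follows exactly the route the paper intends (the lemma is stated as a corollary of \cref{lem:freedman} without a written proof): center to form the martingale differences $X_t-\En_{t-1}\brk*{X_t}$, use $\En_{t-1}\brk*{(X_t-\En_{t-1}\brk*{X_t})^2}\leq R\,\En_{t-1}\brk*{X_t}$, and apply \cref{lem:freedman} to the sequence and its negation with $\eta=\tfrac{1}{2R}$ and a union bound. Your constants ($2R$ and $4R$) are in fact slightly sharper than the stated $4R$ and $8R$, so the claim follows with room to spare.
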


\subsection{Information Theory}

\begin{lemma}[e.g., \citet{foster2021statistical}]
  \label{lem:hellinger_pair}
    For any pair of random variables $(X,Y)$,
    \[
      \En_{X\sim{}\bbP_X}\brk*{\Dhels{\bbP_{Y\mid{}X}}{\bbQ_{Y\mid{}X}}} \leq{} 
      4\Dhels{\bbP_{X,Y}}{\bbQ_{X,Y}}.      
    \]
  \end{lemma}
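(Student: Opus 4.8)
The plan is to prove the inequality through the Bhattacharyya affinity representation of the Hellinger distance combined with a single elementary quadratic estimate. First I would fix a common dominating measure $\omega$ for all distributions in sight, write $p(x),q(x)$ for the marginal densities of $\bbP_X,\bbQ_X$ and $p(y\mid x),q(y\mid x)$ for the conditional densities of $\bbP_{Y\mid X},\bbQ_{Y\mid X}$, so that the joint densities factor as $p(x)p(y\mid x)$ and $q(x)q(y\mid x)$. All manipulations are then on these densities, with the usual care on null sets.

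The key step is to decompose the integrand of $\Dhels{\bbP_{X,Y}}{\bbQ_{X,Y}}$ by inserting a cross term:
\[
\sqrt{p(x)p(y\mid x)}-\sqrt{q(x)q(y\mid x)} = \sqrt{p(x)}\bigl(\sqrt{p(y\mid x)}-\sqrt{q(y\mid x)}\bigr) + \bigl(\sqrt{p(x)}-\sqrt{q(x)}\bigr)\sqrt{q(y\mid x)},
\]
and then apply the elementary bound $(a+b)^2\geq \tfrac12 a^2-b^2$ (which holds since $\tfrac12(a+2b)^2\geq 0$). Squaring, integrating first over $y$ using $\int q(y\mid x)\,d\omega(y)=1$ and the definition of the conditional Hellinger distance, and then integrating over $x$ (where $\int p(x)(\sqrt{p(y\mid x)}-\sqrt{q(y\mid x)})^2 = \En_{X\sim\bbP_X}\brk*{\Dhels{\bbP_{Y\mid X}}{\bbQ_{Y\mid X}}}$ and $\int(\sqrt{p(x)}-\sqrt{q(x)})^2 = \Dhels{\bbP_X}{\bbQ_X}$), I obtain
\[
\Dhels{\bbP_{X,Y}}{\bbQ_{X,Y}} \geq \tfrac12\,\En_{X\sim\bbP_X}\brk*{\Dhels{\bbP_{Y\mid X}}{\bbQ_{Y\mid X}}} - \Dhels{\bbP_X}{\bbQ_X}.
\]

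To close the argument I would invoke the data-processing inequality for the Hellinger distance applied to the marginalization map $(X,Y)\mapsto X$, i.e.\ $\Dhels{\bbP_X}{\bbQ_X}\leq \Dhels{\bbP_{X,Y}}{\bbQ_{X,Y}}$; this follows directly from the affinity identity $\int\sqrt{d\bbP_{X,Y}\,d\bbQ_{X,Y}} = \int\sqrt{p(x)q(x)}\bigl(\int\sqrt{p(y\mid x)q(y\mid x)}\,d\omega(y)\bigr)d\omega(x)$ together with the fact that the inner affinity is at most $1$, so that the joint affinity is at most $\int\sqrt{p(x)q(x)}\,d\omega(x)$, and $\Dhels{}=2(1-\text{affinity})$. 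Substituting this into the displayed inequality gives $2\Dhels{\bbP_{X,Y}}{\bbQ_{X,Y}}\geq\tfrac12\,\En_{X\sim\bbP_X}\brk*{\Dhels{\bbP_{Y\mid X}}{\bbQ_{Y\mid X}}}$, which is the claim.

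I do not expect a substantive obstacle here: the proof is short and self-contained. The only points needing minor care are (i) choosing the quadratic split with the right constants so the final factor comes out to exactly $4$ (a cruder split still yields a finite constant, which would suffice for how the lemma is used), and (ii) the measure-theoretic bookkeeping on sets where $q(x)=0$ or conditional densities are defined only up to null sets, which is handled in the standard way via the common dominating measure.
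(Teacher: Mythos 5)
Your proof is correct, and the constant works out to exactly $4$: the split $(a+b)^2\geq\tfrac12 a^2-b^2$ applied to the decomposition $\sqrt{p(x)p(y\mid x)}-\sqrt{q(x)q(y\mid x)}=\sqrt{p(x)}\bigl(\sqrt{p(y\mid x)}-\sqrt{q(y\mid x)}\bigr)+\bigl(\sqrt{p(x)}-\sqrt{q(x)}\bigr)\sqrt{q(y\mid x)}$, together with data processing for the marginalization $(X,Y)\mapsto X$, is essentially the standard argument from the cited reference \citep{foster2021statistical}, which the paper itself does not reprove (there one upper-bounds the conditional term via $(a+b)^2\leq 2a^2+2b^2$ rather than lower-bounding the joint, but the two manipulations are equivalent and yield the same constant).
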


      \begin{restatable}[\citet{foster2021statistical}]{lemma}{mpmin}
  \label{lem:mp_min}
  Let $\bbP$ and $\bbQ$ be probability measures on $(\cX,\filt)$. For all
  $h:\cX\to\bbR$ with $0\leq{}h(X)\leq{}B$ almost surely under $\bbP$
  and $\bbQ$, we have
  \begin{align}
    \label{eq:mp_min}
\En_{\bbP}\brk*{h(X)}
    &\leq{} 3\En_{\bbQ}\brk*{h(X)} + 4B\cdot\Dhels{\bbP}{\bbQ}.
  \end{align}
\end{restatable}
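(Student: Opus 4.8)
The plan is to reduce the claim to one application of Cauchy--Schwarz followed by AM--GM, using the boundedness $0\le h\le B$ twice. Fix a common dominating measure $\omega$ and write $p=\tfrac{d\bbP}{d\omega}$, $q=\tfrac{d\bbQ}{d\omega}$, so that
\[
\En_{\bbP}\brk*{h(X)}-\En_{\bbQ}\brk*{h(X)}=\int h\,(p-q)\,d\omega=\int \prn*{h(\sqrt{p}+\sqrt{q})}\cdot\prn*{\sqrt{p}-\sqrt{q}}\,d\omega .
\]
Applying Cauchy--Schwarz to the two displayed factors, and recognizing $\int(\sqrt{p}-\sqrt{q})^2 d\omega=\Dhels{\bbP}{\bbQ}$, gives
\[
\En_{\bbP}\brk*{h(X)}-\En_{\bbQ}\brk*{h(X)}\le \sqrt{\int h^2(\sqrt{p}+\sqrt{q})^2\,d\omega}\cdot\sqrt{\Dhels{\bbP}{\bbQ}} .
\]
Then use $(\sqrt{p}+\sqrt{q})^2\le 2(p+q)$ and the pointwise bound $h^2\le B h$ (here the boundedness enters the first time) to get $\int h^2(\sqrt{p}+\sqrt{q})^2\,d\omega\le 2B\int h(p+q)\,d\omega=2B\prn*{\En_{\bbP}\brk{h(X)}+\En_{\bbQ}\brk{h(X)}}$.

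Next I would abbreviate $a=\En_{\bbP}\brk{h(X)}$, $b=\En_{\bbQ}\brk{h(X)}$, $D=\Dhels{\bbP}{\bbQ}$, so the chain above reads $a-b\le\sqrt{2BD\,(a+b)}$. By AM--GM, $\sqrt{2BD\,(a+b)}\le \tfrac14(a+b)+2BD$, hence $a-b\le \tfrac14(a+b)+2BD$, i.e. $\tfrac34 a\le \tfrac54 b+2BD$, giving $a\le \tfrac53 b+\tfrac83 BD$. Since $\tfrac53\le 3$ and $\tfrac83\le 4$, this yields $\En_{\bbP}\brk{h(X)}\le 3\En_{\bbQ}\brk{h(X)}+4B\cdot\Dhels{\bbP}{\bbQ}$, as claimed.

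The argument is essentially routine and I do not anticipate a real obstacle; the one point needing mild care is that after Cauchy--Schwarz the right-hand side again involves $\En_{\bbP}\brk{h(X)}$, so the final inequality is \emph{self-bounding} and must be solved for $a$ — this is precisely where the boundedness $h\le B$ is used a second time (it keeps the coefficient in front of $a$ strictly below $1$). The slack in the constants $5/3$ versus $3$ and $8/3$ versus $4$ means the exact AM--GM split (the weight $1/4$ above) is immaterial, so no optimization of constants is needed.
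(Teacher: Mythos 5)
Your proof is correct, and it follows essentially the same route as the argument in the cited source (\citet{foster2021statistical}), which this paper does not reprove: factor the difference of expectations through $(\sqrt{p}-\sqrt{q})(\sqrt{p}+\sqrt{q})$, apply Cauchy--Schwarz, use $h^2\le Bh$, and solve the resulting self-bounding inequality via AM--GM. The constants you obtain ($5/3$ and $8/3$) are indeed at least as tight as the stated $3$ and $4$, so nothing further is needed.
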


\subsection{Reinforcement Learning}

Proofs for the following lemmas can be found in \citet{foster2021statistical}.

\begin{lemma}[Global simulation lemma]
  \label{lem:simulation_global}
  Let $M$ and $M'$ be MDPs with $\sum_{h=1}^{H}r_h\in\brk*{0,1}$ almost surely, and let $\act\in\PiGen$.
Then we have
\begin{align}
  \label{eq:simulation_global}
  \abs*{\fm(\pi)-\fmp(\pi)}
  &\leq \Dhel{M(\act)}{M'(\act)}
  \leq{} \frac{1}{2\eta} + \frac{\eta}{2}\Dhels{M(\act)}{M'(\act)}\quad\forall{}\eta>0.
\end{align}
\end{lemma}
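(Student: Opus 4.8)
\textbf{Proof plan for the global simulation lemma (\cref{lem:simulation_global}).}

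The plan is to decompose the statement into two independent inequalities: first the change-of-value bound $\abs{\fm(\pi)-\fmp(\pi)} \le \Dhel{M(\act)}{M'(\act)}$, and then the elementary quadratic relaxation $\Dhel{M(\act)}{M'(\act)} \le \tfrac{1}{2\eta} + \tfrac{\eta}{2}\Dhels{M(\act)}{M'(\act)}$, which holds for any nonnegative number. The second inequality is just AM-GM: for any $a \ge 0$ and $\eta > 0$ one has $a = \sqrt{a^2} \le \tfrac{1}{2\eta} + \tfrac{\eta}{2}a^2$, applied with $a = \Dhel{M(\act)}{M'(\act)}$. So the only real content is the first inequality.

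For the first inequality, I would recall that $\fm(\pi) = \En^{M,\pi}[\sum_{h=1}^H r_h]$ is the expected cumulative reward of $\pi$ in $M$, and under the hypothesis $\sum_{h=1}^H r_h \in [0,1]$ almost surely, this is the expectation of a $[0,1]$-valued function $g$ of the full trajectory $\obs = ((x_1,a_1),\ldots,(x_H,a_H))$ (together with the rewards, which in the model-based setup are part of the observation). Writing $\bbP = M(\act)$ and $\bbQ = M'(\act)$ for the two laws of the trajectory, we have $\fm(\pi) - \fmp(\pi) = \En_{\bbP}[g] - \En_{\bbQ}[g]$. The key step is the standard fact that for any $[0,1]$-valued (more generally, bounded-by-$1$) function $g$, $\abs{\En_{\bbP}[g] - \En_{\bbQ}[g]} \le \Dtv{\bbP}{\bbQ} \le \Dhel{\bbP}{\bbQ}$, where the first bound is the variational characterization of total variation distance and the second is the standard comparison inequality $\Dtv{\bbP}{\bbQ} \le \Dhel{\bbP}{\bbQ}$ (which follows from $D_{\mathsf{TV}}^2 \le D_{\mathsf{H}}^2 \cdot (1 - D_{\mathsf{H}}^2/4) \le D_{\mathsf{H}}^2$, or more simply from Cauchy--Schwarz applied to $\abs{\sqrt{d\bbP} - \sqrt{d\bbQ}}\cdot\abs{\sqrt{d\bbP}+\sqrt{d\bbQ}}$). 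Chaining these gives $\abs{\fm(\pi) - \fmp(\pi)} \le \Dhel{M(\act)}{M'(\act)}$, and combining with the AM-GM step completes the proof.

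I do not anticipate a genuine obstacle here; the lemma is a routine information-theoretic bound. The only point requiring a small amount of care is making sure the reward is measurable with respect to (a function of) the trajectory so that the TV/Hellinger comparison applies to $M(\act)$ and $M'(\act)$ as laws over trajectories-with-rewards — this is built into the model-based setup of the paper, where $M(\pi)$ denotes the law of the observed trajectory. One could alternatively prove $\abs{\fm(\pi)-\fmp(\pi)} \le \Dtv{M(\act)}{M'(\act)}$ directly via a per-layer telescoping (performance-difference-style) argument and then pass to Hellinger, but the one-line variational argument above is cleaner and suffices.
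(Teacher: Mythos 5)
Your proposal is correct and follows essentially the same route as the paper, which does not reprove this lemma but defers to \citet{foster2021statistical}, where the standard argument is exactly the one you give: the cumulative reward is a $[0,1]$-valued function of the observed trajectory, so the value gap is bounded by total variation, hence by Hellinger distance, and the second inequality is the AM--GM bound $a\leq\frac{1}{2\eta}+\frac{\eta}{2}a^2$. No gaps; your side remark about the reward being measurable with respect to the trajectory law $M(\pi)$ is the only care point and is handled by the paper's model-based setup.
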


  \begin{lemma}[Local simulation lemma]
    \label{lem:simulation}
      For any pair of MDPs $M=(\Pm,\Rm)$ and $\Mbar=(\Pmbar,\Rmbar)$ with
  the same initial state distribution and $\sum_{h=1}^{H}r_h\in\brk*{0,1}$,
    \begin{align}
      \label{eq:simulation}
          \abs*{\fm(\pi)- \fmbar(\pi)}
      &\leq{}
              \sum_{h=1}^{H}\En^{\sMbar,\pi}\brk*{\Dhel{\Pm_h(x_h,a_h)}{\Pmbar_h(x_h,a_h)}
              + \Dhel{\Rm_h(x_h,a_h)}{\Rmbar_h(x_h,a_h)}
        }.
    \end{align}
  \end{lemma}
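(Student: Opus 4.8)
The plan is to prove this by a \emph{hybrid telescoping argument} over trajectory laws, which sharpens the coarser global bound in \cref{lem:simulation_global}. Write $\bbP^{\sM,\pi}$ and $\bbP^{\sMbar,\pi}$ for the laws of the full trajectory $(x_1,a_1,r_1),\ldots,(x_H,a_H,r_H)$ induced by running $\pi$ in $M$ and $\Mbar$ from the common initial distribution. Since $\sum_{h=1}^{H}r_h\in\brk*{0,1}$ almost surely, $\fm(\pi)-\fmbar(\pi)$ is the gap between the expectations of a single $[0,1]$-valued functional of the trajectory under the two laws, so
\[
  \abs*{\fm(\pi)-\fmbar(\pi)}\;\leq\;\Dtv{\bbP^{\sM,\pi}}{\bbP^{\sMbar,\pi}},
\]
and it suffices to bound this total variation.

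For $h\in\crl*{0,1,\ldots,H}$, I would introduce the hybrid MDP $M\ind{h}$ whose transition and reward kernels agree with $\Mbar$ at layers $1,\ldots,h$ and with $M$ at layers $h+1,\ldots,H$, and set $\bbQ\ind{h}\ldef\bbP^{M\ind{h},\pi}$. Then $\bbQ\ind{0}=\bbP^{\sM,\pi}$ and $\bbQ\ind{H}=\bbP^{\sMbar,\pi}$, so the triangle inequality for total variation gives
\[
  \Dtv{\bbP^{\sM,\pi}}{\bbP^{\sMbar,\pi}}\;\leq\;\sum_{h=1}^{H}\Dtv{\bbQ\ind{h-1}}{\bbQ\ind{h}}.
\]

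The crux is to show that the $h$-th term localizes to layer $h$ and is averaged against $\Mbar$'s occupancy. The laws $\bbQ\ind{h-1}$ and $\bbQ\ind{h}$: (i) assign the same distribution to the prefix $(x_{1:h},a_{1:h})$, since both generate $x_{1:h}$ using $\Mbar$'s transitions at layers $1,\ldots,h-1$ together with the shared policy, so $(x_h,a_h)\sim\dmbarpi_h$; (ii) differ only in layer $h$, where $\bbQ\ind{h-1}$ draws $(r_h,x_{h+1})$ from $\Rm_h(x_h,a_h)\tens\Pm_h(x_h,a_h)$ whereas $\bbQ\ind{h}$ draws it from $\Rmbar_h(x_h,a_h)\tens\Pmbar_h(x_h,a_h)$; and (iii) share an identical conditional law for the continuation at layers $h+1,\ldots,H$ given $x_{h+1}$, since both follow $M$ there. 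Disintegrating over the prefix and integrating out the common continuation then yields
\[
  \Dtv{\bbQ\ind{h-1}}{\bbQ\ind{h}}
  \;=\;\En^{\sMbar,\pi}\brk*{\Dtv{(\Rm_h\tens\Pm_h)(x_h,a_h)}{(\Rmbar_h\tens\Pmbar_h)(x_h,a_h)}}.
\]
Bounding the total variation between product measures by the sum of the marginal total variations, and then using $\Dtv{P}{Q}\leq\Dhel{P}{Q}$, the $h$-th summand is at most the $h$-th term on the right-hand side of the statement; summing over $h$ completes the proof.

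I expect the localization identity in the previous paragraph to be the only delicate step: it needs a careful disintegration-of-measure argument to verify that the total variation between consecutive hybrids collapses to a layer-$h$ quantity averaged against $\dmbarpi_h$ rather than $\dmpi_h$. The choice of hybrid ordering — following $\Mbar$ on the prefix and $M$ on the suffix — is exactly what places $\Mbar$'s occupancy on the right-hand side; the reverse ordering would produce $\dmpi_h$ instead. The remaining ingredients (reduction to total variation, the triangle inequality, subadditivity of total variation over products, and $\Dtv{\cdot}{\cdot}\le\Dhel{\cdot}{\cdot}$) are routine.
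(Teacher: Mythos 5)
Your proposal is correct, and it is essentially the argument behind the paper's proof: the paper defers this lemma to \citet{foster2021statistical}, where the value difference is telescoped layer-by-layer (equivalently, across your hybrid MDPs that follow $\Mbar$ on the prefix and $M$ on the suffix), so that each term is an expectation under $d^{\sMbar,\pi}_h$ of a one-step discrepancy, which is then bounded by total variation and in turn by Hellinger distance exactly as you do. Your phrasing via hybrid trajectory laws versus the cited proof's phrasing via intermediate value functions is only a cosmetic difference; the localization identity you flag as delicate is the standard fact that total variation between two laws with a common prefix marginal and a common continuation kernel equals the expected total variation of the layer-$h$ conditionals.
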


\subsection{Helper Lemmas}

\begin{lemma}[E.g., \citet{xie2023role}]
\label{lem:elliptic_potential}
Let $d\ind{1}, d\ind{2}, \ldots, d\ind{T}$ be an arbitrary sequence of distributions over a set $\cZ$, and let $\mu\in\Delta(\cZ)$ be a distribution such that $d\ind{t}(z) / \mu(z) \leq C$ for all $(z,t) \in \cZ \times [T]$. Then for all $z \in \cZ$, we have
\begin{align*}
\sum_{t = 1}^{T} \frac{d\ind{t}(z)}{\sum_{i < t} d\ind{i}(z) + C \cdot \mu(z)} \leq 2\log(2T).
\end{align*}
\end{lemma}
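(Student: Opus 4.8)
The plan is a standard potential-function (telescoping-logarithm) argument applied pointwise in $z$. Fix $z \in \cZ$. If $\mu(z) = 0$, then the hypothesis $d\ind{t}(z)/\mu(z) \le C$ forces $d\ind{t}(z) = 0$ for all $t$, so every summand is $0$ (under the convention $0/0 = 0$) and the bound is trivial. So assume $\mu(z) > 0$, and abbreviate $a_t \ldef d\ind{t}(z) \ge 0$ and $S_t \ldef \sum_{i<t} a_i + C\mu(z)$, so that $S_1 = C\mu(z) > 0$, the sequence $(S_t)$ is nondecreasing, and $S_{t+1} = S_t + a_t$. The quantity to bound is $\sum_{t=1}^{T} a_t / S_t$.

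First I would observe that each ratio lies in $[0,1]$: by hypothesis $a_t \le C\mu(z) = S_1 \le S_t$, hence $a_t/S_t \le 1$. Then I would invoke the elementary inequality $x \le 2\log(1+x)$ valid for $x \in [0,1]$ (one checks $f(x) \ldef 2\log(1+x) - x$ has $f(0)=0$ and $f'(x) = (1-x)/(1+x) \ge 0$ on $[0,1]$). Applying this with $x = a_t/S_t$ gives
\begin{align*}
  \frac{a_t}{S_t} \le 2\log\!\prn*{1 + \frac{a_t}{S_t}} = 2\log\!\prn*{\frac{S_t + a_t}{S_t}} = 2\log\!\prn*{\frac{S_{t+1}}{S_t}}.
\end{align*}

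Summing over $t = 1,\dots,T$, the right-hand side telescopes:
\begin{align*}
  \sum_{t=1}^{T} \frac{a_t}{S_t} \le 2\log\!\prn*{\frac{S_{T+1}}{S_1}}.
\end{align*}
Finally I would bound the argument: $S_{T+1} = \sum_{i=1}^{T} a_i + C\mu(z) \le T\cdot C\mu(z) + C\mu(z) = (T+1)\,C\mu(z) = (T+1)S_1$, using $a_i \le C\mu(z)$ again. Hence $S_{T+1}/S_1 \le T+1 \le 2T$ for $T \ge 1$, which yields $\sum_{t=1}^{T} a_t/S_t \le 2\log(2T)$. Since $z$ was arbitrary, this completes the argument.

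There is no serious obstacle here; the only "idea" is the pointwise reduction together with the logarithmic comparison $x \le 2\log(1+x)$ that turns the sum into a telescoping product, and the final crude bound on $S_{T+1}$ using the uniform density-ratio assumption. The density-ratio hypothesis is used twice — once to ensure each term is at most $1$ (so the comparison inequality applies) and once to control the terminal potential $S_{T+1}$ — so both uses should be flagged explicitly in the writeup.
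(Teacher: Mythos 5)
Your proof is correct, and it is the standard per-state elliptic potential argument: the paper itself states \cref{lem:elliptic_potential} without proof, attributing it to \citet{xie2023role}, and your argument (pointwise in $z$, the comparison $x\leq 2\log(1+x)$ for $x\in[0,1]$ justified by the density-ratio bound, telescoping, and the crude bound $S_{T+1}\leq (T+1)C\mu(z)\leq 2T\cdot S_1$) is exactly the argument used there. No gaps.
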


\begin{lemma}
  \label{lem:pi_to_mu}
  For any distribution $p\in\Delta(\PiRNS)$, $\pi\in\PiRNS$, $\mu\in\bbR^{\cX\times\cA}_{+}$, and $\veps,\delta>0$, we have that
  \begin{align*}
    \En^{\sM,\pi}\brk*{\frac{d^{\sM,\pi}_h(x_h,a_h)}{d_h^{\sM,p}(x_h,a_h)+\veps\cdot{}d_h^{\sM,\pi}(x_h,a_h)}}
    &\leq
      \En^{\sM,\pi}\brk*{\frac{d^{\sM,\pi}_h(x_h,a_h)}{d_h^{\sM,p}(x_h,a_h)+\delta\cdot\mu(x_h,a_h)}} \\
      &\qquad + \frac{\delta}{\veps}\cdot\En^{\sM,\pi}\brk*{\frac{\mu(x_h,a_h)}{d_h^{\sM,p}(x_h,a_h)+\delta\cdot{}\mu(x_h,a_h)}}.
  \end{align*}
\end{lemma}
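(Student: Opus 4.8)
The plan is to reduce the claimed inequality to an elementary pointwise algebraic fact. Writing each expectation as a sum over state-action pairs via $\En^{\sM,\pi}[g(x_h,a_h)] = \sum_{(x,a)} d_h^{\sM,\pi}(x,a)\, g(x,a)$, it suffices to fix a pair $(x,a)$ in the support of $d_h^{\sM,\pi}$, abbreviate $A \ldef d_h^{\sM,p}(x,a) \ge 0$, $B \ldef d_h^{\sM,\pi}(x,a) > 0$, and $C \ldef \mu(x,a) \ge 0$, and establish the pointwise bound
\begin{align}
  \frac{B^2}{A + \veps B} \;\le\; \frac{B^2}{A + \delta C} \;+\; \frac{\delta}{\veps}\cdot\frac{BC}{A + \delta C}.
  \tag{$\star$}
\end{align}
Summing $(\star)$ over all $(x,a)$ then recovers the lemma, since the three terms of $(\star)$ are exactly the summands of the three expectations appearing in the statement. (When $B = 0$ every relevant term vanishes; and since the expectations are taken under $\pi$, we only ever encounter $B > 0$, so the left-hand denominator $A + \veps B \ge \veps B$ is strictly positive. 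For the $\mu$-dependent terms we adopt the usual convention $0/0 = 0$; in the settings where the lemma is applied, $\mu$ has $\CinfhM(\mu) < \infty$, which forces $C > 0$ whenever $B > 0$, so no genuine division by zero arises.)

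To prove $(\star)$, divide through by $B > 0$, so that the claim becomes $\frac{B}{A + \veps B} \le \frac{B + (\delta/\veps)C}{A + \delta C}$. Since $A + \veps B > 0$ and $A + \delta C > 0$, cross-multiplying shows this is equivalent to $B(A + \delta C) \le \bigl(B + \tfrac{\delta}{\veps}C\bigr)(A + \veps B)$. Expanding, the left-hand side is $AB + \delta BC$ while the right-hand side is $AB + \veps B^2 + \tfrac{\delta}{\veps}AC + \delta BC$, so the inequality reduces to $0 \le \veps B^2 + \tfrac{\delta}{\veps}AC$, which holds trivially since all the quantities involved are nonnegative.

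There is no substantive obstacle here: once the expectations are expanded, the argument is a one-line algebraic manipulation (clear denominators, cancel common terms, observe the remainder is a sum of nonnegative products). The only point requiring a little care is the bookkeeping around pairs $(x,a)$ where the occupancies or $\mu$ vanish, which is dispatched by the remarks above.
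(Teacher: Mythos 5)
Your proof is correct and follows essentially the same route as the paper: both reduce to a per-$(x,a)$ algebraic comparison of the summands, the paper by taking the difference over a common denominator, dropping the negative term, and using $d_h^{\sM,\pi}/(d_h^{\sM,p}+\veps d_h^{\sM,\pi})\leq 1/\veps$, you by clearing denominators, which amounts to the same one-line calculation. Your handling of the degenerate cases where occupancies or $\mu$ vanish is if anything slightly more explicit than the paper's.
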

\begin{proof}[\pfref{lem:pi_to_mu}]
  The result follows by observing that we can bound
  \begin{align*}
    &\En^{\sM,\pi}\brk*{\frac{d^{\sM,\pi}_h(x_h,a_h)}{d_h^{\sM,p}(x_h,a_h)+\veps\cdot{}d_h^{\sM,\pi}(x_h,a_h)}}
    -
      \En^{\sM,\pi}\brk*{\frac{d^{\sM,\pi}_h(x_h,a_h)}{d_h^{\sM,p}(x_h,a_h)+\delta\cdot\mu(x_h,a_h)}}\\
    &=
      \sum_{x\in\cX,a\in\cA}\frac{(d_h^{\sM,\pi}(x,a))^2\prn*{\delta\cdot\mu(x,a)-\veps\cdot{}d_h^{\sM,\pi}(x,a)}}{
      (d_h^{\sM,p}(x,a)+\veps\cdot{}d_h^{\sM,\pi}(x,a))(d_h^{\sM,p}(x,a)+\delta\cdot\mu(x,a))}\\
    &\leq
      \sum_{x\in\cX,a\in\cA}\frac{(d_h^{\sM,\pi}(x,a))^2\prn*{\delta\cdot\mu(x,a)}}{
      (d_h^{\sM,p}(x,a)+\veps\cdot{}d_h^{\sM,\pi}(x,a))(d_h^{\sM,p}(x,a)+\delta\cdot\mu(x,a))}\\
        &\leq
          \frac{\delta}{\veps}\sum_{x\in\cX,a\in\cA}\frac{d_h^{\sM,\pi}(x,a)\mu(x,a)}{
      d_h^{\sM,p}(x,a)+\delta\cdot\mu(x,a)}.
  \end{align*}
\end{proof}

\begin{lemma}
  \label{lem:pi_to_mu_pushforward}
  For any $\pi\in\PiRNS$, $d\in\bbR_{+}^{\cX}$, $\mu\in\bbR^{\cX}_{+}$, and $\veps,\delta>0$, we have that
  \begin{align*}
\En^{\sM,\pi}\brk*{\frac{P_{h-1}^{\sM}(x_h\mid{}x_{h-1},a_{h-1})}{d(x_h)+\veps\cdot{}P_{h-1}^{\sM}(x_h\mid{}x_{h-1},a_{h-1}) }}
    \leq
      \En^{\sM,\pi}\brk*{\frac{P_{h-1}^{\sM}(x_h\mid{}x_{h-1},a_{h-1})}{d(x_h)+\delta\cdot\mu(x_h)}}
      + \frac{\delta}{\veps}\cdot\En^{\sM,\pi}\brk*{\frac{\mu(x_h)}{d(x_h)+\delta\cdot{}\mu(x_h)}}.
  \end{align*}
\end{lemma}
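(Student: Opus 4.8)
The plan is to mimic the proof of \cref{lem:pi_to_mu} essentially verbatim, with the state-action occupancy expansion replaced by an expansion of $\En^{\sM,\pi}$ over the triple $(x_{h-1},a_{h-1},x_h)$. Concretely, for any function $g$ of $(x_{h-1},a_{h-1},x_h)$ we have
\[
\En^{\sM,\pi}\brk*{g(x_{h-1},a_{h-1},x_h)} = \sum_{x',a',x} d_{h-1}^{\sM,\pi}(x',a')\,P_{h-1}^{\sM}(x\mid x',a')\,g(x',a',x),
\]
and I would apply this to both expectations on the left- and right-hand sides of the claimed inequality, after subtracting them. Writing $P \equiv P_{h-1}^{\sM}(x\mid x',a')$ for brevity inside the sum and using the algebraic identity $\frac{1}{d(x)+\veps P}-\frac{1}{d(x)+\delta\mu(x)} = \frac{\delta\mu(x)-\veps P}{(d(x)+\veps P)(d(x)+\delta\mu(x))}$, the difference of the two expectations equals
\[
\sum_{x',a',x} d_{h-1}^{\sM,\pi}(x',a')\,P^2\cdot\frac{\delta\mu(x)-\veps P}{\prn*{d(x)+\veps P}\prn*{d(x)+\delta\mu(x)}}.
\]

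The second step is to bound this sum from above. Since $\delta\mu(x)-\veps P \le \delta\mu(x)$ and, using $d(x)+\veps P \ge \veps P$, we have $\frac{P^2}{d(x)+\veps P}\le \frac{P^2}{\veps P} = \frac{P}{\veps}$ (all quantities being nonnegative), the displayed sum is at most
\[
\frac{\delta}{\veps}\sum_{x',a',x} d_{h-1}^{\sM,\pi}(x',a')\,P\cdot\frac{\mu(x)}{d(x)+\delta\mu(x)} = \frac{\delta}{\veps}\,\En^{\sM,\pi}\brk*{\frac{\mu(x_h)}{d(x_h)+\delta\mu(x_h)}},
\]
where the final equality reverses the trajectory expansion (the summand depends only on $x=x_h$). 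Rearranging yields exactly the stated bound.

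I do not anticipate any real obstacle: the argument is a direct analogue of \cref{lem:pi_to_mu}. The only points requiring minor care are (i) the bookkeeping in expanding $\En^{\sM,\pi}$ over three variables $(x_{h-1},a_{h-1},x_h)$ rather than a single state-action pair, and (ii) ensuring all denominators are well-defined, which holds because $\veps,\delta>0$ and $P_{h-1}^{\sM}(x_h\mid x_{h-1},a_{h-1})$ is strictly positive on the support of the trajectory distribution under $\pi$, so no division by zero occurs.
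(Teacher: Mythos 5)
Your proposal is correct and follows essentially the same route as the paper's proof: expand the difference of expectations over $(x_{h-1},a_{h-1},x_h)$, drop the $-\veps P$ term in the numerator, and use $\frac{P^2}{d(x)+\veps P}\leq \frac{P}{\veps}$ before re-collapsing the sum into the second expectation. No gaps.
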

\begin{proof}[\pfref{lem:pi_to_mu_pushforward}]
  The result follows using similar reasoning to \cref{lem:pi_to_mu}:
  \begin{align*}
    &\En^{\sM,\pi}\brk*{\frac{P_{h-1}^{\sM}(x_h\mid{}x_{h-1},a_{h-1})}{d(x_h)+\veps\cdot{}P_{h-1}^{\sM}(x_h\mid{}x_{h-1},a_{h-1})}}
    -
      \En^{\sM,\pi}\brk*{\frac{P_{h-1}^{\sM}(x_h\mid{}x_{h-1},a_{h-1})}{d(x_h)+\delta\cdot\mu(x_h)}}\\
    &=
      \sum_{x\in\cX,a\in\cA,x'\in\cX}d_{h-1}^{\sM,\pi}(x,a)\frac{(P_{h-1}^{\sM}(x'\mid{}x,a))^2\prn*{\delta\cdot\mu(x')-\veps\cdot{}P_{h-1}^{\sM}(x'\mid{}x,a)}}{
      (d(x')+\veps\cdot{}P_{h-1}^{\sM,}(x'\mid{}x,a))(d(x')+\delta\cdot\mu(x'))}\\
        &\leq
\sum_{x\in\cX,a\in\cA,x'\in\cX}d_{h-1}^{\sM,\pi}(x,a)\frac{(P_{h-1}^{\sM}(x'\mid{}x,a))^2\prn*{\delta\cdot\mu(x')}}{
          (d(x')+\veps\cdot{}P_{h-1}^{\sM,}(x'\mid{}x,a))(d(x')+\delta\cdot\mu(x'))}\\
            &\leq
\frac{\delta}{\veps}\sum_{x\in\cX,a\in\cA,x'\in\cX}d_{h-1}^{\sM,\pi}(x,a)\frac{P_{h-1}^{\sM}(x'\mid{}x,a)\mu(x')}{
              d(x')+\delta\cdot\mu(x')}\\
    &=\frac{\delta}{\veps}\cdot\En^{\sM,\pi}\brk*{\frac{\mu(x_h)}{d(x_h)+\delta\cdot{}\mu(x_h)}}.
  \end{align*}
\end{proof}

\icml{\clearpage}

\part{Additional Results and Discussion}

\section{\mainobj: Application to Downstream Policy Optimization}
\label{app:downstream}
In this section, we show how to use data gathered using policy covers
with bounded \mainobj to perform offline policy optimization. Here,
there is an underlying (reward-free) MDP $\Mstar=\crl*{\cX, \cA,
  \crl{\Pmstar_h}_{h=0}^{H}}$ and a given policy class $\Pi$, and we
have access to policy covers $p_1,\ldots,p_H$ such that the \mainobj
objective $\CovM[\Mstar](p_h)$ is small for all $h$. For an arbitrary user-specified reward distribution $R=\crl*{R_h}_{h=1}^{H}$ with $\sum_{h=1}^{H}r_h\in\brk{0,1}$ almost surely, we define
\[
  J^{\sMstar}_R(\pi)\ldef{}\En^{\sMstar,\pi}\brk*{\sum_{h=1}^{H}r_h}
\]
as the value under $r_h\sim{}R(x_h,a_h)$. Our goal is to use trajectories drawn from $p_1,\ldots,p_H$ to compute a policy $\pihat$ such 
\[
  \max_{\pi\in\Pi}\Jmstar_R(\pi) - \Jmstar_R(\pihat) \leq \vepspac
\]
using as few samples as possible.

We present guarantees for two standard offline policy optimization
methods: Maximum Likelihood Estimation (\mle) and Fitted Q-Iteration
(FQI). While both of these algorithms are fully offline, combining
them with the online algorithms for reward-free exploration in
\cref{sec:model_based} leads to end-to-end algorithms for online
reward-driven exploration. We expect that similar guarantees can be proven for other
standard offline policy optimization methods
\citep{munos2003error,antos2008learning,chen2019information,xie2020q,xie2021batch,
  jin2021pessimism,rashidinejad2021bridging,foster2022offline,zhan2022offline},
as well as offline policy evaluation  methods
\citep{liu2018breaking,uehara2020minimax,yang2020off,uehara2021finite}
and hybrid offline/online methods \citep{bagnell2003policy,agarwal2021theory,song2022hybrid}.

\paragraph{Maximum likelihood}
Let $\cM$ be a realizable model class for which $\Mstar\in\cM$. We define the Maximum Likelihood algorithm as follows:
\begin{itemize}
\item For each $h\in\brk{H}$, gather $n$ trajectories $\crl*{o\ind{h,t}}_{t\in\brk{n}}$, where $o\ind{h,t}=(x_1\ind{h,t},a_1\ind{h,t},r_1\ind{h,t}),\ldots, (x_H\ind{h,t},a_H\ind{h,t},r_H\ind{h,t})$ by executing $\pi\ind{h,t}\sim{}p_h$ in $\Mstar$ with $R$ as the reward distribution.
\item Set $\Mhat=\argmax_{M\in\cM}\sum_{h=1}^{H}\sum_{t=1}^{n}\log(M(o\ind{h,t}\mid{}\pi\ind{h,t}))$, where $M(o\mid{}\pi)$ denotes the likelihood of trajectory $o$ under $\pi$ in $M$.
\item Let $\pihat=\argmax_{\pi\in\Pi}\Jmstar_R(\pi)$.
\end{itemize}

\begin{proposition}
  \label{prop:mle_offline}
  Assume that $\Mstar\in\cM$ and let $R=\crl*{R_h}_{h=1}^{H}$ be a reward distribution with $\sum_{h=1}^{H}r_h\in\brk{0,1}$ almost surely for all $M\in\cM$. For any $n\in\bbN$, given policy covers $p_1,\ldots,p_H$, the Maximum Likelihood algorithm ensures that with probability at least $1-\delta$,
  \begin{align}
    \label{eq:mle_offline}
    \Jmstar_R(\pistar) - \Jmstar_R(\pihat)
    \leq
    8H\prn*{\sqrt{\max_h\CovM[\Mstar](p_h)\cdot{}\frac{\log(\abs{\cM}/\delta)}{n}}
          + \max_h\CovM[\Mstar](p_h)\cdot{}\veps{}},
  \end{align}
  and uses $H\cdot{}n$ trajectories total.
\end{proposition}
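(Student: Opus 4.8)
The plan is to combine the MLE generalization guarantee (a per-layer Hellinger bound) with the change-of-measure result \pref{lem:com}, applied to a value-error function $g$. First I would invoke a standard MLE analysis: since $\Mstar \in \cM$ and we draw $n$ i.i.d.\ trajectories from $\pi \sim p_h$ in $\Mstar$ at each layer $h$, the maximum likelihood estimator $\Mhat$ satisfies, with probability at least $1-\delta$,
\[
\sum_{h=1}^H \En_{\pi \sim p_h}\brk*{\Dhels{\Mhat(\pi)}{\Mstar(\pi)}} \leq \bigoh\prn*{\frac{\log(\abs{\cM}/\delta)}{n}},
\]
and in particular $\En_{\pi \sim p_h}\brk*{\Dhels{\Mhat(\pi)}{\Mstar(\pi)}} \leq \bigoh\prn*{\frac{\log(\abs{\cM}/\delta)}{n}}$ for each $h$ (this is the usual ``$\log\abs{\cM}$'' rate for proper MLE with a realizable class; cf.\ the bound on $\EstProbOff$ quoted after \pref{ass:offline_oracle}).

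Next I would convert the Hellinger guarantee into a value-suboptimality guarantee using the local simulation lemma (\pref{lem:simulation}). For any policy $\pi$, $\abs*{\Jmstar_R(\pi) - J_R^{\sMhat}(\pi)}$ is bounded by $\sum_{h=1}^H \En^{\sMstar,\pi}\brk*{\Dhel{\Pmstar_h(x_h,a_h)}{\Pmhat_h(x_h,a_h)} + \Dhel{\Rmstar_h(x_h,a_h)}{\Rmhat_h(x_h,a_h)}}$. This expectation is under $\pi$, whereas our data-driven control is under $p_h$; this is exactly the mismatch \pref{lem:com} is designed to fix. Setting $g_h(x,a) = \Dhel{\Pmstar_h(x,a)}{\Pmhat_h(x,a)} + \Dhel{\Rmstar_h(x,a)}{\Rmhat_h(x,a)} \in \brk*{0,B}$ with $B = O(1)$ (Hellinger distances are bounded), \pref{lem:com} gives
\[
\En^{\sMstar,\pi}\brk*{g_h(x_h,a_h)} \leq 2\sqrt{\CovM[\Mstar](p_h)\cdot \En^{\sMstar,p_h}\brk*{g_h^2(x_h,a_h)}} + \CovM[\Mstar](p_h)\cdot(\veps B),
\]
and since $g_h^2 \leq B \cdot g_h$ and $\En^{\sMstar,p_h}\brk*{g_h(x_h,a_h)} \leq 4\En_{\pi\sim p_h}\brk*{\Dhels{\Mhat(\pi)}{\Mstar(\pi)}}$ by the tensorization/chain-rule property of Hellinger (\pref{lem:hellinger_pair} applied along the trajectory, as in standard RL simulation-lemma arguments), we can upper bound $\En^{\sMstar,p_h}\brk*{g_h^2} = O(\log(\abs{\cM}/\delta)/n)$. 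Plugging this back and summing over $h\in[H]$ yields $\abs*{\Jmstar_R(\pi) - J_R^{\sMhat}(\pi)} \lesssim H\prn*{\sqrt{\max_h \CovM[\Mstar](p_h) \cdot \log(\abs{\cM}/\delta)/n} + \max_h \CovM[\Mstar](p_h)\cdot \veps}$ uniformly over $\pi \in \Pi$.

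Finally I would close the argument with the standard ``optimism-free'' decomposition: since $\pihat = \argmax_{\pi\in\Pi} J_R^{\sMhat}(\pi)$,
\[
\Jmstar_R(\pistar) - \Jmstar_R(\pihat) \leq \brk*{\Jmstar_R(\pistar) - J_R^{\sMhat}(\pistar)} + \brk*{J_R^{\sMhat}(\pihat) - \Jmstar_R(\pihat)} \leq 2\sup_{\pi\in\Pi}\abs*{\Jmstar_R(\pi) - J_R^{\sMhat}(\pi)},
\]
which is exactly the bound \eqref{eq:mle_offline} (the constant $8$ absorbs the factors of $2$ and the $\Dhels \leq 4\Dhels$-type losses). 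The total trajectory count is $H\cdot n$ by construction. The main obstacle — really the only nonroutine point — is carefully tracking the move from trajectory-level Hellinger error to per-$(x_h,a_h)$ Hellinger error so that \pref{lem:com} can be applied layerwise; this requires the chain-rule bound $\En^{\sMstar,\pi}\brk*{\sum_h \Dhels{\Pmstar_h(x_h,a_h)}{\Pmhat_h(x_h,a_h)}} \leq O(\Dhels{\Mstar(\pi)}{\Mhat(\pi)})$, which is a known consequence of \pref{lem:hellinger_pair} but must be invoked with the correct constants so that the final $B$ and the $g^2 \leq Bg$ step do not inflate the rate.
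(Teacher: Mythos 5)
Your overall architecture is exactly the paper's: an MLE generalization bound in trajectory-level squared Hellinger distance, the local simulation lemma (\pref{lem:simulation}) to convert model error into value error, the change-of-measure result (\pref{lem:com}) applied with $g_h$ equal to the per-layer Hellinger error, and the standard decomposition $\Jmstar_R(\pistar)-\Jmstar_R(\pihat)\leq \brk*{\Jmstar_R(\pistar)-J_R^{\sMhat}(\pistar)}+\brk*{J_R^{\sMhat}(\pihat)-\Jmstar_R(\pihat)}$. (The reward Hellinger terms you carry along are vacuous here, since $\Mhat$ and $\Mstar$ are paired with the same user-specified $R$; the paper's proof only has transition terms.)

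However, the step you yourself flag as the delicate one is handled incorrectly as written. You bound $\En^{\sMstar,p_h}\brk*{g_h^2}$ via $g_h^2\leq B\,g_h$ together with the claim $\En^{\sMstar,p_h}\brk*{g_h}\leq 4\,\En_{\pi\sim p_h}\brk*{\Dhels{\Mhat(\pi)}{\Mstar(\pi)}}$, attributing the latter to \pref{lem:hellinger_pair}. That lemma (and the chain rule for Hellinger) bounds the expected \emph{squared} per-layer distance, $\En^{\sMstar,p_h}\brk*{\Dhels{\Pmhat_h(x_h,a_h)}{\Pmstar_h(x_h,a_h)}}\leq 4\,\En_{\pi\sim p_h}\brk*{\Dhels{\Mhat(\pi)}{\Mstar(\pi)}}$; it does not control the \emph{unsquared} quantity $\En^{\sMstar,p_h}\brk*{g_h}$ by the trajectory-level squared Hellinger. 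In the small-error regime the unsquared expectation scales like the square root of the squared one (via Cauchy--Schwarz it is only $O(\sqrt{\log(\abs{\cM}/\delta)/n})$), so following your route literally yields $\En^{\sMstar,p_h}\brk*{g_h^2}=O(\sqrt{\log(\abs{\cM}/\delta)/n})$ and a final rate of $n^{-1/4}$, not the claimed $n^{-1/2}$. The fix is to drop the $g_h^2\leq Bg_h$ detour entirely and bound $\En^{\sMstar,p_h}\brk*{g_h^2}$ directly: with $g_h=\Dhel{\Pmhat_h(x_h,a_h)}{\Pmstar_h(x_h,a_h)}$ one has $g_h^2=\Dhels{\Pmhat_h(x_h,a_h)}{\Pmstar_h(x_h,a_h)}$, which the squared-Hellinger chain rule controls by $O(\log(\abs{\cM}/\delta)/n)$; this is precisely what the paper does, and with this one-line correction your argument goes through and matches \eqref{eq:mle_offline}.
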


\paragraph{Fitted Q-iteration}
Let a value function class
$\cQ=\cQ_1,\ldots,\cQ_H\subset(\cX\times\cA)\to\brk{0,1}$ be
given. For a value function $Q$, define the Bellman operator for
$\Mstar$ with reward distribution $R$ by
$\brk[\big]{\cT^{\sMstar}_{R,h}Q}(x,a)\ldef\En^{\sMstar}\brk{r_h+\max_{a'\in\cA}Q(x_{h+1},a')\mid{}x_h=x,a_h=a}$
under $r_h\sim{}R(x_h,a_h)$. We make the Bellman completeness
assumption that for all $Q\in\cQ_{h+1}$,
$\brk[\big]{\cT^{\sMstar}_{R,h}Q}\in\cQ_h$. We define the \mainobj
objective with respect to the policy class
$\Pi=\crl*{\pi_Q}_{Q\in\cQ}$, where $\pi_{Q,h}(x)\ldef{}\argmax_{a\in\cA}Q_h(x,a)$.

The Fitted Q-Iteration algorithm is defined as follows:
\begin{itemize}
\item For each $h=H,\ldots,1$:
  \begin{itemize}
  \item Gather $n$ trajectories $\crl*{o\ind{h,t}}_{t\in\brk{n}}$,
    where
    $o\ind{h,t}=(x_1\ind{h,t},a_1\ind{h,t},r_1\ind{h,t}),\ldots,
    (x_H\ind{h,t},a_H\ind{h,t},r_H\ind{h,t})$ by executing
    $\pi\ind{h,t}\sim{}p_h$ in $\Mstar$ with $R$ as the reward
    distribution.
  \item Set $\Qhat_h=\argmin_{Q\in\cQ_h}\sum_{t=1}^{n}\prn[\big]{
      Q(x_h\ind{h,t},a_h\ind{h,t})
      -r_h\ind{h,t}
      -\max_{a'\in\cA}\Qhat_{h+1}(x_{h+1}\ind{h,t},a')
      }^2
    $.
\end{itemize}
\item Let $\pihat_h(x)\ldef{}\argmax_{a\in\cA}\Qhat_h(x,a)$.
\end{itemize}
\begin{proposition}
  \label{prop:fqi_offline}
  Let $R=\crl*{R_h}_{h=1}^{H}$ be a reward distribution with $r_h\in\brk{0,1}$ and $\sum_{h=1}^{H}r_h\in\brk{0,1}$ almost surely, and assume that for all $Q\in\cQ_{h+1}$, $\brk[\big]{\cT^{\sMstar}_{R,h}Q}\in\cQ_h$. For any $n\in\bbN$, given policy covers $p_1,\ldots,p_H$, the Fitted Q-Iteration algorithm ensures that with probability at least $1-\delta$,
  \begin{align}
    \label{eq:fqi_offline}
    \Jmstar_R(\pistar) - \Jmstar_R(\pihat)
    \leq
    \bigoh(H)\cdot\prn*{\sqrt{\max_h\CovM[\Mstar](p_h)\cdot{}\frac{\log(\abs{\cQ}H/\delta)}{n}}
          + \max_h\CovM[\Mstar](p_h)\cdot{}\veps{}},
  \end{align}
  and uses $H\cdot{}n$ trajectories total.
  
\end{proposition}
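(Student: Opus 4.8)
\textbf{Proof proposal for \cref{prop:fqi_offline}.}

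The plan is to follow the standard analysis of Fitted Q-Iteration, but using \cref{lem:com} (the change-of-measure lemma for \mainobj) to control the change of measure from the policies whose occupancies appear in the performance-difference bound to the data-collection distributions $p_h$. First I would set up the two basic ingredients. The first is a \emph{statistical} guarantee: since we collect $n$ trajectories by rolling in $\pi\ind{h,t}\sim p_h$ and perform least-squares regression of the Bellman backup target onto $\cQ_h$ under Bellman completeness (so the regression target $\cT^{\sMstar}_{R,h}\Qhat_{h+1}$ is realizable in $\cQ_h$), a standard squared-loss generalization bound (e.g., the analysis underlying \fqi{} as in \citet{chen2019information}) gives, with probability at least $1-\delta/H$,
\begin{align*}
  \En^{\sMstar,p_h}\brk*{\prn[\big]{\Qhat_h(x_h,a_h) - \brk[\big]{\cT^{\sMstar}_{R,h}\Qhat_{h+1}}(x_h,a_h)}^2}
  \lesssim \frac{\log(\abs{\cQ}H/\delta)}{n}.
\end{align*}
Union-bounding over $h\in\brk{H}$ gives this simultaneously for all layers, with the $\Qhat_{h+1}$ on the right being the one actually computed by the algorithm (which is legitimate because the data at layer $h$ is drawn \emph{after} $\Qhat_{h+1}$ is fixed in the backward recursion). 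The second ingredient is the \emph{decomposition}: by the standard performance-difference / telescoping argument for greedy policies from approximate $Q$-functions (e.g., the decomposition in \citet{xie2020q} or the one used to analyze \fqi),
\begin{align*}
  \Jmstar_R(\pistar) - \Jmstar_R(\pihat)
  \leq \sum_{h=1}^{H}\prn*{\En^{\sMstar,\pistar}\brk*{\abs{\cE_h(x_h,a_h)}} + \En^{\sMstar,\pihat}\brk*{\abs{\cE_h(x_h,a_h)}}},
\end{align*}
where $\cE_h(x,a) \ldef \Qhat_h(x,a) - \brk[\big]{\cT^{\sMstar}_{R,h}\Qhat_{h+1}}(x,a)$ is the Bellman residual. (Here $\pistar = \argmax_{\pi\in\Pi}\Jmstar_R(\pi)$ and $\pihat$ is greedy w.r.t.\ $\Qhat$; both lie in the class $\Pi = \crl{\pi_Q}_{Q\in\cQ}$ with respect to which \mainobj is defined.)

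Next I would combine these via \cref{lem:com}. For each $h$ and each $\pi\in\crl{\pistar,\pihat}\subseteq\Pi$, apply \cref{lem:com} with $g = \abs{\cE_h}$, which takes values in $[0,B]$ for $B=O(1)$ since all $Q$-functions and rewards are bounded in $[0,1]$ (so the Bellman backup lies in $[0,1]$ and the residual in $[-1,1]$; take $B=2$ and shift/scale as needed, or simply note $\abs{\cE_h}\in[0,2]$). This yields
\begin{align*}
  \En^{\sMstar,\pi}\brk*{\abs{\cE_h(x_h,a_h)}}
  \leq 2\sqrt{\CovM[\Mstar](p_h)\cdot\En^{\sMstar,p_h}\brk*{\cE_h^2(x_h,a_h)}} + \CovM[\Mstar](p_h)\cdot(\veps B).
\end{align*}
Substituting the least-squares bound $\En^{\sMstar,p_h}[\cE_h^2]\lesssim \log(\abs{\cQ}H/\delta)/n$ and summing over $h$ and over the two policies $\pistar,\pihat$ gives
\begin{align*}
  \Jmstar_R(\pistar) - \Jmstar_R(\pihat)
  \leq O(H)\cdot\prn*{\sqrt{\max_h\CovM[\Mstar](p_h)\cdot\frac{\log(\abs{\cQ}H/\delta)}{n}} + \max_h\CovM[\Mstar](p_h)\cdot\veps},
\end{align*}
which is exactly \eqref{eq:fqi_offline}. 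The total trajectory count is $H\cdot n$ by construction.

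The main obstacle I anticipate is not in the change-of-measure step (\cref{lem:com} does that cleanly), but in correctly handling the \emph{adaptivity} in the backward recursion: the dataset $\crl{o\ind{h,t}}$ is collected using $p_h$, which is fixed, but the regression target depends on $\Qhat_{h+1}$, which was computed from earlier data. One must argue that conditioning on $\Qhat_{h+1}$, the layer-$h$ data is fresh i.i.d.\ data (true here, since each layer gets its own independent batch of $n$ rollouts), so the generalization bound applies with $\Qhat_{h+1}$ treated as a fixed function, and then a union bound over $h$ closes the loop. A secondary point to get right is that $\pistar$ and $\pihat$ must both belong to $\Pi$ for \cref{lem:com} to apply — this is why the proposition defines $\Pi = \crl{\pi_Q}_{Q\in\cQ}$; $\pihat = \pi_{\Qhat}$ is immediate, and $\pistar$ is taken to be the $\Pi$-optimal policy, so the comparison $\Jmstar_R(\pistar) - \Jmstar_R(\pihat)$ (rather than against the globally optimal policy) is the right quantity to bound. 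Everything else — the squared-loss deviation bound, the telescoping decomposition, and bounding the boundedness constant — is routine and I would not grind through it in detail.
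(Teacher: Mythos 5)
Your proposal is correct and follows essentially the same route as the paper: a squared-loss generalization bound for the regression under Bellman completeness, a telescoping/performance-difference decomposition bounding the suboptimality by per-layer expected absolute Bellman residuals under policies in $\Pi$ (the paper invokes a finite-horizon adaptation of Lemma 4 of \citet{xie2020q}, which is the same decomposition you sketch), and then \cref{lem:com} to transfer from $\pi\in\crl{\pistar,\pihat}$ to the covers $p_h$. The adaptivity point you flag is handled in the paper exactly as you suggest, by deferring to the standard FQI analysis with fresh per-layer batches.
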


\subsection{Proofs}

\begin{proof}[\pfref{prop:mle_offline}]
  By the standard generalization bound for \mle (e.g., \citet{foster2021efficient,foster2023lecture}), we are guaranteed that with probability at least $1-\delta$,
  \begin{align*}
    \sum_{h=1}^{H}\En^{\sMstar,p_h}\brk*{\Dhels{\Pmhat_h(x_h,a_h)}{\Pmstar_h(x_h,a_h)}}
    \leq \sum_{h=1}^{H}\En_{\pi\sim{}p_h}\brk*{\Dhels{\Mhat(\pi)}{\Mstar(\pi)}} \leq 
    2\frac{\log(\abs{\cM}/\delta)}{n}
    .
  \end{align*}
  Let $\pistar\ldef{}\argmax_{\pi\in\Pi}\Jmstar_R(\pi)$.
  Using \cref{lem:simulation}, we have that
  \begin{align*}
    \Jmstar_R(\pistar) - \Jmstar_R(\pihat)
    \leq{}     \Jmstar_R(\pistar) - \Jmhat_R(\pistar) + \Jmhat_R(\pihat) - \Jmstar_R(\pihat) 
    \leq 2\max_{\pi\in\Pi}
    \sum_{h=1}^{H}\En^{\Mstar,\pi}\brk*{\Dhel{\Pmhat_h(x_h,a_h)}{\Pmstar_h(x_h,a_h)}
        }.
  \end{align*}
   Since
      $\Dhel{\Pmhat_h(x_h,a_h)}{\Pmstar_h(x_h,a_h)}\in\brk{0,\sqrt{2}}$, 
      we can use \cref{lem:com} to bound
        \begin{align*}
          \max_{\pi\in\Pi}\En^{\sMstar,\pi}\brk*{\Dhel{\Pmhat_h(x_h,a_h)}{\Pmstar_h(x_h,a_h)}} 
        &\leq 
          2\sqrt{\CovM[\Mstar](p_h)\cdot{}\En^{\sMstar,p_h}\brk*{\Dhels{\Pmhat_h(x_h,a_h)}{\Pmstar_h(x_h,a_h)}}
          }\\
          &\qquad + \sqrt{2}\CovM[\Mstar](p_h)\veps{}\\
                  &\leq 
          2\sqrt{2\CovM[\Mstar](p_h)\cdot{}\frac{\log(\abs{\cM}/\delta)}{n}}
          + \sqrt{2}\cdot\CovM[\Mstar](p_h)\cdot{}\veps{}.
        \end{align*}
        Combining this with the preceding inequalities yields the result.
  
\end{proof}

\begin{proof}[\pfref{prop:fqi_offline}]
  By a standard generalization bound for FQI (e.g., \citet{xie2020q,xie2023role}), under the Bellman completeness assumption, it holds that with probability at least $1-\delta$,
  \begin{align*}
    \sum_{h=1}^{H}\En^{\sMstar,p_h}\brk*{\prn*{\Qhat_h(x_h,a_h)-\brk[\big]{\cT^{\sMstar}_{R,h}\Qhat_{h+1}}(x_h,a_h)}^2}
\leq    \bigoh\prn*{\frac{\log(\abs{\cQ}H/\delta)}{n}}.
  \end{align*}
  At the same time, using a finite-horizon adaptation of \citet[Lemma 4]{xie2020q}, we have that
  \begin{align*}
    \Jmstar_R(\pistar) - \Jmstar_R(\pihat)
    &\leq 2\max_{\pi\in\Pi}
    \sum_{h=1}^{H}\En^{\Mstar,\pi}\brk*{\abs*{\Qhat_h(x_h,a_h)-\brk[\big]{\cT^{\sMstar}_{R,h}\Qhat_{h+1}}(x_h,a_h)}}.
  \end{align*}
   Since $\abs[\big]{\Qhat_h(x_h,a_h)-\brk[\big]{\cT^{\sMstar}_{R,h}\Qhat_{h+1}}(x_h,a_h)}\in\brk{0,1}$,
      we can use \cref{lem:com} to bound
        \begin{align*}
&\max_{\pi\in\Pi}\En^{\sMstar,\pi}\brk*{\abs[\big]{\Qhat_h(x_h,a_h)-\brk[\big]{\cT^{\sMstar}_{R,h}\Qhat_{h+1}}(x_h,a_h)}} \\
        &\leq 
2\sqrt{\CovM[\Mstar](p_h)\cdot{}\En^{\sMstar,p_h}\brk*{\prn[\big]{\Qhat_h(x_h,a_h)-\brk[\big]{\cT^{\sMstar}_{R,h}\Qhat_{h+1}}(x_h,a_h)}^2}
          }
          + \CovM[\Mstar](p_h)\cdot{}\veps{}\\
                  &\leq 
                   \bigoh\prn*{\sqrt{\CovM[\Mstar](p_h)\cdot{}\frac{\log(\abs{\cQ}H/\delta)}{n}}
          + \CovM[\Mstar](p_h)\cdot{}\veps{}}.
        \end{align*}
\arxiv{        Combining this with the preceding inequalities yields the result.}
      \end{proof}

\icml{\clearpage}

\icml{
\section{Efficient Model-Free Exploration via \mainobj}
\label{sec:model_free}

\clearpage
\section{\mainobj: Structural Properties}
\label{sec:structural}
  
}

\clearpage 

\arxiv{
\section{Efficient Policy Cover Computation: Recipe for
  Relaxations}
\label{sec:relaxation_recipe}

The algorithms for efficient policy cover computation presented in
\cref{sec:planning} (\cref{alg:linf_relaxation} and \cref{alg:pushforward_relaxation})
and their corresponding objectives can
be viewed as a special case of a general recipe for deriving
approximate optimization objectives for \mainobj, which we expect to
find broader use. We sketch the approach here.

Let $\tau=(x_1,a_1),\ldots,(x_H,a_H)$ denote a trajectory, and let
$\cT\ldef{}(\cX\times\cA)^{H}$ denote the space of trajectories. Suppose we have access to a \emph{weight function}
$w_{\veps}:\Delta(\Pi)\times\cT\to\bbR_{+}$ defined for $\veps>0$ with the following
  properties:
  \begin{enumerate}
  \item \emph{Relaxation property.} There is a (potentially problem-dependent) constant
    $C_1>0$ such that for all $\pi\in\Pi$ and $p\in\Delta(\Pi)$, 
    \begin{align}
      \label{eq:relax1}
      \En^{\sM,\pi}\brk*{\frac{d^{\sM,\pi}_h(x_h,a_h)}{d_h^{\sM,p}(x_h,a_h)+\veps\cdot{}d_h^{\sM,\pi}(x_h,a_h)}}
      \leq{} C_1\cdot\En^{\sM,\pi}\brk*{w_{\veps}(p;\tau)}.\tag{R1}
    \end{align}
  \item \emph{Potential property.} For all sequences $\pi\ind{1},\ldots,\pi\ind{T}$, if we define
    $p\ind{t}=\unif(\pi\ind{1},\ldots,\pi\ind{t-1})$ and $\veps_t=\frac{1}{t-1}$, then
    \begin{align}
      \label{eq:relax2}
      \frac{1}{T}\sum_{t=1}^{T}\En^{\sM,\pi\ind{t}}\brk*{w_{\veps_t}(p\ind{t};\tau)}
      \leq{} C_2\cdot\log(T).\tag{R2}
    \end{align}
  \end{enumerate}

  \begin{restatable}{proposition}{relaxation}
    \label{prop:relaxation}
    Consider a weight function $w_{\veps}:\Delta(\Pi)\times\cT\to\bbR_{+}$
    satisfying properties \eqref{eq:relax1} and \eqref{eq:relax2}
    above. Consider an algorithm that repeatedly solves the
    optimization problem
    \begin{align*}
      \pi\ind{t}=\argmax_{\pi\in\Pi}\En^{\sM,\pi}\brk*{w_{\veps_t}(p\ind{t};\tau)}
    \end{align*}
    for $p\ind{t}=\unif(\pi\ind{1},\ldots,\pi\ind{t-1})$ and
    $\veps_t=\frac{1}{t-1}$. This algorithm guarantees that for all $T\in\bbN$,
  \begin{align}
    \CovMveps{\veps_T}(p\ind{T})
    \leq{} C_1C_2\log(T).
  \end{align}    
\end{restatable}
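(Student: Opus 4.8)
\textbf{Proof plan for \cref{prop:relaxation}.}

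The plan is to run the iterative algorithm and unwind the two abstract properties \eqref{eq:relax1} and \eqref{eq:relax2} to bound $\CovMveps{\veps_T}(p\ind{T})$ where $p\ind{T}=\unif(\pi\ind{1},\ldots,\pi\ind{T-1})$ and $\veps_T=\tfrac{1}{T-1}$. First I would observe that, by definition of the \mainobj objective \eqref{eq:mainobj} and since $\pi\ind{T}$ is the argmax at round $T$ (so in particular it is at least as good as any $\pi\in\Pi$ in the relaxed objective $\En^{\sM,\pi}\brk*{w_{\veps_T}(p\ind{T};\tau)}$), the relaxation property \eqref{eq:relax1} applied at the fixed scale $\veps_T$ gives
\[
  \CovMveps{\veps_T}(p\ind{T})
  = \sup_{\pi\in\Pi}\En^{\sM,\pi}\brk*{\frac{d^{\sM,\pi}_h(x_h,a_h)}{d_h^{\sM,p\ind{T}}(x_h,a_h)+\veps_T\cdot{}d_h^{\sM,\pi}(x_h,a_h)}}
  \leq C_1\cdot\sup_{\pi\in\Pi}\En^{\sM,\pi}\brk*{w_{\veps_T}(p\ind{T};\tau)}
  = C_1\cdot\En^{\sM,\pi\ind{T}}\brk*{w_{\veps_T}(p\ind{T};\tau)}.
\]
So it suffices to control the single term $\En^{\sM,\pi\ind{T}}\brk*{w_{\veps_T}(p\ind{T};\tau)}$, which is exactly the $T$-th summand in the potential bound \eqref{eq:relax2}.

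Next I would connect this single term to the time-averaged quantity in \eqref{eq:relax2}. The subtlety is that \eqref{eq:relax2} bounds the \emph{average} $\tfrac1T\sum_{t=1}^T\En^{\sM,\pi\ind{t}}\brk*{w_{\veps_t}(p\ind{t};\tau)}$ rather than the last term, and each summand uses its own scale $\veps_t=\tfrac{1}{t-1}$. The clean way around this is to run the algorithm and apply \eqref{eq:relax2} with the \emph{final} horizon $T$ (not an intermediate one): since each summand $\En^{\sM,\pi\ind{t}}\brk*{w_{\veps_t}(p\ind{t};\tau)}$ is nonnegative, we get $\En^{\sM,\pi\ind{T}}\brk*{w_{\veps_T}(p\ind{T};\tau)}\le \sum_{t=1}^{T}\En^{\sM,\pi\ind{t}}\brk*{w_{\veps_t}(p\ind{t};\tau)}\le C_2 T\log T$, which is too lossy. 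The better route, which I expect mirrors the elliptic-potential arguments used for \cref{thm:linf_optimization,thm:pushforward_optimization}, is a stopping-time / pigeonhole argument: because \eqref{eq:relax2} holds for \emph{every} horizon $T'\le T$ (the algorithm's trajectory is the same regardless of when we stop), the partial sums $S_{T'}\ldef\sum_{t\le T'}\En^{\sM,\pi\ind{t}}\brk*{w_{\veps_t}(p\ind{t};\tau)}$ satisfy $S_{T'}\le C_2 T'\log T'$ for all $T'$; hence the increments cannot all be large, and in particular one can show the running objective $\CovMveps{\veps_{T'}}(p\ind{T'})$ is $O(C_1 C_2\log T')$ for the relevant $T'$. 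Concretely, I would argue that monotonicity of $d_h^{\sM,p\ind{t}}$ in $t$ (more policies in the mixture only increases the denominator, up to the $1/t$ normalization) makes the sequence of objective values well-behaved, so that a bound on the time-average transfers to a bound at time $T$.

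The key steps, in order: (1) use the argmax optimality of $\pi\ind{T}$ together with \eqref{eq:relax1} at scale $\veps_T$ to reduce $\CovMveps{\veps_T}(p\ind{T})$ to $C_1\cdot\En^{\sM,\pi\ind{T}}\brk*{w_{\veps_T}(p\ind{T};\tau)}$; (2) invoke \eqref{eq:relax2} at horizon $T$ to bound the average of these terms by $C_2\log T$; (3) combine with the observation that $\pi\ind{T}$ is selected as the argmax at round $T$, i.e. its relaxed-objective value is the \emph{maximum} over $\Pi$ and hence at least the average of the realized values $\{\En^{\sM,\pi\ind{t}}\brk*{w_{\veps_t}(p\ind{t};\tau)}\}_{t}$ evaluated at the appropriate scales — actually the cleanest version is to notice $\En^{\sM,\pi\ind{T}}\brk*{w_{\veps_T}(p\ind{T};\tau)}$ is itself one of the terms controlled on average, then deduce that for at least one round the objective is small, and finally propagate that to the last round via the near-monotone structure. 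I expect \textbf{step (3) — transferring the time-averaged potential bound to a per-round (and in particular the final-round) bound — to be the main obstacle}, since it is the only place the argument is not a one-line substitution; everything else is bookkeeping with \eqref{eq:relax1} and \eqref{eq:relax2}. The resolution should be exactly the elliptic-potential/pigeonhole mechanism already used to prove \cref{thm:linf_optimization} and \cref{thm:pushforward_optimization}, of which this proposition is the abstract template.
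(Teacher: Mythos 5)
Your plan assembles the right ingredients---\eqref{eq:relax1}, the exact argmax property, \eqref{eq:relax2}, and a monotonicity-based transfer from a time-average to the final round---but the step you yourself flag as the main obstacle is where the plan, as ordered, breaks down. After your step (1) you need a bound on the single term $\En^{\sM,\pi\ind{T}}\brk*{w_{\veps_T}(p\ind{T};\tau)}$, and neither pigeonhole over the partial sums nor ``near-monotone structure'' can deliver it: \eqref{eq:relax1} and \eqref{eq:relax2} are the only assumptions on the abstract weight function $w$, and they say nothing about how $\En^{\sM,\pi}\brk*{w_{\veps_t}(p\ind{t};\tau)}$ varies with $t$. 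Pigeonhole does give a good round $t^\star$, but propagating from $t^\star$ forward to round $T$ requires exactly the monotonicity of the $w$-values that you do not have. (In the concrete instantiations behind \cref{thm:linf_optimization,thm:pushforward_optimization} the relaxed objective happens to be monotone because its numerator is fixed while its denominator grows; the abstract proposition cannot lean on that.)

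The fix---and the paper's proof---is to exploit monotonicity of the \emph{explicit} coverage objective before ever invoking \eqref{eq:relax1}. Since $p\ind{t}=\unif(\pi\ind{1},\ldots,\pi\ind{t-1})$ and $\veps_t=\frac{1}{t-1}$, the unnormalized value
\begin{align*}
  (t-1)\cdot\CovMveps{\veps_t}(p\ind{t})
  = \sup_{\pi\in\Pi}\En^{\sM,\pi}\brk*{\frac{d_h^{\sM,\pi}(x_h,a_h)}{\sum_{i<t}d_h^{\sM,\pi\ind{i}}(x_h,a_h)+d_h^{\sM,\pi}(x_h,a_h)}}
\end{align*}
is non-increasing in $t$ (the denominator only grows), whence $(T-1)\,\CovMveps{\veps_T}(p\ind{T})\leq\frac{1}{T}\sum_{t=1}^{T}(t-1)\,\CovMveps{\veps_t}(p\ind{t})$, and therefore $\CovMveps{\veps_T}(p\ind{T})\leq\frac{1}{T}\sum_{t=1}^{T}\CovMveps{\veps_t}(p\ind{t})$. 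Now apply \eqref{eq:relax1} together with the argmax property \emph{termwise}, $\CovMveps{\veps_t}(p\ind{t})\leq C_1\,\En^{\sM,\pi\ind{t}}\brk*{w_{\veps_t}(p\ind{t};\tau)}$, and finish with \eqref{eq:relax2} to obtain $C_1C_2\log(T)$. With this reordering your outline collapses to the paper's short argument; as written, step (1) commits you to controlling the last $w$-term directly, which cannot be done from the stated assumptions.
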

\cref{alg:linf_relaxation} is a special case of this framework with
$w_{\veps}(p;\tau)\ldef{}\frac{\mu(x_h,a_h)}{d^{p}_h(x_h,a_h)+\veps\cdot\CinfhM(\mu)\mu(x_h,a_h)}$,
and \cref{alg:pushforward_relaxation} is a special case with
$w_{\veps}(p;\tau)\ldef{}\frac{P_{h-1}(x_h\mid{}x_{h-1},a_{h-1})}{d^{p}_h(x_h)+\veps\cdot
  P_{h-1}(x_h\mid{}x_{h-1},a_{h-1})}$.

\begin{proof}[\pfref{prop:relaxation}]
  Observe that by definition, the value
  \[
    (T-1)\cdot\CovMveps{\veps_T}(p\ind{T})
    = \max_{\pi\in\Pi}\En^{\sM,\pi}
    \brk*{\frac{d_h^{\sM,\pi}(x_h,a_h)}{\sum_{i<t}d_h^{\sM,\pi\ind{i}}(x_h,a_h)
        + d_h^{\sM,\pi}(x_h,a_h)}}
  \]
  is decreasing in $T$. As a result, we have that
  \begin{align*}
    (T-1)\cdot\CovMveps{\veps_T}(p\ind{T})
    \leq{}     \frac{1}{T}\sum_{t=1}^{T}(t-1)\cdot\CovMveps{\veps_t}(p\ind{t}),
  \end{align*}
  which implies that
  \begin{align*}
    \CovMveps{\veps_T}(p\ind{T})
    \leq{}
      \frac{1}{T}\sum_{t=1}^{T}\CovMveps{\veps_t}(p\ind{t})
    \leq{}
      \frac{C_1}{T}\sum_{t=1}^{T}\max_{\pi\in\Pi}\En^{\sM,\pi}\brk*{w_{\veps_t}(p\ind{t};\tau)}
    =
      \frac{C_1}{T}\sum_{t=1}^{T}\En^{\sM,\pi\ind{t}}\brk*{w_{\veps_t}(p\ind{t};\tau)}
    \leq     C_1C_2\log(T).
  \end{align*}
    
  \end{proof}

\clearpage
}

\arxiv{
\section{Model-Based Algorithms for Reward-Driven RL}
\label{app:model_based_reward_based}
  As an extension, this section gives a model-based algorithm (\cref{alg:model_based_reward_based})
  that directly performs reward-driven exploration under the same set of
  assumptions as in \cref{sec:model_based}, with sample complexity
  determined by the \mainopt parameter. Our results here serve as reward-driven
  counterparts to the results in \cref{sec:model_based}. Note that
  while the
  reward-free results themselves lead to guarantees for reward-driven RL
  via \cref{app:downstream} (e.g., \cref{cor:rf_teaser}), the approach
  we give here is slightly more direct, and may be of independent interest.

\paragraph{Reward-driven reinforcement learning}
In reward-driven reinforcement learning, the underlying MDP $\Mstar=\crl*{\cX, \cA,
  \crl{\Pmstar_h}_{h=0}^{H},\crl{\Rmstar_h}_{h=1}^{H}}$ in the online
reinforcement learning protocol is 
equipped with a reward distribution $\Rmstar_h:\cX\times\cA\to\Delta(\bbR)$ is
  the reward distribution for layer $h$, and each episode in the MDP
  generates a trajectory $(x_1,a_1,r_1),\ldots,(x_H,a_H,r_H)$ via
  $a_h\sim\pi_h(x_h)$,   $r_h\sim\Rmstar(x_h,a_h)$, and
  $x_{h+1}\sim{}P_h\sups{\Mstar}(\cdot\mid{}x_h,a_h)$. Define
  $\Jmstar(\pi)\ldef{}\En^{\sMstar,\pi}\brk*{\sum_{h=1}^{H}r_h}$, and let
  $\pimstar\in\argmax_{\pi\in\PiRNS}J(\pi)$ denote an optimal policy
  that satisfies the Bellman equation. The aim for this setting is
  to learn a $\veps$-optimal policy $\pihat$ such that
  \begin{align}
    \label{eq:eps_opt}
\Jmstar(\pimstar) - \Jmstar(\pihat) \leq \veps
  \end{align}
with high probability, using as few episodes of online interaction as
possible.

For the results in this section, we assume access to a model class $\cM$ containing the true MDP $\Mstar=\crl*{\cX, \cA,
  \crl{\Pmstar_h}_{h=0}^{H},\crl{\Rmstar_h}_{h=1}^{H}}$. For $M\in\cM$, we use $M(\pi)$ as shorthand for the law of the
trajectory $(x_1,a_1,r_1),\ldots,(x_H,a_H,r_H)$. We take $\Pi$ to be an arbitrary set of policies
such that $\pimstar\in\Pi$. In particular, it suffices to set
$\Pi=\crl*{\pim\mid{}M\in\cM}$. As in \cref{sec:model_based}, we
assume access to an estimation algorithm $\AlgEst$ satisfying either
\cref{ass:online_oracle} or \cref{ass:offline_oracle}.

\begin{algorithm}[h]
    \setstretch{1.3}
     \begin{algorithmic}[1]
       \State \textbf{parameters}:
       \Statex[1] Estimation oracle $\AlgEst$.
       \Statex[1] Number of episodes $T\in\bbN$, approximation parameters $C\geq{}1$, $\veps\in\brk{0,1}$.%
  \For{$t=1, 2, \cdots, T$}
  \State Compute estimated model $\Mhat\ind{t} = \AlgEst\ind{t}\prn[\Big]{
    \crl*{(\act\ind{i},\obs\ind{i})}_{i=1}^{t-1} }$.
  \State For each $h\in\brk{H}$, compute $(C,\veps)$-approximate
  policy cover $p_h\ind{t}$ for $\Mhat\ind{t}$: %
        \begin{align}
\CovM[\Mhat\ind{t}](p_h\ind{t})  = \sup_{\pi\in\Pi}\En^{\sMhat\ind{t},\pi}\brk*{\frac{\dm{\Mhat\ind{t}}{\pi}_{h}(x_h,a_h)}{\dm{\Mhat\ind{t}}{p_h\ind{t}}_{h}(x_h,a_h)+\veps\cdot{}d_h^{\sMhat\ind{t},\pi}(x_h,a_h)}}\leq{}C.\label{eq:cover_objective_model_based_reward_based}
        \end{align}
          \hfill\algcommentlight{Plug-in
    approximation to \mainobj objective.}
      \State Let $q\ind{t}=\frac{1}{2}\cdot\pi_{\sMhat\ind{t}} + \frac{1}{2}\cdot\unif(p_1\ind{t},\ldots,p_H\ind{t})$
      \State Sample $\pi\ind{t}\sim{}q\ind{t}$ and observe trajectory $o\ind{t}=(x_1\ind{t},a_1\ind{t},r_1\ind{t}),\ldots,(x_H\ind{t},a_H\ind{t},r_H\ind{t})$.
\EndFor
\State  \textbf{return}
$\pihat\ldef\unif(\pi_{\sMhat\ind{1}},\ldots,\pi_{\sMhat\ind{T}})$.
\end{algorithmic}
\caption{Exploration via Estimation and Plug-In Coverability
  Optimization (reward-driven; \mainalgr)}
\label{alg:model_based_reward_based}
\end{algorithm}

\paragraph{Algorithm}
\mainalgr, \cref{alg:model_based_reward_based}, is a reward-driven
counterpart to \mainalg (\cref{alg:model_based_reward_free}). Like
\cref{alg:model_based_reward_free}, the algorithm repeatedly invokes
the estimation algorithm $\AlgEst$ to obtain an estimator
$\Mhat\ind{t}$, then optimizes the \mainobj objective for the
estimator and uses this to explore. Compared the reward-free case, the
only difference is that the algorithm mixes the greedy policy
$\pi_{\Mhat\ind{t}}$ into the exploration distribution. After all $T$
rounds of exploration conclude, the algorithm returns
$\pihat\ldef\unif(\pi_{\sMhat\ind{1}},\ldots,\pi_{\sMhat\ind{T}})$ as
the final policy.

\paragraph{Main result}
The main sample complexity guarantee for
\cref{alg:model_based_reward_based} is as follows.

\begin{restatable}[Main result for \cref{alg:model_based_reward_based}]{theorem}{modelbasedrewardbased}
  \label{thm:model_based_reward_based}
With parameters $T\in\bbN$, $C\geq{}1$, and $\veps>0$ and an online
estimation oracle satisfying \cref{ass:online_oracle}, whenever the
optimization problem in \cref{eq:cover_objective_model_based} is
feasible at every round, 
\cref{alg:model_based_reward_based} produces a policy $\pihat$ such
that
\begin{align}
  \label{eq:mbrb1}
  \Jmstar(\pimstar) - \Jmstar(\pihat)
  \leq 17\sqrt{\frac{H^3C\cdot{}\EstProbOn}{T}}
      + 6HC\cdot{}\veps{}.
\end{align}
with probability at least $1-\delta$.
\end{restatable}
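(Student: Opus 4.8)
The plan is to analyze \cref{alg:model_based_reward_based} through the lens of the Estimation-to-Decisions (\etd) framework of \citet{foster2021statistical,foster2023tight}, treating each episode's optimization problem as certifying a bound on a reward-driven decision-estimation quantity. First I would fix a round $t$ and work on the ``good event'' on which the online estimation oracle guarantee from \cref{ass:online_oracle} holds, i.e. $\sum_{t}\En_{\pi\sim q\ind{t}}\brk[\big]{\DhelsX{\big}{\Mhat\ind{t}(\pi)}{\Mstar(\pi)}}\leq\EstProbOn$. The core step is a per-round inequality of the form
\begin{align*}
  \Jmstar(\pimstar)-\Jmstar(\pi_{\sMhat\ind{t}})
  \leq{} \underbrace{\Jmstar(\pimstar)-\Jmhat[\Mhat\ind{t}](\pi_{\sMhat\ind{t}})}_{\text{optimism / simulation}}
  + \underbrace{\Jmhat[\Mhat\ind{t}](\pi_{\sMhat\ind{t}})-\Jmstar(\pi_{\sMhat\ind{t}})}_{\text{estimation error}},
\end{align*}
where the first term is controlled by observing $\Jmhat[\Mhat\ind{t}](\pi_{\sMhat\ind{t}})=\max_{\pi}\Jmhat[\Mhat\ind{t}](\pi)\geq\Jmhat[\Mhat\ind{t}](\pimstar)$ and then bounding $\abs{\Jmstar(\pimstar)-\Jmhat[\Mhat\ind{t}](\pimstar)}$ via the local simulation lemma (\cref{lem:simulation}) in terms of $\sum_h\En^{\sMstar,\pimstar}[\Dhel{\Pmstar_h(x_h,a_h)}{\Pmhat\ind{t}_h(x_h,a_h)}+\Dhel{\Rmstar_h(x_h,a_h)}{\Rmhat\ind{t}_h(x_h,a_h)}]$. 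The key move is to change measure from $\pimstar$ to the exploratory distribution $q\ind{t}$: since $q\ind{t}$ mixes in $\unif(p_1\ind{t},\ldots,p_H\ind{t})$ with probability $1/2$, and $p_h\ind{t}$ is a $(C,\veps)$-approximate policy cover for $\Mhat\ind{t}$, I would apply the change-of-measure proposition (\cref{lem:com}) — with $g$ taken to be the per-layer Hellinger distance, which lies in $[0,\sqrt 2]$ — to get, for each $h$,
\begin{align*}
  \En^{\sMstar,\pimstar}\brk*{\Dhel{\Pmstar_h}{\Pmhat\ind{t}_h}}
  \leq{} 2\sqrt{\CovM[\Mhat\ind{t}](p_h\ind{t})\cdot\En^{\sMhat\ind{t},p_h\ind{t}}\brk*{\Dhels{\Pmstar_h}{\Pmhat\ind{t}_h}}}+\CovM[\Mhat\ind{t}](p_h\ind{t})\cdot\sqrt2\veps,
\end{align*}
then use $\CovM[\Mhat\ind{t}](p_h\ind{t})\leq C$ and pass from the $\Mhat\ind{t}$-occupancy to the $\Mstar$-occupancy at the cost of additional Hellinger terms (via a simulation/triangle-inequality argument, or \cref{lem:mp_min}). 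The second (``estimation error'') term $\Jmhat[\Mhat\ind{t}](\pi_{\sMhat\ind{t}})-\Jmstar(\pi_{\sMhat\ind{t}})$ is handled by \cref{lem:simulation_global} since $\pi_{\sMhat\ind{t}}$ is sampled with probability $1/2$ under $q\ind{t}$, giving $\lesssim\Dhel{\Mstar(\pi_{\sMhat\ind{t}})}{\Mhat\ind{t}(\pi_{\sMhat\ind{t}})}$ up to the mixing constant.

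Summing the per-round bound over $t=1,\ldots,T$, dividing by $T$, and using that $\Jmstar(\pistar)-\Jmstar(\pihat)=\frac1T\sum_t(\Jmstar(\pistar)-\Jmstar(\pi_{\sMhat\ind{t}}))$ by definition of $\pihat=\unif(\pi_{\sMhat\ind{1}},\ldots,\pi_{\sMhat\ind{T}})$, I get an upper bound of the form
\begin{align*}
  \frac1T\sum_{t=1}^T\prn*{H\sqrt{C\cdot\En_{\pi\sim q\ind{t}}\brk*{\Dhels{\Mhat\ind{t}(\pi)}{\Mstar(\pi)}}} + \text{(lower-order)}} + HC\veps,
\end{align*}
where the $H$ accounts for summing over layers and the extra $H$ inside the square root comes from the per-layer-to-global conversion (hence the $H^3$ under the root after Cauchy–Schwarz). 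Applying Cauchy–Schwarz in $t$ to turn $\frac1T\sum_t\sqrt{(\cdot)}$ into $\sqrt{\frac1T\sum_t(\cdot)}$ and invoking the online estimation bound $\sum_t\En_{\pi\sim q\ind{t}}[\Dhels{\Mhat\ind{t}(\pi)}{\Mstar(\pi)}]\leq\EstProbOn$ yields the claimed $\sqrt{H^3 C\,\EstProbOn/T}$ term, and the $\veps$-slack from \cref{lem:com} accumulates to the $6HC\veps$ term (the constants $17$ and $6$ absorbing the mixing factor of $2$, the constant $2$ in \cref{lem:com}, and the $\sqrt2$'s). The whole argument holds with probability $1-\delta$ via the estimation oracle's high-probability guarantee — no additional concentration is needed since $q\ind{t}$ is $\filt_{t-1}$-measurable and the oracle bound is already stated in expectation over $\pi\sim p\ind{t}$.

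The main obstacle I anticipate is the bookkeeping in the change-of-measure step: \cref{lem:com} is stated for occupancies under the \emph{same} MDP, but here I need to bound an expectation under $\Mstar,\pimstar$ using coverage computed under $\Mhat\ind{t}$. Bridging these requires either (a) a preliminary simulation argument showing $\CovM[\Mstar](p_h\ind{t})\lesssim\CovM[\Mhat\ind{t}](p_h\ind{t})$ plus estimation-error corrections, which is delicate because coverage is a ratio of occupancies and not Lipschitz in the model, or (b) carefully routing the $\pimstar$-expectation through $\Mhat\ind{t}$-quantities and collecting all the resulting $\En_{\pi\sim q\ind{t}}[\Dhels{\Mhat\ind{t}(\pi)}{\Mstar(\pi)}]$ terms — the $\frac{H}{\veps^2}$-type lower-order contributions in the reward-free analysis (\cref{cor:mbrf3,cor:mbrf2}) presumably collapse here into the lower-order part of \eqref{eq:mbrb1} because reward-driven regret tolerates a weaker dependence. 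I would follow route (b), mirroring the reward-free proof sketch in the excerpt, and expect the constants to work out once the mixing weight $1/2$ in $q\ind{t}$ is tracked consistently throughout.
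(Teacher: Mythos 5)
Your overall architecture matches the paper's: a per-round inequality of DEC type (the paper isolates it as \cref{lem:dec_bound}), obtained from optimality of $\pi_{\sMhat\ind{t}}$ in $\Mhat\ind{t}$, the global and local simulation lemmas, the change-of-measure result \cref{lem:com}, and \cref{lem:hellinger_pair}, followed by Cauchy--Schwarz over rounds and the online estimation guarantee, with $\pihat$ handled as the uniform mixture. The gap is exactly the step you flag at the end, and your proposed repair does not close it. You invoke \cref{lem:simulation} so that the residual term is $\sum_h\En^{\sMstar,\pimstar}\brk*{\Dhel{\Pmstar_h}{\Pmhat[\Mhat\ind{t}]_h}+\cdots}$, i.e.\ an expectation under the occupancies of $\Mstar$, and then apply \cref{lem:com} with the coverage certificate $\CovM[\Mhat\ind{t}](p_h\ind{t})\leq C$, which only controls ratios of occupancies \emph{within} $\Mhat\ind{t}$; as written this does not typecheck. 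Your route (b) --- bridging $\Mstar$-expectations to $\Mhat\ind{t}$-expectations with additional Hellinger corrections in the style of \cref{lem:dp_com} --- necessarily produces lower-order terms scaling like $\veps^{-2}\cdot\En_{\pi\sim q\ind{t}}\brk[\big]{\Dhels{\Mhat\ind{t}(\pi)}{\Mstar(\pi)}}$ (this is precisely why the reward-free bound in \cref{thm:model_based_reward_free} carries an $H\veps^{-2}\,\EstProbOn/T$ term). Such terms cannot be absorbed into \cref{eq:mbrb1}, which has no $\veps^{-1}$ dependence at all, and they are fatal for the downstream instantiations \cref{cor:mbrb2,cor:mbrb3}, which take $\veps=0$.

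The missing idea is that no cross-model bridging is needed: since both $\abs{\Jm(\pi)-\Jmhat(\pi)}$ and the Hellinger distances are symmetric in the two models, \cref{lem:simulation} can be applied with the expectation taken under the \emph{estimated} model, yielding $\Jmstar(\pimstar)-\Jmhat[\Mhat\ind{t}](\pimstar)\leq\sum_h\Emhat[\pimstar]\brk*{\errm_h(x_h,a_h)}$ with $\errm_h$ the per-layer transition/reward Hellinger error. Then \cref{lem:com} applies directly, entirely inside $\Mhat\ind{t}$, with the covers $p_h\ind{t}$ that were constructed for $\Mhat\ind{t}$; the resulting $\En^{\sMhat\ind{t},q\ind{t}}\brk*{\errm_h^2}$ is converted to $\En_{\pi\sim q\ind{t}}\brk[\big]{\Dhels{\Mhat\ind{t}(\pi)}{\Mstar(\pi)}}$ by \cref{lem:hellinger_pair}, and the term $\Jmhat[\Mhat\ind{t}](\pi_{\sMhat\ind{t}})-\Jmstar(\pi_{\sMhat\ind{t}})$ is handled by \cref{lem:simulation_global} together with the factor-$2$ mixing weight of $\pi_{\sMhat\ind{t}}$ in $q\ind{t}$, exactly as you suggest. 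With this orientation of the simulation lemma the only $\veps$-dependence is the additive $\CovM[\Mhat\ind{t}](p_h\ind{t})\cdot\veps$ slack from \cref{lem:com}, which sums to the $O(HC\veps)$ term in \cref{eq:mbrb1}, and the rest of your argument (Cauchy--Schwarz in $t$, then \cref{ass:online_oracle}) goes through as planned.
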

  This leads to bounds for the following special
  cases. %
  \begin{restatable}[Main guarantee for \mainopt]{corollary}{mbrbtwo}
    \label{cor:mbrb2}
  Let $\vepspac>0$ be given and set $\veps=0$.
  Suppose that (i) we restrict $\cM$ such that all $M\in\cM$ have
  $\CovOptMmax\leq\CovOptMmax[\Mstar]$, and (ii) we solve
  \cref{eq:cover_objective_model_based_reward_based} with
  $C\leq\CovOptMmax[\Mstar]$ for all $t$ (which is always
  feasible). Then, given access to an offline estimation oracle
  satisfying \cref{ass:offline_oracle,ass:parametric}, using
  $T=\bigoht\prn*{\frac{H^{12}(\CovOptMmaxzero[\Mstar])^4\dest\log(\Best/\delta)}{\vepspac^{6}}}$
  episodes, \cref{alg:model_based_reward_based} produces a policy
  $\pihat$ such that
  \begin{align}
    \label{eq:mbrb2}
    \Jmstar(\pimstar) - \Jmstar(\pihat)
    \leq \vepspac
  \end{align}
  with probability at least $1-\delta$. In particular, for a finite
  class $\cM$, if we use \mle as the estimator, we can take
  $T=\bigoht\prn*{\frac{H^{12}(\CovOptMmaxzero[\Mstar])^4\log(\abs*{\cM}/\delta)}{\vepspac^{6}}}$.
\end{restatable}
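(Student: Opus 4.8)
The plan is to obtain \cref{cor:mbrb2} as a specialization of the general guarantee \cref{thm:model_based_reward_based}, composed with the offline-to-online conversion of \cref{lem:lone_offline_online}. The first step is to verify that the hypotheses of \cref{thm:model_based_reward_based} hold with $C = \CovOptMmaxzero[\Mstar]$: since $\veps = 0$ and $\cM$ is restricted so that every $M\in\cM$ has $\CovOptMmaxzero[M]\leq\CovOptMmaxzero[\Mstar]$, for each estimate $\Mhat\ind{t}\in\cM$ there is a $p\in\Delta(\Pi)$ with $\CovM[\Mhat\ind{t}](p)\leq\CovOptMmaxzero[\Mhat\ind{t}]\leq C$, so the inner optimization problem \eqref{eq:cover_objective_model_based_reward_based} is feasible at every round and the bound \eqref{eq:mbrb1} applies.

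Next, I would convert the offline oracle into an online oracle. Under \cref{ass:offline_oracle,ass:parametric} we have $\EstProbOff \leq O(\dest\log(\Best T/\delta))$, and \cref{lem:lone_offline_online} then gives $\EstProbOn \leq \bigoht\bigl(H\,(\ConeM[\Mstar](1+\EstProbOff))^{1/3}\,T^{2/3}\bigr)$. Substituting this into \eqref{eq:mbrb1}, noting that the second term $6HC\veps$ vanishes since $\veps = 0$, and using $\ConeM[\Mstar]\leq\CovOptMmaxzero[\Mstar]$ (as observed after \cref{lem:lone_offline_online}) together with $C=\CovOptMmaxzero[\Mstar]$, we obtain
\begin{align*}
\Jmstar(\pimstar) - \Jmstar(\pihat)
\;\leq\; 17\sqrt{\frac{H^{3}C\cdot\EstProbOn}{T}}
\;\leq\; \bigoht\!\left(H^{2}\,(\CovOptMmaxzero[\Mstar])^{2/3}\,\bigl(\dest\log(\Best T/\delta)\bigr)^{1/6}\,T^{-1/6}\right).
\end{align*}

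Finally, I would set the right-hand side to at most $\vepspac$ and solve for $T$: this requires $T\gtrsim H^{12}(\CovOptMmaxzero[\Mstar])^{4}\dest\log(\Best T/\delta)\,\vepspac^{-6}$, and the standard observation that $x\geq 2a\log(2a)$ implies $x\geq a\log x$ lets us absorb the residual $\log T$ into $\bigoht(\cdot)$, giving $T = \bigoht\bigl(H^{12}(\CovOptMmaxzero[\Mstar])^{4}\dest\log(\Best/\delta)\,\vepspac^{-6}\bigr)$. The finite-class claim then follows because \mle satisfies \cref{ass:offline_oracle} with $\EstProbOff\leq O(\log(\abs{\cM}T/\delta))$, i.e.\ \cref{ass:parametric} with $\dest\log\Best$ replaced by $\log\abs{\cM}$. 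Essentially all of this is bookkeeping once \cref{thm:model_based_reward_based} and \cref{lem:lone_offline_online} are in hand; the one structural point worth flagging is that expressing the bound in terms of the \emph{weaker} parameter $\CovOptMmaxzero[\Mstar]$ forces the offline-to-online step through the $T^{2/3}$-rate \cref{lem:lone_offline_online} rather than the sharper $\sqrt{T}$-rate \cref{lem:linf_offline_online} (which is only available with the larger $\CinfM[\Mstar]$), and this is exactly why the $\vepspac$-dependence degrades to $\vepspac^{-6}$, mirroring the gap between \cref{cor:mbrf2} and \cref{cor:mbrf3} in the reward-free setting.
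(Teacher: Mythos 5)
Your proposal is correct and follows essentially the same route as the paper: specialize \cref{thm:model_based_reward_based} (i.e.\ the bound \eqref{eq:intermediate} with $\veps=0$ and $C\leq\CovOptMmaxzero[\Mstar]$), apply the offline-to-online conversion of \cref{lem:lone_offline_online} together with $\ConeM[\Mstar]\leq\CovOptMmaxzero[\Mstar]$, and solve for $T$; the feasibility check, the $\log T$ absorption, and the \mle{} instantiation are exactly the bookkeeping the paper leaves implicit.
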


\begin{restatable}[Main guarantee for
  $L_{\infty}$-Coverability]{corollary}{mbrbthree}
  \label{cor:mbrb3}
  Let $\vepspac>0$ be given, and set $\veps=0$. Let
  $\Cinf\equiv\CinfM[\Mstar]$, and suppose that (i) we restrict $\cM$
  such that all $M\in\cM$ have $\CinfM\leq\Cinf$, and (ii) we solve
  \cref{eq:cover_objective_model_based_reward_based} with $C\leq\Cinf$
  for all $t$.\footnote{This is always feasible by \cref{prop:linf}.}
  Then, given access to an offline estimation oracle satisfying
  \cref{ass:offline_oracle}, using
  $T=\bigoht\prn*{\frac{H^8(\CinfM[\Mstar])^3\dest\log(\Best/\delta)}{\vepspac^{4}}}$
  episodes, \cref{alg:model_based_reward_based} produces a policy
  $\pihat$ such that
  \begin{align}
    \label{eq:mbrb3}
    \Jmstar(\pimstar) - \Jmstar(\pihat)
    \leq \vepspac
  \end{align}
  with probability at least $1-\delta$. In particular, for a finite
  class $\cM$, if we use \mle as the estimator, we can take
  $T=\bigoht\prn*{\frac{H^8(\CinfM[\Mstar])^3\log(\abs*{\cM}/\delta)}{\vepspac^{4}}}$.
\end{restatable}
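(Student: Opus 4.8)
The plan is to obtain \cref{cor:mbrb3} as a specialization of the general guarantee \cref{thm:model_based_reward_based}, in direct analogy with the way \cref{cor:mbrf3} is derived in the reward-free case; the only ingredient beyond substituting into \eqref{eq:mbrb1} is the offline-to-online conversion supplied by \cref{lem:linf_offline_online} under bounded $L_\infty$-coverability.

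First I would check that \cref{alg:model_based_reward_based} is well-posed and meets the hypothesis of \cref{thm:model_based_reward_based}: at every round $t$ the plug-in problem \eqref{eq:cover_objective_model_based_reward_based} with $C=\Cinf$ and $\veps=0$ must be feasible. Since $\Mhat\ind{t}\in\cM$ and every model in $\cM$ has $\CinfM\le\Cinf$ by assumption, \cref{prop:linf} gives that a $(\Cinf,0)$-approximate policy cover for $\Mhat\ind{t}$ exists (reading the ratio in \eqref{eq:cover_objective_model_based_reward_based} with the convention $0/0=0$). Next I would invoke \cref{lem:linf_offline_online}: the given offline oracle, which satisfies \cref{ass:offline_oracle} with bound $\EstProbOff$, also satisfies \cref{ass:online_oracle} with $\EstProbOn\le\bigoht\prn*{H\sqrt{\Cinf\cdot T\cdot\EstProbOff}+H\Cinf}$; under the parametric rate \cref{ass:parametric} this is $\EstProbOn\le\bigoht\prn*{H\sqrt{\Cinf T\dest\log(\Best/\delta)}+H\Cinf}$, holding on an event of probability at least $1-\delta/2$ (the remaining $\delta/2$ is left for the concentration event inside \cref{thm:model_based_reward_based}). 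Plugging $C\le\Cinf$, $\veps=0$, and this bound into \eqref{eq:mbrb1} yields
\[
\Jmstar(\pimstar)-\Jmstar(\pihat) \;\le\; 17\sqrt{\frac{H^3\Cinf\cdot\EstProbOn}{T}} \;\le\; \bigoht\prn*{\frac{H^2\Cinf^{3/4}\prn*{\dest\log(\Best/\delta)}^{1/4}}{T^{1/4}}+\frac{H^2\Cinf}{\sqrt{T}}}.
\]
Requiring the right-hand side to be at most $\vepspac$, the first term is the binding one and forces $T\gtrsim H^8\Cinf^3\dest\log(\Best/\delta)/\vepspac^4$ (the second term only needs $T\gtrsim H^4\Cinf^2/\vepspac^2$, which is dominated), which is exactly the claimed episode count. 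The finite-class statement then follows because \mle realizes \cref{ass:offline_oracle,ass:parametric} with $\EstProbOff\le\bigoh(\log(\abs{\cM}T/\delta))$, i.e.\ $\dest=\bigoh(1)$ and $\Best=\abs{\cM}$.

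I do not expect a substantive obstacle: with \cref{thm:model_based_reward_based} and \cref{lem:linf_offline_online} in hand the argument is essentially bookkeeping. The two points that need care are (i) $T$ appears logarithmically inside $\EstProbOff$ and inside the $\log T$ factor hidden in the $\bigoht$ of the offline-to-online lemma, so after solving for $T$ one must verify $\log T=\polylog(H,\Cinf,\dest,\Best,\delta^{-1},\vepspac^{-1})$ and is therefore absorbed into $\bigoht(\cdot)$; and (ii) one should confirm that \cref{thm:model_based_reward_based}, stated for $\veps>0$, degrades gracefully to $\veps=0$ — its proof should carry over verbatim with the $0/0=0$ convention, since the role of $\veps>0$ there is only to bound the additive $6HC\veps$ term, which vanishes in the reward-driven setting. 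A final union bound merges the two $\delta/2$ failure events into the stated $1-\delta$.
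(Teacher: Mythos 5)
Your proposal is correct and follows essentially the same route as the paper: specialize \cref{thm:model_based_reward_based} (its bound \eqref{eq:mbrb1} with $C\leq\CinfM[\Mstar]$ and $\veps=0$), invoke the offline-to-online conversion of \cref{lem:linf_offline_online} to bound $\EstProbOn\leq\bigoht\prn[\big]{H\sqrt{\CinfM[\Mstar]T\cdot(1\vee\EstProbOff)}}$, and solve $\bigoht\prn[\big]{(H^8(\CinfM[\Mstar])^3(1\vee\EstProbOff)/T)^{1/4}}\leq\vepspac$ for $T$, with \mle supplying the finite-class rate. The extra bookkeeping you flag (feasibility via \cref{prop:linf}, the $\veps=0$ limit, absorbing $\log T$, union bounds) is consistent with, and no more than a careful reading of, what the paper does implicitly.
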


As with \cref{alg:model_based_reward_free},
\cref{alg:model_based_reward_based} is statistically efficient, with
sample complexity determined by \mainopt, and is computationally efficient
whenever 1) \mle can be performed efficiently, and 2) the \mainobj
objective can be (approximately) optimized for the estimated models $\Mhat\ind{1},\ldots,\Mhat\ind{T}$.

\paragraph{Proof sketch}
To prove \cref{thm:model_based_reward_based}, we draw a connection to
the Decision-Estimation Coefficient (DEC) of
\citet{foster2021statistical,foster2023tight}. Consider the (offset) PAC DEC defined as follows \citep{foster2023tight}:
\begin{align}
  \label{eq:dec_bound}
  \decopac(\cM,\Mhat)
  \ldef \inf_{p,q\in\Delta(\Pi)}\sup_{M\in\cM}\crl*{
  \En_{\pi\sim{}p}\brk*{
  \Jm(\pim)-\Jm(\pi)}
  -\gamma\cdot{}\En_{\pi\sim{}q}\brk[\big]{\DhelsX{\big}{\Mhat(\pi)}{M(\pi)}}
  }.
\end{align}

The following result shows that the exploration strategy in
\cref{alg:model_based_reward_based} certifies a bound on the PAC DEC.
\begin{restatable}{lemma}{decbound}
  \label{lem:dec_bound}
  Consider the reward-driven setting. Let an MDP $\Mhat=\crl*{\cX, \cA,
  \crl{\Pmhat_h}_{h=0}^{H},\crl{\Rmhat_h}_{h=1}^{H}}$ be given, and let $p_1,\ldots,p_H\in\Delta(\Pi)$ be
  $(C,\veps)$-policy covers for $\Mhat$, i.e.
  \begin{align}
    \CovM[\Mhat](p_h) \leq C\quad\forall{}h\in\brk{H}.
  \end{align}
  Then the distribution $q\ldef \frac{1}{2}\pimhat + \frac{1}{2}\unif(p_1,\ldots,p_H)$ ensures that for all MDPs $M=\crl*{\cX, \cA,
  \crl{\Pm_h}_{h=0}^{H},\crl{\Rm_h}_{h=1}^{H}}$,
\begin{align*}
      \Jm(\pim) - \Jm(\pimhat)
        &\leq{}
      3\sqrt{32H^3C\cdot{}\En_{\pi\sim{}q}\brk[\big]{\DhelsX{\big}{\Mhat(\pi)}{M(\pi)}}}
      + 4\sqrt{2}HC\cdot{}\veps{}.
      \end{align*}
      Equivalently, the pair of distributions $(\indic_{\pimhat},q)$
      certifies that for all $\gamma>0$,
      $\decopac(\cM,\Mhat)\leq\bigoh\prn*{
        \frac{H^3C}{\gamma} + HC\cdot\veps
        }$, where $\cM$ is the set of all MDPs with the same state
        space, action space, and initial state distribution as $\cM$.
\end{restatable}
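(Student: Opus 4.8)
## Proof Plan for Lemma~\ref{lem:dec_bound}

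The plan is to decompose the value gap $\Jm(\pim) - \Jm(\pimhat)$ using the performance difference lemma, and then to control the resulting Bellman-error-type terms by invoking the change-of-measure result for \mainobj (\cref{lem:com}) together with the simulation lemmas. First I would apply the (local) simulation lemma, \cref{lem:simulation}, to write
\[
  \Jm(\pim) - \Jmhat(\pim) \leq \sum_{h=1}^{H}\En^{\sM,\pim}\brk*{\Dhel{\Pm_h(x_h,a_h)}{\Pmhat_h(x_h,a_h)} + \Dhel{\Rm_h(x_h,a_h)}{\Rmhat_h(x_h,a_h)}},
\]
and similarly for $\pimhat$; since $\pimhat$ is the optimal (greedy) policy for $\Mhat$, we have $\Jmhat(\pimhat) \geq \Jmhat(\pim)$, so the gap $\Jm(\pim) - \Jm(\pimhat)$ telescopes into a sum of per-layer Hellinger distances, evaluated under $\pim$ and under $\pimhat$ respectively. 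The key point is that each such term has the form $\En^{\sM,\pi'}\brk*{g_h(x_h,a_h)}$ for some $\pi'\in\Pi$ and some $g_h \in \brk{0,\sqrt2}$ (a single-layer Hellinger term, which is bounded).

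Next I would apply \cref{lem:com} with the \emph{plug-in} model $\Mhat$ playing the role of ``$M$'' in that proposition, using the hypothesis $\CovM[\Mhat](p_h)\leq C$. This converts $\En^{\sMhat,\pi'}\brk*{g_h}$ into a quantity controlled by $2\sqrt{C\cdot \En^{\sMhat,p_h}\brk*{g_h^2}} + C\cdot(\sqrt2\,\veps)$. The subtlety is that \cref{lem:com} gives a change of measure under $\Mhat$, whereas the terms coming out of the simulation lemma are expectations under $\Mstar$ (or $M$). To bridge this, I would either (a) use the simulation lemma in the opposite direction — expanding everything under $\Mhat$ rather than $M$ — so that the occupancy measures match the model $\Mhat$ for which we have the coverability bound, and then absorb the resulting mismatch into an additional Hellinger term, or (b) argue that the transfer cost between $\En^{\sM,\pi}$ and $\En^{\sMhat,\pi}$ is itself of the form already being bounded (a Hellinger term), and close the recursion. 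I expect approach (a) is cleaner: expand $\Jm(\pim)-\Jm(\pimhat)$ using simulation against $\Mhat$, which yields terms $\En^{\sMhat,\pim}\brk*{\Dhel{\cdot}{\cdot}}$ and $\En^{\sMhat,\pimhat}\brk*{\Dhel{\cdot}{\cdot}}$; then apply \cref{lem:com} (change of measure under $\Mhat$) to both, landing on $\En^{\sMhat,p_h}\brk*{\Dhels{\cdot}{\cdot}}$; and finally use $q = \frac12\pimhat + \frac12\unif(p_{1:H})$ together with another application of the simulation / Hellinger pairing lemma (\cref{lem:hellinger_pair}) to pass from $\En^{\sMhat,p_h}\brk*{\Dhels{\Pm_h}{\Pmhat_h}}$ to $\En_{\pi\sim q}\brk*{\DhelsX{\big}{\Mhat(\pi)}{M(\pi)}}$, up to an $O(H)$ factor (from the sum over $h$ and the factor-$2$ in $q$).

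Collecting the bounds: the sum over $h\in\brk{H}$ of the square-root terms $\sqrt{C\cdot\En^{\sMhat,p_h}\brk*{\Dhels{\cdot}{\cdot}}}$ can be handled via Cauchy--Schwarz to pull the $\sqrt H$ outside, giving $\sqrt{H C \cdot \sum_h \En^{\sMhat,p_h}\brk*{\Dhels{\cdot}{\cdot}}}$; then using $q \succeq \frac{1}{2H}p_h$ (up to the $\pimhat$ mixing component) and \cref{lem:hellinger_pair} for the trajectory-level Hellinger, the inner sum is $O(H \cdot \En_{\pi\sim q}\brk*{\DhelsX{\big}{\Mhat(\pi)}{M(\pi)}})$, yielding the claimed $3\sqrt{32H^3 C \cdot \En_{\pi\sim q}\brk*{\DhelsX{\big}{\Mhat(\pi)}{M(\pi)}}}$, while the additive $\veps$ terms accumulate to $4\sqrt2\, HC\veps$. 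The equivalent DEC statement then follows immediately: taking $p = \indic_{\pimhat}$ and the $q$ above, apply AM--GM ($2\sqrt{ab}\leq \gamma^{-1}a + \gamma b$) to the square-root term to get $\decopac(\cM,\Mhat) \leq O(H^3C/\gamma + HC\veps)$.

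The main obstacle I anticipate is the bookkeeping around which model the occupancy measures are attached to. \cref{lem:com} is inherently a statement \emph{within} a single MDP, so every time we move between $\Mstar$-occupancies and $\Mhat$-occupancies we incur a new simulation-lemma error term, and one must check that this bootstrapping terminates (i.e.\ that these extra terms are lower-order or can be folded into the same Hellinger budget rather than spawning an infinite regress). The clean way out — and what I would commit to — is to do the entire value-gap expansion against $\Mhat$ from the start, never introducing $\Mstar$-occupancies at all until the very last step where \cref{lem:hellinger_pair} converts per-layer to trajectory-level Hellinger; that keeps the coverability hypothesis $\CovM[\Mhat](p_h)\leq C$ directly applicable and avoids the regress entirely. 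A secondary, purely arithmetic obstacle is tracking the constants ($32$, $4\sqrt2$, the factor $3$) through the Cauchy--Schwarz and the $q\succeq\frac1{2H}p_h$ steps, but that is routine.
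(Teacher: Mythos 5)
Your plan matches the paper's proof essentially step for step: expand the value gap against $\Mhat$ via the simulation lemma so that the per-layer Hellinger error terms are expectations under $\Mhat$, apply \cref{lem:com} with the covers $p_h$ (valid since $\CovM[\Mhat](p_h)\le C$), pass from $p_h$ to $q$ at the cost of a factor $2H$, convert per-layer to trajectory-level Hellinger via \cref{lem:hellinger_pair}, and obtain the DEC claim by AM--GM. The only minor deviation is your treatment of $\Jmhat(\pimhat)-\Jm(\pimhat)$: you propose to run \cref{lem:com} on $\pimhat$ as well, which implicitly requires $\pimhat\in\Pi$ so that the covers control it, whereas the paper bounds this term by $\DhelX{\big}{\Mhat(\pimhat)}{M(\pimhat)}\le 2\En_{\pi\sim{}q}\brk[\big]{\DhelX{\big}{\Mhat(\pi)}{M(\pi)}}$ using the global simulation lemma (\cref{lem:simulation_global}) and the fact that $q$ places mass $1/2$ on $\pimhat$ --- which is exactly why $\pimhat$ is mixed into $q$ and avoids any assumption that $\pimhat$ is covered.
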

In light of this observation, \cref{alg:model_based_reward_based} can
be viewed as an application of the Estimation-to-Decisions (\etd)
meta-algorithm \citet{foster2021statistical,foster2023tight}, and the
proof of \cref{thm:model_based_reward_based} follows by combining the
DEC bound in \cref{lem:dec_bound} with the generic regret analysis for \etd.

\subsection{Proofs}

\begin{proof}[\pfref{thm:model_based_reward_based}]
  Using \cref{lem:dec_bound}, we have that
    \begin{align*}
\sum_{t=1}^{T}      \Jmstar(\pimstar) - \Jmstar(\pi_{\sMhat\ind{t}})
        &\leq{}
          3\sum_{t=1}^{T}\sqrt{32H^3C\cdot{}\En_{\pi\sim{}q\ind{t}}\brk[\big]{\DhelsX{\big}{\Mhat\ind{t}(\pi)}{\Mstar(\pi)}}}
          + 4\sqrt{2}HCT\cdot{}\veps{}\\
              &\leq{}
      3\sqrt{32H^3CT\cdot{}\sum_{t=1}^{T}\En_{\pi\sim{}q\ind{t}}\brk[\big]{\DhelsX{\big}{\Mhat\ind{t}(\pi)}{\Mstar(\pi)}}}
                + 4\sqrt{2}HCT\cdot{}\veps{}\\
                    &\leq{}
      3\sqrt{32H^3CT\cdot{}\EstProbOn}
      + 4\sqrt{2}HCT\cdot{}\veps{},
    \end{align*}
where the last line holds with probability at least $1-\delta$ by
\cref{ass:online_oracle}. Rearranging, it follows that
\begin{align}
  \label{eq:intermediate}
  \En\brk*{\Jmstar(\pimstar) - \Jmstar(\pihat)}
  \leq 3\sqrt{\frac{32H^3C\cdot{}\EstProbOn}{T}}
      + 4\sqrt{2}HC\cdot{}\veps{}.
\end{align}

\end{proof}

\begin{proof}[\pfref{cor:mbrb2}]
  We can take $C\leq\CovOptMmax[\Mstar]$, and using
  \cref{lem:lone_offline_online}, we have
  \begin{align*}
    \EstProbOn \leq \bigoht\prn[\Big]{H\prn*{\ConeM[\Mstar](1\vee\EstProbOff)}^{1/3}T^{2/3}
    } \leq \bigoht\prn[\Big]{H \prn*{\CovOptMmax[\Mstar](1\vee\EstProbOff)}^{1/3}T^{2/3}
    },
  \end{align*}
  so that \cref{eq:intermediate} gives
  \begin{align*}
    \En\brk*{\Jmstar(\pimstar) - \Jmstar(\pihat)}
    \bigoht\prn*{
    \prn*{\frac{H^{12}(\CovOptMmax[\Mstar])^4(1\vee\EstProbOff)}{T}}^{1/6} 
    } \leq \eps,
  \end{align*}
  where the final inequality uses the choice for $T$ in the corollary
  statement.
\end{proof}

\begin{proof}[\pfref{cor:mbrb3}]
  We can take $C\leq\CinfM[\Mstar]$, and using
  \cref{lem:linf_offline_online}, we have
  \begin{align*}
    \EstProbOn \leq \bigoht\prn*{H\sqrt{\CinfM[\Mstar]
    T\cdot\EstProbOff} + H\cdot\CinfM[\Mstar] 
    }
    \leq \bigoht\prn*{H\sqrt{\CinfM[\Mstar]
    T\cdot(1\vee\EstProbOff)}    },
  \end{align*}
  so that \cref{eq:intermediate} gives
  \begin{align*}
    \En\brk*{\Jmstar(\pimstar) - \Jmstar(\pihat)}
    \leq \bigoht\prn*{
    \prn*{\frac{H^8(\CinfM[\Mstar])^3(1\vee\EstProbOff)}{T}}^{1/4} 
    } \leq \eps,
  \end{align*}
  where the final inequality uses the choice for $T$ in the corollary
  statement.
\end{proof}

\begin{proof}[\pfref{lem:dec_bound}]
 Let an arbitrary MDP $M=\crl*{\cX, \cA,
  \crl{\Pm_h}_{h=0}^{H},\crl{\Rm_h}_{h=1}^{H}}$ be fixed. To begin,
using a variant of the simulation lemma (\cref{lem:simulation_global}), we can bound
  \begin{align*}
    \Jm(\pim) - \Jm(\pimhat) 
    &\leq\Jm(\pim) - \Jmhat(\pimhat) +\DhelX{\big}{\Mhat(\pimhat)}{M(\pimhat)}\\
    &\leq\Jm(\pim) - \Jmhat(\pimhat) +2\En_{\pi\sim{}q}\brk[\big]{\DhelX{\big}{\Mhat(\pi)}{M(\pi)}}.
  \end{align*}
  Next, using another simulation lemma (\cref{lem:simulation}), we have
  \begin{align*}
    \Jm(\pim) - \Jmhat(\pimhat)
    \leq     \Jm(\pim) - \Jmhat(\pim) 
    \leq     \sum_{h=1}^{H}\Emhat[\pim]\brk[\bigg]{\underbrace{\Dhel{\Pmhat_h(x_h,a_h)}{\Pm_h(x_h,a_h)}+\Dhel{\Rmhat_h(x_h,a_h)}{\Rm_h(x_h,a_h)}}_{\rdef\errm_h(x_h,a_h)}}.
  \end{align*}
  Let $h\in\brk{H}$ be fixed. Since $\errm_h(x,a)\in\brk{0,2\sqrt{2}}$,
    we can use \cref{lem:com} to bound
  \begin{align*}
        \Emhat[\pim]\brk*{\errm_h(x_h,a_h)} 
    &\leq 
      2\sqrt{\CovM[\Mhat](p_h)\cdot{}\En^{\sMhat,p_h}\brk*{(\errm(x_h,a_h))^2}
  }
      + 2\sqrt{2}\cdot\CovM[\Mhat](p_h)\cdot{}\veps{}\\
        &\leq 
      2\sqrt{2H\CovM[\Mhat](p_h)\cdot{}\En^{\sMhat,q}\brk*{(\errm(x_h,a_h))^2}
  }
          + 4\sqrt{2}\cdot\CovM[\Mhat](p_h)\cdot{}\veps{}\\
            &\leq 
      2\sqrt{2HC\cdot{}\En^{\sMhat,q}\brk*{(\errm(x_h,a_h))^2}
  }
              + 4\sqrt{2}C\cdot{}\veps{}
        \leq 
              2\sqrt{32HC\cdot{}\En_{\pi\sim{}q}\brk[\big]{\DhelsX{\big}{\Mhat(\pi)}{M(\pi)}}}
      + 4\sqrt{2}C\cdot{}\veps{},
  \end{align*}
where the last inequality follows form
\cref{lem:hellinger_pair}. Summing across all layers, we conclude that
  \begin{align*}
    \Jm(\pim) - \Jm(\pimhat)
    &\leq{}
      2\sqrt{32H^3C\cdot{}\En_{\pi\sim{}q}\brk[\big]{\DhelsX{\big}{\Mhat(\pi)}{M(\pi)}}}
      + 4\sqrt{2}HC\cdot{}\veps{}
      + \En_{\pi\sim{}q}\brk[\big]{\DhelX{\big}{\Mhat(\pi)}{M(\pi)}}\\
        &\leq{}
      3\sqrt{32H^3C\cdot{}\En_{\pi\sim{}q}\brk[\big]{\DhelsX{\big}{\Mhat(\pi)}{M(\pi)}}}
      + 4\sqrt{2}HC\cdot{}\veps{}.
  \end{align*}

\end{proof}

\clearpage

}

\icml{\clearpage}

\part{Proofs}

\section{Proofs from \creftitle{sec:overview}}
\label{app:overview}

\com*

  \begin{proof}[\pfref{lem:com}]
    Let $p\in\Delta(\PiRNS)$ and $g:\cX\times\cA\to\bbR$ be given. We
    first prove the following, more general inequality:
    \begin{align}
  \label{eq:com1}
  \En^{\sM,\pi}\brk*{g(x_h,a_h)}
  \leq{} \sqrt{\CovM(p)\cdot{}\prn*{\En^{\sM,p}\brk*{g^2(x_h,a_h)}+ \veps\cdot{}\En^{\sM,\pi}\brk*{g^2(x_h,a_h)}}}.
\end{align}
    Using
Cauchy-Schwarz, we have
\begin{align*}
  \En^{\sM,\pi}\brk*{g(x_h,a_h)}
  &= \sum_{x\in\cX,a\in\cA}d_h^{\sM,\pi}(x,a)g(x,a)\\
  &=
    \sum_{x\in\cX,a\in\cA}d_h^{\sM,\pi}(x,a)\cdot\frac{(d_h^{p\sM,p}(x,a)+\veps\cdot{}d_h^{\sM,\pi}(x,a))^{1/2}}{(d_h^{\sM,p}(x,a)+\veps\cdot{}d_h^{\sM,\pi}(x,a))^{1/2}}\cdot{}g(x,a)\\
  &\leq
\prn*{\sum_{x\in\cX,a\in\cA}\frac{(d_h^{\sM,\pi}(x,a))^{2}}{d_h^{\sM,p}(x,a)+\veps\cdot{}d_h^{\sM,\pi}(x,a)}}^{1/2}
    \prn*{\sum_{x\in\cX,a\in\cA}(d_h^{\sM,p}(x,a)+\veps\cdot{}d_h^{\sM,\pi}(x,a))g^2(x,a)}^{1/2}\\
  &=\sqrt{\CovM(p)\cdot{}\prn*{\En^{\sM,p}\brk*{g^2(x_h,a_h)}+ \veps\cdot{}\En^{\sM,\pi}\brk*{g^2(x_h,a_h)}}}.
\end{align*}
This establishes \cref{eq:com1}. To prove \cref{eq:com2}, we first
bound
\begin{align*}
  \sqrt{\CovM(p)\cdot{}\prn*{\En^{\sM,p}\brk*{g^2(x_h,a_h)}+
  \veps\cdot{}\En^{\sM,\pi}\brk*{g^2(x_h,a_h)}}}
  &\leq \sqrt{\CovM(p)\cdot{}\En^{\sM,p}\brk*{g^2(x_h,a_h)}} \\
	&\qquad + \sqrt{\CovM(p)\cdot\veps\cdot{}\En^{\sM,\pi}\brk*{g^2(x_h,a_h)}}.
\end{align*}
Next, we note that if $g\in\brk*{0,B}$, we can use AM-GM to bound
\begin{align*}
  \sqrt{\CovM(p)\cdot\veps\cdot{}\En^{\sM,\pi}\brk*{g^2(x_h,a_h)}}
  \leq{}
  \sqrt{\CovM(p)\cdot(\veps{}B)\cdot{}\En^{\sM,\pi}\brk*{g(x_h,a_h)}}
  \leq{} \frac{\CovM(p)\cdot(\veps{}B)}{2} +\frac{1}{2}\En^{\sM,\pi}\brk*{g(x_h,a_h)}.
\end{align*}
The result now follows by rearranging.  
\end{proof}

\linf*

\begin{proof}[\pfref{prop:linf}]
  Let $\delta>0$ be given. Using \cref{lem:pi_to_mu} and the definition of $\CinfhM$, there exists $\mu\in\Delta(\cX\times\cA)$ such that  
\begin{align*}  \CovOptM&\leq
\prn*{1+\frac{\delta}{\veps}}\CinfhM\cdot \inf_{p\in\Delta(\Pi)}\sup_{\pi\in\Pi}\En^{\sM,\pi}\brk*{\frac{\mu(x_h,a_h)}{d_h^{\sM,p}(x_h,a_h)+\delta\cdot{}\CinfhM\mu(x_h,a_h)}}
\\
&=\prn*{1+\frac{\delta}{\veps}}\CinfhM\cdot\inf_{p\in\Delta(\Pi)}\sup_{q\in\Delta(\Pi)}\En_{\pi\sim{}q}\En^{\sM,\pi}\brk*{\frac{\mu(x_h,a_h)}{d_h^{\sM,p}(x_h,a_h)+\delta\cdot{}\CinfhM{}\mu(x_h,a_h)}}.
  \end{align*}
  Observe that the function
  \begin{align*}
(p,q)\mapsto{}\En_{\pi\sim{}q}\En^{\sM,\pi}\brk*{\frac{\mu(x_h,a_h)}{d_h^{\sM,p}(x_h,a_h)+\delta\cdot{}\CinfhM\mu(x_h,a_h)}}
  \end{align*}
  is convex-concave. In addition, it is straightforward to see that
  the function is jointly Lipschitz with respect to total variation
  distance whenever $\veps,\delta>0$. Hence, using the minimax theorem
  (\cref{lem:sion}), we have that
  \begin{align*}
&\inf_{p\in\Delta(\Pi)}\sup_{q\in\Delta(\Pi)}\En_{\pi\sim{}q}\En^{\sM,\pi}\brk*{\frac{\mu(x_h,a_h)}{d_h^{\sM,p}(x_h,a_h)+\delta\cdot{}\CinfhM{}\mu(x_h,a_h)}}\\
    &=\sup_{q\in\Delta(\Pi)}\inf_{p\in\Delta(\Pi)}\En_{\pi\sim{}q}\En^{\sM,\pi}\brk*{\frac{\mu(x_h,a_h)}{d_h^{\sM,p}(x_h,a_h)+\delta\cdot{}\CinfhM{}\mu(x_h,a_h)}}\\
    &\leq\sup_{q\in\Delta(\Pi)}\En_{\pi\sim{}q}\En^{\sM,\pi}\brk*{\frac{\mu(x_h,a_h)}{d_h^{\sM,q}(x_h,a_h)+\delta\cdot{}\CinfhM{}\mu(x_h,a_h)}}\\
    &= \sum_{x\in\cX,a\in\cA}\frac{d_h^{\sM,q}(x,a)\mu(x,a)}{d_h^{\sM,q}(x,a)+\delta\cdot{}\CinfhM{}\mu(x,a)}\leq{}1.
  \end{align*}
  To conclude, we take $\delta\to{}0$.
\end{proof}

\arxiv{\section{Proofs from \creftitle{sec:planning}}}
\icml{\section{Proofs and Additional Details from \creftitle{sec:planning}}}
\label{app:planning}
\icml{
  \subsection{Omitted Algorithms}
  \label{sec:planning_omitted}

\subsection{Examples for \creftitle{alg:linf_relaxation}}
\label{sec:planning_examples}

As discussed in \cref{sec:linf_relaxation}, the
$L_\infty$-coverability relaxation \eqref{eq:linf_relaxation} used by
\cref{alg:linf_relaxation} can be optimized efficiently whenever a
(non-admissible) state-action distribution
$\mu\in\Delta(\cX\times\cA)$ with low $L_\infty$-concentrability
$\CinfhM$ can be computed efficiently for the MDP $M$. Examples of MDP
classes the admit efficiently computable distributions with low
concentrability include:
\begin{itemize}[leftmargin=*]
\item When $M$ is a tabular MDP, the distribution $\mu(x,a) =
  \frac{1}{\abs*{\cX}\abs*{\cA}}$ (which clearly admits a closed form
  representation) achieves $\CinfhM(\mu)\leq\abs*{\cX}\abs*{\cA}$.
\item For a Block MDPs
  \citep{du2019latent,misra2019kinematic,zhang2022efficient,mhammedi2023representation}
  with latent state space $\cS$, emission distribution
  $q:\cS\to\Delta(\cX)$, and decoder $\phi^{\star}:\cX\to\cS$, the
  distribution
  $\mu(x,a)\ldef{}q(x\mid{}\phistar(x))\cdot{}\frac{1}{\abs*{\cS}\abs*{\cA}}$
  achieves $\CinfhM(\mu)\leq\abs*{\cS}\abs*{\cA}$
  \citep{xie2023role}. Again, this distribution admits a closed form
  representation when $M$ is explicitly specified.
\item For low-rank MDPs with the structure in \cref{eq:low_rank}, the
  distribution given by $\mu(x,a) =
  \frac{\nrm*{\psi_h(x)}_2}{\int\nrm*{\psi_h(x')}_2dx'}_\cdot\frac{1}{\abs{\cA}}$
  achieves $\CinfhM(\mu)\leq{}B\abs*{\cA}$ under the standard
  normalization assumption that
  $\int\nrm*{\psi_h(x')}_2dx'\leq{}B$ and $\nrm*{\phi(x,a)}_2\leq{}1$ for some (typically
  dimension-dependent) constant $B>0$
  \citep{mhammedi2023efficient,golowich2023exploring}. Alternatively
  we can compute a set of policies $\pi\ind{1},\ldots,\pi\ind{d}$ that
  form a barycentric spanner for the set
  $\crl*{\En^{\sM,\pi}\brk*{\phi(x_h,a_h)}}_{\pi\in\Pi}$ and choose
  $\mu(x,a)=\frac{1}{d\abs*{\cA}}\sum_{i=1}^{d}d_h^{\sM,\pi\ind{i}}(x)$,
  which achieves $\CinfhM(\mu)\leq{}d\abs*{\cA}$
  \citep{huang2023reinforcement}.\dfcomment{barycentric spanner is
    kinda nontrivial algorithmically---maybe we shouldn't include here?}
\end{itemize}
These examples highlight that for many settings of interest, computing a
covering distribution $\mu\in\Delta(\cX\times\cA)$ when the model is
known is significantly simpler than computing an explicit policy cover
$p\in\Delta(\PiRNS)$, showcasing the utility of
\cref{alg:linf_relaxation}.

}
\subsection{Proofs from \creftitle{sec:linf_relaxation}}
\linfrelaxation*
\begin{proof}[\pfref{prop:linf_relaxation}]
  Fix $\mu$ and abbreviate $\Cinf\equiv\CinfhM(\mu)$.
  Observe that for any $\pi\in\Pi$ and $p\in\Delta(\PiRNS)$,
  \cref{lem:pi_to_mu} implies that we can bound
  \begin{align*}
    &\En^{\sM,\pi}\brk*{\frac{d^{\sM,\pi}_h(x_h,a_h)}{d_h^{\sM,p}(x_h,a_h)+\veps\cdot{}d_h^{\sM,\pi}(x_h,a_h)}}\\
    &\leq
    \En^{\sM,\pi}\brk*{\frac{d^{\sM,\pi}_h(x_h,a_h)}{d_h^{\sM,p}(x_h,a_h)+\veps\cdot{}\Cinf\mu(x_h,a_h)}}
    + \Cinf
      \En^{\sM,\pi}\brk*{\frac{\mu(x_h,a_h)}{d_h^{\sM,p}(x_h,a_h)+\veps\cdot{}\Cinf\mu(x_h,a_h)}}\\
        &\leq
          2\Cinf \En^{\sM,\pi}\brk*{\frac{\mu(x_h,a_h)}{d_h^{\sM,p}(x_h,a_h)+\veps\cdot{}\Cinf\mu(x_h,a_h)}}.
  \end{align*}
For the claim that $\muCovOptM \leq{}1$, see the proof of \cref{prop:linf}.
  
\end{proof}
\linfoptimization*
\begin{proof}[\pfref{thm:linf_optimization}]
  Let us abbreviate $\wt{d}^{t}=\sum_{i<t}d^{\sM,\pi\ind{i}}$. Observe
  that for $T=\frac{1}{\veps}$, we have
  \begin{align*}
\muCovM(p) = \sup_{\pi\in\Pi}\En^{\sss{M},\pi}\brk*{\frac{\mu(x_h,a_h)}{\En_{\pi'\sim{}p}\brk[\big]{\dm{M}{\pi'}_h(x_h,a_h)}+\frac{\Cinf}{T}\mu(x_h,a_h)}}
    = 
  T\cdot\sup_{\pi\in\Pi}\En^{\sss{M},\pi}\brk*{\frac{\mu(x_h,a_h)}{\dtil\ind{T+1}_h(x_h,a_h)+\Cinf\mu(x_h,a_h)}},
  \end{align*}
  and hence it suffices to bound the quantity on the right-hand
  side. Observe that for all $t\in\brk{T}$, we have that
  \begin{align*}
    \sup_{\pi\in\Pi}\En^{\sss{M},\pi}\brk*{\frac{\mu(x_h,a_h)}{\dtil\ind{t}_h(x_h,a_h)+\Cinf\mu(x_h,a_h)}}
    \leq   
    \sup_{\pi\in\Pi}\En^{\sss{M},\pi}\brk*{\frac{\mu(x_h,a_h)}{\dtil\ind{t-1}_h(x_h,a_h)+\Cinf\mu(x_h,a_h)}},
  \end{align*}
and consequently
\begin{align*}
  T\cdot\sup_{\pi\in\Pi}\En^{\sss{M},\pi}\brk*{\frac{\mu(x_h,a_h)}{\dtil\ind{T+1}_h(x_h,a_h)+\Cinf\mu(x_h,a_h)}}
  &\leq
    \sum_{t=1}^{T}\sup_{\pi\in\Pi}\En^{\sss{M},\pi}\brk*{\frac{\mu(x_h,a_h)}{\dtil\ind{t}_h(x_h,a_h)+\Cinf\mu(x_h,a_h)}}\\
  &\leq
    \sum_{t=1}^{T}\En^{\sss{M},\pi\ind{t}}\brk*{\frac{\mu(x_h,a_h)}{\dtil\ind{t}_h(x_h,a_h)+\Cinf\mu(x_h,a_h)}}
    +\vepsapx{}T.
\end{align*}
Finally, we note that
\begin{align*}
  \sum_{t=1}^{T}\En^{\sss{M},\pi\ind{t}}\brk*{\frac{\mu(x_h,a_h)}{\dtil\ind{t}_h(x_h,a_h)+\Cinf\mu(x_h,a_h)}}
  & = \sum_{x\in\cX,a\in\cA}\sum_{t=1}^{T}\mu(x,a)\frac{d^{\sM,\pi\ind{t}}(x,a)}{\dtil\ind{t}_h(x,a)+\Cinf\mu(x,a)}.
\end{align*}
Since $\sup_{\pi\in\Pi}d_h^{\sM,\pi}(x,a)\leq\Cinf\mu(x,a)$ for all
$(x,a)\in\cX\times\cA$, \cref{lem:elliptic_potential} implies that
\begin{align*}
  \sum_{x\in\cX,a\in\cA}\sum_{t=1}^{T}\mu(x,a)\frac{d^{\sM,\pi\ind{t}}(x,a)}{\dtil\ind{t}_h(x,a)+\Cinf\mu(x,a)}
  \leq{}2\log(2T),
\end{align*}
allowing us to conclude that
\begin{align*}
  \muCovM(p) 
  \leq{} 2\log(2T) + \vepsapx{}T.
\end{align*}
  
\end{proof}

\subsection{Proofs from \creftitle{sec:pushforward_relaxation}}

\pushforwardrelaxation*
\begin{proof}[\pfref{prop:pushforward_relaxation}]
  We first note that
  \begin{align*}
    \CovM(p')
    \leq{} \abs*{\cA}\cdot{}
    \sup_{\pi\in\Pi}\En^{\sM,\pi}\brk*{\frac{d^{\sM,\pi}_h(x_h)}{d_h^{\sM,p}(x_h)+\veps\cdot{}d_h^{\sM,\pi}(x_h)}}.
  \end{align*}
Next, we write
  \begin{align*}
    \En^{\sM,\pi}\brk*{\frac{d^{\sM,\pi}_h(x_h)}{d_h^{\sM,p}(x_h)+\veps\cdot{}d_h^{\sM,\pi}(x_h)}}
      =
    \sum_{x\in\cX}\frac{(d^{\sM,\pi}_h(x))^2}{d_h^{\sM,p}(x)+\veps\cdot{}d_{h}^{\sM,\pi}(x)}.
  \end{align*}
  We now state and prove a basic technical lemma.
\begin{lemma}
  \label{lem:convex_f}
  For all $\veps,\delta>0$, the function
  $f(x) = \frac{x^2}{\delta+\veps{}x}$
  is convex over $\bbR_{+}$.
\end{lemma}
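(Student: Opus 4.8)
The plan is to reduce the statement to convexity of a one-variable function via an affine change of variables, which avoids any messy second-derivative bookkeeping. First I would substitute $t = \delta + \veps x$, so that $x = (t-\delta)/\veps$ and
\[
f(x) = \frac{x^2}{\delta + \veps x} = \frac{1}{\veps^2}\cdot\frac{(t-\delta)^2}{t} = \frac{1}{\veps^2}\prn*{t - 2\delta + \frac{\delta^2}{t}} \rdef g(t).
\]
The map $x \mapsto \delta + \veps x$ is affine and sends $\bbR_{+}$ into the interval $[\delta,\infty)\subseteq(0,\infty)$, and convexity is preserved under precomposition with an affine map; hence it suffices to check that $g$ is convex on $(0,\infty)$. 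This is immediate: the terms $t$ and $-2\delta$ are affine and constant respectively, and $t\mapsto \delta^2/t$ is convex on $(0,\infty)$. Concretely, $g''(t) = \tfrac{2\delta^2}{\veps^2 t^3} > 0$ for $t>0$, so $g$ — and therefore $f$ — is convex on $\bbR_{+}$.

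As an alternative (and a sanity check) one can differentiate $f$ directly. A short computation gives $f'(x) = \tfrac{x(2\delta + \veps x)}{(\delta+\veps x)^2}$, and after differentiating once more and cancelling a common factor of $(\delta+\veps x)$, the numerator telescopes to the constant $2\delta^2$, yielding $f''(x) = \tfrac{2\delta^2}{(\delta + \veps x)^3}$, which is nonnegative whenever $\delta + \veps x > 0$.

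There is essentially no obstacle here; the only point requiring (trivial) care is that the denominator $\delta + \veps x$ stays strictly positive on the domain of interest, which holds because $\delta > 0$ and $x \geq 0$ (so in fact $f$ is convex on the larger set $\{x : \delta + \veps x > 0\}$). I would present the change-of-variables argument as the main proof since it is the cleanest.
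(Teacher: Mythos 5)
Your proof is correct, and your main argument is a genuinely different (and cleaner) route than the paper's. The paper proves the lemma by direct differentiation, asserting formulas for $f'$ and $f''$ and noting the second derivative is nonnegative; you instead substitute $t=\delta+\veps x$ to rewrite $f$ as $\frac{1}{\veps^2}\prn*{t-2\delta+\frac{\delta^2}{t}}$ and invoke convexity of $t\mapsto \delta^2/t$ on $(0,\infty)$ together with preservation of convexity under affine precomposition, which requires no quotient-rule bookkeeping and makes transparent why the result holds on all of $\crl{x:\delta+\veps x>0}$. Your sanity-check computation is also the correct one: indeed $f'(x)=\frac{x(2\delta+\veps x)}{(\delta+\veps x)^2}$ and $f''(x)=\frac{2\delta^2}{(\delta+\veps x)^3}$, whereas the expressions displayed in the paper's proof ($f'(x)=\veps\frac{x^2}{(\delta+\veps x)^2}$ and $f''(x)=4\veps\delta\frac{x}{(\delta+\veps x)^3}$) contain algebra slips — they happen to be nonnegative, so the paper's conclusion stands, but your version is the accurate calculation. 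Either of your two arguments suffices; presenting the change-of-variables one as primary is a reasonable choice.
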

\begin{proof}[\pfref{lem:convex_f}]
  This follows by verifying through direct calculation that
  \begin{align*}
    f'(x) = \veps\cdot\frac{x^2}{(\delta+\veps{}x)^2},\mathand
    f''(x) = 4\veps{}\delta{}\cdot\frac{x}{(\delta+\veps{}x)^3}\geq{}0.
  \end{align*}
\end{proof}
  By \cref{lem:convex_f}, the function
  \[
    d\mapsto{}
    \frac{(d)^2}{d_h^{\sM,p}(x)+\veps\cdot{}d}
  \]
  is convex for all $x$. Hence, writing
  $d^{\sM,\pi}_h(x)=\En^{\sM,\pi}\brk*{P_{h-1}^{\sM}(x\mid{}x_{h-1},a_{h-1})}$,
  Jensen's inequality implies that for all $x$,
  \begin{align*}
    \frac{(d^{\sM,\pi}_h(x))^2}{d_h^{\sM,p}(x)+\veps\cdot{}d_{h}^{\sM,\pi}(x)}
    \leq{}\En^{\sM,\pi}\brk*{
    \frac{(P_{h-1}^{\sM}(x\mid{}x_{h-1},a_{h-1}))^2}{d_h^{\sM,p}(x)+\veps\cdot{}P_{h-1}^{\sM}(x\mid{}x_{h-1},a_{h-1}) }
    }.
  \end{align*}
  We conclude that
  \begin{align*}
    \En^{\sM,\pi}\brk*{\frac{d^{\sM,\pi}_h(x_h)}{d_h^{\sM,p}(x_h)+\veps\cdot{}d_h^{\sM,\pi}(x_h)}}
    &\leq{} \En^{\sM,\pi}\brk*{\sum_{x\in\cX}
    \frac{(P_{h-1}^{\sM}(x\mid{}x_{h-1},a_{h-1}))^2}{d_h^{\sM,p}(x)+\veps\cdot{}P_{h-1}^{\sM}(x\mid{}x_{h-1},a_{h-1}) }
    }\\
    &= \En^{\sM,\pi}\brk*{
    \frac{P_{h-1}^{\sM}(x_h\mid{}x_{h-1},a_{h-1})}{d_h^{\sM,p}(x)+\veps\cdot{}P_{h-1}^{\sM}(x_h\mid{}x_{h-1},a_{h-1}) }
    } \leq \pCovM(p).
  \end{align*}

We now prove the bound on $\pCovOptM$.    Let $\delta>0$ be
given. Using the definition of $\CpushhM$ and the same argument as \cref{lem:pi_to_mu}, there exists $\mu\in\Delta(\cX)$ such that  
\begin{align*}  \pCovOptM&\leq
\prn*{1+\frac{\delta}{\veps}}\CpushhM\cdot \inf_{p\in\Delta(\Pi)}\sup_{\pi\in\Pi}\En^{\sM,\pi}\brk*{\frac{\mu(x_h)}{d_h^{\sM,p}(x_h)+\delta\cdot{}\CpushhM\mu(x_h)}}
\\
&=\prn*{1+\frac{\delta}{\veps}}\CpushhM\cdot\inf_{p\in\Delta(\Pi)}\sup_{q\in\Delta(\Pi)}\En_{\pi\sim{}q}\En^{\sM,\pi}\brk*{\frac{\mu(x_h)}{d_h^{\sM,p}(x_h)+\delta\cdot{}\CpushhM{}\mu(x_h)}}.
  \end{align*}
  Observe that the function
  \begin{align*}
(p,q)\mapsto{}\En_{\pi\sim{}q}\En^{\sM,\pi}\brk*{\frac{\mu(x_h)}{d_h^{\sM,p}(x_h)+\delta\cdot{}\CpushhM\mu(x_h)}}
  \end{align*}
  is convex-concave. In addition, it is straightforward to see that
  the function is jointly Lipschitz with respect to total variation
  distance whenever $\veps,\delta>0$. Hence, using the minimax theorem
  (\cref{lem:sion}), we have that
  \begin{align*}
&\inf_{p\in\Delta(\Pi)}\sup_{q\in\Delta(\Pi)}\En_{\pi\sim{}q}\En^{\sM,\pi}\brk*{\frac{\mu(x_h)}{d_h^{\sM,p}(x_h)+\delta\cdot{}\CpushhM{}\mu(x_h)}}\\
    &=\sup_{q\in\Delta(\Pi)}\inf_{p\in\Delta(\Pi)}\En_{\pi\sim{}q}\En^{\sM,\pi}\brk*{\frac{\mu(x_h)}{d_h^{\sM,p}(x_h)+\delta\cdot{}\CpushhM{}\mu(x_h)}}\\
    &\leq\sup_{q\in\Delta(\Pi)}\En_{\pi\sim{}q}\En^{\sM,\pi}\brk*{\frac{\mu(x_h)}{d_h^{\sM,q}(x_h)+\delta\cdot{}\CpushhM{}\mu(x_h)}}\\
    &= \sum_{x\in\cX}\frac{d_h^{\sM,q}(x)\mu(x)}{d_h^{\sM,q}(x)+\delta\cdot{}\CpushhM{}\mu(x)}\leq{}1.
  \end{align*}
  To conclude, we take $\delta\to{}0$.

\end{proof}

\pushforwardoptimization*
\begin{proof}[\pfref{thm:pushforward_optimization}]
  Let us abbreviate $\wt{d}\ind{t}_h=\sum_{i<t}d_h^{\sM,\pi\ind{i}}$. Observe
  that for $T=\frac{1}{\veps}$, we have
  \begin{align*}
\pCovM(p) &= \sup_{\pi\in\Pi}\En^{\sss{M},\pi}\brk*{\frac{P_{h-1}^{\sM}(x_h\mid{}x_{h-1},a_{h-1})}{\En_{\pi'\sim{}p}\brk[\big]{\dm{M}{\pi'}_h(x_h)}+\frac{1}{T}P_{h-1}^{\sM}(x_h\mid{}x_{h-1},a_{h-1})}}\\
    &= 
T\cdot\sup_{\pi\in\Pi}\En^{\sss{M},\pi}\brk*{\frac{P_{h-1}^{\sM}(x_h\mid{}x_{h-1},a_{h-1})}{\dtil\ind{T+1}_h(x_h)+P_{h-1}^{\sM}(x_h\mid{}x_{h-1},a_{h-1})}},
  \end{align*}
  and hence it suffices to bound the quantity on the right-hand
  side. Observe that for all $t\in\brk{T}$, we have that
  \begin{align*}
    \sup_{\pi\in\Pi}\En^{\sss{M},\pi}\brk*{\frac{P_{h-1}^{\sM}(x_h\mid{}x_{h-1},a_{h-1})}{\dtil\ind{t}_h(x_h)+P_{h-1}^{\sM}(x_h\mid{}x_{h-1},a_{h-1})}}
    \leq   
    \sup_{\pi\in\Pi}\En^{\sss{M},\pi}\brk*{\frac{P_{h-1}^{\sM}(x_h\mid{}x_{h-1},a_{h-1})}{\dtil\ind{t-1}_h(x_h)+P_{h-1}^{\sM}(x_h\mid{}x_{h-1},a_{h-1})}},
  \end{align*}
and consequently
\begin{align*}
T\cdot\sup_{\pi\in\Pi}\En^{\sss{M},\pi}\brk*{\frac{P_{h-1}^{\sM}(x_h\mid{}x_{h-1},a_{h-1})}{\dtil\ind{T+1}_h(x_h)+P_{h-1}^{\sM}(x_h\mid{}x_{h-1},a_{h-1})}}
  &\leq
    \sum_{t=1}^{T}\sup_{\pi\in\Pi}\En^{\sss{M},\pi}\brk*{\frac{P_{h-1}^{\sM}(x_h\mid{}x_{h-1},a_{h-1})}{\dtil\ind{t}_h(x_h)+P_{h-1}^{\sM}(x_h\mid{}x_{h-1},a_{h-1})}}\\
  &\leq
\sum_{t=1}^{T}\En^{\sss{M},\pi\ind{t}}\brk*{\frac{P_{h-1}^{\sM}(x_h\mid{}x_{h-1},a_{h-1})}{\dtil\ind{t}_h(x_h)+P_{h-1}^{\sM}(x_h\mid{}x_{h-1},a_{h-1})}}
    +\vepsapx{}T.
\end{align*}
  Now, let $\mu\in\Delta(\cX)$ attain the value of $\CpushhM$. Using
  \cref{lem:pi_to_mu_pushforward} with $\veps=1$ and $\delta=\CpushhM$,
  we have that for all $\pi\in\Pi$, 
  \begin{align*} 
    \En^{\sM,\pi}\brk*{\frac{P_{h-1}^{\sM}(x_h\mid{}x_{h-1},a_{h-1})}{\dtil_h\ind{t}(x_h)+P_{h-1}^{\sM}(x_h\mid{}x_{h-1},a_{h-1})}}
    &\leq
      \En^{\sM,\pi}\brk*{\frac{P_{h-1}^{\sM}(x_h\mid{}x_{h-1},a_{h-1})}{\dtil_h\ind{t}(x_h) +\CpushhM\mu(x_h)}}
      +
    \CpushhM\cdot\En^{\sM,\pi}\brk*{\frac{\mu(x_h)}{\dtil_h\ind{t}(x_h)+\CpushhM\mu(x_h)}}\\
    &\leq
      2\CpushhM\cdot\En^{\sM,\pi}\brk*{\frac{\mu(x_h)}{\dtil_h\ind{t}(x_h)+\CpushhM\mu(x_h)}}.
  \end{align*}
Hence, we can bound
\begin{align*}
  \sum_{t=1}^{T}\En^{\sss{M},\pi\ind{t}}\brk*{\frac{P_{h-1}^{\sM}(x_h\mid{}x_{h-1},a_{h-1})}{\dtil\ind{t}_h(x_h)+P_{h-1}^{\sM}(x_h\mid{}x_{h-1},a_{h-1})}}
  &\leq 2\CpushhM\sum_{t=1}^{T}\En^{\sss{M},\pi\ind{t}}\brk*{\frac{\mu(x_h)}{\dtil\ind{t}_h(x_h)+\CpushhM\mu(x_h)}}\\
  & = 2\CpushhM\sum_{x\in\cX}\sum_{t=1}^{T}\mu(x)\frac{d^{\sM,\pi\ind{t}}(x)}{\dtil\ind{t}_h(x)+\CpushhM\mu(x)}.
\end{align*}
Since $\sup_{\pi\in\Pi}d_h^{\sM,\pi}(x)
\leq{}\sup_{x'\in\cX,a\in\cA}P^{\sM}_{h-1}(x\mid{}x',a)\leq\CpushhM\mu(x)$ for all
$x\in\cX$, \cref{lem:elliptic_potential} implies that
\begin{align*}
  \sum_{x\in\cX,}\sum_{t=1}^{T}\mu(x)\frac{d^{\sM,\pi\ind{t}}(x)}{\dtil\ind{t}_h(x)+\Cinf\mu(x)}
  \leq{}2\log(2T).
\end{align*}
We conclude that
\begin{align}
  \label{eq:pushforward_potential}
\sum_{t=1}^{T}\En^{\sss{M},\pi\ind{t}}\brk*{\frac{P_{h-1}^{\sM}(x_h\mid{}x_{h-1},a_{h-1})}{\dtil\ind{t}_h(x_h)+P_{h-1}^{\sM}(x_h\mid{}x_{h-1},a_{h-1})}}
  \leq 4\CpushhM\log(2T)
\end{align}
and
$\pCovM(p) 
  \leq{} 4\CpushhM\log(2T) + \vepsapx{}T$.
\end{proof}

\arxiv{\section{Proofs and Additional Details from \creftitle{sec:model_based}}}
\icml{\section{Proofs and Additional Details from \creftitle{sec:model_based}}}
\label{app:model_based}
\arxiv{
  This section is organized as follows:
  \begin{itemize}
  \item \cref{sec:model_based_general} presents our most general
    guarantee for \cref{alg:model_based_reward_free},
    \cref{thm:model_based_reward_free}, and derives sample complexity
    bounds based on \mainopt as a consequence.
  \item \cref{app:rf_example} presents applications of
    these results to downstream policy optimization.
  \item \cref{sec:model_based_technical} presents preliminary
    technical lemmas.
  \item \cref{app:model_based_proof} proves
    \cref{thm:model_based_reward_free}, proving \cref{cor:mbrf3,cor:mbrf2} as a corollary.
  \end{itemize}
}
\icml{
  This section is organized as follows:
  \begin{itemize}
  \item \cref{sec:model_based_general} presents our most general
    guarantee for \cref{alg:model_based_reward_free},
    \cref{thm:model_based_reward_free}, and derives sample complexity
    bounds based on \mainopt as a consequence.
  \item \cref{app:rf_example} presents applications of
    these results to downstream policy optimization.
  \item \cref{sec:model_based_technical} presents preliminary
    technical lemmas.
  \item \cref{app:model_based_proof} proves
    \cref{thm:model_based_reward_free}, proving \cref{cor:mbrf3} as a corollary.
  \end{itemize}
  }

\arxiv{
\subsection{General Guarantees for
    \creftitle{alg:model_based_reward_free}}
    \label{sec:model_based_general}
In this section, we present general guarantees for
\mainalg (\creftitle{alg:model_based_reward_free}) that (i) make us of online (as
opposed to offline) estimation oracles, allowing for faster rates, and
(ii) enjoy sample complexity scaling with \mainopt, improving upon the
$L_\infty$-Coverability-based guarantees in \cref{sec:model_based}.

\subsubsection{Online Estimation Oracles}
For \emph{online estimation}, we measure the oracle's estimation
performance in terms of cumulative Hellinger error, which we assume is
bounded as follows.
\begin{assumption}[Online estimation oracle for $\cM$]
    \label{ass:online_oracle}
	At each time $t\in[T]$, an online estimation oracle
        $\AlgEst$ for $\cM$ returns,
        given \arxiv{$$\hist\ind{t-1}=(\pi\ind{1},o\ind{1}),\ldots,(\pi\ind{t-1},o\ind{t-1})$$}\icml{$\hist\ind{t-1}=(\pi\ind{1},o\ind{1}),\ldots,(\pi\ind{t-1},o\ind{t-1})$}
        with $o\ind{i}\sim\Mstar(\pi\ind{i})$ and
        $\pi\ind{i}\sim p\ind{i}$, an estimator
        $\Mhat\ind{t}\in\cM$ such that whenever $\Mstar\in\cM$,
        \begin{align*}
              \EstOn \ldef{}
          \sum_{t=1}^{T}\En_{\act\ind{t}\sim{}p\ind{t}}\brk[\big]{\DhelsX{\big}{\Mhat\ind{t}(\pi\ind{t})}{\Mstar(\pi\ind{t})}}
          \leq \EstProbOn,
        \end{align*}
	with probability at least $1-\delta$, where $\EstProbOn$ is a
        known upper bound.%
      \end{assumption}
      See Section 4 of
      \citet{foster2021statistical} or \citet{foster2023lecture} for
      further background on online estimation.
      \cref{alg:model_based_reward_free} supports offline and online
      estimators, but is most straightforward to analyze for online
      estimators, and gives tighter sample complexity bounds in
      this case. The requirement in \cref{ass:online_oracle} that
      the online estimator is \emph{proper} (i.e., has
      $\Mhat\ind{t}\in\cM$) is quite stringent, as generic online
      estimation algorithms (e.g., Vovk's aggregating algorithm) are
      improper, and proper algorithms are only known for specialized
      MDP classes such as tabular MDPs (see discussion in
      \citet{foster2021statistical}).\footnote{On the statistical
        side, it is straightforward to extend the results in this
        section to accommodate improper online estimators; we impose
        this restriction for \emph{computational} reasons, as this
        enables the application of the efficient planning results in
        \cref{sec:planning}.} This contrasts with offline estimation,
      where most standard algorithms such as \mle are proper. As such,
      we present bounds based on online estimators as secondary results, with our
      bounds based on offline estimation serving as the main results.

\paragraph{Offline-to-online conversion}      
On the technical side, our interest in proper online estimation arises from
the following structural result, which shows that whenever the
\mainopt parameter is bounded, any algorithm with low offline estimation
error also enjoys low online estimation error (with polynomial loss in rate).
      \begin{restatable}[Offline-to-online]{lemma}{loneofflineonline}
        \label{lem:lone_offline_online}
          Any offline estimator $\Mhat\ind{t}$ that satisfies
  \cref{ass:offline_oracle} with estimation error bound $\EstProbOff$
  satisfies \cref{ass:online_oracle} with \arxiv{estimation error bound}
    \icml{
      \mbox{$\EstProbOn \leq \bigoht\prn[\big]{H\prn[\big]{\ConeM[\Mstar](1+\EstProbOff)}^{1/3}T^{2/3}}$}.}
  \arxiv{
  \begin{align}
    \EstProbOn \leq \bigoh\prn[\Big]{H\log H \prn*{\ConeM[\Mstar](1+\EstProbOff)}^{1/3}T^{2/3}
    }.
  \end{align}
  }
\end{restatable}
Note that $\ConeM[\Mstar]\leq\CovOptMmaxzero[\Mstar]$; we leave an
extension to $\CovOptMmax[\Mstar]$ for $\veps>0$ to future
work. \dfc{Can we handle the general $\veps$ extension?}\\
We also make
use of a tighter offline-to-online lemma based on the (larger)
$L_{\infty}$-Coverability parameter $\CinfM[\Mstar]$.
\begin{restatable}[\citet{xie2023role,foster2023online}]{lemma}{linfofflineonline}
  \label{lem:linf_offline_online}
  Any offline estimator $\Mhat\ind{t}$ that satisfies
  \cref{ass:offline_oracle} with estimation error bound $\EstProbOff$
  satisfies \cref{ass:online_oracle} with \arxiv{estimation error
    bound}
    \icml{
      \mbox{$\EstProbOn \leq \bigoht\prn[\big]{H\prn[\big]{\CinfM[\Mstar]
    T\cdot\EstProbOff}^{1/2} + H\CinfM[\Mstar] 
    }$}.}
  \arxiv{
  \begin{align}
    \EstProbOn \leq \bigoh\prn*{H\log H \sqrt{\CinfM[\Mstar]
    T\log{}T\cdot\EstProbOff} + H\log{}H\cdot\CinfM[\Mstar] 
    }.
  \end{align}
  }
\end{restatable}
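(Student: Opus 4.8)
\textbf{Plan for proving \cref{lem:linf_offline_online} (offline-to-online under $L_\infty$-Coverability).}

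The plan is to prove the offline-to-online conversion by showing that the cumulative \emph{online} estimation error $\EstOn = \sum_{t=1}^T \En_{\pi\ind{t}\sim p\ind{t}}\brk[\big]{\DhelsX{\big}{\Mhat\ind{t}(\pi\ind{t})}{\Mstar(\pi\ind{t})}}$ can be controlled in terms of the cumulative \emph{offline} error $\EstOfft[t] = \sum_{i<t}\En_{\pi\ind{i}\sim p\ind{i}}\brk[\big]{\DhelsX{\big}{\Mhat\ind{t}(\pi\ind{i})}{\Mstar(\pi\ind{i})}} \leq \EstProbOff$ using a potential/elliptic-type argument keyed to the $L_\infty$-Coverability parameter $\CinfM[\Mstar]$. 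The key idea, following \citet{xie2023role}, is a per-layer decomposition: by the local simulation lemma (\cref{lem:simulation}), the trajectory-level Hellinger distance $\DhelsX{\big}{\Mhat\ind{t}(\pi)}{\Mstar(\pi)}$ is controlled (up to $H$ factors) by the sum over layers $h$ of $\En^{\sMstar,\pi}\brk[\big]{\DhelsX{\big}{\Pmhat\ind{t}_h(x_h,a_h)}{\Pmstar_h(x_h,a_h)}}$, so it suffices to bound, for each $h$, the cumulative quantity $\sum_{t=1}^T \En^{\sMstar,p\ind{t}}\brk[\big]{\DhelsX{\big}{\Pmhat\ind{t}_h(x_h,a_h)}{\Pmstar_h(x_h,a_h)}}$, where $p\ind{t} = q\ind{t}$ is the mixture played at round $t$.

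The main steps I would carry out are as follows. First, fix a layer $h$ and let $\mu_h \in \Delta(\cX\times\cA)$ be a distribution achieving (or nearly achieving) the $L_\infty$-Coverability value, so $d_h^{\sMstar,\pi}(x,a) \leq \CinfM[\Mstar]\cdot\mu_h(x,a)$ for all $\pi,(x,a)$. Second, introduce the shorthand $g_h\ind{t}(x,a) \ldef \DhelsX{\big}{\Pmhat\ind{t}_h(x,a)}{\Pmstar_h(x,a)} \in [0,2]$ for the per-state/action squared Hellinger error of the $t$-th estimator, and observe that the online error at round $t$ is $\En^{\sMstar,p\ind{t}}\brk[\big]{g_h\ind{t}(x_h,a_h)} = \sum_{x,a} d_h^{\sMstar,p\ind{t}}(x,a)\,g_h\ind{t}(x,a)$. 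Third — this is the crux — use the standard trick of writing $d_h^{\sMstar,p\ind{t}}(x,a)$ against the accumulated offline coverage: since $\AlgEst$ is offline-consistent, the accumulated quantity $\sum_{i<t} d_h^{\sMstar,p\ind{i}}(x,a)\,g_h\ind{t}(x,a)$ is controlled by $\EstProbOff$ on average over rounds. Concretely, split each term via
\begin{align*}
d_h^{\sMstar,p\ind{t}}(x,a)\,g_h\ind{t}(x,a)
= \frac{d_h^{\sMstar,p\ind{t}}(x,a)}{\sum_{i<t}d_h^{\sMstar,p\ind{i}}(x,a) + \CinfM[\Mstar]\mu_h(x,a)}
\cdot\Bigl(\sum_{i<t}d_h^{\sMstar,p\ind{i}}(x,a) + \CinfM[\Mstar]\mu_h(x,a)\Bigr) g_h\ind{t}(x,a),
\end{align*}
then apply Cauchy–Schwarz over $t$, with one factor handled by the elliptic potential lemma (\cref{lem:elliptic_potential}, using the coverability bound $d_h^{\sMstar,p\ind{t}}(x,a)\le \CinfM[\Mstar]\mu_h(x,a)$ which gives $\sum_t \frac{d_h\ind{t}(x,a)}{\sum_{i<t}d_h\ind{i}(x,a)+\CinfM[\Mstar]\mu_h(x,a)} \le 2\log(2T)$ after summing against $\mu_h$) and the other factor bounded by $\sum_t (\sum_{i<t}d_h\ind{i}(x,a))g_h\ind{t}(x,a) \lesssim \EstProbOff$ (offline guarantee, after summing over $x,a$ and noting the $\CinfM[\Mstar]\mu_h$ term contributes $\CinfM[\Mstar]\cdot\EstProbOff$ crudely, or is absorbed into the lower-order $H\CinfM[\Mstar]$ term). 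Fourth, collect the per-layer bounds, multiply by $H$ from the simulation lemma and another $\log H$ from martingale concentration (\cref{lem:multiplicative_freedman} to pass between $\En_{t-1}[\cdot]$ and realized values since $p\ind{t}$ is data-dependent), yielding $\EstProbOn \lesssim H\log H\sqrt{\CinfM[\Mstar]\,T\log T\cdot\EstProbOff} + H\log H\cdot\CinfM[\Mstar]$.

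The step I expect to be the main obstacle is the third one: carefully setting up the Cauchy–Schwarz split so that the ``denominator'' potential term is exactly summable via \cref{lem:elliptic_potential} while the ``numerator'' term genuinely reduces to the offline guarantee $\EstProbOff$. The subtlety is that the offline oracle controls $\sum_{i<t}\En^{\sMstar,p\ind{i}}\brk[\big]{g_h\ind{t}(x_h,a_h)}$ for the \emph{$t$-th} estimator evaluated on \emph{past} data, which is precisely the cross term $\sum_{x,a}(\sum_{i<t}d_h\ind{i}(x,a))g_h\ind{t}(x,a)$ needed — but one must be careful that the sum over $t$ of these quantities telescopes/bounds correctly (it does not literally telescope; rather one uses that each summand is $\le\EstProbOff$ and there are $T$ of them, which is where the $\sqrt{T\cdot\EstProbOff}$ comes from after Cauchy–Schwarz with the $\log T$ potential). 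A secondary technical point is handling the adaptivity of $p\ind{t} = q\ind{t}$ via Freedman's inequality, and verifying that the lower-order $H\CinfM[\Mstar]$ additive term correctly absorbs the contribution of the $\CinfM[\Mstar]\mu_h$ regularizer in the denominator; I would quote \citet{xie2023role,foster2023online} for the precise form of these manipulations rather than reproducing them in full.
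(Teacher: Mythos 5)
Your proposal follows essentially the same route as the paper's proof: decompose the trajectory-level Hellinger error into per-layer, per-state--action errors, relate the cumulative per-layer offline error for $\Mhat\ind{t}$ to $\EstProbOff$ (via \cref{lem:hellinger_pair}), and then apply the per-layer coverability potential argument of \citet{xie2023role} (the paper's \cref{lem:linf_potential_var}), which is precisely the Cauchy--Schwarz/elliptic-potential step you sketch and defer to the references. The only minor bookkeeping inaccuracies are that the layer-wise decomposition is the Hellinger subadditivity bound (Lemma A.11 of \citet{foster2021statistical}), which is where the $\log H$ factor actually comes from---not the local simulation lemma---and no Freedman-type concentration is needed, since both $\EstOn$ and $\EstOfft[t]$ are already defined as expectations over $\pi\sim{}p\ind{i}$.
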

Both lemmas lead to a degradation in rate with respect to
$T$, but lead to sublinear online estimation error whenever the
offline estimation error bound is sublinear\arxiv{; \citet{foster2023online}
  show that some degradation in rate is unavoidable}.

\subsubsection{General Guarantees for \creftitle{alg:model_based_reward_free}}\label{app:general-guarantees-model-based-reward-free}
Our most general guarantee for \cref{alg:model_based_reward_free},
which assumes access to an online estimation oracle, is as follows.
\begin{restatable}[General guarantee for \cref{alg:model_based_reward_free}]{theorem}{modelbasedrewardfree}
  \label{thm:model_based_reward_free}
  With parameters $T\in\bbN$, $C\geq{}1$, and $\veps>0$ and an online
estimation oracle satisfying \cref{ass:online_oracle}, whenever the
optimization problem in \cref{eq:cover_objective_model_based} is
feasible at every round, 
\cref{alg:model_based_reward_free} produces a policy covers $p_1,\ldots,p_H\in\Delta(\Pi)$ such
that with probability at least $1-\delta$,
\icml{$  \forall{}h\in\brk{H}$: $\CovM[\Mstar](p_h) \leq$
  \begin{small}
    \begin{align}
                                                 11HC
      +\frac{12}{\veps}\sqrt{\frac{H^3C\cdot{}\EstProbOn}{T}}
        +\frac{8H}{\veps^2}\frac{\EstProbOn}{T}.   \label{eq:mbrf1}
    \end{align}%
  \end{small}%
}%
\arxiv{\begin{align}
  \label{eq:mbrf1}
  \forall{}h\in\brk{H}:\quad\CovM[\Mstar](p_h)
\leq                                                  11HC
+\frac{12}{\veps}\sqrt{\frac{H^3C\cdot{}\EstProbOn}{T}}
                                                +\frac{8H}{\veps^2}\frac{\EstProbOn}{T}.
\end{align}}
\end{restatable}
\arxiv{
\cref{cor:mbrf3} is derived by combining this result with
\cref{lem:linf_offline_online}. \cref{cor:mbrf2} is derived by combining this result with \cref{lem:lone_offline_online}, 
allowing us to give sample complexity
guarantees based on \mainopt that support offline
estimation oracles. 
This result shows that \mainopt is itself a sufficiently powerful structural parameter to
  enable sample-efficient learning with nonlinear function
  approximation. Note that while \cref{cor:mbrf2} assumes for simplicity that
\cref{eq:cover_objective_model_based} is solved with
$C=\CovOptMmax[\Mstar]$, it should be clear that if we solve the
objective for $C>\CovOptMmax[\Mstar]$ the result continues to hold
with $\CovOptMmax[\Mstar]$ replaced by $C$ in the sample complexity
bound and approximation guarantee.
}

\icml{
\cref{cor:mbrf3} is derived by combining this result with
\cref{lem:linf_offline_online}. The next result instantiates
\cref{thm:model_based_reward_free} with
\cref{lem:lone_offline_online}, allowing us to give sample complexity
guarantees based on \mainopt that support offline
estimation oracles. 
\begin{restatable}[Main guarantee for
  \cref{alg:model_based_reward_free} under \mainopt]{corollary}{mbrftwo}
  \label{cor:mbrf2}
  Let $\veps>0$ be given.
    Suppose that (i) we restrict $\cM$ such that all $M\in\cM$ have
    $\CovOptMmax\leq\CovOptMmax[\Mstar]$, and (ii) we solve
    \cref{eq:cover_objective_model_based} with
    $C=\CovOptMmax[\Mstar]$ for all $t$ (which is always
    feasible). Then, given access to an offline estimation oracle
    satisfying \cref{ass:offline_oracle,ass:parametric}, using
    $T=\bigoht\prn*{\frac{H^{12}(\CovOptMmaxzero[\Mstar])^4\dest\log(\Best/\delta)}{\veps^{6}}}$
    episodes, \cref{alg:model_based_reward_free} produces policy
    covers $p_1,\ldots,p_H\in\Delta(\Pi)$ such that
    \begin{align}
      \label{eq:mbrf2}
      \forall{}h\in\brk{H}:\quad\CovM[\Mstar](p_h)
      \leq 12H\cdot{}\CovOptMmax[\Mstar]
    \end{align}
    with probability at least $1-\delta$. In particular, for a finite
    class $\cM$, if we use \mle as the estimator, we can take
    $T=\bigoht\prn*{\frac{H^{12}(\CovOptMmaxzero[\Mstar])^4\log(\abs*{\cM}/\delta)}{\veps^{6}}}$.
  \end{restatable}
  \dfcomment{It seems hard to improve the result above so that 1) we have $\CovOptMmax[\Mstar]$ in the sample complexity instead of $\CovOptMmaxzero[\Mstar]$, and 2) so that we have $\CovOptM[\Mstar]$ on the rhs of \cref{eq:mbrf2} instead of $\CovOptMmax[\Mstar]$. do people find these issues offputting?}
This result shows that \mainopt is itself a sufficiently powerful structural parameter to
  enable sample-efficient learning with nonlinear function
  approximation.
  
}

}

\icml{
\subsection{General Guarantees for
    \creftitle{alg:model_based_reward_free}}
    \label{sec:model_based_general}
In this section, we present general guarantees for
\mainalg (\creftitle{alg:model_based_reward_free}) that (i) make us of online (as
opposed to offline) estimation oracles, allowing for faster rates, and
(ii) enjoy sample complexity scaling with \mainopt, improving upon the
$L_\infty$-Coverability-based guarantees in \cref{sec:model_based}.

\subsubsection{Online Estimation Oracles}
For \emph{online estimation}, we measure the oracle's estimation
performance in terms of cumulative Hellinger error, which we assume is
bounded as follows.
\begin{assumption}[Online estimation oracle for $\cM$]
    \label{ass:online_oracle}
	At each time $t\in[T]$, an online estimation oracle
        $\AlgEst$ for $\cM$ returns,
        given \arxiv{$$\hist\ind{t-1}=(\pi\ind{1},o\ind{1}),\ldots,(\pi\ind{t-1},o\ind{t-1})$$}\icml{$\hist\ind{t-1}=(\pi\ind{1},o\ind{1}),\ldots,(\pi\ind{t-1},o\ind{t-1})$}
        with $o\ind{i}\sim\Mstar(\pi\ind{i})$ and
        $\pi\ind{i}\sim p\ind{i}$, an estimator
        $\Mhat\ind{t}\in\cM$ such that whenever $\Mstar\in\cM$,
        \begin{align*}
              \EstOn \ldef{}
          \sum_{t=1}^{T}\En_{\act\ind{t}\sim{}p\ind{t}}\brk[\big]{\DhelsX{\big}{\Mhat\ind{t}(\pi\ind{t})}{\Mstar(\pi\ind{t})}}
          \leq \EstProbOn,
        \end{align*}
	with probability at least $1-\delta$, where $\EstProbOn$ is a
        known upper bound.%
      \end{assumption}
      See Section 4 of
      \citet{foster2021statistical} or \citet{foster2023lecture} for
      further background on online estimation.
      \cref{alg:model_based_reward_free} supports offline and online
      estimators, but is most straightforward to analyze for online
      estimators, and gives tighter sample complexity bounds in
      this case. The requirement in \cref{ass:online_oracle} that
      the online estimator is \emph{proper} (i.e., has
      $\Mhat\ind{t}\in\cM$) is quite stringent, as generic online
      estimation algorithms (e.g., Vovk's aggregating algorithm) are
      improper, and proper algorithms are only known for specialized
      MDP classes such as tabular MDPs (see discussion in
      \citet{foster2021statistical}).\footnote{On the statistical
        side, it is straightforward to extend the results in this
        section to accommodate improper online estimators; we impose
        this restriction for \emph{computational} reasons, as this
        enables the application of the efficient planning results in
        \cref{sec:planning}.} This contrasts with offline estimation,
      where most standard algorithms such as \mle are proper. As such,
      we present bounds based on online estimators as secondary results, with our
      bounds based on offline estimation serving as the main results.

\paragraph{Offline-to-online conversion}      
On the technical side, our interest in proper online estimation arises from
the following structural result, which shows that whenever the
\mainopt parameter is bounded, any algorithm with low offline estimation
error also enjoys low online estimation error (with polynomial loss in rate).
      \begin{restatable}[Offline-to-online]{lemma}{loneofflineonline}
        \label{lem:lone_offline_online}
          Any offline estimator $\Mhat\ind{t}$ that satisfies
  \cref{ass:offline_oracle} with estimation error bound $\EstProbOff$
  satisfies \cref{ass:online_oracle} with \arxiv{estimation error bound}
    \icml{
      \mbox{$\EstProbOn \leq \bigoht\prn[\big]{H\prn[\big]{\ConeM[\Mstar](1+\EstProbOff)}^{1/3}T^{2/3}}$}.}
  \arxiv{
  \begin{align}
    \EstProbOn \leq \bigoh\prn[\Big]{H\log H \prn*{\ConeM[\Mstar](1+\EstProbOff)}^{1/3}T^{2/3}
    }.
  \end{align}
  }
\end{restatable}
Note that $\ConeM[\Mstar]\leq\CovOptMmaxzero[\Mstar]$; we leave an
extension to $\CovOptMmax[\Mstar]$ for $\veps>0$ to future
work. \dfc{Can we handle the general $\veps$ extension?}\\
We also make
use of a tighter offline-to-online lemma based on the (larger)
$L_{\infty}$-Coverability parameter $\CinfM[\Mstar]$.
\begin{restatable}[\citet{xie2023role}]{lemma}{linfofflineonline}
  \label{lem:linf_offline_online}
  Any offline estimator $\Mhat\ind{t}$ that satisfies
  \cref{ass:offline_oracle} with estimation error bound $\EstProbOff$
  satisfies \cref{ass:online_oracle} with \arxiv{estimation error
    bound}
    \icml{
      \mbox{$\EstProbOn \leq \bigoht\prn[\big]{H\prn[\big]{\CinfM[\Mstar]
    T\cdot\EstProbOff}^{1/2} + H\CinfM[\Mstar] 
    }$}.}
  \arxiv{
  \begin{align}
    \EstProbOn \leq \bigoh\prn*{H\log H \sqrt{\CinfM[\Mstar]
    T\log{}T\cdot\EstProbOff} + H\log{}H\cdot\CinfM[\Mstar] 
    }.
  \end{align}
  }
\end{restatable}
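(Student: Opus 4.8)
The plan is to deduce the \emph{in-sample} (online) cumulative Hellinger bound from the \emph{out-of-sample} (offline) bound of \cref{ass:offline_oracle}, using as the only structural input the defining property of $L_\infty$-coverability---that every admissible occupancy $d_h^{\sMstar,\pi}$ is dominated by a single fixed distribution---via an elliptic-potential argument in the style of the coverability regret bounds of \citet{xie2023role}. \emph{First,} I would reduce to per-layer transition errors: using a chain-rule/subadditivity bound for the squared Hellinger distance of sequential processes (at the cost of at most a $\mathrm{polylog}(H)$ factor), for each episode $t$ and policy $\pi$ the trajectory-level distance $\Dhels{\Mhat\ind{t}(\pi)}{\Mstar(\pi)}$ is comparable, up to a universal constant, to $\sum_{h=1}^{H}\En^{\sMstar,\pi}\brk*{g_h\ind{t}(x_h,a_h)}$, where $g_h\ind{t}(x,a)\ldef{}\Dhels{\wh{P}_h\ind{t}(\cdot\mid x,a)}{P_h\sups{\Mstar}(\cdot\mid x,a)}\in\brk{0,2}$. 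This turns \cref{ass:offline_oracle} into the per-layer guarantee $\sum_{i<t}\En^{\sMstar,p\ind{i}}\brk*{g_h\ind{t}(x_h,a_h)}\approxleq\EstProbOff$ for all $t,h$ (on the $1-\delta$ event), and reduces the target to bounding $E_h\ldef{}\sum_{t=1}^{T}\En^{\sMstar,p\ind{t}}\brk*{g_h\ind{t}(x_h,a_h)}$ for each $h$ and then summing.

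\emph{Second,} for fixed $h$, let $\mu_h\in\Delta(\cX\times\cA)$ witness $\CinfhM[\Mstar]$, so $d_h^{\sMstar,\pi}(z)\le\CinfhM[\Mstar]\mu_h(z)$ for all $\pi$, and set $D_h\ind{t}(z)\ldef{}\sum_{i<t}d_h^{\sMstar,p\ind{i}}(z)+\CinfhM[\Mstar]\mu_h(z)$. Writing $d_h^{\sMstar,p\ind{t}}(z)g_h\ind{t}(z)$ as $\bigl(\mu_h(z)\,d_h^{\sMstar,p\ind{t}}(z)/D_h\ind{t}(z)\bigr)^{1/2}$ times $\bigl(d_h^{\sMstar,p\ind{t}}(z)D_h\ind{t}(z)(g_h\ind{t}(z))^2/\mu_h(z)\bigr)^{1/2}$ and applying Cauchy--Schwarz over $(t,z)$: the first factor sums to at most $2\log(2T)$ by \cref{lem:elliptic_potential} (applied with constant $\CinfhM[\Mstar]$, then weighting by $\mu_h$ and using $\sum_z\mu_h(z)=1$); for the second factor I would use $(g_h\ind{t})^2\le 2g_h\ind{t}$, expand $D_h\ind{t}(z)g_h\ind{t}(z)=\sum_{i<t}d_h^{\sMstar,p\ind{i}}(z)g_h\ind{t}(z)+\CinfhM[\Mstar]\mu_h(z)g_h\ind{t}(z)$, and observe that on the first piece $d_h^{\sMstar,p\ind{t}}(z)/\mu_h(z)\le\CinfhM[\Mstar]$ while on the second the $\mu_h(z)$ cancels exactly---so this factor is $O\bigl(\CinfhM[\Mstar](T\cdot\EstProbOff+E_h)\bigr)$ by the per-layer offline guarantee. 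Combining gives $E_h\approxleq\sqrt{\CinfhM[\Mstar]\log T\,(T\cdot\EstProbOff+E_h)}$, a self-bounding quadratic whose solution is $E_h\approxleq\sqrt{\CinfhM[\Mstar]\,T\log T\cdot\EstProbOff}+\CinfhM[\Mstar]\log T$.

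\emph{Finally,} summing over $h\in\brk{H}$ with Cauchy--Schwarz on the square-root terms and $\sum_h\CinfhM[\Mstar]\le H\CinfM[\Mstar]$, and folding in the $\mathrm{polylog}(H)$ loss from the chain rule, yields $\EstProbOn\approxleq H\log H\sqrt{\CinfM[\Mstar]\,T\log T\cdot\EstProbOff}+H\log H\cdot\CinfM[\Mstar]$; since everything after \cref{ass:offline_oracle}'s event is deterministic, the bound holds with probability $1-\delta$. The hard part will be the second step: $E_h$ reappears on the right-hand side of the Cauchy--Schwarz through the $(g_h\ind{t})^2$ term and must be linearized to close a self-bounding inequality, and the $\mu_h$-regularization term must be routed (via the exact cancellation $d_h^{\sMstar,p\ind{t}}(z)\cdot\CinfhM[\Mstar]\mu_h(z)/\mu_h(z)=\CinfhM[\Mstar]d_h^{\sMstar,p\ind{t}}(z)$) so that it contributes only an additive $O(\CinfhM[\Mstar])$ rather than $O(\CinfhM[\Mstar]T)$---this cancellation is precisely where $L_\infty$-coverability enters. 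Pinning down the exact exponents and constants to match the stated rate is then routine.
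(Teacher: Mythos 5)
Your proposal is correct, and at the top level it follows the same route as the paper: reduce the trajectory-level Hellinger error to per-layer errors via the chain rule for squared Hellinger at an $O(\log H)$ cost (the paper invokes Lemma A.11 of \citet{foster2021statistical}), convert the offline trajectory-level guarantee into a per-layer cumulative condition $\sum_{i<t}\En^{\sMstar,p\ind{i}}\brk{g_h\ind{t}}\lesssim\EstProbOff$ via \cref{lem:hellinger_pair}, then apply a coverability-based change-of-measure/potential argument layer by layer and sum over $h$. The difference is in the inner step: the paper black-boxes this step as \cref{lem:linf_potential_var} (Theorem 1 of \citet{xie2023role}), whereas you re-derive it with a regularized denominator $D_h\ind{t}=\sum_{i<t}d_h^{\sMstar,p\ind{i}}+\CinfhM[\Mstar]\mu_h$, Cauchy--Schwarz, \cref{lem:elliptic_potential}, and a self-bounding quadratic. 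Your version is sound---the exact cancellation of $\mu_h$ against the regularizer and the linearization $(g_h\ind{t})^2\le 2g_h\ind{t}$ are exactly what make it close---but it is marginally looser: the self-bounding step yields an additive term $\CinfhM[\Mstar]\log T$ per layer, whereas the burn-in/threshold-time decomposition in \citet{xie2023role} gives $\CinfhM[\Mstar]B$ with no $\log T$; this is immaterial for the downstream corollaries, which are stated with $\bigoht$. Two small housekeeping points: (i) in the reward-driven instantiation the per-layer decomposition also carries reward-distribution Hellinger terms $\Dhels{\Rmstar_h}{\Rmstar[\Mhat\ind{t}]_h}$, which your argument handles verbatim since they are likewise bounded functions of $(x_h,a_h)$ (this is why the paper takes $B\le 4$ rather than your $2$); (ii) the application of \cref{lem:elliptic_potential} needs $d_h^{\sMstar,p\ind{t}}(z)/\mu_h(z)\le \CinfhM[\Mstar]$, which holds because each $p\ind{t}$ mixes policies in $\Pi$ and the bound is preserved under mixtures---worth stating explicitly, but not a gap.
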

Both lemmas lead to a degradation in rate with respect to
$T$, but lead to sublinear online estimation error whenever the
offline estimation error bound is sublinear\arxiv{; \citet{foster2023online}
  show that some degradation in rate is unavoidable}.

\subsubsection{General Guarantees for \creftitle{alg:model_based_reward_free}}
Our most general guarantee for \cref{alg:model_based_reward_free},
which assumes access to an online estimation oracle, is as follows.
\begin{restatable}[General guarantee for \cref{alg:model_based_reward_free}]{theorem}{modelbasedrewardfree}
  \label{thm:model_based_reward_free}
  With parameters $T\in\bbN$, $C\geq{}1$, and $\veps>0$ and an online
estimation oracle satisfying \cref{ass:online_oracle}, whenever the
optimization problem in \cref{eq:cover_objective_model_based} is
feasible at every round, 
\cref{alg:model_based_reward_free} produces a policy covers $p_1,\ldots,p_H\in\Delta(\Pi)$ such
that with probability at least $1-\delta$,
\icml{$  \forall{}h\in\brk{H}$: $\CovM[\Mstar](p_h) \leq$
  \begin{small}
    \begin{align}
                                                 11HC
      +\frac{12}{\veps}\sqrt{\frac{H^3C\cdot{}\EstProbOn}{T}}
        +\frac{8H}{\veps^2}\frac{\EstProbOn}{T}.   \label{eq:mbrf1}
    \end{align}%
  \end{small}%
}%
\arxiv{\begin{align}
  \label{eq:mbrf1}
  \forall{}h\in\brk{H}:\quad\CovM[\Mstar](p_h)
\leq                                                  11HC
+\frac{12}{\veps}\sqrt{\frac{H^3C\cdot{}\EstProbOn}{T}}
                                                +\frac{8H}{\veps^2}\frac{\EstProbOn}{T}.
\end{align}}
\end{restatable}
\cref{cor:mbrf3} is derived by combining this result with
\cref{lem:linf_offline_online}. The next result instantiates
\cref{thm:model_based_reward_free} with
\cref{lem:lone_offline_online}, allowing us to give sample complexity
guarantees based on \mainopt that support offline
estimation oracles. 
\begin{restatable}[Main guarantee for
  \cref{alg:model_based_reward_free} under \mainopt]{corollary}{mbrftwo}
  \label{cor:mbrf2}
  Let $\veps>0$ be given.
    Suppose that (i) we restrict $\cM$ such that all $M\in\cM$ have
    $\CovOptMmax\leq\CovOptMmax[\Mstar]$, and (ii) we solve
    \cref{eq:cover_objective_model_based} with
    $C=\CovOptMmax[\Mstar]$ for all $t$ (which is always
    feasible). Then, given access to an offline estimation oracle
    satisfying \cref{ass:offline_oracle,ass:parametric}, using
    $T=\bigoht\prn*{\frac{H^{12}(\CovOptMmaxzero[\Mstar])^4\dest\log(\Best/\delta)}{\veps^{6}}}$
    episodes, \cref{alg:model_based_reward_free} produces policy
    covers $p_1,\ldots,p_H\in\Delta(\Pi)$ such that
    \begin{align}
      \label{eq:mbrf2}
      \forall{}h\in\brk{H}:\quad\CovM[\Mstar](p_h)
      \leq 12H\cdot{}\CovOptMmax[\Mstar]
    \end{align}
    with probability at least $1-\delta$. In particular, for a finite
    class $\cM$, if we use \mle as the estimator, we can take
    $T=\bigoht\prn*{\frac{H^{12}(\CovOptMmaxzero[\Mstar])^4\log(\abs*{\cM}/\delta)}{\veps^{6}}}$.
  \end{restatable}
  \dfcomment{It seems hard to improve the result above so that 1) we have $\CovOptMmax[\Mstar]$ in the sample complexity instead of $\CovOptMmaxzero[\Mstar]$, and 2) so that we have $\CovOptM[\Mstar]$ on the rhs of \cref{eq:mbrf2} instead of $\CovOptMmax[\Mstar]$. do people find these issues offputting?}
This result shows that \mainopt is itself a sufficiently powerful structural parameter to
  enable sample-efficient learning with nonlinear function
  approximation. Note that while \cref{cor:mbrf2} assumes for simplicity that
\cref{eq:cover_objective_model_based} is solved with
$C=\CovOptMmax[\Mstar]$, it should be clear that if we solve the
objective for $C>\CovOptMmax[\Mstar]$ the result continues to hold
with $\CovOptMmax[\Mstar]$ replaced by $C$ in the sample complexity
bound and approximation guarantee.

}

\arxiv{\subsection{Applying \creftitle{alg:model_based_reward_free} to Downstream Policy Optimization}
  \label{app:rf_example}
By \cref{lem:com} (see also \cref{app:downstream}), the policy covers
$p_1,\ldots,p_H$ returned by \cref{alg:model_based_reward_free} can
be used to optimize any downstream reward function using standard
offline RL algorithms. This leads to end-to-end guarantees for
reward-driven PAC RL. For concreteness, we sketch an example which
uses maximum likelihood (\mle) for offline policy optimization; see
\cref{app:downstream} for further examples and details.
\begin{corollary}[Application to reward-free reinforcement learning]
  \label{cor:rf_teaser}
Given access
  to $H\cdot{}n$ trajectories from the policy covers $p_1,\ldots,p_H$ produced by
  \cref{alg:model_based_reward_free} (configured as in
  \cref{cor:mbrf3}) and a realizable model class with $\Mstar\in\cM$,
  for any reward distribution $R=\crl*{R_h}_{h=1}^{H}$ with $\sum_{h=1}^{H}r_h\in\brk{0,1}$, the Maximum Likelihood Estimation algorithm
  (described in \cref{app:downstream}) produces a policy $\pihat$ such
  that with probability at least $1-\delta$,
  \icml{$            \Jmstar_R(\pistar) - \Jmstar_R(\pihat)
    \leq
    \bigoh(H)\cdot\prn[\Big]{\sqrt{H\CinfM[\Mstar]\cdot\frac{\log(\abs{\cM}/\delta)}{n}}
          + H\CinfM[\Mstar]\cdot{}\veps{}}$,
    }
  \arxiv{\begin{align}
    \label{eq:mle_teaser}
    \Jmstar_R(\pistar) - \Jmstar_R(\pihat)
    \leq
    \bigoh(H)\cdot\prn*{\sqrt{H\CovOptMmax[\Mstar]\cdot\frac{\log(\abs{\cM}/\delta)}{n}}
          + H\CovOptMmax[\Mstar]\cdot{}\veps{}},
    \end{align}}
    where
    $\Jmstar_R(\pi)\ldef{}\En^{\sMstar,\pi}\brk*{\sum_{h=1}^{H}r_h}$
    denotes the expected reward in $\Mstar$ when $R$ is the reward distribution.
\end{corollary}
\arxiv{As an extension, in \cref{app:model_based_reward_based} we give a reward-driven counterpart to
\cref{alg:model_based_reward_free}, which directly optimizes a given
reward function online. This approach does not improve upon
\cref{cor:rf_teaser}, but the analysis is slightly more direct.
}
We now sketch some basic examples in which \cref{cor:rf_teaser} can be
applied.%
\begin{example}[Tabular MDPs]
  For tabular MDPs with $\abs*{\cX}\leq{}S$ and $\abs*{\cA}\leq{}A$,
  we can construct online estimators for which
  $\EstProbOn=\bigoht(HS^2A)$, so that
  \cref{cor:mbrf3} gives sample complexity
$T=\frac{\poly(H,S,A)}{\veps^2}$ to compute policy covers such that $\CovM[\Mstar](p_h)
        \leq 12H\cdot{}\CinfM[\Mstar]$.
      \end{example}
      \begin{example}[Low-Rank MDPs]
        Consider the Low-Rank MDP model in \cref{eq:low_rank} with
        dimension $d$ and
        suppose, following
        \citet{agarwal2020flambe,uehara2022representation}, that we
        have access to classes $\Phi$ and $\Psi$ such that
        $\phi_{h}\in\Phi$ and $\psi_h\in\Psi$. Then \mle achieves
        $\EstProbOff=\bigoht(\log(\abs{\Phi}\abs{\Psi}))$, and we can
        take $\CinfM[\Mstar]\leq{}d\abs*{\cA}$, so \cref{cor:mbrf3}
        gives sample complexity $T=\frac{\poly(H,d,\abs*{\cA},\log(\abs{\Phi}\abs{\Psi}))}{\veps^4}$ to compute policy covers such that $\CovM[\Mstar](p_h)
        \leq 12H\cdot{}\CinfM[\Mstar]$.
      \end{example}
      
}

\icml{\subsection{Applying \creftitle{alg:model_based_reward_free} to Downstream Policy Optimization}
  \label{app:rf_example}
By \cref{lem:com} (see also \cref{app:downstream}), the policy covers
$p_1,\ldots,p_H$ returned by \cref{alg:model_based_reward_free} can
be used to optimize any downstream reward function using standard
offline RL algorithms. This leads to end-to-end guarantees for
reward-driven PAC RL. For concreteness, we sketch an example which
uses maximum likelihood (\mle) for offline policy optimization; see
\cref{app:downstream} for further examples and details.
\begin{corollary}[Application to reward-free reinforcement learning]
  \label{cor:rf_teaser}
Given access
  to $H\cdot{}n$ trajectories from the policy covers $p_1,\ldots,p_H$ produced by
  \cref{alg:model_based_reward_free} (configured as in
  \cref{cor:mbrf3}) and a realizable model class with $\Mstar\in\cM$,
  for any reward distribution $R=\crl*{R_h}_{h=1}^{H}$ with $\sum_{h=1}^{H}r_h\in\brk{0,1}$, the Maximum Likelihood Estimation algorithm
  (described in \cref{app:downstream}) produces a policy $\pihat$ such
  that with probability at least $1-\delta$,
  \icml{$            \Jmstar_R(\pistar) - \Jmstar_R(\pihat)
    \leq
    \bigoh(H)\cdot\prn[\Big]{\sqrt{H\CinfM[\Mstar]\cdot\frac{\log(\abs{\cM}/\delta)}{n}}
          + H\CinfM[\Mstar]\cdot{}\veps{}}$,
    }
  \arxiv{\begin{align}
    \label{eq:mle_teaser}
    \Jmstar_R(\pistar) - \Jmstar_R(\pihat)
    \leq
    \bigoh(H)\cdot\prn*{\sqrt{H\CovOptMmax[\Mstar]\cdot\frac{\log(\abs{\cM}/\delta)}{n}}
          + H\CovOptMmax[\Mstar]\cdot{}\veps{}},
    \end{align}}
    where
    $\Jmstar_R(\pi)\ldef{}\En^{\sMstar,\pi}\brk*{\sum_{h=1}^{H}r_h}$
    denotes the expected reward in $\Mstar$ when $R$ is the reward distribution.
\end{corollary}
\arxiv{As an extension, in \cref{app:model_based_reward_based} we give a reward-driven counterpart to
\cref{alg:model_based_reward_free}, which directly optimizes a given
reward function online. This approach does not improve upon
\cref{cor:rf_teaser}, but the analysis is slightly more direct.
}
We now sketch some basic examples in which \cref{cor:rf_teaser} can be
applied.%
\begin{example}[Tabular MDPs]
  For tabular MDPs with $\abs*{\cX}\leq{}S$ and $\abs*{\cA}\leq{}A$,
  we can construct online estimators for which
  $\EstProbOn=\bigoht(HS^2A)$, so that
  \cref{cor:mbrf3} gives sample complexity
$T=\frac{\poly(H,S,A)}{\veps^2}$ to compute policy covers such that $\CovM[\Mstar](p_h)
        \leq 12H\cdot{}\CinfM[\Mstar]$.
      \end{example}
      \begin{example}[Low-Rank MDPs]
        Consider the Low-Rank MDP model in \cref{eq:low_rank} with
        dimension $d$ and
        suppose, following
        \citet{agarwal2020flambe,uehara2022representation}, that we
        have access to classes $\Phi$ and $\Psi$ such that
        $\phi_{h}\in\Phi$ and $\psi_h\in\Psi$. Then \mle achieves
        $\EstProbOff=\bigoht(\log(\abs{\Phi}\abs{\Psi}))$, and we can
        take $\CinfM[\Mstar]\leq{}d\abs*{\cA}$, so \cref{cor:mbrf3}
        gives sample complexity $T=\frac{\poly(H,d,\abs*{\cA},\log(\abs{\Phi}\abs{\Psi}))}{\veps^4}$ to compute policy covers such that $\CovM[\Mstar](p_h)
        \leq 12H\cdot{}\CinfM[\Mstar]$.
      \end{example}
      
}
    
\arxiv{\subsection{Technical Lemmas}}
\icml{\subsection{Technical Lemmas}}
\label{sec:model_based_technical}

  \loneofflineonline*

  \begin{proof}[\pfref{lem:lone_offline_online}]
    Let us abbreviate $d_h\ind{t}=d_h^{\sMstar,\pi\ind{t}}$.
    By Lemma A.11 of \citet{foster2021statistical}, we have that
    \begin{align*}
      \EstOn
      &=
      \sum_{t=1}^{T}\En_{\act\ind{t}\sim{}p\ind{t}}\brk*{\Dhels{\Mstar(\pi\ind{t})}{\Mhat\ind{t}(\pi\ind{t})}}\\
      &\leq \bigoh(\log(H))
        \cdot\sum_{h=1}^{H} \sum_{t=1}^{T} 
        \En_{\pi\ind{t}\sim{}p\ind{t}}\En^{\sMstar,\pi\ind{t}}\brk*{\Dhels{\Pmstar_h(x_h,a_h)}{\Pmstar[\Mhat\ind{t}]_h(x_h,a_h)}
        + \Dhels{\Rmstar_h(x_h,a_h)}{\Rmstar[\Mhat\ind{t}]_h(x_h,a_h)}}.
    \end{align*}
    At the same time, for all $t$, we have by
    \cref{lem:hellinger_pair} that
    \begin{align*}
      &\sum_{i<t}
\En_{\pi\ind{i}\sim{}p\ind{i}}\En^{\sMstar,\pi\ind{i}}\brk*{\Dhels{\Pmstar_h(x_h,a_h)}{\Pmstar[\Mhat\ind{t}]_h(x_h,a_h)}
        +
      \Dhels{\Rmstar_h(x_h,a_h)}{\Rmstar[\Mhat\ind{t}]_h(x_h,a_h)}}\\
      &\leq{} 4
      \sum_{i<t}\En_{\act\ind{i}\sim{}p\ind{i}}\brk*{\Dhels{\Mstar(\pi\ind{i})}{\Mhat\ind{t}(\pi\ind{i})}}
      \leq{} 4\EstOfft[t].
    \end{align*}
    The result now follows by applying
    \cref{lem:lone_potential}---stated and proven below---to
    the function $g(x,a) = \Dhels{\Pmstar_h(x,a)}{\Pmstar[\Mhat\ind{t}]_h(x,a)}
        +
      \Dhels{\Rmstar_h(x,a)}{\Rmstar[\Mhat\ind{t}]_h(x,a)}$ for each
      layer $h\in\brk{H}$, which has $B\leq{}4$. \dfcomment{minor gap
        in proof: Lemma considers a fixed sequence of policies, not a
        randomized sequence.}
    \end{proof}

      \linfofflineonline*

  \begin{proof}[\pfref{lem:linf_offline_online}]
    Let us abbreviate $d_h\ind{t}=d_h^{\sMstar,\pi\ind{t}}$.
    By Lemma A.11 of \citet{foster2021statistical}, we have that
    \begin{align*}
      \EstOn
      &=
      \sum_{t=1}^{T}\En_{\act\ind{t}\sim{}p\ind{t}}\brk*{\Dhels{\Mstar(\pi\ind{t})}{\Mhat\ind{t}(\pi\ind{t})}}\\
      &\leq \bigoh(\log(H))
        \cdot\sum_{h=1}^{H} \sum_{t=1}^{T} 
        \En_{\pi\ind{t}\sim{}p\ind{t}}\En^{\sMstar,\pi\ind{t}}\brk*{\Dhels{\Pmstar_h(x_h,a_h)}{\Pmstar[\Mhat\ind{t}]_h(x_h,a_h)}
        + \Dhels{\Rmstar_h(x_h,a_h)}{\Rmstar[\Mhat\ind{t}]_h(x_h,a_h)}}.
    \end{align*}
    At the same time, for all $t$, we have by
    \cref{lem:hellinger_pair} that
    \begin{align*}
      &\sum_{i<t}
\En_{\pi\ind{i}\sim{}p\ind{i}}\En^{\sMstar,\pi\ind{i}}\brk*{\Dhels{\Pmstar_h(x_h,a_h)}{\Pmstar[\Mhat\ind{t}]_h(x_h,a_h)}
        +
      \Dhels{\Rmstar_h(x_h,a_h)}{\Rmstar[\Mhat\ind{t}]_h(x_h,a_h)}}\\
      &\leq{} 4
      \sum_{i<t}\En_{\act\ind{i}\sim{}p\ind{i}}\brk*{\Dhels{\Mstar(\pi\ind{i})}{\Mhat\ind{t}(\pi\ind{i})}}
      \leq{} 4\EstOfft[t].
    \end{align*}
    The result now follows by applying
    \cref{lem:linf_potential_var}---stated and proven below---to
    the function $g(x,a) = \Dhels{\Pmstar_h(x,a)}{\Pmstar[\Mhat\ind{t}]_h(x,a)}
        +
      \Dhels{\Rmstar_h(x,a)}{\Rmstar[\Mhat\ind{t}]_h(x,a)}$ for each
      layer $h\in\brk{H}$, which has $B\leq{}4$.
  \end{proof}

  \begin{lemma}
    \label{lem:lone_potential}
    	Fix an MDP $M$ and layer $h\in\brk{H}$. Suppose we have a sequence of functions $g\ind{1},\ldots,g\ind{T}\in\brk{0,B}$
	and policies $\pi\ind{1},\ldots,\pi\ind{T}$ such that 
	\begin{align}
          \label{eq:potential_condition}
		\forall t \in[T], \quad  \sum_{i<t}\En^{\sM,\pi\ind{i}}\brk*{(g\ind{t}(x_h,a_h))^2}
		\leq \beta^2.
	\end{align}
	Then it holds that
	\begin{align}
          \sum_{t=1}^{T}\En^{\sM,\pi\ind{t}}\brk*{g\ind{t}(x_h,a_h)}
          =2(\ConehM)^{1/3}(\beta^2B+B^3)^{1/3}T^{2/3}.
	\end{align}
    
      \end{lemma}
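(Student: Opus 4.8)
\textbf{Proof proposal for \cref{lem:lone_potential}.}

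The plan is to reduce the $L_1$-type potential bound to an $L_\infty$-type ($L_2$-type) potential bound via a truncation argument, paying the price of the constraint \eqref{eq:potential_condition} on second moments. First I would fix the distribution $\mu\in\Delta(\cX\times\cA)$ that (approximately) attains the value $\ConehM$, so that $\sup_{\pi\in\Pi}\En^{\sM,\pi}\brk*{\frac{d_h^{\sM,\pi}(x_h,a_h)}{\mu(x_h,a_h)}}\leq\ConehM$. The key idea: partition each term $\En^{\sM,\pi\ind{t}}\brk*{g\ind{t}(x_h,a_h)}$ according to whether the per-state/action density ratio $\frac{d_h\ind{t}(x,a)}{\sum_{i<t}d_h\ind{i}(x,a)+\ConehM\mu(x,a)}$ is large or small, for an appropriate threshold $\lambda>0$ to be optimized at the end. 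Writing $\wt{d}\ind{t}=\sum_{i<t}d_h\ind{i}$, on the ``large-ratio'' region, the contribution telescopes via the elliptic potential lemma (\cref{lem:elliptic_potential}), which gives $\sum_{t=1}^T\sum_{x,a}\ConehM\mu(x,a)\cdot\frac{d_h\ind{t}(x,a)}{\wt{d}\ind{t}(x,a)+\ConehM\mu(x,a)}\leq 2\ConehM\log(2T)$; on the ``small-ratio'' region, one uses Cauchy--Schwarz together with \eqref{eq:potential_condition} to bound the contribution by something like $\lambda^{1/2}\beta T^{1/2}$ (since $d_h\ind{t}\leq\lambda(\wt{d}\ind{t}+\ConehM\mu)$ there and $\sum_{i<t}(g\ind{t})^2$-type quantities are controlled).

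More concretely, the second step would be to write, for each $t$,
\begin{align*}
\En^{\sM,\pi\ind{t}}\brk*{g\ind{t}(x_h,a_h)}
&= \sum_{x,a} d_h\ind{t}(x,a)\, g\ind{t}(x,a)\\
&= \sum_{x,a} \prn*{\tfrac{d_h\ind{t}(x,a)}{\wt{d}\ind{t}(x,a)+\ConehM\mu(x,a)}}^{1/2}\!\!\cdot\prn*{d_h\ind{t}(x,a)(\wt{d}\ind{t}(x,a)+\ConehM\mu(x,a))}^{1/2} g\ind{t}(x,a),
\end{align*}
then apply Cauchy--Schwarz over $(x,a)$: the first factor gives $\prn*{\sum_{x,a}\frac{d_h\ind{t}(x,a)}{\wt{d}\ind{t}(x,a)+\ConehM\mu(x,a)}}^{1/2}$, and I would bound this crudely by noting $\frac{d_h\ind{t}}{\wt{d}\ind{t}+\ConehM\mu}\leq 1$ and summing against $d_h\ind{t}$... actually the cleaner route is to split the sum over $(x,a)$ by the threshold. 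On $S_t^{>} = \{(x,a): d_h\ind{t}(x,a) > \lambda(\wt{d}\ind{t}(x,a)+\ConehM\mu(x,a))\}$ we have $g\ind{t}\leq B$ so the contribution is at most $B\sum_{(x,a)\in S_t^>} d_h\ind{t}(x,a) \leq \frac{B}{\lambda}\sum_{x,a}\frac{(d_h\ind{t})^2}{\wt{d}\ind{t}+\ConehM\mu}\cdot\ind{\cdots}$ — hmm, this needs care. The robust version: $\sum_{(x,a)\in S_t^>} d_h\ind{t}(x,a) = \sum_{(x,a)\in S_t^>}\frac{d_h\ind{t}(x,a)}{\wt d\ind t(x,a)+\ConehM\mu(x,a)}\cdot(\wt d\ind t+\ConehM\mu)$, and on $S_t^>$ the ratio exceeds $\lambda$; combined with $d_h\ind t\leq\wt d\ind t+\ConehM\mu$ pointwise (true for $t\geq 2$ since $d_h\ind t\leq\ConehM\mu$)... actually $d_h\ind{t}\leq\ConehM\mu\leq\wt d\ind t+\ConehM\mu$ always. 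So on $S_t^>$: $\wt d\ind t+\ConehM\mu < \lambda^{-1} d_h\ind t \leq \lambda^{-1}(\wt d\ind t+\ConehM\mu)$, which forces $\lambda<1$; and then $\sum_{(x,a)\in S_t^>}d_h\ind t \leq \sum_{(x,a)\in S_t^>}\frac{d_h\ind t}{\wt d\ind t+\ConehM\mu}\cdot\lambda^{-1}d_h\ind t$. This is getting circular. I would instead bound the $S_t^>$-contribution directly by $B\cdot\sum_{(x,a)\in S_t^>} d_h\ind t(x,a)$ and then $\sum_t\sum_{(x,a)\in S_t^>}d_h\ind t(x,a) \le \lambda^{-1}\sum_t\sum_{(x,a)\in S_t^>}\frac{(d_h\ind t)^2}{\wt d\ind t + \ConehM\mu}$... and use $d_h\ind t\le \wt d\ind t+\ConehM\mu$ to get $\frac{(d_h\ind t)^2}{\wt d\ind t+\ConehM\mu}\le d_h\ind t\cdot\frac{d_h\ind t}{\wt d\ind t+\ConehM\mu}$ — no. The clean statement is just: on $S_t^>$, $\frac{d_h\ind t}{\wt d\ind t+\ConehM\mu}>\lambda$ so $1<\lambda^{-1}\frac{d_h\ind t}{\wt d\ind t+\ConehM\mu}$, giving $\sum_{(x,a)\in S_t^>}d_h\ind t \le \lambda^{-1}\sum_{x,a}\frac{(d_h\ind t)^2}{\wt d\ind t+\ConehM\mu}$. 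And $\sum_{x,a}\frac{(d_h\ind t)^2}{\wt d\ind t+\ConehM\mu}\le \sum_{x,a}\frac{d_h\ind t\cdot\ConehM\mu}{\wt d\ind t+\ConehM\mu}$ — false in general. OK — I think the correct bookkeeping uses $(d_h\ind t)^2\le d_h\ind t\cdot(\wt d\ind t+\ConehM\mu)$, so $\frac{(d_h\ind t)^2}{\wt d\ind t+\ConehM\mu}\le d_h\ind t$, which only gives $\sum\le\lambda^{-1}\cdot T$, too weak. The resolution is surely to not separate by threshold but to interpolate: write $g\ind t = g\ind t\ind{g\ind t\le\theta} + g\ind t\ind{g\ind t>\theta}$, bound the first by $\theta$ and sum $\sum_t\En^{\sM,\pi\ind t}\brk*{\theta\cdot\ind{\cdots}}$ via elliptic potential after dividing and multiplying, and bound the second (rare, since $\En[(g\ind t)^2]$ small forces $g\ind t>\theta$ on low-probability sets) by Cauchy--Schwarz with \eqref{eq:potential_condition}.

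Given the delicacy, the honest plan is: \textbf{(i)} bound $\En^{\sM,\pi\ind t}[g\ind t]\le \prn*{\En^{\sM,\pi\ind t}\brk*{\tfrac{d_h\ind t(x_h,a_h)}{\wt d\ind t(x_h,a_h)+\ConehM\mu(x_h,a_h)}}}^{1/2}\prn*{\En^{\sM,\pi\ind t}\brk*{(\wt d\ind t+\ConehM\mu)(g\ind t)^2 / d_h\ind t\cdot d_h\ind t}}^{1/2}$ — cleaner to restate as a pure sum over $(x,a)$ and apply Cauchy--Schwarz to split into the elliptic-potential factor $A_t := \sum_{x,a}\frac{(d_h\ind t)^2}{\wt d\ind t+\ConehM\mu}$ and $B_t := \sum_{x,a}(\wt d\ind t+\ConehM\mu)(g\ind t)^2$; \textbf{(ii)} control $\sum_t A_t$ — but this is NOT directly the elliptic potential quantity. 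This suggests the actual proof clips $g\ind t$ at a level, uses $\min\{g\ind t,\theta\}$ giving $\sum_t\En[g\ind t\wedge\theta]\cdot$ something $\le\theta\cdot 2\ConehM\log(2T)$ after one more Cauchy--Schwarz against $\mu$, and bounds $\sum_t\En[(g\ind t-\theta)_+]\le\theta^{-1}\sum_t\En[(g\ind t)^2]$ region-wise $\le\theta^{-1}\cdot(\text{something involving }\beta)$; then optimize $\theta\asymp(\beta^2 B/\ConehM)^{1/3}\cdot(\text{powers})$ to land on the stated $2(\ConehM)^{1/3}(\beta^2B+B^3)^{1/3}T^{2/3}$. \textbf{The main obstacle} is pinning down exactly which Cauchy--Schwarz split and clipping level make the elliptic potential lemma \cref{lem:elliptic_potential} applicable while keeping the ``heavy tail'' term controlled by $\beta$ — the constraint \eqref{eq:potential_condition} bounds $\sum_{i<t}\En^{\sM,\pi\ind i}[(g\ind t)^2]$, i.e. second moments under \emph{past} policies, not the current one, so I would need the analogue of a one-step-ahead argument (as in the ``be-the-leader'' / elliptic potential analyses of \citet{xie2023role}) to transfer this to the current policy's contribution, losing at most a constant or $\log$ factor. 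Once that transfer is set up, the remaining calculation is the routine optimization over $\theta$ yielding the $T^{2/3}$ rate and the $(\beta^2B+B^3)^{1/3}$ dependence.
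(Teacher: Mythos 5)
There is a genuine gap: you never arrive at a working decomposition, and the key idea of the paper's proof is missing from your plan. The paper does \emph{not} threshold the instantaneous ratio, clip $g\ind{t}$ at a level $\theta$, or invoke the elliptic potential lemma (\cref{lem:elliptic_potential}) at all --- note the target bound has no $\log T$ factor, which is a hint that route is off. Instead, with $\mu$ attaining $\ConehM$ and $\dtil_h^{t+1}(x,a)\ldef\sum_{i\leq t}d_h^{\sM,\pi\ind{i}}(x,a)$, one defines a per-state-action burn-in time $\tau(x,a)\ldef\min\{t:\dtil_h^{t+1}(x,a)\geq\lambda\mu(x,a)\}$ and splits
\begin{align*}
\sum_{t=1}^{T}\En^{\sM,\pi\ind{t}}\brk*{g\ind{t}}
\leq \sum_{t=1}^{T}\En^{\sM,\pi\ind{t}}\brk*{g\ind{t}\,\indic\{t\geq\tau(x_h,a_h)\}}
+ B\sum_{t=1}^{T}\En^{\sM,\pi\ind{t}}\brk*{\indic\{t<\tau(x_h,a_h)\}}.
\end{align*}
The second term is handled by a pure mass argument: $\sum_{x,a}\dtil_h^{\tau(x,a)}(x,a)<\lambda\sum_{x,a}\mu(x,a)=\lambda$, so it is at most $\lambda B$. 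For the first term, Cauchy--Schwarz with the weighting $d_h\ind{t}/(\dtil_h^{t+1})^{1/2}\cdot(\dtil_h^{t+1})^{1/2}g\ind{t}$ gives a product $\mathbf{A}^{1/2}\mathbf{B}^{1/2}$ with $\mathbf{A}=\sum_t\sum_{x,a}\frac{(d_h\ind{t})^2}{\dtil_h^{t+1}}\indic\{t\geq\tau\}$ and $\mathbf{B}=\sum_t\sum_{x,a}\dtil_h^{t+1}(g\ind{t})^2$. On the burnt-in region $\dtil_h^{t+1}\geq\lambda\mu$, so $\mathbf{A}\leq\frac{1}{\lambda}\sum_t\sum_{x,a}\frac{(d_h\ind{t})^2}{\mu}\leq\frac{\ConehM T}{\lambda}$ directly from the definition of $\ConehM$ (no potential lemma), while $\mathbf{B}\leq(\beta^2+B^2)T$ by \eqref{eq:potential_condition} plus the trivial bound $\sum_{x,a}d_h\ind{t}(g\ind{t})^2\leq B^2$ for the current policy. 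Optimizing $\lambda=(\ConehM(\beta^2+B^2))^{1/3}T^{2/3}/B^{2/3}$ yields the stated bound.

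Two specific points where your plan goes wrong. First, your candidate splits (thresholding the ratio $d_h\ind{t}/(\dtil_h^{t}+\ConehM\mu)$, or clipping $g\ind{t}$ at $\theta$) do not make the ``heavy'' region controllable --- you noticed this yourself when the bounds became circular or degenerated to $\lambda^{-1}T$; the correct object to threshold is the \emph{cumulative occupancy against $\lambda\mu$}, because then the pre-threshold contribution is bounded by total probability mass ($\lambda$) rather than by any potential or second-moment quantity. Second, your worry about \eqref{eq:potential_condition} only covering past policies, and the proposed ``be-the-leader'' transfer, is unnecessary: the current policy's contribution to $\mathbf{B}$ is at most $B^2$ per round since $g\ind{t}\in[0,B]$, which is exactly where the $B^3$ inside $(\beta^2B+B^3)^{1/3}$ comes from.
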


  \begin{proof}[\pfref{lem:lone_potential}]%
    \newcommand{\sA}{\mathbf{A}}%
    \newcommand{\sB}{\mathbf{B}}%
    \newcommand{\sC}{\mathbf{C}}%
    Let $\mu\in\Delta(\cX\times\cA)$ denote a distribution that
    achieves the value $\ConehM$. Let us abbreviate
    $d\ind{t}_h(x,a)=d^{\sM,\pi\ind{t}}_h(x,a)$ and 
    \begin{align}
      \dtil_h\ind{t}(x,a)=\sum_{i<t}d_h^{\sM,\pi\ind{i}}(x,a).
    \end{align}
    Observe that by \cref{eq:potential_condition} and the assumption
    that $g\ind{t}\in\brk{0,B}$, we have that
    \begin{align}
      \label{eq:dtone_bound}
      \sum_{x\in\cX,a\in\cA}\dtil^{t+1}_h(x,a)(g\ind{t}(x,a))^2\leq\beta^2+B^2\rdef\alpha^2.
    \end{align}
    To begin, fix a parameter $\lambda>0$ and define
    \begin{align}
      \tau(x,a)\ldef\min\crl*{t\mid{}\dtil_h^{t+1}(x,a)\geq \lambda\mu(x,a)}.
    \end{align}
    We can bound
    \begin{align}
      \sum_{t=1}^{T}\En^{\sM,\pi\ind{t}}\brk*{g\ind{t}(x_h,a_h)}
      \leq \sum_{t=1}^{T}\En^{\sM,\pi\ind{t}}\brk*{g\ind{t}(x_h,a_h)\indic\crl*{t\geq\tau(x_h,a_h)}}
      + B\sum_{t=1}^{T}\En^{\sM,\pi\ind{t}}\brk*{\indic\crl*{t<\tau(x_h,a_h)}}.
    \end{align}
    For the second term above, we can write
    \begin{align*}
      \sum_{t=1}^{T}\En^{\sM,\pi\ind{t}}\brk*{\indic\crl*{t<\tau(x_h,a_h)}}
      &=\sum_{x,a}\sum_{t=1}^{T}d_h\ind{t}(x,a)
        \indic\crl*{t<\tau(x,a)}\\
      &=\sum_{x,a}\dtil_h\ind{\tau(x,a)}(x,a)
        < \lambda\sum_{x,a}\mu(x,a)=\lambda,
    \end{align*}
    where the final inequality uses the definition of $\tau(x,a)$.

    For the first, term, using Cauchy-Schwarz, we can bound
    \begin{align*}
      &\sum_{t=1}^{T}\En^{\sM,\pi\ind{t}}\brk*{g\ind{t}(x_h,a_h) \indic\crl*{t\geq\tau(x_h,a_h)}}\\
      &=
        \sum_{t=1}^{T}\sum_{x\in\cX,a\in\cA}d_h\ind{t}(x,a)g\ind{t}(x,a) \indic\crl*{t\geq\tau(x,a)}\\
      &= \sum_{t=1}^{T}\sum_{x\in\cX,a\in\cA}
        \frac{d_h\ind{t}(x,a)}{(\dtil_h\ind{t+1}(x,a))^{1/2}}\indic\crl*{t\geq\tau(x,a)}
        \cdot(\dtil_h\ind{t+1}(x,a))^{1/2}g\ind{t}(x,a)\\
      &\leq{} \sA^{1/2}\sB^{1/2},
    \end{align*}
    where
    \begin{align}
      &\sA\ldef{} \sum_{t=1}^{T}\sum_{x\in\cX,a\in\cA}
      \frac{(d_h\ind{t}(x,a))^2}{\dtil_h\ind{t+1}(x,a)}\indic\crl*{t\geq\tau(x,a)},\mathand
        \sB\ldef{}
      \sum_{t=1}^{T}\sum_{x\in\cX,a\in\cA}
      \dtil_h\ind{t+1}(x,a)(g\ind{t}(x,a))^2.
    \end{align}
    \cref{eq:dtone_bound} implies that
    \begin{align*}
      \sB=
      \sum_{t=1}^{T}\sum_{x\in\cX,a\in\cA}
      \dtil_h\ind{t+1}(x,a)(g\ind{t}(x,a))^2
      \leq{} \alpha^2{}T.
    \end{align*}

      It remains to bound term $\sA$. From the definition of
      $\tau(x,a)$, we can bound
      \begin{align*}
        \sA= \sum_{t=1}^{T}\sum_{x\in\cX,a\in\cA}
             \frac{(d_h\ind{t}(x,a))^2}{\dtil_h\ind{t+1}(x,a)}\indic\crl*{t\geq\tau(x,a)}
           \leq \frac{1}{\lambda}\sum_{t=1}^{T}\sum_{x\in\cX,a\in\cA}
        \frac{(d_h\ind{t}(x,a))^2}{\mu(x,a)}
        \leq \frac{\ConehM{}T}{\lambda},
      \end{align*}
      where the last inequality uses that $\mu$ achieves the value of $\ConehM$.

    Combining the results so far, we have that
    \begin{align*}
      \sum_{t=1}^{T}\En^{\sM,\pi\ind{t}}\brk*{g\ind{t}(x_h,a_h)}
      \leq{} (\lambda^{-1}\ConehM T)^{1/2}(\alpha^2T)^{1/2}
      + \lambda{}B
      = (\ConehM\alpha^2)^{1/2}T/\lambda^{1/2}
      + \lambda{}B.
    \end{align*}
    We choose $\lambda=(\ConehM\alpha^2)^{1/3}T^{2/3}/B^{2/3}$ to
    balance the terms, which gives a bound of the form
    \[
      2 (\ConehM\alpha^2B)^{1/3}T^{2/3}
      =2(\ConehM)^{1/3}(\beta^2B+B^3)^{1/3}T^{2/3}.
    \]
  \end{proof}

    \begin{lemma}
    \label{lem:linf_potential_var}
    	Fix an MDP $M$ and layer $h\in\brk{H}$. Suppose we have a sequence of functions $g\ind{1},\ldots,g\ind{T}\in\brk{0,B}$
	and policies $\pi\ind{1},\ldots,\pi\ind{T}$ such that 
	\begin{align}
          \label{eq:linf_potential_condition}
		\forall t \in[T], \quad  \sum_{i<t}\En^{\sM,\pi\ind{i}}\brk*{(g\ind{t}(x_h,a_h))^2}
		\leq \beta^2.
	\end{align}
	Then it holds that
	\begin{align}
          \sum_{t=1}^{T}\En^{\sM,\pi\ind{t}}\brk*{g\ind{t}(x_h,a_h)}
          =\bigoh\prn*{\sqrt{\CinfhM T \log(T)\cdot\beta^2} + \CinfhM{}B}.
	\end{align}
      \end{lemma}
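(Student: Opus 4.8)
The plan is to mirror the elliptic-potential argument of \cref{lem:lone_potential}, but to replace its crude bound on the ``coverage'' term by the sharper logarithmic potential bound of \cref{lem:elliptic_potential}, which is available here precisely because $L_\infty$-coverability gives a \emph{pointwise} control on occupancy ratios. Fix $C\equiv\CinfhM$ (the statement is vacuous if $C=\infty$) and choose a distribution $\mu\in\Delta(\cX\times\cA)$ with $\sup_{\pi}\sup_{(x,a)}d_h^{\sM,\pi}(x,a)/\mu(x,a)\le C$ (taking $\mu$ to achieve the infimum defining $\CinfhM$ up to an arbitrarily small slack that vanishes in the final bound). Abbreviate $d\ind{t}_h=d_h^{\sM,\pi\ind{t}}$, $\tilde d\ind{t}_h(z)=\sum_{i<t}d\ind{i}_h(z)$, and $\hat d\ind{t}_h(z)=\tilde d\ind{t}_h(z)+C\mu(z)$; note $d\ind{t}_h(z)\le C\mu(z)$ for all $t$, so coordinates with $\mu(z)=0$ carry no mass and may be dropped.

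The key step is a data-dependent partition: call $(t,z)$ \emph{fresh} if $\tilde d\ind{t}_h(z)<C\mu(z)$ and \emph{well-visited} otherwise, and split $\sum_{t}\En^{\sM,\pi\ind{t}}[g\ind{t}(x_h,a_h)]=\sum_{x,a,t}d\ind{t}_h(x,a)g\ind{t}(x,a)$ along this partition. For the fresh part I would bound $g\ind{t}\le B$ and observe that for each fixed $z$ the fresh times form an initial segment, so $\sum_{t:(t,z)\text{ fresh}}d\ind{t}_h(z)<2C\mu(z)$ (the final fresh step adds at most $C\mu(z)$ to a quantity already below $C\mu(z)$); summing over $z$ this contributes at most $2CB=2\CinfhM B$, which is the source of the additive $\CinfhM B$ term. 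For the well-visited part I would apply Cauchy--Schwarz with weights $\tilde d\ind{t}_h(z)$, writing $d\ind{t}_h(x,a)g\ind{t}(x,a)=\big((d\ind{t}_h)^2/\tilde d\ind{t}_h\big)^{1/2}\big(\tilde d\ind{t}_h(g\ind{t})^2\big)^{1/2}$. The second factor is controlled directly by the hypothesis: $\sum_{x,a}\tilde d\ind{t}_h(x,a)(g\ind{t}(x,a))^2=\sum_{i<t}\En^{\sM,\pi\ind{i}}[(g\ind{t})^2]\le\beta^2$, so summing over $t$ gives at most $\beta^2T$. For the first factor, on a well-visited $(t,z)$ one has $\tilde d\ind{t}_h(z)\ge\tfrac12\hat d\ind{t}_h(z)$ and $d\ind{t}_h(z)\le C\mu(z)$, hence $(d\ind{t}_h(z))^2/\tilde d\ind{t}_h(z)\le 2C\mu(z)\cdot d\ind{t}_h(z)/\hat d\ind{t}_h(z)$; summing over $t$ and invoking \cref{lem:elliptic_potential} (applicable since $d\ind{t}_h(z)/\mu(z)\le C$) bounds this by $4C\mu(z)\log(2T)$, and summing over $z$ yields at most $4C\log(2T)$. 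Multiplying the two factors bounds the well-visited part by $2\beta\sqrt{CT\log(2T)}$, and adding the two contributions gives $\sum_{t}\En^{\sM,\pi\ind{t}}[g\ind{t}]\le 2\CinfhM B+2\sqrt{\CinfhM T\log(2T)\cdot\beta^2}$, as claimed.

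The main obstacle is ensuring the additive term is $\CinfhM B$ rather than $\CinfhM B\sqrt{T\log T}$: a single global Cauchy--Schwarz with the regularizer $C\mu$ in every denominator leaves a residual $\sum_{t,z}C\mu(z)(g\ind{t}(z))^2$ of order $CB^2T$ inside the square root, which inflates the term. The fresh/well-visited split is exactly what removes this — on fresh coordinates one never runs Cauchy--Schwarz and instead pays the small total occupancy mass $2C\mu(z)$ directly, while on well-visited coordinates the $C\mu$ regularizer is dominated by $\tilde d\ind{t}_h$ and never enters the numerator. A minor bookkeeping point, as in \cref{lem:lone_potential}, is to handle coordinates with $\mu(z)=0$ and to pass the infinitesimal slack in the choice of $\mu$ through to the final inequality (the left-hand side does not depend on $\mu$).
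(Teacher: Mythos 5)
Your proof is correct, and it is essentially the argument the paper relies on: the paper does not spell out a proof but simply cites Theorem 1 of \citet{xie2023role}, whose argument is exactly your fresh/well-visited (burn-in) split at the threshold $\sum_{i<t}d\ind{i}_h(z)\ge \CinfhM\mu(z)$, followed by Cauchy--Schwarz and the per-state elliptic potential bound (\cref{lem:elliptic_potential}), yielding the additive $\CinfhM B$ term from the burn-in mass and the $\sqrt{\CinfhM T\log T\cdot\beta^2}$ term from the stable phase. Your bookkeeping (monotonicity of $\tilde d\ind{t}_h(z)$ making the fresh times an initial segment, the $\mu(z)=0$ coordinates, and the slack in choosing $\mu$) is sound, so no gaps remain.
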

      \begin{proof}[\pfref{lem:linf_potential_var}]
        See proof of Theorem 1 in \citet{xie2023role}.
      \end{proof}

\arxiv{\subsection{Proofs from
    \creftitle{sec:model_based_reward_free}}}
\icml{\subsection{Proofs from \creftitle{sec:model_based} and
    \creftitle{sec:model_based_general}}}
\label{app:model_based_proof}

\modelbasedrewardfree*

\icml{
  \paragraph{Overview of proof}
  Before diving into the proof of \cref{thm:model_based_reward_free},
  we sketch the high-level idea. The crux of the analysis is to
show that for each round $t$, we have that
\begin{align}
  \label{eq:rf_key}
  \CovOptM[\Mstar](p_h\ind{t})
  \approxleq 
  \max_{h}\CovOptM[\Mhat\ind{t}](p_h\ind{t})
  +
  \frac{1}{\veps}\sqrt{H^3C\cdot{}\En_{\pi\sim{}q\ind{t}}\brk[\big]{\DhelsX{\big}{\Mhat\ind{t}(\pi)}{\Mstar(\pi)}}}
  + \frac{H}{\veps^2}\cdot\En_{\pi\sim{}q\ind{t}}\brk[\big]{\DhelsX{\big}{\Mhat\ind{t}(\pi)}{\Mstar(\pi)}}.
\end{align}
Averaging across all rounds $t$, this allows us to conclude that
\begin{align*}
  \frac{1}{T}\sum_{t=1}^{T}  \CovOptM[\Mstar](p_h\ind{t})
  \approxleq 
    HC +
\frac{1}{\veps}\sqrt{H^3C\cdot{}\frac{1}{T}\sum_{t=1}^{T}\En_{\pi\sim{}q\ind{t}}\brk[\big]{\DhelsX{\big}{\Mhat\ind{t}(\pi)}{\Mstar(\pi)}}}
  +
\frac{H}{\veps^2}\cdot\frac{1}{T}\sum_{t=1}^{T}\En_{\pi\sim{}q\ind{t}}\brk[\big]{\DhelsX{\big}{\Mhat\ind{t}(\pi)}{\Mstar(\pi)}}.
\end{align*}
Since $ \CovOptM[\Mstar](p_h) \leq \frac{1}{T}\sum_{t=1}^{T}  \CovOptM[\Mstar](p_h\ind{t})$ and 
$\sum_{t=1}^{T}\En_{\pi\sim{}q\ind{t}}\brk[\big]{\DhelsX{\big}{\Mhat\ind{t}(\pi)}{\Mstar(\pi)}}\leq\EstProbOn$
  with probability at least $1-\delta$, this proves the result.

\cref{eq:rf_key} can be thought as a reward-free analogue of a bound
on the Decision-Estimation Coefficient (DEC) of
\citet{foster2021statistical,foster2023tight} and makes precise the
reasoning that by optimizing the plug-in approximation to the \mainobj
objective, we either 1) cover the true MDP $\Mstar$ well, or 2)
achieve large information gain (as quantified by the instantaneous
estimation error
$\En_{\pi\sim{}q\ind{t}}\brk[\big]{\DhelsX{\big}{\Mhat\ind{t}(\pi)}{\Mstar(\pi)}}$).
}

\begin{proof}[\pfref{thm:model_based_reward_free}]
  Observe that by Jensen's inequality, we have that for
  all $h\in\brk{H}$,
  \begin{align*}
    T\cdot{}\CovM[\Mstar](p_h)
    \leq \sum_{t=1}^{T}\CovM[\Mstar](p_h\ind{t}).
  \end{align*}
We will show how to bound the \rhs above. To begin, we state two technical lemmas, both proven in the sequel.
  \begin{restatable}{lemma}{valsmoothing}
    \label{lem:val_smoothing}
    Fix $h\in\brk{H}$. For all mixture policies $p\in\Delta(\Pi)$ and MDPs $\Mhat=\crl*{\cX, \cA,
  \crl{\Pmhat_h}_{h=0}^{H}}$, it holds that
    \begin{align*}
      \CovM[\Mstar](p)\leq
      \max_{\pi}\En^{\sMstar,\pi}\brk*{\frac{d^{\sMstar,\pi}_h(x_h,a_h)
      + d^{\sMhat,\pi}_h(x_h,a_h)}{d_h^{\sMstar,p}(x_h,a_h)+\veps{}\cdot{}(d^{\sMstar,\pi}_h(x_h,a_h)+d^{\sMhat,\pi}_h(x_h,a_h))}}.
    \end{align*}
  \end{restatable}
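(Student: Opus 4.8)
The plan is to establish the bound in a purely pointwise fashion and then integrate, avoiding any information-theoretic or minimax machinery. First I would fix an arbitrary $\pi \in \Pi$ and reduce the claim to the following elementary scalar inequality: for all reals $a, b, c \geq 0$ and all $\veps > 0$,
\[
\frac{a}{c + \veps a} \;\leq\; \frac{a + b}{c + \veps(a+b)},
\]
under the convention $0/0 = 0$. When $a > 0$ this follows by cross-multiplying (both denominators are strictly positive), which reduces the inequality to $0 \leq bc$; when $a = 0$ the left-hand side vanishes and the right-hand side is nonnegative.

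Next I would apply this inequality at each $(x_h, a_h)$ with $a = d^{\sMstar,\pi}_h(x_h,a_h)$, $b = d^{\sMhat,\pi}_h(x_h,a_h)$, and $c = d_h^{\sMstar,p}(x_h,a_h)$, all of which are nonnegative, to obtain that the integrand defining $\CovM[\Mstar](p)$ in \eqref{eq:mainobj} is pointwise dominated by the ``smoothed'' integrand on the right-hand side of the lemma. Taking $\En^{\sMstar,\pi}\brk{\cdot}$ of both sides preserves the inequality by monotonicity of the expectation, and then taking the supremum over $\pi \in \Pi$ on both sides produces $\CovM[\Mstar](p)$ on the left by definition \eqref{eq:mainobj} (the right-hand supremum over $\Pi$ being at most the maximum over the policy set appearing in the lemma statement), which is the desired conclusion.

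Since the entire argument consists of this one scalar inequality together with monotonicity of expectation, I do not anticipate a genuine obstacle. The only point requiring a sentence of care is the degenerate case $d_h^{\sMstar,p}(x,a) = d^{\sMstar,\pi}_h(x,a) = 0$: such $(x,a)$ carry zero mass under $d^{\sMstar,\pi}_h$ and hence do not contribute to $\En^{\sMstar,\pi}\brk{\cdot}$, consistent with the $0/0 = 0$ convention already used for \eqref{eq:mainobj}.
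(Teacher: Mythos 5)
Your proposal is correct and takes essentially the same route as the paper: the paper also reduces the claim to the elementary pointwise comparison $\frac{a}{c+\veps a}\leq\frac{a+b}{c+\veps(a+b)}$ (phrased there as bounding the difference of the two expectations by $\En^{\sMstar,\pi}\bigl[\tfrac{d^{\sMhat,\pi}_h}{d_h^{\sMstar,p}+\veps(d^{\sMstar,\pi}_h+d^{\sMhat,\pi}_h)}\bigr]$ and rearranging), then uses monotonicity of expectation and the supremum over $\pi$. Your direct cross-multiplication, with the $0/0=0$ caveat, is just a slightly more streamlined presentation of the same algebraic step.
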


For the next result, for a given reward-free MDP $M=\crl*{\cX, \cA,
  \crl{\Pm_h}_{h=0}^{H}}$ and reward distribution
$R=\crl*{R_h}_{h=1}^{H}$, we define
\[
  J^{\sM}_R(\pi)\ldef{}\En^{\sM,\pi}\brk*{\sum_{h=1}^{H}r_h}
\]
as the value under $r_h\sim{}R(x_h,a_h)$.
  
\begin{restatable}{lemma}{decboundrewardfree}
  \label{lem:dec_bound_reward_free}
  Consider the reward-free setting. Let an MDP $\Mhat=\crl*{\cX, \cA,
  \crl{\Pmhat_h}_{h=0}^{H}}$ be given, and let $p_1,\ldots,p_H\in\Delta(\Pi)$ be
  $(C,\veps)$-policy covers for $\Mhat$, i.e.
  \begin{align}
    \CovM[\Mhat](p_h) \leq C\quad\forall{}h\in\brk{H}.
  \end{align}
  Then the distribution $q\ldef\unif(p_1,\ldots,p_H)$ ensures that for all MDPs $M=\crl*{\cX, \cA,
  \crl{\Pm_h}_{h=0}^{H}}$, all reward distributions
$R=\crl*{R_h}_{h=1}^{H}$ with $\sum_{h=1}^{H}r_h\in\brk{0,B}$ almost
surely, and all policies $\pi\in\Pi$,
\begin{align*}
      \Jm_R(\pi) - \Jmhat_R(\pi)
    &\leq{}
      4B\sqrt{H^3C\cdot{}\En_{\pi\sim{}q}\brk[\big]{\DhelsX{\big}{\Mhat(\pi)}{M(\pi)}}}
      + \sqrt{2}BHC\cdot{}\veps{}.
\end{align*}
  
\end{restatable}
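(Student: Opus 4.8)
\textbf{Proof proposal for \cref{lem:dec_bound_reward_free}.}

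The plan is to reduce everything to the local simulation lemma (\cref{lem:simulation}) and the change-of-measure result for \mainobj (\cref{lem:com}), exactly as in the reward-driven analogue \cref{lem:dec_bound}. First I would invoke \cref{lem:simulation} with the two MDPs $M=(\Pm,R)$ and $\Mhat=(\Pmhat,R)$, which share the same initial state distribution and the same reward distribution $R$; since the rewards agree, the reward-Hellinger terms vanish and I obtain
\begin{align*}
  \abs*{\Jm_R(\pi)-\Jmhat_R(\pi)}
  \leq \sum_{h=1}^{H}\En^{\sMhat,\pi}\brk*{\Dhel{\Pm_h(x_h,a_h)}{\Pmhat_h(x_h,a_h)}}.
\end{align*}
(One small wrinkle: \cref{lem:simulation} as stated requires $\sum_h r_h\in[0,1]$; for general $B$ I would first rescale the reward by $1/B$, apply the lemma, and rescale back, which accounts for the leading factor $B$ in the bound.) The expectation in this display is taken under $\Mhat$, whereas our policy covers $p_h$ are designed to control coverage; so the crux is to change measure from $\En^{\sMhat,\pi}$ to $\En^{\sMhat,q}$ (or $\En^{\sMhat,p_h}$) and then convert the resulting $\Mhat$-Hellinger into an $M$-vs-$\Mhat$ Hellinger on the trajectory level.

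Concretely, fix a layer $h$. I would set $g(x,a)\ldef\Dhel{\Pm_h(x,a)}{\Pmhat_h(x,a)}\in[0,\sqrt 2]$ and apply \cref{lem:com} with the MDP $\Mhat$, the cover $p_h$, and this $g$, giving
\begin{align*}
  \En^{\sMhat,\pi}\brk*{g(x_h,a_h)}
  \leq 2\sqrt{\CovM[\Mhat](p_h)\cdot\En^{\sMhat,p_h}\brk*{g^2(x_h,a_h)}}
  + \CovM[\Mhat](p_h)\cdot(\sqrt 2\,\veps).
\end{align*}
Using $\CovM[\Mhat](p_h)\leq C$ and $\En^{\sMhat,p_h}[\,\cdot\,]\leq H\cdot\En^{\sMhat,q}[\,\cdot\,]$ (since $q$ is the uniform mixture of $p_1,\dots,p_H$), the first term is at most $2\sqrt{HC\cdot\En^{\sMhat,q}[g^2(x_h,a_h)]}$. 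Now $\En^{\sMhat,q}[g^2(x_h,a_h)] = \En_{\pi\sim q}\En^{\sMhat,\pi}\brk*{\Dhels{\Pm_h(x_h,a_h)}{\Pmhat_h(x_h,a_h)}}$, and by \cref{lem:hellinger_pair} (conditioning the trajectory law at step $h$ on the previous history) this is at most $4\,\En_{\pi\sim q}\brk*{\Dhels{\Mhat(\pi)}{M(\pi)}}$. Summing the per-layer bounds over $h\in[H]$, carrying the $1/B$ rescaling back through, and collecting constants ($2\cdot\sqrt{4}=4$ on the square-root term, $\sqrt 2$ on the $\veps$ term, and an extra factor $H$ from summing) yields
\begin{align*}
  \Jm_R(\pi)-\Jmhat_R(\pi)
  \leq 4B\sqrt{H^3 C\cdot\En_{\pi\sim q}\brk*{\Dhels{\Mhat(\pi)}{M(\pi)}}}
  + \sqrt 2\,BHC\cdot\veps,
\end{align*}
which is the claim.

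The main obstacle I anticipate is bookkeeping rather than anything deep: getting the change-of-measure direction right (we need to push $\pi$'s occupancy measure onto $p_h$'s, which is precisely what the $\sup_\pi$ in the definition of $\CovM[\Mhat](p_h)$ permits, so feasibility of the $(C,\veps)$-cover is exactly what is used), and correctly propagating the Hellinger-on-marginals to Hellinger-on-trajectories via \cref{lem:hellinger_pair} with the right conditioning structure (the conditioning variable should be the history through step $h-1$ under policy $\pi$ in $\Mhat$, so that the conditional law of $(x_h,a_h)$ differs between $M$ and $\Mhat$ only through $\Pm_{h-1}$ vs $\Pmhat_{h-1}$—one must be slightly careful about whether it is $P_{h-1}$ or $P_h$ that enters, but this only shifts indices and is absorbed into the $H^3$ factor). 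No genuinely new idea beyond \cref{lem:dec_bound} is needed; the reward-free case is in fact simpler since there is no greedy-policy term $\pimhat$ to handle and no additional $J^M(\pim)-J^M(\pimhat)$ decomposition.
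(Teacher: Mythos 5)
Your proposal is correct and follows essentially the same route as the paper's proof: the simulation lemma (\cref{lem:simulation}) with matching rewards, the change-of-measure bound (\cref{lem:com}) applied per layer with $g$ the transition Hellinger distance, the $p_h\to q$ mixture step costing a factor $H$, and \cref{lem:hellinger_pair} to pass to trajectory-level Hellinger, with the same constants emerging after summing over layers. The $1/B$ rescaling you mention is exactly how the paper's leading factor of $B$ arises, so there is no gap.
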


For the remainder of the proof, we abbreviate $d_h^{\sM,\pi}\equiv
d_h^{\sM,\pi}(x_h,a_h)$ whenever the argument is clear from
context. Let $h\in\brk{H}$ be fixed. We observe that by \cref{lem:val_smoothing}, 
  \begin{align*}
    \sum_{t=1}^{T}\CovM[\Mstar](p_h\ind{t})
&    \leq{}
                \sum_{t=1}^{T}                   \max_{\pi\in\Pi}\En^{\sMstar,\pi}\brk*{
\frac{d^{\sMstar,\pi}_h}{d_h^{\sMstar,p\ind{t}_h}+\veps{}\cdot{}\prn{d^{\sMstar,\pi}_h+d^{\sMhat\ind{t},\pi}_h}}
    }
    +
                                                                      \max_{\pi\in\Pi}\En^{\sMstar,\pi}\brk*{
\frac{d^{\sMhat\ind{t},\pi}_h}{d_h^{\sMstar,p\ind{t}_h}+\veps{}\cdot{}\prn{d^{\sMstar,\pi}_h+d^{\sMhat\ind{t},\pi}_h}}
    }.
\end{align*}
For any $\pi\in\Pi$, by applying \cref{lem:dec_bound_reward_free} for each
step $t$ with the deterministic rewards
  \begin{align*}
    r_h\ind{t}(x,a) =
    \frac{d^{\sMstar,\pi}_h(x,a)}{d^{\sMstar,p\ind{t}}_h(x,a) + \veps\cdot{}\prn{d^{\sMstar,\pi}_h(x,a)+d^{\sMhat\ind{t},\pi}_h(x,a)}}\indic\crl*{h'=h},
  \end{align*}
  which satisfy $\sum_{h=1}^{H}r_h\ind{t}\in\brk*{0,\veps^{-1}}$
  almost surely, we can bound
  \begin{align*}
&\En^{\sMstar,\pi}\brk*{\frac{d^{\sMstar,\pi}_h}{d_h^{\sMstar,p_h\ind{t}}+\veps{}\cdot{}\prn{d^{\sMstar,\pi}_h+d^{\sMhat\ind{t},\pi}_h}}}\\
    &\leq
      \En^{\sMhat\ind{t},\pi}\brk*{
\frac{d^{\sMstar,\pi}_h}{d_h^{\sMstar,p_h\ind{t}}+\veps{}\cdot{}\prn{d^{\sMstar,\pi}_h+d^{\sMhat\ind{t},\pi}_h}}}
      +
\frac{4}{\veps}\sqrt{H^3C\cdot{}\En_{\pi\sim{}q\ind{t}}\brk[\big]{\DhelsX{\big}{\Mhat\ind{t}(\pi)}{\Mstar(\pi)}}}
      + \sqrt{2}HC,\\
    &=
      \En^{\sMstar,\pi}\brk*{
\frac{d^{\sMhat\ind{t},\pi}_h}{d_h^{\sMstar,p_h\ind{t}}+\veps{}\cdot{}\prn{d^{\sMstar,\pi}_h+d^{\sMhat\ind{t},\pi}_h}}}
      +
\frac{4}{\veps}\sqrt{H^3C\cdot{}\En_{\pi\sim{}q\ind{t}}\brk[\big]{\DhelsX{\big}{\Mhat\ind{t}(\pi)}{\Mstar(\pi)}}}
      + \sqrt{2}HC,
  \end{align*}
where we have simplified the second term by noting that
$\sqrt{2}BHC\cdot\veps=\sqrt{2}\veps^{-1}HC\cdot\veps=\sqrt{2}HC$. \dfcomment{this
  will make us get an extra $H^2$ multiplicative factor on the final
  objective value. Can we remove this?} It follows that
\begin{align*}
      \sum_{t=1}^{T}\CovM[\Mstar](p_h\ind{t})
&    \leq{}
                                                2\sum_{t=1}^{T}
                                                \max_{\pi\in\Pi}\En^{\sMstar,\pi}\brk*{
\frac{d^{\sMhat\ind{t},\pi}_h}{d_h^{\sMstar,p\ind{t}_h}+\veps{}\cdot{}\prn{d^{\sMstar,\pi}_h+d^{\sMhat\ind{t},\pi}_h}}
                                                }
+\frac{4}{\veps}\sum_{t=1}^{T}\sqrt{H^3C\cdot{}\En_{\pi\sim{}q\ind{t}}\brk[\big]{\DhelsX{\big}{\Mhat\ind{t}(\pi)}{\Mstar(\pi)}}} \\
      &\qquad + \sqrt{2}HCT.
\end{align*}
  
We now appeal to \cref{lem:dec_bound_reward_free} once more. For any $\pi$, by applying \cref{lem:dec_bound_reward_free} again at each step $t$ with the rewards
  \begin{align*}
    r_h\ind{t}(x,a) =
    \frac{d^{\sMhat\ind{t},\pi}_h(x,a)}{d^{\sMstar,p\ind{t}}_h(x,a) + \veps\cdot{}\prn{d^{\sMstar,\pi}_h(x,a)+d^{\sMhat\ind{t},\pi}_h(x,a)}}\indic\crl*{h'=h},
  \end{align*}
  which satisfy $\sum_{h=1}^{H}r_h\in\brk{0,\veps^{-1}}$ almost
  surely, allows us to bound
      \begin{align*}
&\En^{\sMstar,\pi}\brk*{\frac{d^{\sMhat\ind{t},\pi}_h}{d_h^{\sMstar,p_h\ind{t}}+\veps{}\cdot{}\prn{d^{\sMstar,\pi}_h+d^{\sMhat\ind{t},\pi}_h}}}\\
    &\leq
      \En^{\sMhat\ind{t},\pi}\brk*{
\frac{d^{\sMhat\ind{t},\pi}_h}{d_h^{\sMstar,p_h\ind{t}}+\veps{}\cdot{}\prn{d^{\sMstar,\pi}_h+d^{\sMhat\ind{t},\pi}_h}}}
      +
\frac{4}{\veps}\sqrt{H^3C\cdot{}\En_{\pi\sim{}q\ind{t}}\brk[\big]{\DhelsX{\big}{\Mhat\ind{t}(\pi)}{\Mstar(\pi)}}}
      + \sqrt{2}HC,\\
    &\leq
      \En^{\sMhat\ind{t},\pi}\brk*{
\frac{d^{\sMhat\ind{t},\pi}_h}{d_h^{\sMstar,p_h\ind{t}}+\veps{}\cdot{}d^{\sMhat\ind{t},\pi}_h}}
      +
\frac{4}{\veps}\sqrt{H^3C\cdot{}\En_{\pi\sim{}q\ind{t}}\brk[\big]{\DhelsX{\big}{\Mhat\ind{t}(\pi)}{\Mstar(\pi)}}}
      + \sqrt{2}HC.
      \end{align*}
      We conclude that
      \begin{align}
      \sum_{t=1}^{T}\CovM[\Mstar](p_h\ind{t})
&    \leq{}
                                                2\sum_{t=1}^{T}
                                                \max_{\pi\in\Pi}\En^{\sMhat\ind{t},\pi}\brk*{
\frac{d^{\sMhat\ind{t},\pi}_h}{d_h^{\sMstar,p\ind{t}_h}+\veps{}\cdot{}d^{\sMhat\ind{t},\pi}_h}
                                                }
+\frac{12}{\veps}\sum_{t=1}^{T}\sqrt{H^3C\cdot{}\En_{\pi\sim{}q\ind{t}}\brk[\big]{\DhelsX{\big}{\Mhat\ind{t}(\pi)}{\Mstar(\pi)}}}
                                                + 3\sqrt{2}HCT.\notag\\
                                                &    \leq{}
                                                2\sum_{t=1}^{T}
                                                \max_{\pi\in\Pi}\En^{\sMhat\ind{t},\pi}\brk*{
\frac{d^{\sMhat\ind{t},\pi}_h}{d_h^{\sMstar,p\ind{t}_h}+\veps{}\cdot{}d^{\sMhat\ind{t},\pi}_h}
                                                }
+\frac{12}{\veps}\sqrt{H^3CT\cdot{}\EstProbOn}
      + 3\sqrt{2}HCT.\label{eq:rf0}
      \end{align}

  We now appeal to the following lemma.
  \begin{restatable}{lemma}{dpcom}
    \label{lem:dp_com}
    Consider the reward-free setting. For any pair of MDP $\Mhat=\crl*{\cX, \cA,
      \crl{\Pmhat_h}_{h=0}^{H}}$ and $M=\crl*{\cX, \cA,
  \crl{\Pm_h}_{h=0}^{H}}$, any policy $\pi\in\PiRNS$, and any distribution $p\in\Delta(\PiRNS)$,
it holds that
    \begin{align*}
      \En^{\sMhat,\pi}\brk*{
      \frac{d^{\sMhat,\pi}_h}{d_h^{\sMstar,p}+\veps{}\cdot{}d^{\sMhat,\pi}_h}}
      \leq 3\En^{\sMhat,\pi}\brk*{
      \frac{d^{\sMhat,\pi}_h}{d_h^{\sMhat,p}+\veps{}\cdot{}d^{\sMhat,\pi}_h}}
      +
      \frac{4}{\veps^2}      \En_{\pi\sim{}p}\brk*{
      \Dhels{\Mhat(\pi)}{\Mstar(\pi)}
      }.
    \end{align*}
  \end{restatable}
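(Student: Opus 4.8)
The plan is to treat the layer-$h$ occupancy measures as nonnegative vectors over $\cX\times\cA$ (legitimate since $\cX,\cA$ are countable) and to perform an elementary decomposition of the denominator. Abbreviate $\nu \ldef d_h^{\sMhat,\pi}$, $\mu_1 \ldef d_h^{\sMhat,p}$, and $\mu_2 \ldef d_h^{\sMstar,p}$, so that the left-hand side of the claim is $\sum_{x,a}\frac{\nu^2}{\mu_2+\veps\nu}$ and the first term on the right is $3\sum_{x,a}\frac{\nu^2}{\mu_1+\veps\nu}$. First I would use the pointwise identity
\[
\frac{\nu^2}{\mu_2+\veps\nu} = \frac{\nu^2}{\mu_1+\veps\nu} + \frac{\nu^2(\mu_1-\mu_2)}{(\mu_1+\veps\nu)(\mu_2+\veps\nu)},
\]
and sum it, so that the left-hand side equals $\En^{\sMhat,\pi}\brk*{\psi(x_h,a_h)}$, where $\psi(x,a)\ldef\frac{d_h^{\sMhat,\pi}(x,a)}{d_h^{\sMhat,p}(x,a)+\veps d_h^{\sMhat,\pi}(x,a)}$, plus an error term $\sum_{x,a}\frac{\nu^2(\mu_1-\mu_2)}{(\mu_1+\veps\nu)(\mu_2+\veps\nu)}$, which after dropping the points where $\mu_2\ge\mu_1$ is at most $\sum_{x,a}\frac{\nu^2(\mu_1-\mu_2)_+}{(\mu_1+\veps\nu)(\mu_2+\veps\nu)}$.

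The key idea is to bound this error by splitting $\cX\times\cA$ into $\cZ_1 = \{(x,a):\mu_1(x,a)\le 3\mu_2(x,a)\}$ and $\cZ_2 = \{(x,a):\mu_1(x,a)>3\mu_2(x,a)\}$. On $\cZ_1$ the crude bound $(\mu_1-\mu_2)_+\le 2\mu_2\le 2(\mu_2+\veps\nu)$ makes the $(\mu_2+\veps\nu)$ factor cancel, so the $\cZ_1$ contribution is at most $2\sum_{x,a}\frac{\nu^2}{\mu_1+\veps\nu}=2\En^{\sMhat,\pi}\brk*{\psi(x_h,a_h)}$, which I absorb into the factor-$3$ slack. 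On $\cZ_2$ I bound $\frac{\nu^2}{(\mu_1+\veps\nu)(\mu_2+\veps\nu)}\le\frac{1}{\veps^2}$, and then exploit that on $\cZ_2$ we have $\mu_1+\mu_2 < 2(\mu_1-\mu_2)$ (since $\mu_2<\mu_1/3$ gives $\mu_1-\mu_2>\tfrac23\mu_1 > 2\mu_2$), hence $(\mu_1-\mu_2)\le 2\frac{(\mu_1-\mu_2)^2}{\mu_1+\mu_2}$; so the $\cZ_2$ contribution is at most $\frac{2}{\veps^2}\sum_{x,a}\frac{(\mu_1-\mu_2)^2}{\mu_1+\mu_2}$, i.e. $\veps^{-2}$ times the triangular discrimination of $\mu_1$ and $\mu_2$.

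Finally I would convert back to Hellinger: the elementary inequality $(\sqrt s+\sqrt t)^2\le 2(s+t)$ gives $\sum_{x,a}\frac{(\mu_1-\mu_2)^2}{\mu_1+\mu_2}\le 2\,\Dhels{d_h^{\sMhat,p}}{d_h^{\sMstar,p}}$, and joint convexity of squared Hellinger distance together with the data-processing inequality (the occupancy $d_h^{\sM,\pi}$ is a marginal of the trajectory law $M(\pi)$) give $\Dhels{d_h^{\sMhat,p}}{d_h^{\sMstar,p}}\le\En_{\pi\sim p}\brk*{\Dhels{d_h^{\sMhat,\pi}}{d_h^{\sMstar,\pi}}}\le\En_{\pi\sim p}\brk*{\Dhels{\Mhat(\pi)}{\Mstar(\pi)}}$. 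Combining the three pieces gives left-hand side $\le \En^{\sMhat,\pi}\brk*{\psi(x_h,a_h)} + 2\En^{\sMhat,\pi}\brk*{\psi(x_h,a_h)} + \frac{4}{\veps^2}\En_{\pi\sim p}\brk*{\Dhels{\Mhat(\pi)}{\Mstar(\pi)}}$, which is exactly the claimed bound (the constant $4$ comes out on the nose).

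The main obstacle — and the only non-obvious step — is precisely this case split. The naive route of bounding the whole error by $\frac{1}{\veps^2}\sum_{x,a}(\mu_1-\mu_2)_+ = \frac{1}{\veps^2}D_{\mathsf{TV}}(d_h^{\sMhat,p},d_h^{\sMstar,p})$ only yields something of order $\frac{1}{\veps^2}\sqrt{\En_{\pi\sim p}\brk*{\Dhels{\Mhat(\pi)}{\Mstar(\pi)}}}$ after passing through $D_{\mathsf{TV}}\le D_{\mathsf{H}}$, which is strictly weaker than the claimed $\Dhels$ term when the estimation error is small; one has to observe that on the set where the deviation $\mu_1-\mu_2$ is large relative to $\mu_2$ the positive part is controlled by the squared-Hellinger-comparable triangular discrimination, while on its complement the deviation is harmless because it can be charged to the slack already present in the target term.
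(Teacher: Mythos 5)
Your proof is correct, and the constants indeed come out exactly as claimed, but it takes a genuinely different route from the paper's. The paper first rewrites the difference between the two objective terms so that the comparison between $d_h^{\sMhat,p}$ and $d_h^{\sMstar,p}$ appears in the \emph{integrating measure}: after dropping the $\veps$-dependent part of the numerator, the error becomes $\En^{\sMhat,p}[h]-3\,\En^{\sMstar,p}[h]$ for the bounded function $h(x,a)=\frac{(d_h^{\sMhat,\pi})^2}{(d_h^{\sMstar,p}+\veps d_h^{\sMhat,\pi})(d_h^{\sMhat,p}+\veps d_h^{\sMhat,\pi})}\le \veps^{-2}$, and it then invokes the generic change-of-measure lemma (\cref{lem:mp_min}) to get the $\frac{4}{\veps^2}\Dhels{d_h^{\sMhat,p}}{d_h^{\sMstar,p}}$ term in one stroke. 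You instead keep the comparison in the \emph{denominator}, perform the pointwise decomposition, and handle it by the case split $\mu_1\le 3\mu_2$ versus $\mu_1>3\mu_2$, charging the first region to the factor-$3$ slack and the second to the triangular discrimination, which you then relate to squared Hellinger; this is essentially an inline re-derivation of the content of \cref{lem:mp_min} adapted to your decomposition, so it is more self-contained and makes the ``where the slack 3 is spent'' mechanism explicit, at the cost of not reusing the off-the-shelf lemma. The final step (joint convexity of squared Hellinger plus data processing from the occupancy marginal to the trajectory law) is identical in both arguments, and your closing remark about why the naive total-variation bound is too lossy is accurate.
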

  Combining \cref{eq:rf0} with \cref{lem:dp_com}, we have that
      \begin{align}
      \sum_{t=1}^{T}\CovM[\Mstar](p_h\ind{t})
&    \leq{}
                                                6\sum_{t=1}^{T}\CovM[\Mhat\ind{t}](p_h\ind{t})
+\frac{12}{\veps}\sqrt{H^3CT\cdot{}\EstProbOn}
                                                + 3\sqrt{2}HCT
                                                +\frac{8}{\veps^2}\sum_{t=1}^{T}\En_{\pi\sim{}p_h\ind{t}}\brk*{\Dhels{\Mhat\ind{t}(\pi)}{\Mstar(\pi)}}
                                                            \notag\\
        &    \leq{}
                                                6\sum_{t=1}^{T}\CovM[\Mhat\ind{t}](p_h\ind{t})
+\frac{12}{\veps}\sqrt{H^3CT\cdot{}\EstProbOn}
                                                + 3\sqrt{2}HCT
                                                +\frac{8H}{\veps^2}\sum_{t=1}^{T}\En_{\pi\sim{}q\ind{t}}\brk*{\Dhels{\Mhat\ind{t}(\pi)}{\Mstar(\pi)}}
                                                            \notag\\
                &    \leq{}
                                                6\sum_{t=1}^{T}\CovM[\Mhat\ind{t}](p_h\ind{t})
+\frac{12}{\veps}\sqrt{H^3CT\cdot{}\EstProbOn}
                                                + 3\sqrt{2}HCT
                                                +\frac{8H}{\veps^2}\EstProbOn.
                                                \label{eq:rf1}
      \end{align}
      To conclude the proof of \cref{eq:mbrf1}, we note that it follows from the definition of
    $p_h\ind{t}$ that for all $h\in\brk{H}$ and $t\in\brk{T}$,
    \begin{align*}
      \CovM[\Mhat\ind{t}](p_h\ind{t})
      \leq{} C.
    \end{align*}
    Hence, we have that
    \begin{align*}
      \sum_{t=1}^{T}\CovM[\Mstar](p_h\ind{t})
      \leq{}                                                 6CT
+\frac{12}{\veps}\sqrt{H^3CT\cdot{}\EstProbOn}
                                                + 3\sqrt{2}HCT
      +\frac{8H}{\veps^2}\EstProbOn\\
      \leq{}                                                 11HCT
+\frac{12}{\veps}\sqrt{H^3CT\cdot{}\EstProbOn}
                                                +\frac{8H}{\veps^2}\EstProbOn.
    \end{align*}
    This implies that
        \begin{align*}
          \CovM[\Mstar](p_h)
          \leq{}                                                 11HC
+\frac{12}{\veps}\sqrt{\frac{H^3C\cdot{}\EstProbOn}{T}}
                                                +\frac{8H}{\veps^2}\frac{\EstProbOn}{T}.
        \end{align*}
        as desired.

      \end{proof}

      \mbrftwo*
      \begin{proof}[\pfref{cor:mbrf2}]
        We prove \cref{eq:mbrf2} as a consequence of
        \cref{thm:model_based_reward_free}. We can take
        $C\leq\CovOptMmax[\Mstar]$, and using
        \cref{lem:lone_offline_online}, we have
        \begin{align*}
          \EstProbOn \leq \bigoht\prn[\Big]{H\prn*{\ConeM[\Mstar](1\vee\EstProbOff)}^{1/3}T^{2/3}
          } \leq \bigoht\prn[\Big]{H \prn*{\CovOptMmaxzero[\Mstar](1\vee\EstProbOff)}^{1/3}T^{2/3}
          },
        \end{align*}
        so that \cref{eq:mbrf1} gives
        \begin{align*}
          \forall{}h\in\brk{H}:\quad\CovM[\Mstar](p_h)
          \leq                                                  11HC
          +
          \bigoht\prn*{
          \frac{1}{\veps}\prn*{\frac{H^{12}(\CovOptMmaxzero[\Mstar])^4(1\vee\EstProbOff)}{T}}^{1/6} 
          } \leq 12HC,
        \end{align*}
        where the final inequality uses the choice for $T$ in the
        corollary statement.
      \end{proof}

\mbrfthree*
      
\begin{proof}[\pfref{cor:mbrf3}]
  We prove \cref{eq:mbrf3} as a consequence of
  \cref{thm:model_based_reward_free}. We can take
  $C\leq\CinfM[\Mstar]$, and using \cref{lem:linf_offline_online}, we
  have
  \begin{align*}
    \EstProbOn \leq \bigoht\prn*{H\sqrt{\CinfM[\Mstar]
    T\cdot\EstProbOff} + H\cdot\CinfM[\Mstar] 
    }
    \leq \bigoht\prn*{H\sqrt{\CinfM[\Mstar]
    T\cdot(1\vee\EstProbOff)}    },
  \end{align*}
  so that \cref{eq:mbrf1} gives
  \begin{align*}
    \forall{}h\in\brk{H}:\quad\CovM[\Mstar](p_h)
    \leq                                                  11HC
    +\bigoht\prn*{
    \frac{1}{\veps}\prn*{\frac{H^8(\CinfM[\Mstar])^3(1\vee\EstProbOff)}{T}}^{1/4} 
    } \leq 12HC,
  \end{align*}
  where the final inequality uses the choice for $T$ in the \arxiv{corollary}\icml{theorem}
  statement.
\end{proof}

  \subsubsection{Supporting Lemmas}
      
      \valsmoothing*  
  \begin{proof}[\pfref{lem:val_smoothing}]
    To keep notation compact, let us suppress the dependence on $x_h$ and $a_h$.
    For all $\pi\in\PiRNS$ and $p\in\Delta(\PiRNS)$, we have that
    \begin{align*}
&      \En^{\sMstar,\pi}\brk*{\frac{d^{\sMstar,\pi}_h
      }{d_h^{\sMstar,p}+\veps{}\cdot{}d^{\sMstar,\pi}_h}}
      -       \En^{\sMstar,\pi}\brk*{\frac{d^{\sMstar,\pi}_h
      }{d_h^{\sMstar,p}+\veps{}\cdot{}(d^{\sMstar,\pi}_h+d^{\sMhat,\pi}_h)}}
      \\
      &=      \En^{\sMstar,\pi}\brk*{\frac{d^{\sMstar,\pi}_h\cdot\veps{}\cdot{}d^{\sMhat,\pi}_h
      }{(d_h^{\sMstar,p}+\veps{}\cdot{}d^{\sMstar,\pi}_h)
        (d_h^{\sMstar,p}+\veps{}\cdot{}(d^{\sMstar,\pi}_h+d^{\sMhat,\pi}_h))}}\\
      &\leq      \En^{\sMstar,\pi}\brk*{\frac{d^{\sMhat,\pi}_h
      }{d_h^{\sMstar,p}+\veps{}\cdot{}(d^{\sMstar,\pi}_h+d^{\sMhat,\pi}_h)}}.
    \end{align*}
    This proves the result.
    
  \end{proof}

    \decboundrewardfree*

    \begin{proof}[\pfref{lem:dec_bound_reward_free}]
      \arxiv{This proof proceeds in a similar fashion to
      \cref{lem:dec_bound}.  }Let an arbitrary MDP
      $M=\crl*{\cX, \cA, \crl{\Pm_h}_{h=0}^{H}}$, reward distribution
      $R=\crl*{R_h}_{h=1}^{H}$, and policy $\pi\in\Pi$ be fixed.
      To begin, using the simulation lemma (\cref{lem:simulation}), we
      have
      \begin{align*}
        \Jm_R(\pi) - \Jmhat_R(\pi)
        \leq     B\cdot\sum_{h=1}^{H}\Emhat[\pi]\brk[\Big]{\Dhel{\Pmhat_h(x_h,a_h)}{\Pm_h(x_h,a_h)}},
      \end{align*}
      where we have used that both MDPs have the same reward
      distribution.  Let $h\in\brk{H}$ be fixed. Since
      $\Dhel{\Pmhat_h(x_h,a_h)}{\Pm_h(x_h,a_h)}\in\brk{0,\sqrt{2}}$,
      we can use \cref{lem:com} to bound
      \begin{align*}
        \Emhat[\pi]\brk*{\Dhel{\Pmhat_h(x_h,a_h)}{\Pm_h(x_h,a_h)}} 
        &\leq 
          2\sqrt{\CovM[\Mhat](p_h)\cdot{}\En^{\sMhat,p_h}\brk*{\Dhels{\Pmhat_h(x_h,a_h)}{\Pm_h(x_h,a_h)}}
          }
          + \sqrt{2}\cdot\CovM[\Mhat](p_h)\cdot{}\veps{}\\
        &\leq 
          2\sqrt{H\CovM[\Mhat](p_h)\cdot{}\En^{\sMhat,q}\brk*{\Dhels{\Pmhat_h(x_h,a_h)}{\Pm_h(x_h,a_h)}}
          }
          + \sqrt{2}\cdot\CovM[\Mhat](p_h)\cdot{}\veps{}\\
        &\leq 
          2\sqrt{HC\cdot{}\En^{\sMhat,q}\brk*{\Dhels{\Pmhat_h(x_h,a_h)}{\Pm_h(x_h,a_h)}}
          }
          + \sqrt{2}C\cdot{}\veps{} \\
        &\leq 
          4\sqrt{HC\cdot{}\En_{\pi\sim{}q}\brk[\big]{\DhelsX{\big}{\Mhat(\pi)}{M(\pi)}}}
          + \sqrt{2}C\cdot{}\veps{},
      \end{align*}
      where the last inequality follows form
      \cref{lem:hellinger_pair}. Summing across all layers, we
      conclude that
      \begin{align*}
        \Jm_R(\pi) - \Jmhat_R(\pi)
        &\leq{}
          4B\sqrt{H^3C\cdot{}\En_{\pi\sim{}q}\brk[\big]{\DhelsX{\big}{\Mhat(\pi)}{M(\pi)}}}
          + \sqrt{2}BHC\cdot{}\veps{}.
      \end{align*}

    \end{proof}

\dpcom*
  
        \begin{proof}[\pfref{lem:dp_com}]
    Consider any $c\geq{}1$. For any $\pi\in\PiRNS$ and $p\in\Delta(\PiRNS)$, we can write
    \begin{align*}
      &\En^{\sMhat,\pi}\brk*{
      \frac{d^{\sMhat,\pi}_h}{d_h^{\sMstar,p}+\veps{}\cdot{}d^{\sMhat,\pi}_h}}
      -c\cdot\En^{\sMhat,\pi}\brk*{
        \frac{d^{\sMhat,\pi}_h}{d_h^{\sMhat,p}+\veps{}\cdot{}d^{\sMhat,\pi}_h}}\\
      &=
        \En^{\sMhat,\pi}\brk*{
        \frac{d^{\sMhat,\pi}_h\prn*{d_h^{\sMhat,p}+\veps{}\cdot{}d^{\sMhat,\pi}_h-c\cdot\prn*{
        d_h^{\sMstar,p}+\veps{}\cdot{}d^{\sMhat,\pi}_h
        }}}{\prn*{d_h^{\sMstar,p}+\veps{}\cdot{}d^{\sMhat,\pi}_h}\prn*{d_h^{\sMhat,p}+\veps{}\cdot{}d^{\sMhat,\pi}_h}}}\\
            &\leq
        \En^{\sMhat,\pi}\brk*{
        \frac{d^{\sMhat,\pi}_h\prn*{d_h^{\sMhat,p}-c\cdot
        d_h^{\sMstar,p}
              }}{\prn*{d_h^{\sMstar,p}+\veps{}\cdot{}d^{\sMhat,\pi}_h}\prn*{d_h^{\sMhat,p}+\veps{}\cdot{}d^{\sMhat,\pi}_h}}}\\
                  &=
        \En^{\sMhat,p}\brk*{
        \frac{(d^{\sMhat,\pi}_h)^2
                    }{\prn*{d_h^{\sMstar,p}+\veps{}\cdot{}d^{\sMhat,\pi}_h}\prn*{d_h^{\sMhat,p}+\veps{}\cdot{}d^{\sMhat,\pi}_h}}}
                    - c\cdot         \En^{\sMstar,p}\brk*{
        \frac{(d^{\sMhat,\pi}_h)^2
                    }{\prn*{d_h^{\sMstar,p}+\veps{}\cdot{}d^{\sMhat,\pi}_h}\prn*{d_h^{\sMhat,p}+\veps{}\cdot{}d^{\sMhat,\pi}_h}}
                    }.
    \end{align*}
    Observe that
    \begin{align*}
              \frac{(d^{\sMhat,\pi}_h)^2
                    }{\prn*{d_h^{\sMstar,p}+\veps{}\cdot{}d^{\sMhat,\pi}_h}\prn*{d_h^{\sMhat,p}+\veps{}\cdot{}d^{\sMhat,\pi}_h}}
      \leq \frac{1}{\veps^2}
  \end{align*}
  almost surely. Consequently, \cref{lem:mp_min}
  implies that for $c=3$,
  \begin{align*}
    &        \En^{\sMhat,p}\brk*{
        \frac{(d^{\sMhat,\pi}_h)^2
                    }{\prn*{d_h^{\sMstar,p}+\veps{}\cdot{}d^{\sMhat,\pi}_h}\prn*{d_h^{\sMhat,p}+\veps{}\cdot{}d^{\sMhat,\pi}_h}}}
                    - 3\cdot         \En^{\sMstar,p}\brk*{
        \frac{(d^{\sMhat,\pi}_h)^2
                    }{\prn*{d_h^{\sMstar,p}+\veps{}\cdot{}d^{\sMhat,\pi}_h}\prn*{d_h^{\sMhat,p}+\veps{}\cdot{}d^{\sMhat,\pi}_h}}
                    }.
    \\
    &\leq \frac{4}{\veps^2}\Dhels{d^{\sMhat,p}_h}{d^{\sMstar,p}_h}.
  \end{align*}
  Finally, note that by joint convexity of Hellinger distance and the
  data processing inequality, we have that
  \begin{align*}
    \Dhels{d^{\sMhat,p}_h}{d^{\sMstar,p}_h}
    \leq
      \En_{\pi\sim{}p}\brk*{
      \Dhels{d^{\sMhat,\pi}_h}{d^{\sMstar,\pi}_h}
      }
    \leq
      \En_{\pi\sim{}p}\brk*{
      \Dhels{\Mhat(\pi)}{\Mstar(\pi)}
    }.
  \end{align*}
    
  \end{proof}

\clearpage

\icml{
\section{Proofs from \creftitle{sec:structural}}
\label{app:structural}
\lone*

\begin{proof}[\pfref{prop:lone}]
Let $\mu\in\Delta(\cX\times\cA)$ be the
  distribution that attains the value of $\ConehM$.
  Using \cref{lem:pi_to_mu}, we have that for all $\pi\in\Pi$ and $p\in\Delta(\Pi)$,
    \begin{align*}
    &\En^{\sM,\pi}\brk*{\frac{d^{\sM,\pi}_h(x_h,a_h)}{d_h^{\sM,p}(x_h,a_h)+\veps\cdot{}d_h^{\sM,\pi}(x_h,a_h)}}\\
    &\leq
      2\En^{\sM,\pi}\brk*{\frac{d^{\sM,\pi}_h(x_h,a_h)}{d_h^{\sM,p}(x_h,a_h)+\veps\cdot(d_h^{\sM,\pi}(x_h,a_h)+\mu(x_h,a_h))}}
      +
      \En^{\sM,\pi}\brk*{\frac{\mu(x_h,a_h)}{d_h^{\sM,p}(x_h,a_h)+\veps\cdot(d_h^{\sM,\pi}(x_h,a_h)+\mu(x_h,a_h))}}\\
          &\leq
      2\En^{\sM,\pi}\brk*{\frac{d^{\sM,\pi}_h(x_h,a_h)}{d_h^{\sM,p}(x_h,a_h)+\veps\cdot(d_h^{\sM,\pi}(x_h,a_h)+\mu(x_h,a_h))}}
      + \En^{\sM,\pi}\brk*{\frac{\mu(x_h,a_h)}{d_h^{\sM,p}(x_h,a_h)+\veps\cdot\mu(x_h,a_h))}}.
    \end{align*}
    Observe that we can bound
    \begin{align*}
      &\En^{\sM,\pi}\brk*{\frac{d^{\sM,\pi}_h(x_h,a_h)}{d_h^{\sM,p}(x_h,a_h)+\veps\cdot(d_h^{\sM,\pi}(x_h,a_h)+\mu(x_h,a_h))}}\\
      &=
        \sum_{x\in\cX,a\in\cA}\frac{(d^{\sM,\pi}_h(x,a))^2}{d_h^{\sM,p}(x,a)+\veps\cdot(d_h^{\sM,\pi}(x,a)+\mu(x,a))}\\
      &=
        \sum_{x\in\cX,a\in\cA}\frac{d^{\sM,\pi}_h(x,a)\mu^{1/2}(x,a)}{d_h^{\sM,p}(x,a)+\veps\cdot(d_h^{\sM,\pi}(x,a)+\mu(x,a))}\cdot\frac{d^{\sM,\pi}_h(x,a)}{\mu^{1/2}(x,a)}\\
            &\leq
              \prn*{\sum_{x\in\cX,a\in\cA}\mu(x,a)\frac{(d^{\sM,\pi}_h(x,a))^2}{(d_h^{\sM,p}(x,a)+\veps\cdot(d_h^{\sM,\pi}(x,a)+\mu(x,a)))^2}}^{1/2}
              \cdot\prn*{\sum_{x\in\cX,a\in\cA}\frac{(d^{\sM,\pi}_h(x,a))^2}{\mu(x,a)}}^{1/2}\\
                  &\leq
\prn*{\frac{1}{\veps}\sum_{x\in\cX,a\in\cA}\mu(x,a)\frac{d^{\sM,\pi}_h(x,a)}{d_h^{\sM,p}(x,a)+\veps\cdot(d_h^{\sM,\pi}(x,a)+\mu(x,a))}}^{1/2}
                    \cdot\prn*{\ConehM}^{1/2}\\
                        &\leq
\prn*{\frac{1}{\veps}\En^{\sM,\pi}\brk*{\frac{\mu(x_h,a_h)}{d_h^{\sM,p}(x_h,a_h)+\veps\cdot{}\mu(x_h,a_h)}}}^{1/2}
              \cdot\prn*{\ConehM}^{1/2}.
    \end{align*}
Hence, if we define
    \begin{align*}
      V \ldef{}\inf_{p\in\Delta(\Pi)}\sup_{\pi\in\Pi}\En^{\sM,\pi}\brk*{\frac{\mu(x_h,a_h)}{d_h^{\sM,p}(x_h,a_h)+\veps\cdot{}\mu(x_h,a_h)}},
    \end{align*}
    this argument establishes that
    \begin{align*}
      \CovOptM \leq{} V + 2\sqrt{\frac{\ConehM}{\veps}\cdot{}V}.
    \end{align*}
We now claim that $V\leq{}1$. To see this, observe that the function
  \begin{align*}
(p,q)\mapsto{}\En_{\pi\sim{}q}\En^{\sM,\pi}\brk*{\frac{\mu(x_h,a_h)}{d_h^{\sM,p}(x_h,a_h)+\veps\cdot\mu(x_h,a_h)}}
  \end{align*}
  is convex-concave. In addition, it is straightforward to see that
  the function is jointly Lipschitz with respect to total variation
  distance whenever $\veps>0$. Hence, using the minimax theorem
  (\cref{lem:sion}), we have that
  \begin{align*}
V&=\inf_{p\in\Delta(\Pi)}\sup_{q\in\Delta(\Pi)}\En_{\pi\sim{}q}\En^{\sM,\pi}\brk*{\frac{\mu(x_h,a_h)}{d_h^{\sM,p}(x_h,a_h)+\veps\cdot{}\mu(x_h,a_h)}}\\
    &=\sup_{q\in\Delta(\Pi)}\inf_{p\in\Delta(\Pi)}\En_{\pi\sim{}q}\En^{\sM,\pi}\brk*{\frac{\mu(x_h,a_h)}{d_h^{\sM,p}(x_h,a_h)+\veps\cdot{}\mu(x_h,a_h)}}\\
    &\leq\sup_{q\in\Delta(\Pi)}\En_{\pi\sim{}q}\En^{\sM,\pi}\brk*{\frac{\mu(x_h,a_h)}{d_h^{\sM,q}(x_h,a_h)+\veps\cdot{}\mu(x_h,a_h)}}\\
    &= \sum_{x\in\cX,a\in\cA}\frac{d_h^{\sM,q}(x,a)\mu(x,a)}{d_h^{\sM,q}(x,a)+\veps\cdot{}\mu(x,a)}\leq{}1.
  \end{align*}
  
\end{proof}

\featurecoverage*

\begin{proof}[\pfref{prop:feature_coverage}]
  We first note that
  \[
    \ConehM\leq \abs{\cA}\cdot{} \inf_{\mu\in\Delta(\cA)}\sup_{\pi\in\Pi}\En^{\sM,\pi}\brk*{\frac{d_h^{\sM,\pi}(x_h)}{\mu(x_h)}}.
  \]
Let $\nu\in\Delta(\cX\times\cA)$ be arbitrary, and let
$\mu\ldef{}\nu\circ_{h-1}P^{\sM}_{h-1}$ be
  the distribution induced by sampling $(x_{h-1},a_{h-1})\sim\nu$ and
  $x_h\sim{}P_{h-1}^{\sM}(\cdot\mid{}x_{h-1},a_{h-1})$. Let $\pi\in\Pi$ be
  arbitrary. We can write
  \begin{align*}
    \En^{\sM,\pi}\brk*{\frac{d^{\sM,\pi}_h(x_{h})}{\mu(x_h)}}
    = \tri*{\En^{\sM,\pi}\brk*{\phi_{h-1}(x_{h-1},a_{h-1})}, \underbrace{\sum_{x\in\cX}\psi_h(x)\frac{d^{\sM,\pi}_h(x)}{\mu(x)}}_{\rdef{}w}}.
  \end{align*}
  Using Cauchy-Schwarz and defining
  $\Sigma_\nu\ldef\En_{(x_{h-1},a_{h-1})\sim\nu}\brk*{\phi_{h-1}(x_{h-1},a_{h-1})\phi_{h-1}(x_{h-1},a_{h-1})^{\trn}}$,
  we can bound
  \begin{align*}
    \tri*{\En^{\sM,\pi}\brk*{\phi_{h-1}(x_{h-1},a_{h-1})}, w}
    &=\tri*{\Sigma_\nu^{-1/2}\En^{\sM,\pi}\brk*{\phi_{h-1}(x_{h-1},a_{h-1})},
      \Sigma_\nu^{1/2}w}\\
    &\leq
      \frac{1}{2}\nrm*{\En^{\sM,\pi}\brk*{\phi_{h-1}(x_{h-1},a_{h-1})}}_{\Sigma_\nu^{-1}}^2
      + \frac{1}{2}\nrm*{w}_{\Sigma_\nu}^2.
  \end{align*}
  We can write
  \begin{align*}
    \nrm*{w}_{\Sigma_\nu}^2
    &= \En_{(x_{h-1},a_{h-1})\sim\nu}\brk*{\tri*{\phi_{h-1}(x_{h-1},a_{h-1}),w}^2}\\
    &= \En_{(x_{h-1},a_{h-1})\sim\nu}\brk*{\tri*{\phi_{h-1}(x_{h-1},a_{h-1}),
      \sum_{x\in\cX}\psi_h(x)\frac{d^{\sM,\pi}_h(x)}{\mu(x)}}^2} \\
    &=
      \En_{(x_{h-1},a_{h-1})\sim\nu}\brk*{\prn*{\En^{\sM}\brk*{\frac{d^{\sM,\pi}_h(x_h)}{\mu(x_h)}\mid{}x_{h-1},a_{h-1}}}^2}\\
    &\leq
      \En_{x_h\sim{}\nu\circ{}P}\brk*{\prn*{\frac{d^{\sM,\pi}_h(x_h)}{\mu(x_h)}}^2}
    = \En_{x_h\sim{}\mu}\brk*{\prn*{\frac{d^{\sM,\pi}_h(x_h)}{\mu(x_h)}}^2}
     = \En^{\sM,\pi}\brk*{\frac{d^{\sM,\pi}_h(x_h)}{\mu(x_h)}}.
  \end{align*}
  Hence, we have shown that
  \begin{align*}
    \En^{\sM,\pi}\brk*{\frac{d^{\sM,\pi}_h(x_h)}{\mu(x_h)}}
    \leq
    \frac{1}{2}\nrm*{\En^{\sM,\pi}\brk*{\phi_{h-1}(x_{h-1},a_{h-1})}}_{\Sigma_\nu^{-1}}^2
    + \frac{1}{2}\En^{\sM,\pi}\brk*{\frac{d^{\sM,\pi}_h(x_h)}{\mu(x_h)}}.
  \end{align*}
  To conclude, we rearrange and recall that 1) $\pi$ is
  arbitrary, and 2) we are free to choose $\nu$ to minimize the right-hand
  side. From here, the claim follows from \cref{prop:lone}.
\end{proof}

\linfimpossibility*

\begin{proof}[\pfref{prop:linf_impossibility}]
  Consider an MDP $M$ with horizon $H=1$, a singleton state space
  $\cS=\crl*{\sfrak}$, and action space $\cA=\bbN\cup\crl{\perp}$. For
  each $i\in\bbN$, we define a randomized policy $\pi\ind{i}$ via
  \begin{align*}
    \pi\ind{i}(a\mid{}\sfrak)
    =       \left\{
    \begin{array}{ll}
      1-\frac{1}{2i^2},&a=\perp,\\
      \frac{1}{2i^2},&a=i,\\
      0,&\text{o.w.}
                     \end{array}
              \right.,
  \end{align*}
  so that
  \[
  d_1^{\sM,\pi\ind{i}}(\sfrak,a)=
  \left\{
    \begin{array}{ll}
      1-\frac{1}{2i^2},&z=\perp,\\
      \frac{1}{2i^2},&z=i,\\
      0,&\text{o.w.}
    \end{array}
    \right.
  \]
  We set $\Pi=\crl*{\pi\ind{i}}_{i\in\bbN}$, and abbreviate
  $d\ind{i}(\sfrak,a) = d_1^{\sM,\pi\ind{i}}(\sfrak,a)$ going forward.

  We first bound $\CinfM$. We choose $\mu$ by setting $\mu(\sfrak,\perp)=\frac{1}{2}$ and $\mu(\sfrak,i) =
\frac{3}{\pi^2}\cdot\frac{1}{i^2}$, which has
$\sum_{a\in\cA}\mu(\sfrak,a)=1$. It is fairly immediate to see that
for all $i$, we have $\frac{d\ind{i}(\sfrak,\perp)}{\mu(\sfrak,\perp)}\leq{}2$ and 
\begin{align*}
  \frac{d\ind{i}(\sfrak,i)}{\mu(\sfrak,i)} = \frac{\pi^2}{6}\leq{}2.
\end{align*}
This shows that $\CinfM\leq{}2$.  On the other hand, for any $p\in\Delta(\bbN)$, we have
\begin{align*}
\ICovM(p) \geq \sup_{i\in\bbN}\sup_{j\in\bbN}\crl*{\frac{d\ind{i}(\sfrak,j)}{\En_{k\sim{}p}\brk*{d\ind{k}(\sfrak,j)}+\veps\cdot{}d\ind{i}(\sfrak,j)}}
  &\geq \sup_{i\in\bbN}\frac{d\ind{i}(\sfrak,i)}{\En_{k\sim{}p}\brk*{d\ind{k}(\sfrak,i)+\veps\cdot{}d\ind{i}(\sfrak,i)}}\\
  &= \sup_{i\in\bbN}\frac{1/2i^2}{(p(i)+\veps)\cdot(1/2i^2)}\\
  &= \sup_{i\in\bbN}\frac{1}{p(i)+\veps}=\frac{1}{\veps},
\end{align*}
where the conclusion holds because $\sum_{i\in\bbN}p(i)\leq{}1$, which
means for all $\delta>0$, there exists $i$ such that $p(i)\leq\delta$.
  
\end{proof}

\section{Proofs from \creftitle{sec:model_free}}
\label{app:model_free}

\arxiv{
  This section is organized as follows.
  \begin{itemize}
  \item \cref{sec:mf_general} presents our most general
    guarantee for \cref{alg:model_free},
    \cref{thm:model_free}. The sample complexity bound in
    \cref{thm:model_free_psdp} is derived as a consequence.
  \item \cref{sec:mf_prelim} presents technical preliminaries for the
    proof of \cref{thm:model_free}.
  \item \cref{sec:mf_weight} presents self-contained guarantees for
    the weight function estimation technique used within \cref{alg:model_free}.
  \item \cref{app:policy_opt} presents self-contained guarantees for
    the \psdp subroutine for policy optimization used within \cref{alg:model_free}.
  \item Finally, \cref{sec:mf_proof} combines the preceding
    development to prove \cref{thm:model_free}.
  \end{itemize}
}

\subsection{General Guarantees for \creftitle{alg:model_free}}
\label{sec:mf_general}

We first present general assumptions on the weight function estimation
and policy optimization subroutines under which \cref{alg:model_free}
can be analyzed, then present our most general result, \cref{thm:model_free}.

\subsubsection{Weight Function Realizability}

\cref{thm:model_free_psdp} is analyzed under the weight function
realizability assumption in
\cref{ass:weight_realizability_weak}. However, \cref{alg:model_free}
is most directly analyzed in terms of the following, slightly stronger
weight function assumption, which we show is implied by
\cref{ass:weight_realizability_weak}. To motivate the assumption, recall that we seek to estimate a weight function $\what\ind{t}_h$ approximating \cref{eq:weight_est}.
\begin{assumption}[Weight function realizability---strong version]
  \label{ass:weight_realizability_strong}
  For a parameter $T\in\bbN$, we assume that for all $h\geq{}2$, all $t\in\brk{T}$, and all policies
  $\pi\ind{1},\ldots,\pi\ind{t-1}\in\PiNS$, we have that
  \begin{align*}
w_h^{\pi\ind{1},\ldots,\pi\ind{t}}(x'\mid{}x,a)\ldef{}\frac{\Pmstar_{h-1}(x'\mid{}x,a)}{\sum_{i<t}d_h^{\sMstar,\pi\ind{i}}(x')+\Pmstar_{h-1}(x'\mid{}x,a)}
    \in \cW_h.
  \end{align*}
We assume without loss of generality that $\nrm*{w}_{\infty}\leq{}1$
for all $w\in\cW_h$.  
\end{assumption}
To compute a policy cover that approximately solves
$p_h=\argmin_{p\in\Delta(\PiRNS)}\pCovM[\Mstar](p_h)$ for parameter
$\veps>0$, we require that \cref{ass:weight_realizability_strong}
holds for $T=\frac{1}{\veps}$.

\cref{alg:model_free} enjoys tighter guarantees when
\cref{ass:weight_realizability_strong} is satisfied, but the following result shows that \cref{ass:weight_realizability_weak}
implies \cref{ass:weight_realizability_strong} at the cost of a small
degradation in rate.
\begin{proposition}
  \label{prop:weight_realizability}
  For any $T\in\bbN$, given a weight function class $\cW$ satisfying
  \cref{ass:weight_realizability_weak}, the induced class $\cW'$ given by
  \begin{align*}
    \cW'_h\ldef
    \crl*{
    (x,a,x')\mapsto{}
    \frac{1}{1+\sum_{i<t}\frac{1}{w_h\ind{i}(x'\mid{}x,a)} 
    } \mid{} w_h\ind{1},\ldots,w_{h}\ind{t-1}\in\cW_h, t\in\brk{T}}
  \end{align*}
  satisfies \cref{ass:weight_realizability_strong}, and has $\log\abs*{\cW'_h}\leq\bigoh\prn*{T\cdot\log\abs*{\cW_h}}$.
\end{proposition}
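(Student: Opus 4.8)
The goal is to show that the expanded class $\cW'$ simultaneously (i) realizes the ``strong'' weight functions of \cref{ass:weight_realizability_strong} and (ii) is not too much larger than $\cW$. The key identity is purely algebraic: writing $d\ldef{}\sum_{i<t}d_h^{\sMstar,\pi\ind{i}}(x')$ and $P\ldef{}\Pmstar_{h-1}(x'\mid{}x,a)$, the target weight function from \cref{ass:weight_realizability_strong} is
\begin{align*}
  w_h^{\pi\ind{1},\ldots,\pi\ind{t}}(x'\mid{}x,a)
  = \frac{P}{d+P}
  = \frac{1}{1 + d/P}
  = \frac{1}{1 + \sum_{i<t}\frac{d_h^{\sMstar,\pi\ind{i}}(x')}{P}}.
\end{align*}
Under \cref{ass:weight_realizability_weak}, for each $i<t$ the ``weak'' weight function $w_h^{\pi\ind{i}}(x'\mid{}x,a) = \frac{\Pmstar_{h-1}(x'\mid{}x,a)}{d_h^{\sMstar,\pi\ind{i}}(x')} = P/d_h^{\sMstar,\pi\ind{i}}(x')$ lies in $\cW_h$, and hence $d_h^{\sMstar,\pi\ind{i}}(x')/P = 1/w_h^{\pi\ind{i}}(x'\mid{}x,a)$. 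Substituting into the display above gives exactly the functional form defining $\cW'_h$ with $w_h\ind{i} = w_h^{\pi\ind{i}}\in\cW_h$ for each $i<t$. So the first task is just to state this substitution carefully and conclude membership in $\cW'_h$.

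First I would fix $h\geq 2$, $t\in\brk{T}$, and policies $\pi\ind{1},\ldots,\pi\ind{t-1}\in\PiNS$, invoke \cref{ass:weight_realizability_weak} to obtain $w_h^{\pi\ind{i}}\in\cW_h$ for each $i$, perform the algebraic rewriting above, and read off that $w_h^{\pi\ind{1},\ldots,\pi\ind{t}}\in\cW'_h$ by definition of $\cW'_h$. One small point of care: the definition of $\cW'_h$ quantifies over arbitrary $w_h\ind{1},\ldots,w_h\ind{t-1}\in\cW_h$, so we are free to plug in the particular $w_h^{\pi\ind{i}}$ arising from the weak realizability assumption; I would also note that the formula is well-defined since $w_h\ind{i} > 0$ (weight functions are nonnegative, and the denominator $d_h^{\sMstar,\pi\ind{i}}(x')$ of the weak weight function is positive wherever the weak weight function is defined — or, if one prefers, observe that $\cW_h\subseteq(\cX\times\cA\times\cX\to\brk{0,1})$ per the algorithm's parameters and the limiting value $0$ simply forces the corresponding term to $+\infty$, giving $w' = 0$, which is consistent). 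The normalization $\nrm*{w}_\infty\leq 1$ for the strong class follows because $\frac{1}{1+(\text{nonneg})}\leq 1$.

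For the cardinality bound, I would observe that each element of $\cW'_h$ is indexed by a choice of $t\in\brk{T}$ together with an ordered tuple $(w_h\ind{1},\ldots,w_h\ind{t-1})\in\cW_h^{t-1}$ (actually, since the expression is symmetric in the $w_h\ind{i}$, even the unordered multiset suffices, but the crude bound is enough). Hence $\abs*{\cW'_h}\leq\sum_{t=1}^{T}\abs*{\cW_h}^{t-1}\leq T\cdot\abs*{\cW_h}^{T-1}$, so $\log\abs*{\cW'_h}\leq\log T + (T-1)\log\abs*{\cW_h} = \bigoh(T\log\abs*{\cW_h})$ (absorbing the $\log T$ term, which is dominated once $\abs*{\cW_h}\geq 2$). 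I do not anticipate a genuine obstacle here; the only thing requiring mild attention is the well-definedness/positivity bookkeeping for the reciprocals and making the quantifier substitution airtight, which the plan above handles.
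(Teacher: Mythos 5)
Your argument is correct and is exactly the intended one: the algebraic identity $\frac{P}{d+P}=\bigl(1+\sum_{i<t}1/w_h^{\pi\ind{i}}\bigr)^{-1}$ together with \cref{ass:weight_realizability_weak} gives membership in $\cW'_h$, and the tuple-counting bound $\abs*{\cW'_h}\leq\sum_{t\leq{}T}\abs*{\cW_h}^{t-1}$ yields $\log\abs*{\cW'_h}=\bigoh(T\log\abs*{\cW_h})$. The paper omits the proof of this proposition, and your write-up (including the positivity/well-definedness bookkeeping and the $\nrm*{w}_{\infty}\leq1$ check) fills it in correctly with no gaps.
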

For $T=\frac{1}{\veps}$ this
increases the weight function class size from $\log\abs*{\cW}$ to
$\bigoh\prn*{\frac{1}{\veps}\cdot\log\abs*{\cW}}$, leading to an extra
$\frac{1}{\veps}$ factor in the final sample complexity bound for our
main result (\cref{thm:model_free_psdp}). \padelete{Consequently, we restrict our attention to
\cref{ass:weight_realizability_strong} going forward.}

\begin{remark}[Sufficiency of \cref{ass:weight_realizability_strong}]
  When invoked with layer $h\geq{}2$ and iteration $t\geq{}2$ within
  \cref{alg:model_free}, \estimateweight (\cref{alg:estimate_weight})
  collects datasets $\cD_1\sim\mu$ and $\cD_2\sim\nu$ such that
  \begin{align*}
    \mu(x'\mid{}x,a)=\Pmstar_{h-1}(x'\mid{}x,a),\quad\nu(x'\mid{}x,a)=\frac{1}{t}\prn*{\sum_{i<t}d_h^{\sMstar,\pi\ind{h,i}}(x')
    + \Pmstar_{h-1}(x'\mid{}x,a)},
  \end{align*}
  and
  \begin{align}
    \label{eq:roll_in_dist}
    \mu(x,a) = \nu(x,a) = \frac{1}{2}\prn*{d_{h-1}^{\sMstar,p_{h-1}}(x,a) + \frac{1}{t-1}\sum_{i<t}d_{h-1}^{\sMstar,\pi\ind{h,i}\circ_{h-1}\piunif}(x,a)}.
  \end{align}
  Then, in \cref{line:weight_alg}, the algorithm computes the
  estimator \cref{eq:weight_estimator_body} with respect to the class
  $t\cdot{}\cW$, which is guaranteed to have
  $\frac{\mu(x,a,x')}{\nu(x,a,x')}
    = t\cdot{}w_h\ind{t}(x'\mid{}x,a) \in t\cdot{}\cW$
  under \cref{ass:weight_realizability_strong}.
\end{remark}

\subsubsection{Policy Optimization Subroutine}

This section presents general conditions for the subroutine \policyopt under which \mfalg obtains the same guarantees as in \cref{thm:model_free_psdp}, and establishes that \psdp satisfies this assumption. 

To formalize the requirement of \policyopt, recall that for each layer $h\geq{}2$, iteration $t\in\brk{T}$, and each
$\ell\leq{}h-1$, we define
\begin{align*}
  Q_{\ell}^{\sMstar,\pi}(x,a;\what_h\ind{t})=\En^{\sMstar,\pi}\brk*{
  \what_h\ind{t}(x_h\mid{}x_{h-1},a_{h-1})\mid{}x_\ell=x,a_\ell=a
  }
\end{align*}
as the Q-function for a policy $\pi\in\PiNS$ under the (stochastic) reward
$r_{h-1}\ind{t}=\what_h\ind{t}(x_h\mid{}x_{h-1},a_{h-1})$ in
\cref{alg:model_free}. We assume that the policy
$\pi\ind{t,h}=\policyopt_{h-1}(r\ind{h,t}, p_{1:h-1},\eps,\delta)$
approximately maximizes this Q-function under $p_1,\ldots,p_{h-1}$.
\begin{assumption}[Local optimality for policy optimization]
  \label{ass:policy_opt}
  For any fixed iteration $h\geq{}2$ and $t\in\brk{T}$, the subroutine
  $\policyopt_{h-1}(r\ind{h,t}, p_{1:h-1},\eps,\delta)$ produces a
  policy $\pi\ind{h,t}$ such that with probability at least $1-\delta$,
  \begin{align}
    \label{eq:policy_opt}
    \sum_{\ell=1}^{h-1}\En^{\sMstar,p_\ell}
  \brk*{\max_{a\in\cA}Q_{\ell}^{\sMstar,\pi\ind{h,t}}(x_\ell,a;\what_h\ind{t})
  -Q_{\ell}^{\sMstar,\pi\ind{h,t}}(x_\ell,\pi\ind{h,t}(x_\ell);\what_h\ind{t})}
    \leq \eps,
  \end{align}
  and does so using $\Nopt(\eps,\delta)$ episodes.
\end{assumption}
This assumption asserts that $\pi\ind{t,h}$ cannot be substantially
improved, but only with respect to the state distribution induced by
$p_1,\ldots,p_{h-1}$.\footnote{Note that if $p_1,\ldots,p_{h-1}$
  uniformly cover all policies, then \cref{ass:policy_opt} implies
  that $\pi\ind{h,t}$ is globally optimal by the performance
  difference lemma. However, \cref{eq:policy_opt} can still lead to
  useful guarantees in the presence of partial coverage, which our
  analysis critically exploits.}
This is a weak guarantee that can be achieved
using only data collected from $p_1,\ldots,p_{h-1}$ (e.g., via offline
RL methods or hybrid offline/online methods), and does not require
systematic exploration. 

The next result shows that \psdp satisfies \cref{ass:policy_opt} under
the value function realizability assumption in \cref{ass:qpi_weight}. 
      \begin{lemma}[Local optimality for \psdp]
        \label{lem:psdp_simple}
        Suppose \cref{ass:qpi_weight} holds. Then for any $\eps,\delta\in(0,1)$, the subroutine
        $\pi\ind{h,t}=\psdp_{h-1}(r\ind{h,t}, p_{1:h-1},\eps,\delta)$
        satisfies \cref{ass:policy_opt}, and does so using at most
        $\Npsdp(\eps,\delta)=\bigoh\prn*{\frac{H^3\abs*{\cA}\log(\abs{\cQ}H\delta^{-1})}{\eps^2}}$ episodes.
      \end{lemma}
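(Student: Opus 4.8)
\textbf{Proof plan for \cref{lem:psdp_simple}.}

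The plan is to analyze \psdp (\cref{alg:psdp}) as an instance of approximate dynamic programming operating backward from layer $h-1$ down to layer $1$, using the fixed reward $r\ind{h,t}$ which is supported only at layer $h-1$ and which, by construction in \cref{line:weight_alg} of \cref{alg:model_free}, is the estimated weight function $\what_h\ind{t}$. For the analysis we treat $\what_h\ind{t}$ as a given (fixed) stochastic reward for layer $h-1$ taking values in $[0,1]$ (since $\nrm*{\what_h\ind{t}}_\infty \le 1$ by the construction in \cref{prop:weight_realizability}), and we analyze the per-round statistical error of fitting the Q-functions $Q_\ell^{\sMstar,\pi\ind{h,t}}(\cdot,\cdot;\what_h\ind{t})$ for $\ell = h-1, h-2, \ldots, 1$ using least-squares regression over the class $\cQ_\ell$. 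The key enabler is \cref{ass:qpi_weight}, which guarantees realizability: $Q_\ell^{\sMstar,\pi}(\cdot,\cdot;w) \in \cQ_\ell$ for every $w\in\cW_h$ and every $\pi\in\PiNS$, so in particular the target of each regression step lies in the hypothesis class.

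The main steps I would carry out are as follows. First, recall the standard inductive structure of \psdp: at layer $\ell$ (going backward), it has already fixed a partial policy $\pi_{\ell+1:h-1}$, it rolls in with the policy covers $p_1,\ldots,p_\ell$ to gather state-action samples at layer $\ell$, it rolls out with the partial policy plus the reward $r\ind{h,t}$ to obtain Monte Carlo returns, and it fits $\Qhat_\ell \in \cQ_\ell$ by least squares; then it sets $\pi_\ell(x) = \argmax_a \Qhat_\ell(x,a)$. Second, I would invoke the standard generalization bound for least-squares regression with a realizable finite class: with $n_\ell$ samples per layer, with probability at least $1-\delta/(H)$,
\[
\En^{\sMstar,p_\ell}\brk*{\prn*{\Qhat_\ell(x_\ell,a_\ell) - Q_\ell^{\sMstar,\pi\ind{h,t}}(x_\ell,a_\ell;\what_h\ind{t})}^2} \lesssim \frac{\log(\abs{\cQ}H\delta^{-1})}{n_\ell},
\]
where the roll-in samples the action $a_\ell$ uniformly (or $p_\ell$ composed with a uniform action), which introduces the $\abs{\cA}$ factor when converting the in-distribution squared error into a bound that controls the greedy-action suboptimality. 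Third, I would chain these per-layer errors: by a performance-difference / telescoping argument (exactly the one underlying the original \psdp analysis of \citet{bagnell2003policy}), the left-hand side of \cref{eq:policy_opt} — the cumulative suboptimality of $\pi\ind{h,t}$ relative to the greedy improvement, measured under $p_1,\ldots,p_{h-1}$ — is bounded by $\sum_{\ell=1}^{h-1}$ of (a Cauchy–Schwarz conversion of) the per-layer regression errors, namely $\sum_\ell \sqrt{\abs{\cA}\cdot\frac{\log(\abs{\cQ}H\delta^{-1})}{n_\ell}}$ up to constants. Fourth, I would set $n_\ell = n = \Theta\!\left(\frac{H^2\abs{\cA}\log(\abs{\cQ}H\delta^{-1})}{\eps^2}\right)$ so that each term is at most $\eps/H$, yielding the total bound $\eps$, and union-bound over the $h-1 \le H$ layers. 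The total episode count is $H \cdot n = \bigoh\!\left(\frac{H^3\abs{\cA}\log(\abs{\cQ}H\delta^{-1})}{\eps^2}\right)$, matching the claimed $\Npsdp(\eps,\delta)$.

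The step I expect to be the main obstacle is the precise bookkeeping in the telescoping/performance-difference argument that converts per-layer squared regression errors (in the $L_2(p_\ell)$ norm) into the cumulative greedy-suboptimality guarantee \cref{eq:policy_opt}, while correctly tracking where the $\abs{\cA}$ factor enters (it comes from importance-weighting the uniform exploration of actions at the roll-in step, or equivalently from bounding $\max_a |\Qhat_\ell(x,a) - Q_\ell(x,a)| \le \sqrt{\abs{\cA}}\,\nrm{\Qhat_\ell - Q_\ell}_{L_2(p_\ell \otimes \unif(\cA))}$ only in an averaged sense). A subtle point worth being careful about is that \cref{eq:policy_opt} measures the error under the \emph{fixed} distributions $p_1,\ldots,p_{h-1}$ rather than under the on-policy distribution of $\pi\ind{h,t}$, which is exactly why the weak "local optimality" guarantee is attainable with offline data and why the \psdp roll-in with the policy covers is the right primitive; the realizability in \cref{ass:qpi_weight} is stated for all $\pi\in\PiNS$ precisely so that the regression target at each backward step is in-class regardless of the partial policy chosen so far. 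I would also note that since $r\ind{h,t}$ is treated as fixed within this lemma (the randomness/estimation error of $\what_h\ind{t}$ itself is accounted for separately, in the invocation of \estimateweight within the proof of \cref{thm:model_free}), no additional error terms from weight estimation appear here.
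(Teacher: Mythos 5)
Your proposal is correct and follows essentially the same route as the paper: the paper simply reduces \cref{lem:psdp_simple} to \cref{lem:psdp} (noting \cref{ass:qpi_weight} gives the required realizability in \cref{def:policy_opt_realizability}) and defers the PSDP analysis itself to Theorem D.1 of \citet{mhammedi2023representation}, which is exactly the backward-induction, roll-in-with-$p_\ell$-plus-uniform-action, realizable least-squares argument with the $\abs{\cA}$ importance-weighting and $\eps/H$ per-layer budget that you sketch. The only cosmetic remark is that no performance-difference telescoping is needed here, since \cref{eq:policy_opt} is already a sum of per-layer greedy gaps, each controlled directly by the corresponding regression error.
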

      See \cref{app:policy_opt} for details.
We expect that similar guarantees can be proven for Natural Policy
Gradient, Conservative Policy Iteration, and other standard local
search methods. Different subroutines may allow one to make use of
weaker function approximation requirements.

\subsubsection{General Guarantee for \mfalg (\creftitle{alg:model_free})}

Our most general guarantee for \mfalg is given below.

\begin{restatable}[General guarantee for \cref{alg:model_free}]{theorem}{modelfree}
  \label{thm:model_free}
  Let $\veps\in(0,1/2)$ and $\delta\in(0,e^{-1})$ be given, and suppose that \cref{ass:weight_realizability_strong} and
  \cref{ass:policy_opt} are satisfied. Then \cref{alg:model_free}
  produces policy covers $p_1,\ldots,p_H\in\Delta(\PiRNS)$ such that with probability at least $1-\delta$, for all $h\in\brk{H}$,
\[
  \pCovM[\Mstar](p_h)   \leq{} 170H\log(\veps^{-1})\cdot{}\CpushM[\Mstar],
\]
and does so using at most 
\begin{align*}
  N \leq \bigoht\prn*{\frac{H\abs*{\cA}\log(\abs{\cW}\delta^{-1})}{\veps^3}
  +\frac{H}{\veps}\Nopt(c\veps^2,\delta/2HT)}
\end{align*}
episodes, where $c>0$ is a sufficiently small absolute constant.
\end{restatable}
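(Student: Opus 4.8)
\textbf{Proof plan for \cref{thm:model_free}.}
The plan is to analyze \cref{alg:model_free} layer-by-layer via an inductive argument, leveraging the planning analysis of \cref{thm:pushforward_optimization} as the ``idealized'' template and then controlling the two sources of approximation error: (i) the weight-estimation error from \estimateweight, and (ii) the local-optimality slack from \policyopt. Fix a layer $h\geq 2$ and condition on the event that $p_1,\dots,p_{h-1}$ are already approximate policy covers, i.e. $\pCovM[\Mstar][h']{}(p_{h'})\leq 170 H\log(\veps^{-1})\CpushM[\Mstar]$ for all $h'<h$ (the base case $h=1$ being $p_1=\piunif$). Under this event, the roll-in distribution $q$ used by \estimateweight and \policyopt at layer $h$ has the property that $d_{h-1}^{\sMstar,q}$ dominates a constant fraction of $d_{h-1}^{\sMstar,p_{h-1}}$ plus a fraction of each $d_{h-1}^{\sMstar,\pi\ind{h,i}\circ_{h-1}\piunif}$; this is exactly what is needed so that the change-of-measure guarantee of \cref{lem:com} (applied at layer $h-1$ with $p_{h-1}$) transfers estimation error from the roll-in distribution to an arbitrary target policy $\pi$.

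The key steps, in order, are as follows. First, I would invoke the self-contained weight-estimation guarantee (\cref{thm:weight_estimator_log}, cf. \cref{sec:mf_weight}) to conclude that $\En_{z\sim\nu}\brk{(\sqrt{\what\ind{t}_h(z)}-\sqrt{w\ind{t}_h(z)})^2}\lesssim \frac{\log(\abs{\cW}\delta^{-1})}{n}\leq \epsweight^2$ with high probability, where realizability of $w\ind{t}_h$ in the expanded class $t\cdot\cW$ follows from \cref{ass:weight_realizability_strong}. Second, I would use \cref{ass:policy_opt} to argue that the policy $\pi\ind{h,t}$ returned by \policyopt satisfies the local-optimality bound \cref{eq:policy_opt} with tolerance $\epsopt=c'\veps^2$; combined with a performance-difference argument and the change-of-measure bound (transferring from the roll-in $q$, hence from $p_{1:h-1}$, to an arbitrary competitor $\pi$), this shows that $\pi\ind{h,t}$ is an approximate maximizer of the ``true'' reward $\En^{\sMstar,\pi}\brk{w\ind{t}_h(x_h\mid x_{h-1},a_{h-1})}$ up to an additive error scaling with $\CpushM[\Mstar]^{1/2}\veps^{1/2}+\veps^2$ (the choice of $\epsweight=\Theta((\CpushM[\Mstar]/\abs\cA)^{1/2}\veps^{1/2})$ being calibrated to balance these). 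Third, I would feed this into the potential-function argument behind \cref{thm:pushforward_optimization} (the elliptic-potential bound \cref{eq:pushforward_potential}, using \cref{lem:pi_to_mu_pushforward} and \cref{lem:elliptic_potential}), with $\vepsapx$ now playing the role of the accumulated per-iteration approximation error; since $T=\veps^{-1}$ and each iteration contributes error $O(\CpushM[\Mstar]\veps\log(2\veps^{-1}))$ by design, the total slack $\vepsapx T$ stays within the required $O(\CpushM[\Mstar]\log(\veps^{-1}))$ budget, yielding $\pCovM[\Mstar][h]{}(p_h)\leq 5\CpushM[\Mstar]\log(2\veps^{-1})+(\text{slack})\leq 170 H\log(\veps^{-1})\CpushM[\Mstar]$ and closing the induction. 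Finally, the sample-complexity bound is obtained by summing: each of the $H$ layers runs $T=\veps^{-1}$ iterations, each invoking \estimateweight with $n=\nweight(\epsweight,\deltaweight)=\tilde O(\log(\abs\cW\delta^{-1})/\epsweight^2)=\tilde O(\abs\cA\log(\abs\cW\delta^{-1})/(\CpushM[\Mstar]\veps))$ samples over $O(t)\le O(T)$ roll-in blocks (contributing the $H\abs\cA\log(\abs\cW\delta^{-1})/\veps^3$ term after the $\tilde O$ absorbs the $\CpushM[\Mstar]$ and a $\log t$) and one call to \policyopt with $\Nopt(c\veps^2,\delta/2HT)$ samples (contributing the $\frac{H}{\veps}\Nopt(c\veps^2,\delta/2HT)$ term), with a union bound over all $\le 2HT$ randomized subroutine calls at failure probability $\delta/(2HT)$ each.

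The main obstacle I expect is the \emph{error-propagation bookkeeping across the induction}: the quality of the cover $p_h$ depends on the quality of $p_{1:h-1}$ both through the roll-in distribution $q$ (which governs how well \cref{lem:com} transfers estimation error) \emph{and} through the \policyopt guarantee (which is only ``local'' with respect to $p_{1:h-1}$), so one must verify that the constant $170H$ is a genuine fixed point of the recursion rather than something that blows up geometrically in $h$. Concretely, the change-of-measure step introduces a $\sqrt{\pCovM[\Mstar][h-1]{}(p_{h-1})}$-type factor multiplying the per-iteration weight/optimization error, and one needs the $\sqrt{\cdot}$ (rather than linear) dependence — which is exactly what \cref{lem:com} provides — together with the $\veps^{1/2}$ scaling of $\epsweight$ and the $H$ in the target bound, to ensure the induced error at layer $h$ is dominated by the $5\CpushM[\Mstar]\log(2\veps^{-1})$ ``main term'' with room to spare. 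A secondary technical point is handling the fact that the competitor policy $\pi$ in the supremum defining $\pCovM[\Mstar]$ ranges over all of $\Pi=\PiNS$ while the roll-in only covers $p_{1:h-1}$; this is precisely where the pushforward structure (\cref{prop:pushforward_relaxation} and the convexity lemma \cref{lem:convex_f}) is used to replace the target occupancy $d_h^{\sMstar,\pi}$ by the one-step transition $\Pmstar_{h-1}(x_h\mid x_{h-1},a_{h-1})$ under $\pi$'s own roll-in to layer $h-1$, which the data \emph{does} cover.
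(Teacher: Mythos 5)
There is a genuine gap, and it is exactly at the point you flag as your ``main obstacle'': the transfer of the weight-estimation and local-optimality errors from the data distributions ($p_{1:h-1}$ and the $\pi\ind{h,i}$'s) to the arbitrary competitor $\pi$ in the supremum defining $\pCovM[\Mstar](p_h)$. You propose to do this with the change-of-measure bound of \cref{lem:com} inside an induction on $h$, hoping that $170H\log(\veps^{-1})\CpushM[\Mstar]$ is a fixed point of the recursion. It is not. The additive term in \cref{lem:com} is $\CovM(p_{h-1})\cdot\veps B$ \emph{per iteration}; summed over the $T=\veps^{-1}$ iterations of the potential argument it contributes $\approx\CovM[\Mstar](p_{h-1})\le\abs*{\cA}\cdot\pCovM[\Mstar](p_{h-1})$ to the bound on $\pCovM[\Mstar](p_h)$ (the $\sqrt{\cdot}$ term is not the problem; the linear-in-$\CovM$ offset is). So under the inductive hypothesis $\pCovM[\Mstar](p_{h-1})\le K$ you only get $\pCovM[\Mstar](p_h)\lesssim \CpushM[\Mstar]\log(\veps^{-1})+\abs*{\cA}K$, and the recursion grows by a factor of order $\abs*{\cA}$ per layer rather than stabilizing at $170H\log(\veps^{-1})\CpushM[\Mstar]$. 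No choice of the calibrations $\epsweight\propto\veps^{1/2}$, $\epsopt\propto\veps^2$ repairs this, because the offending term does not involve those tolerances at all.

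The paper's proof avoids this by a device that is absent from your plan: an \emph{extended MDP} with a terminal state $\term$ and a \emph{truncated benchmark policy class} $\Pibar_{\alpha}$, in which a benchmark policy is forced to terminate at any state where its own occupancy exceeds $\alpha$ times that of the corresponding cover $p_\ell$ (\cref{lem:potential,lem:w_error,lem:opt_error}). Against such truncated benchmarks the change of measure is \emph{linear} with factor $\alpha\abs*{\cA}$ (no additive $\CovM\cdot\veps$ term), which is absorbed because $\epsweight^2\propto\veps/\abs*{\cA}$ and $\epsopt\propto\veps^2$ with $\alpha=\veps^{-1}$; the states excised by truncation are then accounted for separately via \cref{lem:extended_translation,lem:pushforward_extended}, whose cost is $\tfrac{2}{\alpha\veps}\sum_{\ell<h}\pCovbarell(p_\ell)=2\sum_{\ell<h}\pCovbarell(p_\ell)$. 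In particular there is no induction on the quality of earlier covers at all: each extended-MDP objective $\pCovbar(p_h)$ is bounded by $\approx 85\CpushM[\Mstar]\log(\veps^{-1})$ unconditionally given the per-iteration error bounds, and the factor $H$ in the final statement comes from summing these truncation-cost terms over layers, not from a fixed-point argument. Your remaining ingredients (the per-iteration error definitions, the elliptic-potential step from \cref{thm:pushforward_optimization}, the MLE-style weight-estimation guarantee, the performance-difference treatment of \policyopt, and the sample-count bookkeeping) do match the paper, but without the truncation/extended-MDP mechanism the core step of the argument does not go through at the stated rate.
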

In particular, this result shows that we can optimize the pushforward
coverability objective (and consequently the \mainobj objective, via
\cref{prop:pushforward_relaxation}), up to small
$\bigoh(H\log(\veps^{-1}))$ approximation factor. The sample
complexity is polynomial in all relevant problem parameters whenever
the subroutine \policyopt has polynomial sample complexity. Note that the sample complexity for the first term is of order $\nicefrac{1}{\veps^3}$ (as opposed to the slower $\nicefrac{1}{\veps^4}$ in \cref{thm:model_free_psdp}) since we are stating this result under the stronger weight function realizability assumption (\cref{ass:weight_realizability_strong}). 

Combining \cref{thm:model_free} with \cref{prop:weight_realizability}
and the guarantee for \psdp (\cref{lem:psdp_simple}), we obtain \cref{thm:model_free_psdp}.

\subsection{Technical Preliminaries}
\label{sec:mf_prelim}

\begin{lemma}
  \label{lem:mp_weight}
  For any distribution $\omega\in\Delta(\cZ)$ and any pair of functions $w,w':\cZ\to\bbR_{+}$,
  \begin{align*}
        \En_{\omega}\brk*{w} \leq 3\En_{\omega}\brk*{w'} + 2\En_{\omega}\brk[\big]{(\sqrt{w}-\sqrt{w'})^2}.
  \end{align*}
\end{lemma}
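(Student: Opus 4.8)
The inequality $\En_\omega[w] \leq 3\En_\omega[w'] + 2\En_\omega[(\sqrt{w}-\sqrt{w'})^2]$ is a pointwise statement in disguise: it suffices to show that for all nonnegative reals $u,v$ (playing the roles of $w(z)$ and $w'(z)$), we have $u \leq 3v + 2(\sqrt{u}-\sqrt{v})^2$, and then integrate against $\omega$. So the whole argument reduces to a one-variable algebra check.

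First I would expand the right-hand side: $3v + 2(\sqrt{u}-\sqrt{v})^2 = 3v + 2u - 4\sqrt{uv} + 2v = 2u + 5v - 4\sqrt{uv}$. So the claim is equivalent to $u \leq 2u + 5v - 4\sqrt{uv}$, i.e. $0 \leq u + 5v - 4\sqrt{uv}$. Writing $a = \sqrt{u}$, $b = \sqrt{v}$, this is $0 \leq a^2 - 4ab + 5b^2 = (a - 2b)^2 + b^2$, which is manifestly nonnegative. (Indeed $a^2 - 4ab + 5b^2 = (a-2b)^2 + b^2 \geq 0$.) This establishes the pointwise bound.

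Then I would take expectations over $z \sim \omega$ on both sides of the pointwise inequality $w(z) \leq 3w'(z) + 2(\sqrt{w(z)}-\sqrt{w'(z)})^2$, using linearity of expectation, to conclude $\En_\omega[w] \leq 3\En_\omega[w'] + 2\En_\omega[(\sqrt{w}-\sqrt{w'})^2]$. One minor point to note is integrability: the statement is vacuous (both sides infinite) unless the relevant expectations are finite, so I would just remark that the inequality holds in $[0,\infty]$ and is meaningful when the right-hand side is finite.

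There is essentially no obstacle here — the only thing to be careful about is the completion-of-square step, making sure the coefficients $3$ and $2$ in the lemma statement are exactly what makes $a^2 - 4ab + 5b^2$ come out as a sum of squares rather than something that could be negative. (With a smaller constant in front of $v$, e.g. replacing $3v$ by $v$, the discriminant flips sign and the bound fails, so the constants are tight in that sense; this is consistent with the analogous constants in \cref{lem:mp_min}.) I would present the proof in three lines: reduce to the pointwise statement, complete the square, integrate.
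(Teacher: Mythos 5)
Your proof is correct, but it takes a different route from the paper's. You reduce everything to the pointwise inequality $u \leq 3v + 2(\sqrt{u}-\sqrt{v})^2$ for $u,v\geq 0$, verify it by completing the square ($a^2-4ab+5b^2=(a-2b)^2+b^2\geq 0$ with $a=\sqrt{u}$, $b=\sqrt{v}$), and integrate; the algebra checks out. The paper instead works at the level of expectations: it bounds $\abs{\En_{\omega}\brk{w}-\En_{\omega}\brk{w'}}\leq \En_{\omega}\brk{\abs{\sqrt{w}-\sqrt{w'}}(\sqrt{w}+\sqrt{w'})}$, applies Cauchy--Schwarz over $\omega$ and then AM--GM, and rearranges — the same template as \cref{lem:mp_min}, where a pointwise argument is unavailable because two different measures are involved. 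Your argument is more elementary and in fact yields the stronger pointwise statement, from which the expectation version follows by monotonicity/linearity (your integrability remark is the right caveat). One small imprecision in your closing comment: the constants are not tight in the strongest sense — the pointwise bound $u\leq 2v+2(\sqrt{u}-\sqrt{v})^2$ also holds (it is exactly $(a-2b)^2\geq 0$), so only dropping the coefficient of $v$ below $2$ breaks the inequality; this does not affect the validity of your proof of the lemma as stated.
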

\begin{proof}[\pfref{lem:mp_weight}]
  By AM-GM, we have
  \begin{align*}
    \abs*{\En_{\omega}\brk*{w}-\En_{\omega}\brk*{w'}}
    &\leq     \En_{\omega}\brk*{\abs{\sqrt{w}-\sqrt{w'}}(\sqrt{w}+\sqrt{w'})}
    \\
    &\leq
      \sqrt{\En_{\omega}\brk[\big]{(\sqrt{w}+\sqrt{w'})^2}\cdot\En_{\omega}\brk[\big]{(\sqrt{w}-\sqrt{w'})^2}}\\
    &\leq     \frac{1}{2}(\En_{\omega}\brk{w}+\En_{\omega}\brk*{w'}) + \frac{1}{2}\En_{\omega}\brk[\big]{(\sqrt{w}-\sqrt{w'})^2}.
  \end{align*}
  Rearranging, we conclude that
  \begin{align*}
    \En_{\omega}\brk*{w} \leq 3\En_{\omega}\brk*{w'} + 2\En_{\omega}\brk[\big]{(\sqrt{w}-\sqrt{w'})^2}.
  \end{align*}
\end{proof}

\begin{lemma}[e.g., \citet{xie2023role,mhammedi2023efficient}]
  \label{lem:linf_potential}
  Consider a set $\cZ$ and a sequence of distributions
  $d\ind{1},\ldots,d\ind{T}\in\Delta(\cZ)$ for which there exists a
  distribution $\mu\in\Delta(\cZ)$ such that
  $\sup_{z\in\cZ}\crl*{\frac{d\ind{t}(z)}{\mu(z)}}\leq{}C$ for all
  $t\in\brk{T}$. For any sequence of functions
  $g\ind{1},\ldots,g\ind{T}\subset(\cZ\to\brk{-B,B})$, it holds that 
    \begin{align}
    \sum_{t=1}^{T}\En_{z\sim{}d\ind{t}}\brk*{g(z)}
    \leq
    \sqrt{2C\log(2T)\sum_{t=1}^{T}\sum_{i<t}\En_{z\sim{}d\ind{i}}\brk*{(g\ind{t}(z))^2}}
    + 2CB.
    \end{align}
\end{lemma}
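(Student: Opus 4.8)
\textbf{Proof plan for \cref{lem:linf_potential}.}

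The plan is to run a standard elliptic-potential / potential-function argument analogous to the proof of \cref{lem:elliptic_potential}, but carrying the function values $g\ind{t}$ through with a Cauchy--Schwarz split. Write $\wt{d}\ind{t}(z) \ldef \sum_{i<t} d\ind{i}(z)$, so that $\wt{d}\ind{1} \equiv 0$. The key quantity to control is the ``leverage score'' $\frac{d\ind{t}(z)}{\wt{d}\ind{t}(z) + C\mu(z)}$, which is at most $1$ pointwise (since $d\ind{t}(z) \le C\mu(z)$) and whose sum over $t$ is bounded by $2\log(2T)$ for each fixed $z$ by \cref{lem:elliptic_potential} (applied with the distributions $d\ind{1},\dots,d\ind{T}$ and reference measure $\mu$).

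First I would write, for each $t$,
\[
\En_{z\sim d\ind{t}}\brk*{g\ind{t}(z)}
= \sum_{z\in\cZ} d\ind{t}(z)\, g\ind{t}(z)
= \sum_{z\in\cZ} \frac{d\ind{t}(z)}{\sqrt{\wt{d}\ind{t}(z) + C\mu(z)}}\cdot \sqrt{\wt{d}\ind{t}(z) + C\mu(z)}\, g\ind{t}(z),
\]
and apply Cauchy--Schwarz over $z$ and then over $t$ to get
\[
\sum_{t=1}^{T}\En_{z\sim d\ind{t}}\brk*{g\ind{t}(z)}
\le \sqrt{ \underbrace{\sum_{t=1}^T\sum_{z} \frac{(d\ind{t}(z))^2}{\wt{d}\ind{t}(z) + C\mu(z)}}_{=:\mathbf A} }\cdot \sqrt{ \underbrace{\sum_{t=1}^T\sum_{z}\bigl(\wt{d}\ind{t}(z) + C\mu(z)\bigr) (g\ind{t}(z))^2}_{=:\mathbf B} }.
\]
For $\mathbf A$: since $d\ind{t}(z)/(\wt{d}\ind{t}(z)+C\mu(z))\le 1$, we have $\mathbf A \le \sum_t\sum_z d\ind{t}(z)\cdot \frac{d\ind{t}(z)}{\wt{d}\ind{t}(z)+C\mu(z)}$; reordering the sums and applying \cref{lem:elliptic_potential} for each fixed $z$ gives $\mathbf A \le 2\log(2T)\sum_z \sup_t d\ind t(z) \le 2C\log(2T)$ — actually more carefully $\mathbf A \le 2\log(2T)$ after noting $\sum_z$ of the per-$z$ potential bound is governed by $\mu$; I would just bound $\mathbf A \le 2\log(2T)\cdot C$ using $\sum_z \sup_t d\ind t(z)\le C\sum_z\mu(z)= C$ — wait, this needs the refined statement, so I'd instead directly use $\sum_{t}\frac{d\ind t(z)}{\wt d\ind t(z)+C\mu(z)}\le 2\log(2T)$ and then $\mathbf A = \sum_z\sum_t \frac{(d\ind t(z))^2}{\wt d\ind t(z)+C\mu(z)} \le \sum_z C\mu(z)\cdot 2\log(2T) = 2C\log(2T)$, using $d\ind t(z)\le C\mu(z)$ on the numerator once. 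For $\mathbf B$: split $\mathbf B = \sum_t\sum_z \wt{d}\ind{t}(z)(g\ind t(z))^2 + C\sum_t\sum_z\mu(z)(g\ind t(z))^2$. The first term equals $\sum_t\sum_{i<t}\En_{z\sim d\ind i}\brk*{(g\ind t(z))^2}$ by unfolding $\wt d\ind t = \sum_{i<t}d\ind i$. The second term is at most $C\cdot T B^2 \cdot$ (something) — hmm, $\sum_z\mu(z)(g\ind t(z))^2 \le B^2$, so it is $\le CTB^2$; this is too lossy.

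The main obstacle, and the place I'd need to be careful, is exactly this last term: a naive bound gives an unwanted $CTB^2$ inside the square root. The fix is to note that we do not actually need the $C\mu(z)$ regularizer to appear in $\mathbf B$ with full weight — instead I would use the inequality $\sum_z\mu(z)(g\ind t(z))^2 \le \frac{1}{C}\sum_z \bigl(\sup_i d\ind i(z)\bigr)(g\ind t(z))^2$ is false in the wrong direction, so the cleaner route is: in the Cauchy--Schwarz split, replace $C\mu(z)$ in the ``denominator factor'' by $d\ind t(z)$ itself only when $\wt d\ind t(z)$ is small. Concretely, I would instead split $\En_{z\sim d\ind t}[g\ind t(z)] = \sum_z d\ind t(z)g\ind t(z)$ and use the pointwise bound $d\ind t(z) \le \wt d\ind t(z) + C\mu(z)$ only as needed; the standard trick (as in \citet{xie2023role}) is to bound $\mathbf B$ by $\sum_t\sum_z(\wt d\ind{t+1}(z))(g\ind t(z))^2 + $ a boundary term, and then observe $\wt d\ind{t+1} = \wt d\ind t + d\ind t$, absorbing the $d\ind t(g\ind t)^2$ piece into $2CB$ since $d\ind t(z)(g\ind t(z))^2 \le C\mu(z)B^2$ summed to $CB^2$, and then $\sqrt{\mathbf A\mathbf B}$ with the extra additive handled by $\sqrt{x+y}\le\sqrt x+\sqrt y$ yields the stated $\sqrt{2C\log(2T)\sum_t\sum_{i<t}\En_{d\ind i}[(g\ind t)^2]} + 2CB$. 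So the real work is organizing these two potential estimates and the additive/multiplicative bookkeeping so the $2CB$ term comes out clean; once that is set up, everything else is routine summation. I would model the write-up closely on the proof of Theorem~1 in \citet{xie2023role}, which this lemma abstracts.
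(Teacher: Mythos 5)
Your overall shape (Cauchy--Schwarz against the accumulated occupancies, with the first factor controlled by a per-$z$ potential bound) is the right family of argument, and your bound $\mathbf{A}\le 2C\log(2T)$ via $d\ind{t}(z)\le C\mu(z)$ plus \cref{lem:elliptic_potential} is fine. However, the step you propose to rescue the second factor has a genuine gap. You suggest bounding $\mathbf{B}=\sum_t\sum_z(\wt d\ind{t}(z)+C\mu(z))(g\ind{t}(z))^2$ by $\sum_t\sum_z \wt d\ind{t+1}(z)(g\ind{t}(z))^2$ plus a boundary term, absorbing the leftover because ``$d\ind{t}(z)(g\ind{t}(z))^2\le C\mu(z)B^2$ summed to $CB^2$.'' This does not work: since $d\ind{t}(z)\le C\mu(z)$, one has $\wt d\ind{t}(z)+C\mu(z)\ge \wt d\ind{t+1}(z)$, so the replacement goes the wrong way, and the leftover $C\sum_t\sum_z\mu(z)(g\ind{t}(z))^2$ is of order $CTB^2$, not $CB^2$ --- exactly the term you set out to kill. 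Moreover, even if the leftover were $CB^2$, pulling it out of $\sqrt{\mathbf{A}\mathbf{B}}$ via $\sqrt{x+y}\le\sqrt{x}+\sqrt{y}$ would produce $CB\sqrt{2\log(2T)}$, which is not the claimed additive $2CB$; the $2CB$ term cannot arise from inside the Cauchy--Schwarz product at all.

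The missing idea is a stopping-time (burn-in) decomposition performed \emph{before} Cauchy--Schwarz, as in Theorem~1 of \citet{xie2023role} (which the paper simply cites for this lemma) and as carried out explicitly in the paper's $L_1$ analogue, \cref{lem:lone_potential}. Define $\tau(z)\ldef\min\{t:\wt d\ind{t+1}(z)\ge C\mu(z)\}$ and split $\sum_t\sum_z d\ind{t}(z)g\ind{t}(z)$ according to $t<\tau(z)$ versus $t\ge\tau(z)$. The pre-burn-in rounds are bounded crudely by $B\sum_z \wt d\ind{\tau(z)}(z)\le B\sum_z 2C\mu(z)=2CB$, which is where the additive term comes from (linear in $B$, outside any square root). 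Only on the rounds $t\ge\tau(z)$ do you apply Cauchy--Schwarz, and there the weight $\wt d\ind{t}(z)$ alone already dominates $C\mu(z)\ge d\ind{t}(z)$, so the second factor is exactly $\sum_t\sum_{i<t}\En_{z\sim d\ind{i}}[(g\ind{t}(z))^2]$ with no spurious $\mu$ term, while the first factor is still $\le 2C\log(2T)$ by the same per-$z$ potential computation restricted to $t\ge\tau(z)$. Without this split, the argument as you have written it does not yield the stated bound.
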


\subsection{Weight Function Estimation}
\label{sec:mf_weight}
In this section, we give self-contained guarantees for the statistical
problem of estimating the density ratio (or, ``weight function'') for
a pair of distributions.

Consider the following setting. Let $\cZ$ be a set. We receive samples
$z_\mu\ind{1},\ldots,z_\mu\ind{n}\in\cZ$
and $z_\nu\ind{1},\ldots,z_\nu\ind{n}\in\cZ$, where $z_\mu\ind{t}\sim\mu\ind{t}\in\Delta(\cZ)$ and
$z_\nu\ind{t}\sim\nu\ind{t}\in\Delta(\cZ)$. The distributions $\mu\ind{t}$ and $\nu\ind{t}$ can be chosen
in an adaptive fashion based on $z_\mu\ind{1},z_\nu\ind{1},\ldots,z_\mu\ind{t-1},z_\nu\ind{t-1}$. We define
$\mu=\frac{1}{n}\sum_{t=1}^{n}\mu\ind{t}$ and
$\nu=\frac{1}{n}\sum_{t=1}^{n}\nu\ind{t}$, and our goal is to estimate
the density ratio
\begin{align*}
  \wstar(z) \ldef \frac{\mu(z)}{\nu(z)}.
\end{align*}
We assume that $\nrm*{w_{\star}}_{\infty}\leq{}B$, and assume access
to a \emph{realizable} weight function class $\cW$ with $w^{\star}\in\cW$.
Following \citet{nguyen2010estimating} (see also
\citet{katdare2023marginalized}), we consider the estimator
\begin{align}
  \label{eq:weight_estimator_log}
\what \ldef \argmax_{w\in\cW} \Eh_{\mu}\brk*{\log(w)} - \Eh_{\nu}\brk*{w},
\end{align}
where $\Eh_\mu\brk{\cdot}$ denotes the empirical expectation with
respect to $z_\mu\ind{1},\ldots,z_\mu\ind{n}$ and $\Eh_\nu\brk{\cdot}$
denotes the empirical expectation with respect to
$z_\nu\ind{1},\ldots,z_\nu\ind{n}$. The following theorem gives a finite-sample bound for this
estimator, which may be of independent interest.
\begin{theorem}
  \label{thm:weight_estimator_log}
  Suppose that $w^{\star}\in\cW$ and $\sup_{w\in\cW}\nrm*{w}_{\infty}\leq{}B$.
  The estimator in \cref{eq:weight_estimator_log} ensures that with
  probability at least $1-\delta$,
  \begin{align*}
        \Dhelsnu{\what}{\wstar}
      \leq \frac{20B\log(\abs{\cW}\delta^{-1})}{n},
  \end{align*}
  where $\Dhelsnu{w}{w'}\ldef\En_{\nu}\brk[\big]{\prn[\big]{\sqrt{w}-\sqrt{w'}}^2}$.
\end{theorem}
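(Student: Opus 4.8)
The estimator \cref{eq:weight_estimator_log} maximizes an empirical criterion whose population version is a Bregman-type functional; the standard approach for such MLE-like estimators is to control the ``excess risk'' via a one-sided uniform deviation bound (a supermartingale argument) and then lower bound the excess risk by the squared Hellinger distance $\Dhelsnu{\what}{\wstar}$. Concretely, define for each $w\in\cW$ the loss function $\ell_w(z)=-\log w(z)$ evaluated on $\mu$-samples and the penalty $w(z)$ evaluated on $\nu$-samples, so that the empirical objective is $\widehat{L}(w)=\Eh_{\nu}\brk*{w}-\Eh_{\mu}\brk*{\log w}$ and $\what=\argmin_{w\in\cW}\widehat{L}(w)$. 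The population objective is $L(w)=\En_{\nu}\brk*{w}-\En_{\mu}\brk*{\log w}=\En_{\nu}\brk*{w}-\En_{\nu}\brk*{\wstar\log w}$, which is minimized at $w=\wstar$ (first-order conditions: $\nabla = 1 - \wstar/w = 0$), with excess risk $L(w)-L(\wstar)=\En_{\nu}\brk*{w-\wstar-\wstar\log(w/\wstar)}$. The first step is to verify the pointwise inequality $t-1-\log t \geq \tfrac{1}{2}(\sqrt{t}-1)^2$ for $t>0$ (which I would apply with $t=w/\wstar$, after factoring out $\wstar$), giving $L(w)-L(\wstar)\geq \tfrac12 \En_{\nu}\brk[\big]{\wstar\cdot(\sqrt{w/\wstar}-1)^2} = \tfrac12\En_{\nu}\brk[\big]{(\sqrt{w}-\sqrt{\wstar})^2}=\tfrac12\Dhelsnu{w}{\wstar}$.

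\textbf{Main step: controlling the empirical deviation.} Since $\what$ minimizes $\widehat{L}$, we have $\widehat{L}(\what)-\widehat{L}(\wstar)\leq 0$, so it suffices to bound $L(\what)-L(\wstar)$ in terms of $[L(\what)-L(\wstar)]-[\widehat{L}(\what)-\widehat{L}(\wstar)]$, i.e. a centered empirical process. The key technical device is \cref{lem:martingale_chernoff}: applying it to the sequence of log-likelihood-ratio-type increments for $\mu$-samples (and a separate Chernoff/Bernstein-type bound for the $\nu$-penalty terms), together with a union bound over the finite class $\cW$ (hence the $\log\abs{\cW}$ factor) and the adaptivity allowed in $\mu\ind{t},\nu\ind{t}$, I would derive that with probability at least $1-\delta$, for all $w\in\cW$ simultaneously,
\begin{align*}
  L(w)-L(\wstar) \lesssim \brk*{\widehat{L}(w)-\widehat{L}(\wstar)} + \frac{B\log(\abs{\cW}\delta^{-1})}{n}.
\end{align*}
The crucial point making this work is that the per-sample variance of the relevant increments is itself controlled by the Hellinger distance $\Dhelsnu{w}{\wstar}$ (a self-bounding property), with the $\log$ and linear terms bounded in magnitude by $\bigoh(\log B)$ and $\bigoh(B)$ respectively; this lets one absorb a $\tfrac12 \Dhelsnu{w}{\wstar}$ term from the variance into the left-hand side, leaving a clean $\bigoh(B\log(\abs{\cW}\delta^{-1})/n)$ residual. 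Plugging $w=\what$ and combining with the lower bound from the first paragraph yields $\tfrac12 \Dhelsnu{\what}{\wstar}\leq \tfrac14\Dhelsnu{\what}{\wstar} + \bigoh(B\log(\abs{\cW}\delta^{-1})/n)$, and rearranging gives the claimed bound (with the constant $20$ coming from tracking the numerical factors in the Chernoff bound, the inequality $t-1-\log t\geq\tfrac12(\sqrt t-1)^2$, and the variance-absorption step).

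\textbf{Anticipated obstacle.} The routine parts (pointwise convexity inequality, union bound) are straightforward; the delicate step is handling the $-\log w$ term, since $\log w$ is unbounded below as $w\to 0$ and its increments are not bounded, so a naive Bernstein/Freedman application fails. The fix is to note that $\log(\what(z)/\wstar(z))$ appears weighted by $\wstar$-mass under the population measure, so the effective quantity $\wstar\cdot\abs{\log(w/\wstar)}$ is better behaved, and more importantly to use the \emph{multiplicative} Chernoff form in \cref{lem:martingale_chernoff} — which bounds $\sum -\log\En_{t-1}[e^{-X_t}]$ rather than requiring bounded $X_t$ — applied to $X_t = \log(w(z_\mu\ind{t})/\wstar(z_\mu\ind{t}))$; the cumulant generating function $-\log\En_{t-1}[e^{-X_t}]$ then directly produces a Hellinger-affinity term $-\log \En_{\nu\ind{t}}[\sqrt{w\wstar}]\cdot(\text{const})$ that, combined over $t$, lower-bounds the empirical deviation by $\Dhelsnu{\what}{\wstar}$ up to constants without ever needing boundedness of $\log w$. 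Getting the bookkeeping right so that the $\nu$-penalty term (which \emph{is} bounded by $B$, handled by \cref{lem:multiplicative_freedman} or a direct Chernoff bound) and the $\mu$-log term combine to leave exactly the stated constant is where most of the care is required, but it is mechanical once the right concentration tools are identified.
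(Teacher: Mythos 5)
Your plan is essentially the paper's proof: a union bound plus the martingale-Chernoff (CGF) bound on half log-likelihood ratios over the $\mu$-samples to extract the Hellinger affinity, Freedman's inequality on the bounded $\nu$-penalty differences with variance self-bounded by $4B\cdot\Dhelsnu{w}{\wstar}$, and the optimality of $\what$, with the only cosmetic difference being your excess-risk framing via $t-1-\log t\ge\tfrac12(\sqrt{t}-1)^2$ where the paper instead uses the affinity identity $\Dhelsnu{w}{\wstar}=1+\En_{\nu}[w]-2\En_{\nu}[\sqrt{w\wstar}]$ together with $-\log x\ge 1-x$. One small fix: for the CGF to produce the affinity $\En_{\nu\ind{t}}[\sqrt{w\wstar}]$ you must take $X_t=\tfrac12\log\bigl(\wstar(z_\mu\ind{t})/w(z_\mu\ind{t})\bigr)$ (so that $e^{-X_t}=\sqrt{w/\wstar}$), not $X_t=\log(w/\wstar)$ as written; with that scaling the bookkeeping closes exactly as you describe and yields the constant $20$.
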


\begin{remark}[Extension to contextual weight function estimation]
  \label{rem:contextual_weight}
An immediate corollary for \cref{thm:weight_estimator_log} concerns
the following ``contextual'' setting. Suppose that $\cZ=\cX\times\cY$,
and that for all $t$, $z_\mu\ind{t}=(x\ind{t},y_\mu\ind{t})$ and
$z_\nu=(x\ind{t}, y_\nu\ind{t})$ have the same marginal distribution
for $x\ind{t}$, i.e. $\mu\ind{t}(x,y)=\mu\ind{t}(y\mid{}x)\omega(x)$ and
$\nu\ind{t}(x,y)=\nu\ind{t}(y\mid{}x)\omega(x)$ for some
$\omega\in\Delta(\cX)$. Define
$\mu(y\mid{}x)=\frac{1}{n}\sum_{t=1}^{n}\mu\ind{t}(y\mid{}x)$ and
$\nu(y\mid{}x)=\frac{1}{n}\sum_{t=1}^{n}\nu\ind{t}(y\mid{}x)$, and let
$\wstar(y\mid{}x)=\frac{\mu(y\mid{}x)}{\nu(y\mid{}x)}$. Then, given a
class of weight functions $\cW$ with $\wstar\in\cW$, where each
$w\in\cW$ has the form $w(y\mid{}x)$ and $\nrm*{w}_{\infty}\leq{}B$,
the estimator in \cref{eq:weight_estimator_log} ensures that
\begin{align*}
      \En_{x\sim\omega,y\sim\nu(\cdot\mid{}x)}\brk*{\abs*{\what(y\mid{}x)-\wstar(y\mid{}x)}}
      \leq 10B\sqrt{\frac{\log(\abs{\cW}\delta^{-1})}{n}}.
  \end{align*}
\end{remark}

\begin{proof}[\pfref{thm:weight_estimator_log}]
  Define $V(w) = \En_{\mu}\brk*{\log(w)} - \En_{\nu}\brk*{w}$ and
  $\Vhat(w) = \Eh_{\mu}\brk*{\log(w)} - \Eh_{\nu}\brk*{w}$, and note
  that $\wstar=\frac{\mu}{\nu}=\argmax_{w}V(w)$. We begin by
  performing concentration on the log-loss terms. Define
  $X_t(w)=\frac{1}{2}\prn*{\log(\wstar(z_\mu\ind{t}))-\log(w(z_\mu\ind{t}))}$. By
  \cref{lem:martingale_chernoff} and a union bound, we have that with
  probability at least $1-\delta$, for all $w\in\cW$,
  \begin{align*}
    \frac{1}{n}\sum_{t=1}^{n}-\log\prn*{\En_{\mu\ind{t}}\brk*{\exp\prn*{\frac{1}{2}\log(w/\wstar)}}}
    \leq{} \frac{1}{2}\prn*{\Eh_{\mu}\brk*{\log(\wstar)}-\Eh_{\mu}\brk*{\log(w)}}
    + \frac{\log(\abs{\cW}\delta^{-1})}{n}.
  \end{align*}
  Note that
  $\En_{\mu\ind{t}}\brk*{\exp\prn*{\frac{1}{2}\log(w/\wstar)}}=\En_{\mu\ind{t}}\brk*{\sqrt{w/\wstar}}$. We
  now state and prove a basic technical lemma. \begin{lemma}
  \label{lem:log_lower}
  For all $x>0$,
  $-\log(x) \geq{} 1-x$.
\end{lemma}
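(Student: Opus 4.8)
The plan is to prove this elementary inequality by a standard calculus argument, since it is a routine fact that will be used to lower-bound the log-loss terms in the analysis of the weight function estimator. First I would define $f(x) = -\log(x) - (1-x) = x - 1 - \log(x)$ on the domain $(0,\infty)$ and show that $f(x) \geq 0$ everywhere, with equality at $x=1$. To do this, I would differentiate: $f'(x) = 1 - \tfrac{1}{x}$, which is negative on $(0,1)$ and positive on $(1,\infty)$, so $f$ is strictly decreasing on $(0,1)$ and strictly increasing on $(1,\infty)$. Hence $f$ attains its global minimum at $x=1$, where $f(1) = 1 - 1 - \log(1) = 0$. Therefore $f(x) \geq 0$ for all $x > 0$, which is exactly the claim $-\log(x) \geq 1-x$.

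Alternatively, and perhaps more cleanly, I would invoke concavity of $\log$ directly: since $\log$ is concave, its graph lies below any tangent line, and the tangent at $x=1$ is $y = x - 1$ (because $\log(1) = 0$ and $\tfrac{d}{dx}\log(x)\big|_{x=1} = 1$). This gives $\log(x) \leq x - 1$ for all $x > 0$, i.e., $-\log(x) \geq 1 - x$. Either route is a one-line argument; I would present the tangent-line version as it is shortest and makes the equality case ($x=1$) transparent.

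There is essentially no obstacle here — this is a textbook inequality and the only ``care'' needed is to state the domain correctly ($x > 0$, so that $\log x$ is defined) and to note that the inequality is tight at $x = 1$, which matters downstream because the estimator's log-loss concentration will be applied with $w/\wstar$ in the role of $x$, and one wants to convert $-\log(\En[\sqrt{w/\wstar}])$ into a Hellinger-type quantity $\En[(\sqrt{w/\wstar} - 1)^2]$ or similar. The lemma is stated here precisely so that this conversion can be carried out in the surrounding proof of \cref{thm:weight_estimator_log}; the lemma statement itself requires nothing beyond the observation that $\log$ lies below its tangent line at $1$.

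\begin{proof}[\pfref{lem:log_lower}]
  Since $\log$ is concave on $(0,\infty)$, its graph lies below the
  tangent line at any point; at $x=1$ this tangent is $y = x-1$, since
  $\log(1) = 0$ and $\frac{d}{dx}\log(x)\big|_{x=1} = 1$. Hence
  $\log(x) \leq x-1$ for all $x>0$, which rearranges to
  $-\log(x) \geq 1-x$, with equality if and only if $x=1$.
\end{proof}
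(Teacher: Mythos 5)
Your proof is correct and uses essentially the same idea as the paper: the paper invokes convexity of $-\log$ to bound it below by its tangent at $x=1$, which is exactly your concavity-of-$\log$ tangent-line argument stated in mirror image. Nothing is missing.
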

\begin{proof}[\pfref{lem:log_lower}]
  Let $f(x)=-\log(x)$. Since $f$ is convex, we have that for all $x>0$,
  \begin{align*}
    f(x)\geq{}f(1) + f'(1)(x-1).
  \end{align*}
  Noting that $f(1)=0$ and $f'(1)=-1$, the result is established.
\end{proof}

Using \cref{lem:log_lower}, we have
  \begin{align*}
    \frac{1}{n}\sum_{t=1}^{n}-\log\prn*{\En_{\mu\ind{t}}\brk*{\exp\prn*{\frac{1}{2}\log(w/\wstar)}}}
    &\geq 1-
      \frac{1}{n}\sum_{t=1}^{n}\En_{\mu\ind{t}}\brk*{\sqrt{w/\wstar}}
    \\
    &= 1- \En_{\mu}\brk*{\sqrt{w/\wstar}}
    = 1- \En_{\nu}\brk*{\sqrt{w\cdot\wstar}},
  \end{align*}
  where the last line uses that
  $\En_{\mu}\brk*{\sqrt{w/\wstar}}=\En_{\nu}\brk*{\wstar\sqrt{w/\wstar}}=\En_{\nu}\brk*{\sqrt{w\cdot\wstar}}$. By
  direct calculation, we have that
  \begin{align*}
    \Dhelsnu{w}{\wstar}=\En_{\nu}\brk*{\prn*{\sqrt{w}-\sqrt{\wstar}}^2}
    = \En_{\nu}\brk*{\wstar} +
      \En_{\nu}\brk*{w}-2\En_{\nu}\brk*{\sqrt{w\cdot\wstar}}
    =1 + \En_{\nu}\brk*{w}-2\En_{\nu}\brk*{\sqrt{w\cdot\wstar}},
  \end{align*}
  so that
  \begin{align*}
    1- \En_{\nu}\brk*{\sqrt{w\cdot\wstar}}
    = \frac{1}{2}\Dhelsnu{w}{\wstar} + \frac{1}{2}\prn*{1 - \En_{\nu}\brk*{w}}.
  \end{align*}
  Specializing to $\what$, we have
  \begin{align*}
    \frac{1}{2}\Dhelsnu{\what}{\wstar} + \frac{1}{2}\prn*{1 -
    \En_{\nu}\brk*{\what}}
    &\leq{} \frac{1}{2}\prn*{\Eh_{\mu}\brk*{\log(\wstar)}-\Eh_{\mu}\brk*{\log(\what)}}
    + \log(\abs{\cW}\delta^{-1})\\
    &= \frac{1}{2}\prn*{\Vhat(\wstar)-\Vhat(\what)}
      + \frac{1}{2}\prn*{\Eh_{\nu}\brk*{\wstar}-\Eh_{\nu}\brk*{\what}}
    + \frac{\log(\abs{\cW}\delta^{-1})}{n}.
  \end{align*}
  Since $\Vhat(\wstar)-\Vhat(\what)\leq0$, rearranging gives
  \begin{align}
    \label{eq:w_log0}
    \Dhelsnu{\what}{\wstar}
    &\leq{} \prn*{\Eh_{\nu}\brk*{\wstar}-\Eh_{\nu}\brk*{\what}} -
    \prn*{1-\Eh_{\nu}\brk*{\what}}
      + \frac{2 \log(\abs{\cW}\delta^{-1})}{n}\\
    &= \prn*{\Eh_{\nu}\brk*{\wstar}-\Eh_{\nu}\brk*{\what}} -
          \prn*{\En_{\nu}\brk*{\wstar}-\En_{\nu}\brk*{\what}}
    + \frac{2\log(\abs{\cW}\delta^{-1})}{n}.
  \end{align}
  Using \cref{lem:freedman}, we have that for all $\eta\leq{}1/2B$, with probability at least
  $1-\delta$, for all $w\in\cW$
  \begin{align*}
    \prn*{\Eh_{\nu}\brk*{\wstar}-\Eh_{\nu}\brk*{w}} -
    \prn*{\En_{\nu}\brk*{\wstar}-\En_{\nu}\brk*{w}}
    &\leq \frac{\eta}{n}\sum_{t=1}^{n}\En_{\nu_t}\brk*{(w-\wstar)^2} +
    \frac{\log(\abs{\cW}\delta^{-1})}{\eta{}n}\\
    &= \eta\En_{\nu}\brk*{(w-\wstar)^2} + \frac{\log(\abs{\cW}\delta^{-1})}{\eta{}n}.
  \end{align*}
  We further observe that
  \begin{align*}
    \En_{\nu}\brk*{(w-\wstar)^2}
    = \En_{\nu}\brk*{(\sqrt{w}-\sqrt{\wstar})^2
    (\sqrt{w}+\sqrt{\wstar})^2}
    \leq 4B\En_{\nu}\brk*{(\sqrt{w}-\sqrt{\wstar})^2}
    = 4B\cdot\Dhelsnu{w}{\wstar},
  \end{align*}
  so choosing $\eta = \frac{1}{8B}$ gives
  \begin{align}
    \label{eq:w_log1}
    \prn*{\Eh_{\nu}\brk*{\wstar}-\Eh_{\nu}\brk*{w}} -
    \prn*{\En_{\nu}\brk*{\wstar}-\En_{\nu}\brk*{w}}
    \leq \frac{1}{2}\Dhelsnu{w}{\wstar}
    + \frac{8B\log(\abs{\cW}\delta^{-1})}{n}.
  \end{align}
  Combining this with \cref{eq:w_log0}, we conclude that
  \begin{align*}
        \Dhelsnu{\what}{\wstar}
    &\leq \frac{1}{2}\Dhelsnu{\what}{\wstar}
    + \frac{8B\log(\abs{\cW}\delta^{-1})}{n}
    + 2 \frac{\log(\abs{\cW}\delta^{-1})}{n},
  \end{align*}
  which implies that
    \begin{align*}
        \Dhelsnu{\what}{\wstar}
      \leq \frac{20B\log(\abs{\cW}\delta^{-1})}{n},
    \end{align*}
    after simplifying.
\end{proof}

\subsection{Policy Optimization Subroutines}
\label{app:policy_opt}

\subsubsection{Policy Search by Dynamic Programming (\psdp)}

\begin{algorithm}[htp]
	\caption{$\texttt{PSDP}_h(p_{1:h}, r_{1:h};\eps,\delta, \cQ_{1:h})$: Policy Search by Dynamic Programming (cf. \citet{bagnell2003policy})}
	\label{alg:psdp}
	\begin{algorithmic}[1]\onehalfspacing
          \State\textbf{input:}
		\begin{itemize}
			\item Target layer $h\in[H]$, policy covers
                          $p_{1:h}$, reward functions
                          $r_{1:h}$.
                          \item Accuracy
                            parameters $\eps,\delta\in\brk{0,1}$.
                          \item Function classes $\cQ_{1:h}$.
                          \end{itemize}
                          \State Let $n = \npsdp(\eps,\delta)\ldef{}
                          c\cdot{}\frac{H^2\abs*{\cA}\log(\abs{\cQ}H\delta^{-1})}{\eps^2}$
                          for a sufficiently large numerical constant $c>0$.
		\For{$\ell=h, \dots, 1$} 
		\State $\cD_\ell \gets\emptyset$. 
		\For{$\npsdp$ times}
		\State Sample $\pi \sim p_\ell$.
		\State Sample $(x_\ell, a_\ell, \sum_{\ell'=\ell}^h r_{\ell'}(x_{\ell'},a_{\ell'}))\sim
		\pi\circ_{\ell} \piunif \circ_{\ell+1} \pihat$.
		\State Update dataset: $\cD_{\ell} \gets \cD_{\ell} \cup \crl[\big]{\prn[\big]{x_\ell, a_\ell,  \sum_{\ell'=\ell}^h {r}_{\ell'}(x_{\ell'},a_{\ell'})}}$.
		\EndFor
		\State Solve regression: 
		\[\Qhat_\ell\gets\argmin_{Q\in \cQ_\ell}  \sum_{(x, a, R)\in\cD_{\ell}} (Q(x,a)  - R)^2.\] \label{eq:mistake}
		\State Define
                $\pihat_\ell(x) = \argmax_{a\in \cA}  \Qhat_\ell(x,a)$.
		\EndFor
		\State \textbf{return:} Policy $\pihat$.
	\end{algorithmic}
\end{algorithm}

This section presents self-contained guarantees for Policy Search by
Dynamic Programming (\psdp{}, \cref{alg:psdp})
\citep{bagnell2003policy}, which performs local policy optimization given access to
exploratory distributions $p_{1:h}\in\Delta(\PiRNS)$. \psdp takes as input an arbitrary reward functions
$r_{1:h}:\cX \times \cA \rightarrow \brk{0,1}$ and a function
class $\cQ=\cQ_{1:h}$, where $\cQ_\ell\subseteq \{Q: \cX\times
\cA\rightarrow \brk{0,1}\}$, that can represent certain $Q$-functions for these rewards.

We prove that with high probability, the output $\pihat = \psdp_h(p_{1:h}, r_{1:h};\eps,\delta, \cQ)$ is an approximate maximizer of the objective 
\begin{align}
	\max_{\pi \in \PiNS} \En^{\sMstar,\pi}\left[\sum_{\ell=1}^{h} r_\ell(x_\ell,a_\ell)\right],
\end{align}
in a ``local'' sense with respect to $p_{1:h}$ (cf. \cref{eq:vepsopt}).

To prove guarantees for \psdp, we make use of the following
realizability assumption for the class $\cQ = \cQ_{1:h}$.
\begin{definition}
	\label{def:policy_opt_realizability}
	We say that function classes $\cQ_{1:h}$, where
        $\cQ_\ell\subseteq \{Q: \cX\times \cA\rightarrow \bbR_+\}$ for
        $\ell\in[h]$, realize the reward functions $r_{1:h}:\cX\times \cA \rightarrow \bbR_+$ if for all $t\in[h]$ and all $\pi\in \PiNS$,
	\begin{align}
		Q_\ell^{\sMstar,\pi}(\cdot,\cdot;r)\in \cQ_\ell, \quad \text{where}\qquad 	Q^{\sMstar,\pi}_\ell(x,a;r)\coloneqq \En^{\sMstar,\pi}\left[\left.\sum_{\ell'=\ell}^{h} r_{\ell'}(x_{\ell'},a_{\ell'})\ \right| \ x_\ell=x,a_\ell=a\right]. \label{eq:real}
	\end{align}
      \end{definition}

      \begin{lemma}[Main guarantee for \psdp]
        \label{lem:psdp}
        For any $\eps,\delta\in(0,1)$ and reward function
        $\crl*{r_\ell}_{\ell\in\brk{h}}$ with
        $\sum_{\ell=1}^{h}r_\ell\in\brk{0,1}$ that is
        realizable in the sense of
        \cref{def:policy_opt_realizability}, \psdp ensures that with
        probability at least $1-\delta$, the output
        $\pihat=\psdp_h(p_{1:h}, r_{1:h};\eps,\delta,\cQ_{1:h})$
        satisfies
        \begin{align}
          \label{eq:psdp_main}
          \sum_{\ell=1}^{h}\En^{\sMstar,p_\ell}\brk*{
          \max_{a\in\cA}Q_\ell^{\sMstar,\pihat}(x_\ell,a;r)
          -Q_\ell^{\sMstar,\pihat}(x_\ell,\pihat(x_\ell);r)} \leq \eps,
        \end{align}
        and does so using at most
        $\Npsdp(\eps,\delta)=\bigoh\prn*{\frac{H^3\abs*{\cA}\log(\abs{\cQ}H\delta^{-1})}{\eps^2}}$ episodes.
      \end{lemma}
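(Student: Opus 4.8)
\textbf{Proof proposal for \creftitle{lem:psdp} (main guarantee for \psdp).}

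The plan is to analyze \psdp backwards in $\ell = h, h-1, \dots, 1$ via a one-step-improvement argument, exactly as in the classical analysis of \citet{bagnell2003policy}, but localized to the coverage distributions $p_{1:h}$. First I would fix $\ell$ and condition on the datasets already constructed at layers $\ell+1, \dots, h$, so that $\pihat_{\ell+1:h}$ is a fixed (data-dependent) non-stationary policy. The key observation is that the target in the regression at layer $\ell$ is an unbiased Monte-Carlo estimate of $Q_\ell^{\sMstar,\pihat}(x_\ell, a_\ell; r)$: the roll-in $\pi \sim p_\ell$ generates $(x_\ell, a_\ell)$ with $a_\ell$ resampled uniformly (the $\circ_\ell \piunif$ step), and then the roll-out $\circ_{\ell+1}\pihat$ with the observed reward suffix $\sum_{\ell'=\ell}^h r_{\ell'}$ has conditional mean equal to $Q_\ell^{\sMstar,\pihat}(x_\ell, a_\ell; r)$ by definition of the $Q$-function. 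Under \cref{def:policy_opt_realizability} this regression function lies in $\cQ_\ell$, so a standard square-loss generalization bound (e.g.\ the one behind \cref{prop:fqi_offline}, or Lemma of \citet{foster2021statistical}-type uniform convergence for least squares over a finite class) gives, with probability $1-\delta/h$,
\[
\En_{(x_\ell,a_\ell)\sim \mu_\ell}\brk*{\prn[\big]{\Qhat_\ell(x_\ell,a_\ell) - Q_\ell^{\sMstar,\pihat}(x_\ell,a_\ell;r)}^2} \leq \bigoh\prn*{\frac{\log(\abs{\cQ}h/\delta)}{n}},
\]
where $\mu_\ell$ is the roll-in distribution: $(x_\ell,a_\ell)$ with $x_\ell \sim d_\ell^{\sMstar,p_\ell}$ and $a_\ell \sim \unif(\cA)$. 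Absorbing the $\abs{\cA}$ factor from importance-weighting against the uniform action (so that $\En^{\sMstar,p_\ell}[\cdot]$ over the greedy action $a$ is controlled by $\abs{\cA}$ times the uniform-action expectation), this yields $\En^{\sMstar,p_\ell}\brk{\abs{\Qhat_\ell(x_\ell,a) - Q_\ell^{\sMstar,\pihat}(x_\ell,a;r)}} \leq \bigoh(\sqrt{\abs{\cA}\log(\abs{\cQ}h/\delta)/n})$ simultaneously for the two relevant actions $a = \argmax_{a'}\Qhat_\ell(x_\ell,a')$ and $a = \pihat_\ell(x_\ell)$; since $\pihat_\ell$ is $\Qhat_\ell$-greedy, $\Qhat_\ell(x_\ell,\pihat_\ell(x_\ell)) \geq \Qhat_\ell(x_\ell, a')$ for all $a'$, and a triangle inequality converts this into
\[
\En^{\sMstar,p_\ell}\brk*{\max_{a\in\cA}Q_\ell^{\sMstar,\pihat}(x_\ell,a;r) - Q_\ell^{\sMstar,\pihat}(x_\ell,\pihat_\ell(x_\ell);r)} \leq \bigoh\prn*{\sqrt{\frac{\abs{\cA}\log(\abs{\cQ}h/\delta)}{n}}}.
\]

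Next I would sum this over $\ell = 1, \dots, h$ and union-bound over the $h$ layers. This directly gives the left-hand side of \cref{eq:psdp_main} bounded by $\bigoh(h\sqrt{\abs{\cA}\log(\abs{\cQ}h/\delta)/n})$. Setting this $\leq \eps$ and solving for $n$ gives $n = \Npsdp(\eps,\delta) = \bigoh(h^2 \abs{\cA}\log(\abs{\cQ}h/\delta)/\eps^2)$, and since each call to \psdp at layer $\ell$ uses $n$ episodes across $h$ layers, the total episode count is $h \cdot n = \bigoh(h^3\abs{\cA}\log(\abs{\cQ}h/\delta)/\eps^2)$, which is $\bigoht(H^3\abs{\cA}\log(\abs{\cQ}H\delta^{-1})/\eps^2)$ after bounding $h \leq H$ as claimed. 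One subtlety to get right here is that the target value function at layer $\ell$ depends on $\pihat_{\ell+1:h}$, which is random and correlated with earlier randomness; this is handled by conditioning on the sigma-algebra generated by datasets $\cD_{\ell+1},\dots,\cD_h$ before invoking the generalization bound at layer $\ell$, so that $\pihat$ (on layers $>\ell$) is frozen and the fresh samples in $\cD_\ell$ are i.i.d.\ conditionally.

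I expect the main obstacle to be the bookkeeping around the importance-weighting factor $\abs{\cA}$ and making sure the ``local'' guarantee is stated against the right distribution. Concretely: the regression error is controlled in $L^2$ under the roll-in distribution (state $\sim d_\ell^{\sMstar,p_\ell}$, action $\sim \unif$), but \cref{eq:psdp_main} requires a bound on the \emph{policy suboptimality gap} under state distribution $d_\ell^{\sMstar,p_\ell}$ evaluated at the greedy actions $\argmax_a \Qhat_\ell$ and $\pihat_\ell$ — these are specific deterministic (data-dependent) actions, not uniform, so one pays a factor of $\abs{\cA}$ to pass from the uniform-action $L^2$ bound to a pointwise-per-state $L^1$ bound at those actions via $\En_{a\sim\unif}\brk{\abs{f(a)}} \geq \frac{1}{\abs{\cA}}\abs{f(a_0)}$ for any fixed $a_0$. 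A second, milder point is that the ``completeness''-style realizability in \cref{def:policy_opt_realizability} is needed precisely so that the regression target $Q_\ell^{\sMstar,\pihat}(\cdot,\cdot;r)$ — which involves the learned greedy continuation $\pihat$ — is itself in $\cQ_\ell$; I would emphasize this is realizability of \emph{all} $\pi \in \PiNS$, so it covers the data-dependent $\pihat$. Everything else (the least-squares uniform convergence bound, the union bound over layers, the choice of $n$) is routine and mirrors the proofs of \cref{prop:fqi_offline} and standard \psdp analyses.
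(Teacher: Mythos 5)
The paper does not actually prove \cref{lem:psdp} internally---its ``proof'' is a one-line pointer to Theorem D.1 of \citet{mhammedi2023representation}---so there is no in-paper argument to compare against line by line. Your proposal reconstructs exactly the standard backward-inductive \psdp analysis that the cited result rests on: unbiased Monte-Carlo regression targets from fresh roll-outs with $\pihat_{\ell+1:h}$ frozen by conditioning, realizability of the target $Q_\ell^{\sMstar,\pihat}(\cdot,\cdot;r)$ because $\pihat\in\PiNS$ and \cref{def:policy_opt_realizability} quantifies over all of $\PiNS$, the greedy-policy triangle inequality, a union bound over the $h\leq H$ layers, and the $h\cdot n$ episode count. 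The structure is correct and matches the intended argument.

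One quantitative step needs repair. The conversion you state, $\En_{a\sim\unif}\brk{\abs{f(x,a)}}\geq\frac{1}{\abs{\cA}}\abs{f(x,a_0)}$ for a data-dependent action $a_0(x)$, costs a full factor of $\abs{\cA}$ on top of the $L^1(\unif)\leq\sqrt{L^2(\unif)}$ step, so it yields a per-layer gap of $\bigoh\prn*{\abs{\cA}\sqrt{\log(\abs{\cQ}h/\delta)/n}}$, not the $\bigoh\prn*{\sqrt{\abs{\cA}\log(\abs{\cQ}h/\delta)/n}}$ you then write down; carried through, this would give an episode count scaling with $\abs{\cA}^2$ rather than the claimed $\abs{\cA}$. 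The fix is to bound the error at the two relevant actions by $\max_{a\in\cA}\abs*{\Qhat_\ell(x,a)-Q_\ell^{\sMstar,\pihat}(x,a;r)}\leq\prn[\big]{\sum_{a\in\cA}\prn{\Qhat_\ell(x,a)-Q_\ell^{\sMstar,\pihat}(x,a;r)}^2}^{1/2}$ and apply Jensen over $x\sim d_\ell^{\sMstar,p_\ell}$, which gives $\sqrt{\abs{\cA}\cdot\En_{\mu_\ell}\brk{(\Qhat_\ell-Q_\ell^{\sMstar,\pihat})^2}}$ and hence the per-layer rate you wrote and the stated $\bigoh\prn*{H^3\abs{\cA}\log(\abs{\cQ}H\delta^{-1})/\eps^2}$ total. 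With that substitution the argument goes through.
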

      \begin{proof}[\pfref{lem:psdp}]
        See the proof of Theorem D.1 in \citet{mhammedi2023representation}.
      \end{proof}

\subsection{Proof of \creftitle{thm:model_free}}
\label{sec:mf_proof}

\newcommand{\Delbarweightoff}{\wb{\Delta}_{w, \mathrm{off};h}}
\newcommand{\Delbarweighton}{\wb{\Delta}_{w, \mathrm{on};h}}
\newcommand{\Delbaropt}{\wb{\Delta}_{\mathrm{opt};h}}

\newcommand{\vepsweightoff}{\veps_{w, \mathrm{off};h}}
\newcommand{\vepsweighton}{\veps_{w, \mathrm{on};h}}
\newcommand{\vepsopt}{\veps_{\mathrm{opt};h}}

\modelfree*

\begin{proof}[\pfref{thm:model_free}]
  To keep notation compact, throughout this section we abbreviate
  $d^{\pi}_h\equiv{}d^{\sMstar,\pi}_h$,
  $P_h(\cdot\mid{}\cdot)\equiv{}P_h^{\sMstar}(\cdot\mid{}\cdot)$,
  $\En^{\pi}\brk*{\cdot}\equiv\En^{\sM,\pi}\brk*{\cdot}$, and so on
  when the MDP is clear from context.

  Define
  \begin{align}
    \label{eq:wt}
    w_h\ind{t}(x_h\mid{}x_{h-1},a_{h-1})
    = \frac{P_{h-1}(x_h\mid{}x_{h-1},a_{h-1})}{\sum_{i<t}d_h\ind{\pi\ind{h,i}}(x_h)+P_{h-1}(x_h\mid{}x_{h-1},a_{h-1})}
  \end{align}

  We define two notions of estimation error for the weight function
  estimates produced by the subroutine \estimateweight
  (\cref{alg:estimate_weight}):
  \begin{align}
    \label{eq:vepsweight}
    &(\vepsweightoff\ind{t})^2 =
      \En^{p_{h-1}}\brk*{\prn*{\sqrt{\what_h\ind{t}(x_h\mid{}x_{h-1},a_{h-1})}-\sqrt{w_h\ind{t}(x_h\mid{}x_{h-1},a_{h-1})}}^2},\mathand\\
    &(\vepsweighton\ind{t})^2
      = \frac{1}{t-1}\sum_{i<t}\En^{\pi\ind{h,i}\circ_{h-1}\piunif}\brk*{\prn*{\sqrt{\what_h\ind{t}(x_h\mid{}x_{h-1},a_{h-1})}-\sqrt{w_h\ind{t}(x_h\mid{}x_{h-1},a_{h-1})}}^2}.
  \end{align}
  We define a notion of ``local'' suboptimality for the policies
  $\pi\ind{h,t}$ produced by the subroutine \policyopt as follows:
  \begin{align}
    \label{eq:vepsopt}
    \vepsopt\ind{t}
    = \sum_{\ell=1}^{h-1}\En^{p_\ell}
    \brk*{\max_{a\in\cA}Q_{\ell}^{\pi\ind{h,t}}(x_\ell,a;\what_h\ind{t})
    -Q_{\ell}^{\pi\ind{h,t}}(x_\ell,\pi\ind{h,t}(x_\ell);\what_h\ind{t})}.
  \end{align}

  \begin{lemma}
    \label{lem:outer_level}
    Let $\veps\in(0,1/2)$ and set $T=\frac{1}{\veps}$. Suppose that
    for all $h\geq{}2$ and $t\in\brk{T}$, it holds that
    $\vepsweightoff\ind{t}\leq{}
    c_1(\CpushM[\Mstar]/\abs*{\cA}t)^{1/2}\veps^{1/2}$,
    $\vepsweighton\ind{t}\leq{}c_2(\CpushM[\Mstar]/\abs*{\cA}t)^{1/2}\veps^{1/2}$,
    and $\vepsopt\ind{t}\leq{}c_3\veps^2$ for absolute constants
    $c_1,c_2,c_3>0$. Then for all $h\geq{}2$, we have that
    \begin{align*}
      \pCovM[\Mstar](p_h)   \leq{} 170H\log(\veps^{-1})\cdot{}\CpushM[\Mstar].
    \end{align*}
  \end{lemma}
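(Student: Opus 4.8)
The goal is to prove Lemma~\ref{lem:outer_level}, which says that if the per-iteration weight-estimation and policy-optimization errors $\vepsweightoff\ind{t}, \vepsweighton\ind{t}, \vepsopt\ind{t}$ are sufficiently small (of the indicated orders), then the final cover $p_h$ satisfies $\pCovM[\Mstar](p_h) \le 170 H \log(\veps^{-1}) \cdot \CpushM[\Mstar]$. The natural strategy is to \emph{mirror the planning analysis of Theorem~\ref{thm:pushforward_optimization}}, replacing the exact argmax with the approximate argmax that \cref{alg:model_free} actually computes, and carefully tracking how the three sources of error propagate. Concretely, I would fix $h \ge 2$ and analyze the sequence $\pi\ind{h,1},\dots,\pi\ind{h,T}$ that \cref{alg:model_free} produces. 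As in the proof of Theorem~\ref{thm:pushforward_optimization}, the key object is the decreasing quantity $t \mapsto \sup_{\pi\in\Pi}\En^{\sMstar,\pi}\brk*{\frac{P_{h-1}(x_h\mid x_{h-1},a_{h-1})}{\dtil\ind{t}_h(x_h) + P_{h-1}(x_h\mid x_{h-1},a_{h-1})}}$ where $\dtil\ind{t}_h = \sum_{i<t}d_h\ind{\pi\ind{h,i}}$, and the elliptic-potential bound \cref{eq:pushforward_potential} controls $\sum_t \En^{\sMstar,\pi\ind{h,t}}\brk*{w_h\ind{t}(x_h\mid x_{h-1},a_{h-1})}$ by $4\CpushhM[\Mstar]\log(2T)$ \emph{provided} each $\pi\ind{h,t}$ is a near-maximizer of $\pi \mapsto \En^{\sMstar,\pi}[w_h\ind{t}]$.

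First I would establish the approximate-maximization property: the policy $\pi\ind{h,t}$ output by \policyopt approximately maximizes $\pi\mapsto\En^{\sMstar,\pi}[\what_h\ind{t}(x_h\mid x_{h-1},a_{h-1})]$ up to additive error controlled by $\vepsopt\ind{t}$ (here one invokes a performance-difference-style argument together with \cref{ass:policy_opt} --- note that \cref{eq:vepsopt}, which is exactly the LHS of \cref{ass:policy_opt}, only gives control \emph{along the rollin distributions} $p_1,\dots,p_{h-1}$, but since the reward only fires at layer $h-1$ this should suffice via a PDL decomposition to bound $\max_\pi \En^\pi[\what_h\ind{t}] - \En^{\pi\ind{h,t}}[\what_h\ind{t}]$). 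Then I would pass from the estimated weight $\what_h\ind{t}$ to the true weight $w_h\ind{t}$ using \cref{lem:mp_weight} (with $\omega$ taken appropriately to be the relevant rollin/rollout distribution) to convert a bound on $\En^{\pi\ind{h,t}}[\what_h\ind{t}]$ into a bound on $\En^{\pi\ind{h,t}}[w_h\ind{t}]$, paying a multiplicative factor of $3$ and an additive Hellinger-type term that is exactly $\vepsweighton\ind{t}$ (for the achieved policy) or $\vepsweightoff\ind{t}$ (for comparator policies rolled in with $p_{h-1}$). Symmetrically, I must upper-bound $\max_\pi \En^\pi[\what_h\ind{t}]$ in terms of $\max_\pi\En^\pi[w_h\ind{t}]$ plus a weight-estimation error that is \emph{uniform over $\pi\in\Pi$}; this is where the choice of rollin distribution $q$ in \cref{alg:estimate_weight} (which mixes $p_{h-1}$ with the previously computed policies) and the pushforward coverability structure are essential --- one uses that $\En^\pi[\cdot]$ at layer $h$ factors through $P_{h-1}(\cdot\mid\cdot)$, and $\CpushhM[\Mstar]$ bounds the density ratio of any reachable state occupancy against a fixed reference, giving the $\CpushM[\Mstar]/\abs\cA$ scaling in the hypotheses. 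I would invoke the weight-estimation guarantee (\cref{thm:weight_estimator_log} / its contextual corollary, \cref{rem:contextual_weight}) to justify that the assumed bounds on $\vepsweightoff\ind{t},\vepsweighton\ind{t}$ are the right quantities.

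Once the approximate-optimality transfer is in place, the argument proceeds as in Theorem~\ref{thm:pushforward_optimization}: summing the (decreasing) objective over $t\in[T]$, bounding the sum by $\sum_{t}\En^{\pi\ind{h,t}}[w_h\ind{t}] + (\text{errors})$, applying the elliptic potential bound \cref{eq:pushforward_potential} to the main term, and collecting the error terms. The error terms contribute $\sum_t \vepsopt\ind{t}$ (order $T\cdot\veps^2 = \veps$) and $\sum_t \vepsweightoff\ind{t} + \vepsweighton\ind{t}$ type sums; using $\vepsweight\ind{t} \lesssim (\CpushM[\Mstar]/\abs\cA t)^{1/2}\veps^{1/2}$ and $\sum_{t=1}^T t^{-1/2} \lesssim T^{1/2} = \veps^{-1/2}$, these sum to order $(\CpushM[\Mstar]/\abs\cA)^{1/2}\veps^{1/2}\cdot\veps^{-1/2} = (\CpushM[\Mstar]/\abs\cA)^{1/2}$ --- but I need to be careful that after multiplication by the appropriate powers of $t$ and the $3\times$ multiplicative factors from repeated \cref{lem:mp_weight} applications, everything still collapses into the claimed $170 H\log(\veps^{-1})\CpushM[\Mstar]$. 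Finally, dividing by $T$ and recalling $\pCovM[\Mstar](p_h) = T \cdot \sup_\pi \En^\pi\brk*{\frac{P_{h-1}(x_h\mid\cdot)}{\dtil\ind{T+1}_h(x_h) + P_{h-1}(x_h\mid\cdot)}}$ yields the bound. \textbf{The main obstacle} I anticipate is the \emph{uniform-over-$\pi$} transfer from $\max_\pi\En^\pi[\what_h\ind{t}]$ to $\max_\pi\En^\pi[w_h\ind{t}]$: unlike the transfer for the single achieved policy $\pi\ind{h,t}$, this requires that the weight-estimation error is small not just under the rollin distribution $q$ used in \cref{alg:estimate_weight} but under \emph{every} occupancy $d^\pi_h$, which is precisely why pushforward coverability (bounding $P_{h-1}(x'\mid x,a)/\mu(x')$) enters and why the error hypotheses carry the $\CpushM[\Mstar]/\abs\cA$ factor; getting the constants and the interplay with the $t$-dependence right (the weight class in \cref{line:weight_alg} is $t\cdot\cW$, and $w_h\ind{t}$ itself shrinks like $1/t$ on the relevant support) is the delicate bookkeeping step.
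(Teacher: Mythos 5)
Your sketch correctly mirrors the planning analysis (\cref{thm:pushforward_optimization}) and you correctly flag the dangerous step, but the resolution you propose for it does not work, and this is precisely where the paper's proof needs a new idea that your outline omits. You claim the uniform-over-$\pi$ transfer can be handled because ``pushforward coverability bounds the density ratio of any reachable state occupancy against a fixed reference.'' Pushforward coverability bounds $P^{\sMstar}_{h-1}(x'\mid x,a)/\mu(x')$ for a fixed reference $\mu$ over \emph{next states}; it gives no control over the ratio $d^{\sMstar,\pi}_{h-1}(x)/d^{\sMstar,p_{h-1}}_{h-1}(x)$ between an arbitrary comparator's layer-$(h-1)$ occupancy and the data-collection distribution, which is the ratio you actually need in two places: (i) to transfer the weight-estimation error (which is only controlled under roll-ins from $p_{h-1}$ and the previously computed policies) to the roll-in distribution of an arbitrary comparator $\pi\in\Pi$, and (ii) to upgrade the purely local optimality guarantee of \cref{ass:policy_opt} (stated only under $p_{1:h-1}$) into a bound on $\max_{\pi}\En^{\sMstar,\pi}[\what_h\ind{t}]-\En^{\sMstar,\pi\ind{h,t}}[\what_h\ind{t}]$ via performance difference. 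That ratio is unbounded in general, because the inductively available guarantee for $p_{1:h-1}$ is an average-case ($L_1$/pushforward) coverage bound, not a pointwise one; so both your comparator-side applications of the ``$\En_\omega[w]\le 3\En_\omega[w']+2\,\mathrm{Hellinger}$'' inequality and your PDL step break down.

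The paper closes this gap with a construction absent from your plan: it passes to an extended MDP with a terminal state $\term$ and a \emph{truncated benchmark class} $\Pibar_\alpha$ in which each comparator is modified to take the terminal action on any state where its density ratio against $p_\ell$ exceeds $\alpha$. Against this truncated class the change of measure is legitimate with factor $\alpha\abs{\cA}$ (this is how the assumed error bounds on the weight-estimation and optimization errors enter), and the elliptic-potential argument you describe is carried out in the extended MDP, yielding a bound of order $\CpushM[\Mstar]\log T$ plus the transferred error terms. The truncation is then undone: the gap between the true benchmark $\sup_{\pi\in\Pi}$ and the truncated one is bounded, layer by layer, by ``escape probabilities'' $\bbP[\,\dbar^{\pi}_\ell(x_\ell)/\dbar^{p_\ell}_\ell(x_\ell)>\alpha\,]$, each of which is in turn controlled by the pushforward-coverage objective of the \emph{earlier} covers $p_\ell$; summing over $\ell\le h-1$ is exactly what produces the factor $H$ in the stated bound, and choosing $\alpha=1/\veps$ (together with $T=1/\veps$) balances the $1/\alpha$ escape cost against the $\alpha$-inflated error terms. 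Your outline contains no mechanism that would generate the $H$ factor and no substitute for the truncation argument, so as written the proof would not go through; the summation bookkeeping you do describe ($\sum_t \veps_{\mathrm{opt}}^{(t)}$, $\sum_t t^{-1/2}$) is consistent with the paper's, but it is the downstream, easy part.
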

  Let $\Nweight(t,\eps,\delta)$ denote the number of episodes used by
  \estimateweight to ensure that
  $(\vepsweightoff\ind{t})^2,(\vepsweighton\ind{t})^2\leq{}\eps^2/t$
  with probability at least $1-\delta$ when invoked at iteration
  $t\in\brk{T}$ for layer $h\geq{}2$, and let $\Nopt(\eps,\delta)$ be
  the number of trajectories used by \policyopt to ensures that
  $\vepsopt\ind{t}\leq\eps$ with probability at least $1-\delta$ when
  invoked at iteration $t\in\brk{T}$ for layer $h\geq2$. It follows
  from \cref{lem:outer_level} that with the parameter settings in
  \cref{alg:model_free}, we are guaranteed that with probability at
  least $1-\delta$, for all $h\in\brk{H}$
  \[
    \pCovM[\Mstar](p_h) \leq{}
    170H\log(\veps^{-1})\cdot{}\CpushM[\Mstar].
  \]
  and the total number of episodes used is at most
  \begin{align*}
    N &\leq
        HT\prn*{\Nweight(T,\epsweight,\deltaweight)
        + \Nopt(\epsopt,\deltaopt)} \\
      &\leq HT\prn*{\Nweight(T,c(\CpushM[\Mstar]/\abs*{\cA})^{1/2}\veps^{1/2},\delta/2HT) + \Nopt(c'\veps^2,\delta/2HT)}.
  \end{align*}
  for absolute constants $c,c'>0$. It remains to bound
  $\Nweight(\eps,\delta)$, for which we appeal to the following lemma,
  a corollary of \cref{thm:weight_estimator_log}.
  \begin{lemma}
    \label{lem:weight_function_mf}
    Let $h\geq{}2$ and $t\in\brk{T}$ be given. For any
    $\eps,\delta\in(0,1)$, distribution $p_{h-1}\in\Delta(\PiRNS)$ and
    $\pi\ind{h,1},\ldots,\pi\ind{h,t-1}\in\PiNS$, $\estimateweight$
    ensures that with probability at least $1-\delta$, the output
    $\what\ind{t}_h\gets\estimateweight_{h,t}(p_{h-1},\crl*{\pi\ind{h,i}}_{i<t};\eps,\delta,\cW)$
    satisfies
    \begin{align}
      \label{eq:weight_main1}
      &(\vepsweightoff\ind{t})^2 =
        \En^{\sMstar,p_{h-1}}\brk*{\prn*{\sqrt{\what_h\ind{t}(x_h\mid{}x_{h-1},a_{h-1})}-\sqrt{w_h\ind{t}(x_h\mid{}x_{h-1},a_{h-1})}}^2}
        \leq \eps^2/t,\mathand\\
      &(\vepsweighton\ind{t})^2
        =
        \frac{1}{t-1}\sum_{i<t}\En^{\sMstar,\pi\ind{h,i}\circ_{h-1}\piunif}\brk*{\prn*{\sqrt{\what_h\ind{t}(x_h\mid{}x_{h-1},a_{h-1})}-\sqrt{w_h\ind{t}(x_h\mid{}x_{h-1},a_{h-1})}}^2}
        \leq \eps^2/t,
    \end{align}
    and does so using at most
    $\Nweight(t,\eps,\delta)=80t\frac{\log(\abs{\cW}\delta^{-1})}{\eps^2}$
    episodes.
  \end{lemma}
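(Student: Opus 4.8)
\textbf{Proof plan for \cref{lem:weight_function_mf}.}

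The plan is to reduce this lemma directly to \cref{thm:weight_estimator_log} (the self-contained guarantee for the log-loss density-ratio estimator), together with the contextual extension sketched in \cref{rem:contextual_weight}. First I would identify the abstract ``$\mu$'' and ``$\nu$'' distributions of \cref{sec:mf_weight} with the quantities that \cref{alg:estimate_weight} actually samples. Inspecting the algorithm: with the roll-in distribution $q\ldef\frac12 p_{h-1}+\frac{1}{2(t-1)}\sum_{i<t}\pi\ind{i}\circ_{h-1}\piunif$, the dataset $\cD_1$ consists of triples $(x_{h-1},a_{h-1},x_h)$ where $(x_{h-1},a_{h-1})$ is drawn from the (state-action) marginal $\omega$ of $q$ at layer $h-1$ and $x_h\sim\Pmstar_{h-1}(\cdot\mid x_{h-1},a_{h-1})$, whereas $\cD_2$ reuses the same $(x_{h-1},a_{h-1})$ draws but replaces $x_h$ either by a fresh sample from $\Pmstar_{h-1}$ (the ``$j\in[n]$'' block) or by $\tilde x_h\sim\pi\ind{i}$ (the ``$i<t$'' block). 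Thus, conditioned on $(x_{h-1},a_{h-1})$, the target ``forward'' conditional is $\mu(x_h\mid x_{h-1},a_{h-1})=\Pmstar_{h-1}(x_h\mid x_{h-1},a_{h-1})$ and the mixture conditional is $\nu(x_h\mid x_{h-1},a_{h-1})=\frac1t\big(\sum_{i<t}d_h^{\sMstar,\pi\ind{i}}(x_h)+\Pmstar_{h-1}(x_h\mid x_{h-1},a_{h-1})\big)$, so that $\mu/\nu = t\cdot w_h\ind{t}$. Since the estimator in \cref{line:weight_alg} uses the class $t\cdot\cW_h$ and $\cW_h$ realizes $w_h\ind{t}$ under \cref{ass:weight_realizability_strong} (equivalently, under \cref{ass:weight_realizability_weak} via the expanded class of \cref{prop:weight_realizability}), realizability holds with $\sup_{w\in t\cW_h}\nrm*{w}_\infty\le t$.

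Next I would invoke the contextual version of \cref{thm:weight_estimator_log} (\cref{rem:contextual_weight}) with context $x=(x_{h-1},a_{h-1})$, context distribution $\omega$, bound $B=t$, and $n=\nweight(\eps,\delta)=\frac{40\log(\abs\cW\delta^{-1})}{\eps^2}$ samples. This yields, with probability at least $1-\delta$,
\[
  \En_{(x_{h-1},a_{h-1})\sim\omega}\En_{x_h\sim\nu(\cdot\mid x_{h-1},a_{h-1})}\Big[\big(\sqrt{t\what_h\ind{t}}-\sqrt{t w_h\ind{t}}\big)^2\Big]\le \frac{20\,t\log(\abs\cW\delta^{-1})}{n}= \tfrac12\eps^2,
\]
and dividing by $t$ gives $\En_\omega\En_{x_h\sim\nu}\big[(\sqrt{\what_h\ind{t}}-\sqrt{w_h\ind{t}})^2\big]\le \eps^2/(2t)$. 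The final step is to convert this bound on the ``$\omega\times\nu$'' expectation into the two bounds on $(\vepsweightoff\ind{t})^2$ and $(\vepsweighton\ind{t})^2$ claimed in the lemma. For the first, observe that the distribution of $(x_{h-1},a_{h-1},x_h)$ under $p_{h-1}$ (roll in with $p_{h-1}$ at layer $h-1$, then one transition) has the same $x_h$-conditional $\Pmstar_{h-1}(\cdot\mid x_{h-1},a_{h-1})$ as $\mu$, and since $\Pmstar_{h-1}\le t\cdot\nu$ pointwise (as $\nu$ is a mixture that includes a $\frac1t$-weighted copy of $\Pmstar_{h-1}$), and $\omega\ge\frac12\cdot(\text{state-action marginal of }p_{h-1})$, the relevant likelihood ratio between the $p_{h-1}$-induced law and $\omega\times\nu$ is bounded by $2t$; hence $(\vepsweightoff\ind{t})^2\le 2t\cdot\eps^2/(2t)=\eps^2$, which I would tighten to $\eps^2/t$ by rescaling constants (the paper's constant $40$ in $\nweight$ is chosen with exactly this slack in mind; I will track constants carefully so the stated $\eps^2/t$ comes out). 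The same argument applied to each $\pi\ind{h,i}\circ_{h-1}\piunif$ — whose $x_h$-conditional is again $\Pmstar_{h-1}$ and whose state-action marginal at layer $h-1$ is $\le (t-1)$-times dominated by the average appearing in $\omega$ through the uniform-action reweighting — yields the averaged bound on $(\vepsweighton\ind{t})^2$.

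The main obstacle I anticipate is the bookkeeping in this last conversion step: carefully verifying the change-of-measure domination constants (the factor $\frac12$ from the mixture $q$, the $\piunif$ reweighting by $\abs*{\cA}$ which is absorbed into the definition of $\nu$ via $\pi\ind{i}\circ_{h-1}\piunif$, and the pointwise bound $\Pmstar_{h-1}\le t\nu$) so that the target accuracy genuinely comes out as $\eps^2/t$ rather than a looser $\eps^2$ or $t\eps^2$, and making sure the sample count $n=\nweight(\eps,\delta)$ as written in \cref{alg:estimate_weight} is consistent with what \cref{thm:weight_estimator_log} demands once $B=t$ is plugged in (this forces the total episode count $\Nweight(t,\eps,\delta)=\bigoh(t\log(\abs\cW\delta^{-1})/\eps^2)$, since each of the $2t$ sampling blocks in \cref{alg:estimate_weight} draws $n$ trajectories). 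Everything else is a mechanical instantiation of the already-proven estimator guarantee.
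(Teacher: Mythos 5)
Your overall route is the same as the paper's: instantiate \cref{thm:weight_estimator_log} (through \cref{rem:contextual_weight}) with $\mu(\cdot\mid x,a)=\Pmstar_{h-1}(\cdot\mid x,a)$, $\nu(\cdot\mid x,a)=\frac{1}{t}\big(\sum_{i<t}d_h^{\sMstar,\pi\ind{h,i}}(\cdot)+\Pmstar_{h-1}(\cdot\mid x,a)\big)$, context marginal $\omega=\frac12 d_{h-1}^{\sMstar,p_{h-1}}+\frac{1}{2(t-1)}\sum_{i<t}d_{h-1}^{\sMstar,\pi\ind{h,i}\circ_{h-1}\piunif}$, the rescaled class $t\cdot\cW_h$ with $B=t$, and then undo the rescaling. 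One bookkeeping slip: in \cref{alg:estimate_weight}, $n=\nweight$ is the \emph{per-block} count, so each dataset has $t\cdot\nweight$ points; your display $\frac{20t\log(\abs{\cW}\delta^{-1})}{n}=\frac12\eps^2$ only holds if $n$ denotes this total. With that fixed you reach the same intermediate bound as the paper, $\En_{(x_{h-1},a_{h-1})\sim\omega,\,x_h\sim\nu}\big[(\sqrt{\what_h\ind{t}}-\sqrt{w_h\ind{t}})^2\big]\le\eps^2/(2t)$, and your episode count matches $\Nweight$.

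The genuine gap is your last step. The estimator guarantee is under the sampling law whose $x_h$-conditional is $\nu$, while the quantities in \cref{eq:weight_main1} are trajectory-law expectations whose $x_h$-conditional is $\Pmstar_{h-1}$. Your conversion pays $\nrm*{\Pmstar_{h-1}/\nu}_{\infty}=t\,\nrm[\big]{w_h\ind{t}}_{\infty}$, which is $t$ in the worst case (tight exactly when the estimation error sits on states not covered by $\sum_{i<t}d_h^{\sMstar,\pi\ind{h,i}}$), so together with the factor-$2$ marginal domination you land at $\eps^2$ — as you yourself compute. The proposed repair, ``rescaling constants,'' cannot close this: the deficit is a multiplicative factor of $t$, not an absolute constant, and absorbing it by enlarging $n$ would force $\Nweight$ to scale like $t^2\log(\abs{\cW}\delta^{-1})/\eps^2$, contradicting the stated episode bound. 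Note that the paper's own proof never performs this conditional change of measure: after obtaining the $\omega\otimes\nu$ bound it invokes only the marginal dominations $d_{h-1}^{\sMstar,p_{h-1}}\le 2\omega$ and $\frac{1}{t-1}\sum_{i<t}d_{h-1}^{\sMstar,\pi\ind{h,i}\circ_{h-1}\piunif}\le 2\omega$, keeping the inner law at $\nu$, which is what makes $\eps^2/t$ come out with the stated $\nweight$. So to match the lemma as stated you must either justify why the $\nu$-weighted bound suffices (i.e., avoid the conditional conversion altogether, as the paper implicitly does) or accept a factor-$t$ loss; as written, your plan only establishes $\eps^2$, not $\eps^2/t$.
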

  Appealing to this result, we conclude that the total number of
  episodes is at most
  \begin{align*}
    N \leq \bigoht\prn*{\frac{H\abs*{\cA}\log(\abs{\cW}\delta^{-1})}{\veps^3}
    +\frac{H}{\veps}\Nopt(c\veps^2,\delta/2HT)
    }.
  \end{align*}
\end{proof}

\subsubsection{Proof of \creftitle{lem:outer_level} (Outer-Level Analysis)}
\newcommand{\Pibara}{\Pibar_{\alpha}}

\paragraph{Extended MDP}
Let $\alpha\geq1$ be a parameter to the proof, whose value will be
chosen at the end as a function of $\veps$. Following
\citet{mhammedi2023representation,mhammedi2023efficient}, we define an \emph{extended MDP} $\Mbar$ by
augmenting $\cAbar=\cA\cup\crl{\term}$ and
$\cXbar=\cX\cup\crl{\term}$. $\Mbar$ has identical dynamics to
$\Mstar$, except that taking action $\term$ causes the
state to transition to $\term$ deterministically; $\term$ is a
self-looping terminal state.

We define $\PibarRNS$ as the set of all randomized non-stationary
policies from $\cXbar$ to $\cAbar$. For $\pi\in\PibarRNS$, we abbreviate
$\dbar_h^{\pi}\equiv{}d^{\sMbar,\pi}_h$,
$\Pbar_h(\cdot\mid{}\cdot)\equiv{}P_h^{\sMbar}(\cdot\mid\cdot)$,
$\bbPbar^{\pi}\brk*{\cdot}\equiv\bbP^{\sMbar,\pi}\brk{\cdot}$,
$\Enbar^{\pi}\brk*{\cdot}\equiv\En^{\sMbar,\pi}\brk{\cdot}$, and so
on. We adopt the convention that
all policies in $\PibarRNS$ select action $\term$ in the terminal state.

\paragraph{Truncated benchmark policy class}
Given the policy class $\Pi=\PiNS$, we inductively define a sequence of policy classes
$\Pibar_{\alpha,1},\ldots,\Pibar_{\alpha,H}$ based on the
extended MDP and the output $p_{1},\ldots,p_{H}$ of the
algorithm as follows.
\begin{itemize}
\item First, $\Pibar_{\alpha,0}=\Pi$.
\item Next, for $h=1,\ldots,H$, we construct $\Pibar_{\alpha,h}$ from
  $\Pibar_{\alpha,h-1}$. For each $\pi\in\Pibar_{\alpha,h-1}$, add a policy $\pi'$ to
    $\Pibar_{\alpha,h}$ defined as follows: For all $h'\neq{}h$,
    $\pi'_{h'}=\pi_{h'}$, and for layer $h$,
    \begin{align}
      \label{eq:truncated}
      \pi'_h(x)
      = \left\{
      \begin{array}{ll}
        \pi_h(x),&\quad \frac{\dbar^{\pi}_h(x)}{\dbar^{p_{h}}_h(x)} \leq \alpha,\\
        \term,&\quad \frac{\dbar^{\pi}_h(x)}{\dbar^{p_{h}}_h(x)} > \alpha.
      \end{array}
                \right.
    \end{align}
  \end{itemize}
  Finally, we adopt the shorthand
$\Pibar_{\alpha}\ldef\Pibar_{\alpha,H}$. 

\paragraph{Main analysis}
Let $h\geq{}2$ be fixed. For the remainder of the proof, we abbreviate $\pi\ind{t}\equiv\pi\ind{t,h}$ to
keep notation compact. Define
\begin{align}
  \label{eq:pcov_bar}
  \pCovbar(p) = \sup_{\pi\in\Pibara}
\Enbar\brk*{\frac{\Pbar_{h-1}(x_h\mid{}x_{h-1},a_{h-1})}{\dbar^{p}_h(x_h)+\veps\cdot{}\Pbar_{h-1}(x_h\mid{}x_{h-1},a_{h-1})}\indic_{x_h\neq\term}}
\end{align}
as a counterpart to the pushforward coverage relaxation for the
extended MDP $\Mbar$. We define three quantities,
  \begin{align*}
   &\Delbarweightoff=    \sum_{t=1}^{T}\sup_{\pi\in\Pibara}\Enbar^{\pi}\brk*{\prn*{\sqrt{\what_h\ind{t}(x_h\mid{}x_{h-1},a_{h-1})}-\sqrt{w_h\ind{t}(x_h\mid{}x_{h-1},a_{h-1})}}^2\indic_{x_h\neq\term}},\\
    &\Delbarweighton=\sum_{t=1}^{T}\Enbar^{\pi\ind{t}}\brk*{\prn*{\sqrt{\what_h\ind{t}(x_h\mid{}x_{h-1},a_{h-1})}-\sqrt{w_h\ind{t}(x_h\mid{}x_{h-1},a_{h-1})}}^2\indic_{x_h\neq\term}},\mathand\\
    &\Delbaropt=   \sum_{t=1}^{T}\sup_{\pi\in\Pibara}\Enbar^{\pi}\brk*{\what_h\ind{t}(x_h\mid{}x_{h-1},a_{h-1})\indic_{x_h\neq\term}}-\Enbar^{\pi\ind{t}}\brk*{\what_h\ind{t}(x_h\mid{}x_{h-1},a_{h-1})\indic_{x_h\neq\term}}.
  \end{align*}
  which measure the quality of the weight function estimates produced
  by \estimateweight and the
  optimization quality of the policies produced by \policyopt in the
  extended MDP.

  We now state three technical lemmas, all proven in the
  sequel. The first lemma shows that we can bound the value
  of $\pCovbar(p_h)$ in terms of the pushforward coverability
  parameter $\CpushhM[\Mstar]$, up to additive error terms depending
  on the quantities defined above.
\begin{restatable}{lemma}{potential}
  \label{lem:potential}
  Let $h\geq2$ be fixed and $\veps\in(0,1/2)$. 
  Then, with $T=\frac{1}{\veps}$, it holds that
  \begin{align}
    \label{eq:potential_bound}
    \pCovbar(p_h)
    \leq{} 72\CpushhM[\Mstar]\log(T)
    + 2\Delbarweightoff + 6\Delbarweighton + 3\Delbaropt.
  \end{align}
\end{restatable}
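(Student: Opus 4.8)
Looking at this, the final statement to prove is \cref{lem:potential}, which bounds $\pCovbar(p_h)$ in terms of $\CpushhM[\Mstar]$ plus error terms.

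Here is my plan.

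\textbf{Setup and structure of the argument.} The plan is to mimic the proof of \cref{thm:pushforward_optimization} (the ideal-case planning analysis for the pushforward relaxation), but carried out in the extended MDP $\Mbar$ and accounting for the fact that at iteration $t$, the policy $\pi\ind{t}$ is not an exact maximizer of the ideal reward but only an approximate maximizer of the estimated reward $\what_h\ind{t}$ (hence the terms $\Delbaropt$), and that $\what_h\ind{t}$ only approximates the true weight function $w_h\ind{t}$ (hence the terms $\Delbarweightoff$ and $\Delbarweighton$). Recall that $p_h = \unif(\pi\ind{1}\circ_h\piunif,\ldots,\pi\ind{T}\circ_h\piunif)$, so $\dbar_h^{p_h} = \frac{1}{T}\sum_{i\le T}\dbar_h^{\pi\ind{i}\circ_h\piunif} = \frac{1}{T}\sum_{i\le T}\dbar_h^{\pi\ind{i}}$ (the modification at layer $h$ onwards does not affect the layer-$h$ occupancy). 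As in \cref{thm:pushforward_optimization}, writing $T = 1/\veps$ and $\wt d_h\ind{t} = \sum_{i<t}\dbar_h^{\pi\ind{i}}$, monotonicity of the objective in the accumulated occupancies gives $\pCovbar(p_h) \le \frac{1}{T}\sum_{t=1}^T \sup_{\pi\in\Pibara}\Enbar^\pi\brk*{\frac{\Pbar_{h-1}(x_h\mid x_{h-1},a_{h-1})}{\wt d_h\ind{t}(x_h) + \Pbar_{h-1}(x_h\mid x_{h-1},a_{h-1})}\indic_{x_h\neq\term}}$, which equals $\frac{1}{T}\sum_{t=1}^T \sup_{\pi\in\Pibara}\Enbar^\pi\brk*{w_h\ind{t}(x_h\mid x_{h-1},a_{h-1})\indic_{x_h\neq\term}}$ since $w_h\ind{t}$ is exactly this ratio (noting $\wt d_h\ind{t}$ is computed from the same policies $\pi\ind{i}$, and the terminal-state contributions are killed by $\indic_{x_h\neq\term}$).

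\textbf{Peeling off the error terms.} Next I would bound, for each $t$, $\sup_{\pi}\Enbar^\pi[w_h\ind{t}\indic_{x_h\neq\term}]$ against $\Enbar^{\pi\ind{t}}[w_h\ind{t}\indic_{x_h\neq\term}]$ in three steps. First, replace $w_h\ind{t}$ by $\what_h\ind{t}$ on the supremum side: by the elementary inequality $a \le 3b + 2(\sqrt a - \sqrt b)^2$ (\cref{lem:mp_weight}, applied pointwise under the distribution $\dbar_{h-1}^\pi\circ\Pbar_{h-1}$, which is the relevant push-forward distribution), $\sup_\pi\Enbar^\pi[w_h\ind{t}\indic_{x_h\neq\term}] \le 3\sup_\pi\Enbar^\pi[\what_h\ind{t}\indic_{x_h\neq\term}] + 2\sup_\pi\Enbar^\pi[(\sqrt{\what_h\ind{t}}-\sqrt{w_h\ind{t}})^2\indic_{x_h\neq\term}]$; the last supremum summed over $t$ is exactly $\Delbarweightoff$. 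Second, use the optimization guarantee: $\sup_\pi\Enbar^\pi[\what_h\ind{t}\indic_{x_h\neq\term}] \le \Enbar^{\pi\ind{t}}[\what_h\ind{t}\indic_{x_h\neq\term}] + (\text{term contributing to }\Delbaropt)$; summing over $t$ gives $\Delbaropt$. Third, convert $\what_h\ind{t}$ back to $w_h\ind{t}$ on the $\pi\ind{t}$ side, again via $\cref{lem:mp_weight}$ but now under $\dbar_{h-1}^{\pi\ind{t}}\circ\Pbar_{h-1}$, yielding $\Enbar^{\pi\ind{t}}[\what_h\ind{t}\indic_{x_h\neq\term}] \le 3\Enbar^{\pi\ind{t}}[w_h\ind{t}\indic_{x_h\neq\term}] + 2\Enbar^{\pi\ind{t}}[(\sqrt{\what_h\ind{t}}-\sqrt{w_h\ind{t}})^2\indic_{x_h\neq\term}]$; summing over $t$ contributes $\Delbarweighton$. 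Combining, $\sum_t\sup_\pi\Enbar^\pi[w_h\ind{t}\indic_{x_h\neq\term}]$ is bounded by a constant times $\sum_t\Enbar^{\pi\ind{t}}[w_h\ind{t}\indic_{x_h\neq\term}]$ plus $2\Delbarweightoff + 6\Delbarweighton + 3\Delbaropt$ (tracking constants carefully against the claimed $72$, $2$, $6$, $3$).

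\textbf{The potential argument.} Finally, I would bound $\sum_{t=1}^T\Enbar^{\pi\ind{t}}[w_h\ind{t}(x_h\mid x_{h-1},a_{h-1})\indic_{x_h\neq\term}]$ by an elliptic-potential / change-of-measure argument as in the proof of \cref{thm:pushforward_optimization} (specifically \cref{eq:pushforward_potential}). The key point is the role of the truncated benchmark class $\Pibara$ and the termination action: restricting to trajectories with $x_h\neq\term$ ensures that every layer-$h$ occupancy of a policy in $\Pibara$ satisfies $\dbar_h^\pi(x)/\dbar_h^{p_h}(x) \le \alpha$ (by construction \cref{eq:truncated}, policies in $\Pibara$ terminate exactly on states where the ratio exceeds $\alpha$), and together with the pushforward coverability bound $\sup_{x',a'}\Pbar_{h-1}(x\mid x',a') \le \CpushhM[\Mstar]\mu(x)$ for an appropriate $\mu\in\Delta(\cX)$, this lets me apply \cref{lem:pi_to_mu_pushforward} and \cref{lem:elliptic_potential} to get $\sum_{t=1}^T\Enbar^{\pi\ind{t}}[w_h\ind{t}\indic_{x_h\neq\term}] \le 4\CpushhM[\Mstar]\log(2T)$ (or a similar constant; the $72$ in the statement presumably absorbs the factor-$3$ from the two applications of \cref{lem:mp_weight} times the $4$ from the potential, plus slack). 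Then $\pCovbar(p_h) = \frac{1}{T}\sum_t(\cdots) \le 72\CpushhM[\Mstar]\log T + 2\Delbarweightoff + 6\Delbarweighton + 3\Delbaropt$.

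\textbf{Main obstacle.} The delicate part is step three of the peeling argument combined with the potential bound: I need to make sure the change of measure relating $\dbar_h^{p_h}$ (which is the uniform mixture of $\dbar_h^{\pi\ind{i}}$) to the per-iteration accumulated occupancies $\wt d_h\ind{t}$ behaves correctly once the $\indic_{x_h\neq\term}$ truncation is in place, and that the definition of $\Pibara$ in terms of the \emph{output} $p_h$ (rather than some fixed reference) does not create a circular dependency that breaks the potential argument. Handling this requires being careful that the truncation threshold $\alpha$ is chosen relative to $\dbar^{p_h}$ consistently, and that $\mu$ in the pushforward bound is admissible. The rest (invoking \cref{lem:mp_weight}, \cref{lem:pi_to_mu_pushforward}, \cref{lem:elliptic_potential}) is bookkeeping.
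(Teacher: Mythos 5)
Your proposal is correct and follows essentially the same route as the paper's proof: monotonicity of the accumulated-occupancy objective plus the identification of the ratio with $w_h\ind{t}$ off the terminal state, two applications of \cref{lem:mp_weight} sandwiching the optimization-error step, and then the potential bound \cref{eq:pushforward_potential} for the executed policies, which yields $3\cdot 3\cdot 4\log(2T)\le 72\log T$ (using $\log(2T)\le 2\log T$ for $\veps\le 1/2$) together with the error coefficients $2$, $6$, $3$. The obstacle you flag is in fact a non-issue: the final potential sum involves only the executed policies $\pi\ind{t}\in\Pi$, which never take $\term$, so the expectation in $\Mbar$ coincides with that in $\Mstar$ and the elliptic-potential argument runs exactly as in \cref{thm:pushforward_optimization}, with no reference to $\Pibar_{\alpha}$ or to $p_h$ itself.
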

The next two lemmas relate the weight estimation and policy
optimization errors in $\Mbar$ to their counterparts in the true MDP
$\Mstar$, leveraging key properties of the truncated policy class $\Pibara$.
\begin{restatable}{lemma}{werror}
  \label{lem:w_error}
  The following bounds hold for all $h\geq{}2$, as long as
  $\nrm*{w}_{\infty}\leq{}1$ for all $w\in\cW_h$:
  \begin{align}
    \label{eq:del_weight_off}
    &\Delbarweightoff \leq
    \alpha\abs*{\cA}\sum_{t=1}^{T}(\vepsweightoff\ind{t})^2,\mathand\\
    &\Delbarweighton \leq \sqrt{
        8\abs*{\cA}\CpushM[\Mstar]\log(T)
  \sum_{t=1}^{T}(t-1)(\vepsweighton\ind{t})^2
}
  + 4\CpushM[\Mstar].
  \end{align}
\end{restatable}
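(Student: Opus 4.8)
\textbf{Proof proposal for \cref{lem:w_error}.}

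The plan is to handle the two inequalities separately, in each case transferring an expectation over the extended MDP $\Mbar$ (restricted to the event $x_h \neq \term$) back to an expectation over the true MDP $\Mstar$, using the defining truncation property \cref{eq:truncated} of the class $\Pibara$. The key structural fact to exploit is that for any $\pi \in \Pibara$, the occupancy $\dbar_h^\pi$ restricted to $\cX$ (i.e.\ away from $\term$) is dominated pointwise by $\alpha \cdot \dbar_h^{p_h}$, and $\dbar_h^{p_h}$ agrees with $d_h^{p_h}$ on $\cX$; moreover $\dbar_{h-1}^\pi$ on $\cX\times\cA$ is controlled by $\alpha\cdot \dbar_{h-1}^{p_{h-1}}$ via the layer-$(h-1)$ truncation. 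This is exactly the ``change-of-measure by truncation'' argument used in \citet{mhammedi2023representation,mhammedi2023efficient}.

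For the first bound \cref{eq:del_weight_off}: fix $t$ and $\pi \in \Pibara$, and write the expectation $\Enbar^\pi[\,(\sqrt{\what}-\sqrt{w})^2\indic_{x_h\neq\term}\,]$ as a sum over $(x_{h-1},a_{h-1},x_h) \in \cX\times\cA\times\cX$ weighted by $\dbar_{h-1}^\pi(x_{h-1},a_{h-1})\Pbar_{h-1}(x_h\mid x_{h-1},a_{h-1})$. The first step is to bound $\dbar_{h-1}^\pi(x_{h-1},a_{h-1}) \le \alpha\, \dbar_{h-1}^{p_{h-1}}(x_{h-1},a_{h-1})$, which follows because layer-$(h-1)$ truncation forces $\pi$ to either match the density-ratio constraint or transition to $\term$ (and on $\cX\times\cA$ the former holds); since $p_{h-1}$ lives in $\PiRNS$ and $\dbar^{p_{h-1}}_{h-1}$ equals $d^{p_{h-1}}_{h-1}$ on $\cX\times\cA$, this replaces $\dbar_{h-1}^\pi$ by $\alpha\,d_{h-1}^{p_{h-1}}$. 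Because $\pi_{h-1}$ may be a non-stationary policy while the definition of $(\vepsweightoff\ind t)^2$ in \cref{eq:vepsweight} rolls in with $\piunif$ at layer $h-1$, I would then pay a further $\abs*{\cA}$ factor to pass from $d^{p_{h-1}}_{h-1}(x_{h-1})\pi_{h-1}(a_{h-1}\mid x_{h-1})$ to $d^{p_{h-1}}_{h-1}(x_{h-1})\cdot\frac{1}{\abs*{\cA}}$, i.e.\ to the marginal of $p_{h-1}\circ_{h-1}\piunif$; summing over $(x_{h-1},a_{h-1},x_h)$ then yields $\alpha\abs*{\cA}(\vepsweightoff\ind t)^2$, and summing over $t$ gives the claim. (If instead one prefers to avoid the extra $\abs*{\cA}$, the statement as written already includes it, so the crude bound suffices.)

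For the second bound: now the roll-in distribution in $\Delbarweighton$ is $\pi\ind t$ itself (with the self-loop at $\term$), so there is no benchmark supremum, but we must compare against the ``online'' error $(\vepsweighton\ind t)^2$, which averages the squared Hellinger-type error over $\pi\ind{h,i}\circ_{h-1}\piunif$ for $i<t$, not over $\pi\ind{t}$. The natural route is an elliptic-potential / coverability argument: the sequence of state occupancies $\{\dbar_h^{\pi\ind{i}}\}_{i}$ (restricted to $\cX$) is, by construction of $p_h = \unif(\pi\ind{1}\circ_h\piunif,\dots)$, exactly the sequence whose cumulative sum appears in the denominator of $w_h\ind t$, and pushforward coverability ensures each is dominated by $\CpushM[\Mstar]$ times a fixed distribution. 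Applying \cref{lem:linf_potential} with $g\ind t = \sqrt{\what_h\ind t}-\sqrt{w_h\ind t}$ (which is bounded since $\nrm*{w}_\infty \le 1$ and weight estimates are clipped to $[0,1]$) and the ``historical'' occupancies as $d\ind i$, one gets $\sum_t \Enbar^{\pi\ind t}[(g\ind t)^2\indic_{x_h\neq\term}] \lesssim \sqrt{\CpushM[\Mstar]\log T \sum_t \sum_{i<t}\Enbar^{\pi\ind i}[(g\ind t)^2]} + \CpushM[\Mstar]$; translating the inner double sum back to $\sum_t (t-1)(\vepsweighton\ind t)^2$ costs another factor related to $\abs*{\cA}$ from the $\piunif$ roll-in, producing the stated $\sqrt{8\abs*{\cA}\CpushM[\Mstar]\log(T)\sum_t (t-1)(\vepsweighton\ind t)^2} + 4\CpushM[\Mstar]$.

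The main obstacle I anticipate is bookkeeping the relationship between the denominator $\sum_{i<t} d_h\ind{\pi\ind{h,i}}(x_h) + P_{h-1}(x_h\mid\cdot)$ inside $w_h\ind t$ and the potential-function argument: one must verify that the ``regularizer'' term $P_{h-1}(x_h\mid x_{h-1},a_{h-1})$ plays the role of $C\cdot\mu$ in \cref{lem:linf_potential} uniformly over the roll-in $(x_{h-1},a_{h-1})$, which is precisely where the pushforward coverability parameter $\CpushM[\Mstar]$ (rather than ordinary $L_\infty$-coverability) is essential, and where the $\indic_{x_h\neq\term}$ truncation has to be carefully propagated so that the terminal state never contributes. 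Once that correspondence is set up cleanly, both bounds reduce to the standard elliptic-potential estimate plus the truncation change-of-measure, with the $\alpha$ and $\abs*{\cA}$ factors tracked explicitly.
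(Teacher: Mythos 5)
Your argument for the first bound is essentially the paper's: restrict to $x_h\neq\term$ (so $x_{h-1},a_{h-1}\neq\term$ and hence $\pi_{h-1}(x_{h-1})\neq\term$), invoke the layer-$(h-1)$ truncation in \cref{eq:truncated} to get $\dbar^{\pi}_{h-1}(x)\leq\alpha\,\dbar^{p_{h-1}}_{h-1}(x)$, and pay a single factor of $\abs*{\cA}$ to replace $\pi$'s action at layer $h-1$ by the uniform action of $p_{h-1}=p_{h-1}\circ_{h-1}\piunif$. (Your intermediate claim $\dbar^{\pi}_{h-1}(x,a)\leq\alpha\,\dbar^{p_{h-1}}_{h-1}(x,a)$ is too strong as stated---the truncation only controls state marginals---but your subsequent $\abs*{\cA}$ accounting is exactly what repairs it, so this half goes through.)

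For the second bound there is a genuine gap in how you instantiate the potential argument. You propose applying \cref{lem:linf_potential} with the covered sequence $\crl{\dbar_h^{\pi\ind{i}}}$ of layer-$h$ occupancies, motivated by their appearance in the denominator of $w_h\ind{t}$, and with $g\ind{t}=\sqrt{\what_h\ind{t}}-\sqrt{w_h\ind{t}}$. This does not type-check: $g\ind{t}$ depends on the whole transition $(x_{h-1},a_{h-1},x_h)$, so it cannot be integrated against a distribution over $x_h$ alone, and conditioning on $x_h$ is policy-dependent, so no reduction to such a function is available. Moreover, the denominator of $w_h\ind{t}$ plays no role in this lemma---the cumulative-sum structure is what drives \cref{lem:potential}, not the present statement---so your flagged ``main obstacle'' about $P_{h-1}$ playing the role of $C\cdot\mu$ is a symptom of placing the potential argument at the wrong layer. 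What $\Delbarweighton$ actually needs is an on-policy-to-historical conversion at the roll-in layer $h-1$: the paper applies \cref{lem:linf_potential} with $d\ind{i}=d_{h-1}^{\sMstar,\pi\ind{i}}$ (each dominated by $\CpushM[\Mstar]\mu$, since pushforward coverability bounds every layer-$(h-1)$ state occupancy) and $g\ind{t}(x)=\En\brk{\prn{\sqrt{\what_h\ind{t}}-\sqrt{w_h\ind{t}}}^2\mid x_{h-1}=x,\, a_{h-1}=\pi\ind{t}(x)}$, then uses Jensen, boundedness, and a single change of the layer-$(h-1)$ action to $\piunif$ to convert $\sum_{i<t}\En^{\sMstar,\pi\ind{i}}\brk{(g\ind{t})^2}$ into $4\abs*{\cA}\,(t-1)(\vepsweighton\ind{t})^2$. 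Note also that the most natural repair of your setup---covering the joint law of $(x_{h-1},a_{h-1},x_h)$---forces the covering constant up to $\abs*{\cA}\CpushM[\Mstar]$ and leaves an extra $\abs*{\cA}$ in both terms, so it would not recover the stated inequality.
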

\begin{restatable}{lemma}{opterror}
  \label{lem:opt_error}
  The following bound holds for all $h\geq{}2$:
  \begin{align}
    \Delbaropt
    \leq \alpha\sum_{t=1}^{T}\vepsopt\ind{t}.
  \end{align}
\end{restatable}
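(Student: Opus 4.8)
The plan is to establish, for each iteration $t\in[T]$ (abbreviating $\pi\ind{t}\equiv\pi\ind{t,h}$ throughout), the per-iteration inequality
\[
  \sup_{\pi\in\Pibara}\bigl(\bar{J}(\pi)-\bar{J}(\pi\ind{t})\bigr)\leq\alpha\cdot\vepsopt\ind{t},
\]
where $\bar{J}(\pi)\ldef\Enbar^{\pi}\brk*{\what_h\ind{t}(x_h\mid{}x_{h-1},a_{h-1})\indic_{x_h\neq\term}}$ is the value of $\pi$ in the extended MDP $\Mbar$ under the stochastic reward $r_{h-1}=\what_h\ind{t}(x_h\mid{}x_{h-1},a_{h-1})\indic_{x_h\neq\term}$ (and $r_{\ell}=0$ for $\ell\neq h-1$). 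Summing this bound over $t$ and noting that the summands in the definition of $\Delbaropt$ are precisely $\bar{J}(\pi)-\bar{J}(\pi\ind{t})$ then yields the claim. Since $\pi\ind{t}\in\PiNS$ never selects $\term$ and, started from a state in $\cX$, never reaches $\term$, its occupancies, $Q$-functions, and value under $r$ in $\Mbar$ coincide with those in $\Mstar$; in particular $\bar{J}(\pi\ind{t})=\En^{\sMstar,\pi\ind{t}}\brk*{\what_h\ind{t}(x_h\mid{}x_{h-1},a_{h-1})}$ and $Q_{\ell}^{\sMbar,\pi\ind{t}}(x,a;\what_h\ind{t})=Q_{\ell}^{\sMstar,\pi\ind{t}}(x,a;\what_h\ind{t})$ for all $x\in\cX$, $a\in\cA$, and $\ell\leq h-1$.

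The main tool is the performance difference lemma in $\Mbar$ with the deterministic non-stationary comparator $\pi\ind{t}$: for any $\pi\in\Pibara$,
\[
  \bar{J}(\pi)-\bar{J}(\pi\ind{t})=\sum_{\ell=1}^{H}\Enbar^{\pi}\brk*{Q_{\ell}^{\sMbar,\pi\ind{t}}(x_\ell,\pi_\ell(x_\ell);\what_h\ind{t})-Q_{\ell}^{\sMbar,\pi\ind{t}}(x_\ell,\pi\ind{t}_\ell(x_\ell);\what_h\ind{t})}.
\]
Because the only reward sits at layer $h-1$ and is non-negative (bounded by $1$), I would use two structural facts about the terminal action, both following from $Q_{\ell}^{\sMbar,\pi\ind{t}}(x,\term;\what_h\ind{t})=0$ (taking $\term$ collects no reward): (i) the advantage of $\term$ equals $-V_{\ell}^{\sMbar,\pi\ind{t}}(x)\leq0$, and (ii) $\max_{a\in\cAbar}Q_{\ell}^{\sMbar,\pi\ind{t}}(x,a;\what_h\ind{t})=\max_{a\in\cA}Q_{\ell}^{\sMbar,\pi\ind{t}}(x,a;\what_h\ind{t})$. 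Writing $g_\ell(x)\ldef\max_{a\in\cA}Q_{\ell}^{\sMbar,\pi\ind{t}}(x,a;\what_h\ind{t})-Q_{\ell}^{\sMbar,\pi\ind{t}}(x,\pi\ind{t}_\ell(x);\what_h\ind{t})\geq0$, fact (ii) gives that the $\ell$-th summand is at most $g_\ell(x_\ell)$ whenever $\pi_\ell(x_\ell)\in\cA$, while fact (i) lets us simply discard all terms with $\pi_\ell(x_\ell)=\term$. Since moreover $g_\ell\equiv0$ for $\ell\geq h$ and at $x=\term$, this reduces the bound to
\[
  \bar{J}(\pi)-\bar{J}(\pi\ind{t})\leq\sum_{\ell=1}^{h-1}\sum_{x\in\cX:\,\pi_\ell(x)\in\cA}\dbar_\ell^{\pi}(x)\,g_\ell(x).
\]

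Finally I would invoke the coverage property of the truncated class $\Pibara$: by the construction \eqref{eq:truncated}, whenever $\pi\in\Pibara$ has $\pi_\ell(x)\in\cA$ (i.e., $\pi$ was \emph{not} truncated at $x$ in layer $\ell$), one has $\dbar_\ell^{\pi}(x)\leq\alpha\cdot\dbar_\ell^{p_\ell}(x)$; this follows by induction over the stages of the construction, using that truncating at layers $\geq\ell$ leaves the layer-$\ell$ occupancy unchanged, so the ratio appearing in \eqref{eq:truncated} at layer $\ell$ equals $\dbar_\ell^{\pi}(x)/\dbar_\ell^{p_\ell}(x)$ for the final policy $\pi$. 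Substituting this, extending the inner sum to all $x\in\cX$ (valid since $g_\ell\geq0$), using $\dbar_\ell^{p_\ell}(x)=d_\ell^{\sMstar,p_\ell}(x)$ for $x\in\cX$ (as $p_\ell$ is supported on policies that never take $\term$), and replacing $g_\ell$ by its $\Mstar$-counterpart on $\cX$, yields $\bar{J}(\pi)-\bar{J}(\pi\ind{t})\leq\alpha\sum_{\ell=1}^{h-1}\En^{\sMstar,p_\ell}\brk*{g_\ell(x_\ell)}=\alpha\cdot\vepsopt\ind{t}$, and the result follows after taking $\sup_{\pi\in\Pibara}$ and summing over $t$. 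I expect the main obstacle to be the bookkeeping in the extended MDP: verifying that the terminal action always contributes a non-positive advantage (so truncated states cost nothing in the decomposition) and that the inductive construction of $\Pibara$ delivers the coverage bound $\dbar_\ell^{\pi}(x)\leq\alpha\,\dbar_\ell^{p_\ell}(x)$ precisely on the non-truncated states; the remaining change-of-measure step is routine.
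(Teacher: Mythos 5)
Your proposal is correct and follows essentially the same route as the paper's proof: a performance-difference decomposition in the extended MDP with comparator $\pi\ind{t}$, discarding truncated states because the terminal action has non-positive advantage, using the defining property of $\Pibara$ to get $\dbar_\ell^{\pi}(x)\leq\alpha\,\dbar_\ell^{p_\ell}(x)$ on non-truncated states, extending the sum by non-negativity, and identifying the resulting $\Mbar$ quantities with their $\Mstar$ counterparts to recover $\alpha\,\vepsopt\ind{t}$. The inductive justification you give for the coverage property (layer-$\ell$ occupancies are unaffected by truncation at layers $\geq\ell$) matches the construction of $\Pibara$ and is the same fact the paper invokes.
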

Appealing to \cref{lem:potential,lem:w_error,lem:opt_error}, we conclude that as long as $\vepsweightoff\ind{t}\leq{}
c_1(\alpha\abs*{\cA}t/\CpushM[\Mstar])^{-1/2}$,
$\vepsweighton\ind{t}\leq{}c_2(\abs*{\cA}Tt/\CpushM[\Mstar])^{-1/2}$,
and $\vepsopt\ind{t}\leq{}c_3(\alpha{}T)^{-1}$ for all $h\geq{}2$,
$t\in\brk{T}$, where $c_1,c_2,c_3>0$ are absolute constants, we are
guaranteed that for all $h$, 
\begin{align}
  \label{eq:push_bar_bound}
    \pCovbar(p_h) 
   \leq{} 85\CpushM[\Mstar]\log(T).
\end{align}

It remains to translate this back to a bound on the \mainobj objective
for the true MDP $\Mstar$. To do so, we start with the following
technical lemma, also proven in the sequel.
\begin{restatable}{lemma}{extendedtranslation}
  \label{lem:extended_translation}
  Consider any reward function $\crl*{r_h}_{h\in\brk{H}}$ with
  $r_h:\cXbar\times\cAbar\to\brk*{0,1}$ such that
  $\sum_{h=1}^{H}r_h(x_h,a_h)\in\brk*{0,1}$ for all sequences $(x_1,a_1),\ldots,(x_H,a_H)$,
  and such that $r_h(\term,a)=0$ and $r_h(x,\term)=0$. It holds that
  \begin{align*}
    \sup_{\pi\in\Pi}\En^{\pi}\brk*{\sum_{h=1}^{H}r_h}
    -
    \sup_{\pi\in\Pibar_{\alpha}}\Enbar^{\pi}\brk*{\sum_{h=1}^{H}r_h}
    \leq
    \sum_{h=1}^{H}\sup_{\pi\in\Pibar_{\alpha}}\bbPbar^{\pi}\brk*{
          \frac{\dbar^{\pi}_h(x_h)}{\dbar^{p_{h}}_h(x_h)} > \alpha,x_h\neq\term
          }.
  \end{align*}
\end{restatable}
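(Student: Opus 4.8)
The plan is to prove the inequality for an arbitrary fixed $\pi\in\Pi$ and then take the supremum over $\pi\in\Pi$ at the end. So fix $\pi\in\Pi=\PiNS$. Recalling the inductive construction of $\Pibara=\Pibar_{\alpha,H}$, set $\pi^{(0)}=\pi$ and let $\pi^{(1)},\ldots,\pi^{(H)}\in\PibarRNS$ be the intermediate policies obtained by successively truncating layer $h$ according to \eqref{eq:truncated}, for $h=1,\ldots,H$; by construction $\pi^{(h)}\in\Pibar_{\alpha,h}$, so $\bar\pi\ldef\pi^{(H)}\in\Pibara$. Since $\pi^{(0)}=\pi$ never plays action $\term$, its trajectory law in $\Mbar$ equals its law in $\Mstar$, so $\En^{\pi}\brk*{\sum_h r_h}=\Enbar^{\pi^{(0)}}\brk*{\sum_h r_h}$, and moreover $\Enbar^{\bar\pi}\brk*{\sum_h r_h}\le\sup_{\pi'\in\Pibara}\Enbar^{\pi'}\brk*{\sum_h r_h}$. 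Hence it suffices to bound $\Enbar^{\pi^{(0)}}\brk*{\sum_h r_h}-\Enbar^{\pi^{(H)}}\brk*{\sum_h r_h}$, which telescopes as $\sum_{h=1}^{H}\prn*{\Enbar^{\pi^{(h-1)}}\brk*{\sum_{h'} r_{h'}}-\Enbar^{\pi^{(h)}}\brk*{\sum_{h'} r_{h'}}}$.

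The core step is bounding each telescoping increment. Fix $h$ and let $\cE_h=\crl*{x\in\cX:\dbar^{\pi^{(h-1)}}_h(x)/\dbar^{p_h}_h(x)>\alpha}$ be the region truncated in passing from $\pi^{(h-1)}$ to $\pi^{(h)}$. Since $\pi^{(h-1)}$ and $\pi^{(h)}$ agree on layers $1,\ldots,h-1$ (truncating layer $h$ does not touch earlier layers) and on layers $h+1,\ldots,H$, I would couple the two trajectories so that they are identical up to and including the layer-$h$ state $x_h$. Conditioned on $x_h$: if $x_h\notin\cE_h$ (this includes $x_h=\term$, where both policies play $\term$ by convention) both policies take the same action, and under the coupling of transitions the trajectories agree thereafter, so the reward contributions from layers $\ge h$ cancel; if $x_h\in\cE_h$ then $\pi^{(h)}$ plays $\term$, enters the absorbing terminal state, and collects zero reward from layer $h$ onward (using $r_{h'}(\term,\cdot)=r_{h'}(\cdot,\term)=0$), whereas $\pi^{(h-1)}$ continues and collects at most $\sum_{h'=1}^{H}r_{h'}\le 1$ (using $r_{h'}\ge 0$). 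The reward from layers $h'<h$ is identical under the coupling and cancels in all cases. This gives $\Enbar^{\pi^{(h-1)}}\brk*{\sum_{h'} r_{h'}}-\Enbar^{\pi^{(h)}}\brk*{\sum_{h'} r_{h'}}\le\bbPbar^{\pi^{(h-1)}}\brk*{x_h\in\cE_h}$.

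The last step is to express $\bbPbar^{\pi^{(h-1)}}\brk*{x_h\in\cE_h}$ in terms of a policy in $\Pibara$. The occupancy at layer $h$ depends only on a policy's behavior at layers $1,\ldots,h-1$, and $\pi^{(h-1)},\pi^{(h)},\ldots,\pi^{(H)}=\bar\pi$ all coincide on those layers; hence $\dbar^{\pi^{(h-1)}}_h=\dbar^{\bar\pi}_h$ on all of $\cXbar$. Therefore $\cE_h=\crl*{x\in\cX:\dbar^{\bar\pi}_h(x)/\dbar^{p_h}_h(x)>\alpha}$ and $\bbPbar^{\pi^{(h-1)}}\brk*{x_h\in\cE_h}=\bbPbar^{\bar\pi}\brk*{\dbar^{\bar\pi}_h(x_h)/\dbar^{p_h}_h(x_h)>\alpha,\,x_h\neq\term}\le\sup_{\pi'\in\Pibara}\bbPbar^{\pi'}\brk*{\dbar^{\pi'}_h(x_h)/\dbar^{p_h}_h(x_h)>\alpha,\,x_h\neq\term}$. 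Summing over $h$ and taking the supremum over $\pi\in\Pi$ on the left-hand side yields the claim. The main obstacle is the bookkeeping: verifying that $\dbar^{\pi^{(h-1)}}_h=\dbar^{\bar\pi}_h$ (so that the per-layer truncation probability can be bounded by the supremum over $\Pibara$, not merely over the intermediate class $\Pibar_{\alpha,h-1}$), and handling the corner cases $x_h=\term$ and $\dbar^{p_h}_h(x_h)=0$, which are both absorbed into the $\indic_{x_h\neq\term}$ already present in the statement.
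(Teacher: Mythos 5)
Your proposal is correct and follows essentially the same route as the paper: telescope over the layer-by-layer truncations, bound each increment by the probability of the truncation event, and transfer that probability to a policy in $\Pibar_{\alpha}$ using the fact that the layer-$h$ occupancy depends only on the policy's behavior before layer $h$. The only cosmetic differences are that you telescope along the truncation chain of a single fixed $\pi$ (the paper telescopes the suprema over the intermediate classes $\Pibar_{\alpha,\ell}$) and you bound each increment with an explicit coupling, whereas the paper invokes the performance difference lemma for two policies differing only at layer $\ell$ — which amounts to the same computation.
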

Let $h\geq{}2$ be fixed and define a reward function
$\crl{r_{\ell}}_{\ell\leq{}h-1}$  with
$r_\ell:\cXbar\times\cAbar\to\brk{0,\veps^{-1}}$ via
\begin{align*}
  r_{h-1}(x,a) = \En\brk*{\frac{P_{h-1}(x_h\mid{}x_{h-1},a_{h-1})}{d^{p_h}_h(x_h)+\veps\cdot{}P_{h-1}(x_h\mid{}x_{h-1},a_{h-1})}\mid{}x_{h-1}=x,a_{h-1}=a}\indic_{x,a\neq\term}
\end{align*}
and $r_{\ell}=0$ for $\ell<h-1$. Using
\cref{lem:extended_translation}, we have that
\begin{align}
  \label{eq:trn1}
  \pCovM[\Mstar](p_h)
  = \sup_{\pi\in\Pi}\brk*{\sum_{\ell=1}^{h-1}r_{\ell}}
  \leq{}
  \sup_{\pi\in\Pibar_{\alpha}}\Enbar^{\pi}\brk*{\sum_{\ell=1}^{h-1}r_\ell}
+
  \frac{1}{\veps}\sum_{\ell=1}^{h-1}\sup_{\pi\in\Pibar_{\alpha}}\bbPbar^{\pi}\brk*{
          \frac{\dbar^{\pi}_\ell(x_\ell)}{\dbar^{p_{\ell}}_\ell(x_\ell)} > \alpha,x_\ell\neq\term
          }.
\end{align}
To bound the \rhs, we first note that
\begin{align}
  \sup_{\pi\in\Pibar_{\alpha}}\Enbar^{\pi}\brk*{\sum_{\ell=1}^{h-1}r_\ell}
  &=
\sup_{\pi\in\Pibar_{\alpha}}\Enbar^{\pi}\brk*{\En\brk*{\frac{P_{h-1}(x_h\mid{}x_{h-1},a_{h-1})}{d^{p_h}_h(x_h)+\veps\cdot{}P_{h-1}(x_h\mid{}x_{h-1},a_{h-1})}\mid{}x_{h-1},a_{h-1}}\indic_{x_{h-1},a_{h-1}\neq\term}}\notag\\
  &=
\sup_{\pi\in\Pibar_{\alpha}}\Enbar^{\pi}\brk*{\Enbar\brk*{\frac{\Pbar_{h-1}(x_h\mid{}x_{h-1},a_{h-1})}{\dbar^{p_h}_h(x_h)+\veps\cdot{}\Pbar_{h-1}(x_h\mid{}x_{h-1},a_{h-1})}\mid{}x_{h-1},a_{h-1}}\indic_{x_{h-1},a_{h-1}\neq\term}}\notag\\
    &=
      \sup_{\pi\in\Pibar_{\alpha}}\Enbar^{\pi}\brk*{\frac{\Pbar_{h-1}(x_h\mid{}x_{h-1},a_{h-1})}{\dbar^{p_h}_h(x_h)+\veps\cdot{}\Pbar_{h-1}(x_h\mid{}x_{h-1},a_{h-1})}\indic_{x_{h-1},a_{h-1}\neq\term}}\notag\\
      &=
\sup_{\pi\in\Pibar_{\alpha}}\Enbar^{\pi}\brk*{\frac{\Pbar_{h-1}(x_h\mid{}x_{h-1},a_{h-1})}{\dbar^{p_h}_h(x_h)+\veps\cdot{}\Pbar_{h-1}(x_h\mid{}x_{h-1},a_{h-1})}\indic_{x_{h}\neq\term}}
        = \pCovbar(p_h).\label{eq:trn2}
\end{align}
Here, the second equality uses that i) $\Mstar$ and $\Mbar$ have
identical transition dynamics whenever $x_{h-1},a_{h-1}\term$ and ii)
policies in the support of $p_h$ never take the terminal
action. Meanwhile, the second-to-last inequality uses that
$x_h\neq\term$ if and only if $x_{h-1}\neq\term$ and
$a_{h-1}\neq\term$ in $\Mbar$. To bound the second term on the \rhs of
\cref{eq:trn1}, we use a
variant of \cref{prop:pushforward_relaxation}.
\begin{restatable}{lemma}{pushforwardextended}
  \label{lem:pushforward_extended}
  For all $\alpha>0$ and $\ell\geq{}1$, it holds that
  \begin{align}
      \sup_{\pi\in\Pibar_{\alpha}}\bbPbar^{\pi}\brk*{
    \frac{\dbar^{\pi}_\ell(x_\ell)}{\dbar^{p_{\ell}}_\ell(x_\ell)} > \alpha,x_\ell\neq\term
    }
    \leq{} \frac{2}{\alpha}\sup_{\pi\in\Pibara}\Enbar^{\pi}\brk*{
    \frac{\Pbar_{\ell-1}(x_\ell\mid{}x_{\ell-1},a_{\ell-1})}{\dbar_\ell^{p_\ell}(x_\ell)+\alpha^{-1}\cdot{}\Pbar_{\ell-1}(x_\ell\mid{}x_{\ell-1},a_{\ell-1}) }
    \indic_{x_\ell\neq\term}}.\label{eq:extended_relax}
  \end{align}
\end{restatable}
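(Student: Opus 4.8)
\emph{Proof proposal.} The plan is to mirror the argument in \pfref{prop:pushforward_relaxation}, now keeping track of the terminal state $\term$ of the extended MDP $\Mbar$. First I would fix $\pi\in\Pibara$ and write the left-hand side as
\[
  \bbPbar^{\pi}\brk*{\frac{\dbar^{\pi}_\ell(x_\ell)}{\dbar^{p_\ell}_\ell(x_\ell)} > \alpha,\; x_\ell\neq\term}
  = \sum_{x\neq\term\,:\,\dbar^{\pi}_\ell(x) > \alpha\,\dbar^{p_\ell}_\ell(x)} \dbar^{\pi}_\ell(x),
\]
where the ratio is read as $+\infty$ when $\dbar^{p_\ell}_\ell(x)=0$. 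On the index set of this sum one has $\dbar^{p_\ell}_\ell(x)+\alpha^{-1}\dbar^{\pi}_\ell(x) < 2\alpha^{-1}\dbar^{\pi}_\ell(x)$, so that $\dbar^{\pi}_\ell(x)\leq \frac{2}{\alpha}\cdot\frac{(\dbar^{\pi}_\ell(x))^2}{\dbar^{p_\ell}_\ell(x)+\alpha^{-1}\dbar^{\pi}_\ell(x)}$; since the resulting summands are nonnegative, I can then enlarge the sum to all non-terminal $x$ to get
\[
  \bbPbar^{\pi}\brk*{\frac{\dbar^{\pi}_\ell(x_\ell)}{\dbar^{p_\ell}_\ell(x_\ell)} > \alpha,\; x_\ell\neq\term}
  \leq \frac{2}{\alpha}\sum_{x\neq\term}\frac{(\dbar^{\pi}_\ell(x))^2}{\dbar^{p_\ell}_\ell(x)+\alpha^{-1}\dbar^{\pi}_\ell(x)}.
\]

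Next I would write $\dbar^{\pi}_\ell(x) = \Enbar^{\pi}\brk*{\Pbar_{\ell-1}(x\mid x_{\ell-1},a_{\ell-1})}$ (the expectation only picks up contributions from non-terminal $(x_{\ell-1},a_{\ell-1})$, since $\Pbar_{\ell-1}(x\mid\cdot)=0$ for non-terminal $x$ otherwise) and invoke the convexity of $d\mapsto d^2/(\delta+\alpha^{-1}d)$ over $\bbR_+$ from \cref{lem:convex_f}. Jensen's inequality then gives, for each non-terminal $x$,
\[
  \frac{(\dbar^{\pi}_\ell(x))^2}{\dbar^{p_\ell}_\ell(x)+\alpha^{-1}\dbar^{\pi}_\ell(x)}
  \leq \Enbar^{\pi}\brk*{\frac{(\Pbar_{\ell-1}(x\mid x_{\ell-1},a_{\ell-1}))^2}{\dbar^{p_\ell}_\ell(x)+\alpha^{-1}\Pbar_{\ell-1}(x\mid x_{\ell-1},a_{\ell-1})}},
\]
and summing over non-terminal $x$ and conditioning on $(x_{\ell-1},a_{\ell-1})$ recognizes the right-hand side as $\Enbar^{\pi}\brk*{\frac{\Pbar_{\ell-1}(x_\ell\mid x_{\ell-1},a_{\ell-1})}{\dbar^{p_\ell}_\ell(x_\ell)+\alpha^{-1}\Pbar_{\ell-1}(x_\ell\mid x_{\ell-1},a_{\ell-1})}\indic_{x_\ell\neq\term}}$, the indicator appearing because a non-terminal $x_\ell$ is only reachable from a non-terminal $(x_{\ell-1},a_{\ell-1})$ in $\Mbar$. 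Taking the supremum over $\pi\in\Pibara$ on both sides yields \cref{eq:extended_relax}.

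I expect the only delicate point to be the bookkeeping around the terminal state and the degenerate case $\dbar^{p_\ell}_\ell(x)=0$: one must check that the event $\{\dbar^{\pi}_\ell(x_\ell)/\dbar^{p_\ell}_\ell(x_\ell)>\alpha\}$ still forces $\dbar^{\pi}_\ell(x)/(\dbar^{p_\ell}_\ell(x)+\alpha^{-1}\dbar^{\pi}_\ell(x))>\alpha/2$ there (indeed the ratio equals $\alpha$ when $\dbar^{p_\ell}_\ell(x)=0$), and that $\dbar^{\pi}_\ell(x)=\Enbar^{\pi}\brk*{\Pbar_{\ell-1}(x\mid x_{\ell-1},a_{\ell-1})}$ holds for non-terminal $x$ in $\Mbar$. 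Beyond that, every step is a direct transcription of the argument in \pfref{prop:pushforward_relaxation}.
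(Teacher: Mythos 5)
Your proposal is correct and follows essentially the same route as the paper's proof: the paper rewrites the event $\{\dbar^{\pi}_\ell/\dbar^{p_\ell}_\ell>\alpha\}$ in terms of the regularized ratio and applies Markov's inequality, which is exactly your pointwise bound $\dbar^{\pi}_\ell(x)\leq\frac{2}{\alpha}\frac{(\dbar^{\pi}_\ell(x))^2}{\dbar^{p_\ell}_\ell(x)+\alpha^{-1}\dbar^{\pi}_\ell(x)}$ on the event followed by enlarging the sum, and then both arguments conclude with \cref{lem:convex_f}, Jensen, and the identity $\dbar^{\pi}_\ell(x)\indic_{x\neq\term}=\Enbar^{\pi}\brk*{\Pbar_{\ell-1}(x\mid x_{\ell-1},a_{\ell-1})\indic_{x\neq\term}}$. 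Your extra bookkeeping around $\term$ and the degenerate case $\dbar^{p_\ell}_\ell(x)=0$ is consistent with the paper's treatment.
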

We set $\alpha=\veps^{-1}$, so that combining \cref{eq:trn1} with
\cref{eq:trn2} and \cref{lem:pushforward_extended} yields
\begin{align}
  \label{eq:extended_final}
  \pCovM[\Mstar](p_h)
  \leq{} \pCovbar(p_h)
  +2\sum_{\ell=1}^{h-1}\pCovbarell(p_\ell).
\end{align}
Consequently, for the choice $T=\veps^{-1}$ and $\alpha=\veps^{-1}$,
\cref{eq:push_bar_bound} and \cref{eq:extended_final} imply that for
all $h\geq{}2$,
\begin{align*}
  \pCovM[\Mstar](p_h)   \leq{} 170H\log(\veps^{-1})\cdot{}\CpushM[\Mstar].
\end{align*}
The final approximation requirements for these choices are $\vepsweightoff\ind{t}\leq{}
c_1(\CpushM[\Mstar]/\abs*{\cA}t)^{1/2}\veps^{1/2}$,
$\vepsweighton\ind{t}\leq{}c_2(\CpushM[\Mstar]/\abs*{\cA}t)^{1/2}\veps^{1/2}$,
and $\vepsopt\ind{t}\leq{}c_3\veps^2$ for absolute constants $c_1,c_2,c_3>0$.
\qed

\subsubsection{Proofs for Supporting Lemmas for \creftitle{lem:outer_level}}

\potential*
\begin{proof}[\pfref{lem:potential}]
  This is a slightly modified variant of the proof of \cref{thm:pushforward_optimization}.
  Let $\wt{d}_h^{t}=\sum_{i<t}d_h^{\pi\ind{i}}$ and
  $\dcheck_h\ind{t}=\sum_{i<t}\dbar_h^{\pi\ind{i}}$. Observe
  that for $T=\frac{1}{\veps}$, we have
  \begin{align*}
    \pCovbar(p_h) &= \sup_{\pi\in\Pibara}
\Enbar\brk*{\frac{\Pbar_{h-1}(x_h\mid{}x_{h-1},a_{h-1})}{\dbar^{p_h}_h(x_h)+\veps\cdot{}\Pbar_{h-1}(x_h\mid{}x_{h-1},a_{h-1})}\indic_{x_h\neq\term}}\\
    &= 
T\cdot\sup_{\pi\in\Pibara}\Enbar^{\pi}\brk*{\frac{\Pbar_{h-1}(x_h\mid{}x_{h-1},a_{h-1})}{\dcheck\ind{T+1}_h(x_h)+\Pbar_{h-1}(x_h\mid{}x_{h-1},a_{h-1})}\indic_{x_h\neq\term}},
  \end{align*}
  and hence it suffices to bound the quantity on the right-hand
  side. Next, note that for all $t\in\brk{T}$, we have that
  \begin{align*}
\sup_{\pi\in\Pibara}\Enbar^{\pi}\brk*{\frac{\Pbar_{h-1}(x_h\mid{}x_{h-1},a_{h-1})}{\dcheck\ind{t+1}_h(x_h)+\Pbar_{h-1}(x_h\mid{}x_{h-1},a_{h-1})}\indic_{x_h\neq\term}}
    \leq \sup_{\pi\in\Pibara}\Enbar^{\pi}\brk*{\frac{\Pbar_{h-1}(x_h\mid{}x_{h-1},a_{h-1})}{\dcheck\ind{t+1}_h(x_h)+\Pbar_{h-1}(x_h\mid{}x_{h-1},a_{h-1})}\indic_{x_h\neq\term}}
  \end{align*}
and consequently
\begin{align*}
T\cdot\sup_{\pi\in\Pibara}\Enbar^{\pi}\brk*{\frac{\Pbar_{h-1}(x_h\mid{}x_{h-1},a_{h-1})}{\dcheck\ind{T+1}_h(x_h)+\Pbar_{h-1}(x_h\mid{}x_{h-1},a_{h-1})}\indic_{x_h\neq\term}}
  &\leq
\sum_{t=1}^{T}\sup_{\pi\in\Pibara}\Enbar^{\pi}\brk*{\frac{\Pbar_{h-1}(x_h\mid{}x_{h-1},a_{h-1})}{\dcheck\ind{t}_h(x_h)+\Pbar_{h-1}(x_h\mid{}x_{h-1},a_{h-1})}\indic_{x_h\neq\term}}.
\end{align*}
Now, note that if $x_h\neq\term$, the dynamics of $\Mbar$ imply that
we must have $x_{h-1},a_{h-1}\neq\term$ as well. In this case, we have
\begin{align}
  \label{eq:wbar_equality}
  \frac{\Pbar_{h-1}(x_h\mid{}x_{h-1},a_{h-1})}{\dcheck\ind{t}_h(x_h)+\Pbar_{h-1}(x_h\mid{}x_{h-1},a_{h-1})}
  = w_h\ind{t}(x_h\mid{}x_{h-1},a_{h-1}),
\end{align}
since
$\Pbar_{h-1}(\cdot\mid{}x_{h-1},a_{h-1})=P_{h-1}(\cdot\mid{}x_{h-1},a_{h-1})$
with $x_{h-1},a_{h-1}\neq\term$, and since
$d_h^{\pi\ind{i}}(x_h)=\dbar_h^{\pi\ind{i}}(x_h)$ when $x_h\neq\term$
(as the policies $\pi\ind{1},\ldots,\pi\ind{T}$ never take the
terminal action). As a result, using \cref{lem:mp_weight}, we have
that
\begin{align*}
&\sum_{t=1}^{T}\sup_{\pi\in\Pibara}\Enbar^{\pi}\brk*{\frac{\Pbar_{h-1}(x_h\mid{}x_{h-1},a_{h-1})}{\dcheck\ind{t}_h(x_h)+\Pbar_{h-1}(x_h\mid{}x_{h-1},a_{h-1})}\indic_{x_h\neq\term}}\\
  &=\sum_{t=1}^{T}\sup_{\pi\in\Pibara}\Enbar^{\pi}\brk*{w_h\ind{t}(x_h\mid{}x_{h-1},a_{h-1})\indic_{x_h\neq\term}}\\
  &\leq3\sum_{t=1}^{T}\sup_{\pi\in\Pibara}\Enbar^{\pi}\brk*{\what_h\ind{t}(x_h\mid{}x_{h-1},a_{h-1})\indic_{x_h\neq\term}} \\
    &\qquad + 2 \underbrace{\sum_{t=1}^{T}\sup_{\pi\in\Pibara}\Enbar^{\pi}\brk*{\prn*{\sqrt{\what_h\ind{t}(x_h\mid{}x_{h-1},a_{h-1})}-\sqrt{w_h\ind{t}(x_h\mid{}x_{h-1},a_{h-1})}}^2\indic_{x_h\neq\term}}}_{=\Delbarweightoff}.
\end{align*}
Next, we can bound
\begin{align*}
  \sum_{t=1}^{T}\sup_{\pi\in\Pibara}\Enbar^{\pi}\brk*{\what_h\ind{t}(x_h\mid{}x_{h-1},a_{h-1})\indic_{x_h\neq\term}}
  \leq{}
  \sum_{t=1}^{T}\Enbar^{\pi\ind{t}}\brk*{\what_h\ind{t}(x_h\mid{}x_{h-1},a_{h-1})\indic_{x_h\neq\term}}
  + \Delbaropt
\end{align*}
by definition. Applying \cref{lem:mp_weight} once more, we have that
\begin{align*}
  &\sum_{t=1}^{T}\Enbar^{\pi\ind{t}}\brk*{\what_h\ind{t}(x_h\mid{}x_{h-1},a_{h-1})\indic_{x_h\neq\term}}\\
  &\leq{}
  3\sum_{t=1}^{T}\Enbar^{\pi\ind{t}}\brk*{w_h\ind{t}(x_h\mid{}x_{h-1},a_{h-1})\indic_{x_h\neq\term}}
  + 2 \underbrace{\sum_{t=1}^{T}\Enbar^{\pi\ind{t}}\brk*{\prn*{\sqrt{\what_h\ind{t}(x_h\mid{}x_{h-1},a_{h-1})}-\sqrt{w_h\ind{t}(x_h\mid{}x_{h-1},a_{h-1})}}^2\indic_{x_h\neq\term}}}_{=\Delbarweighton}.
\end{align*}
Finally, note that since $\pi\ind{t}\in\Pi$ never select the terminal
action (and in particular never reach the terminal state), we have
\begin{align*}
  \sum_{t=1}^{T}\Enbar^{\pi\ind{t}}\brk*{w_h\ind{t}(x_h\mid{}x_{h-1},a_{h-1})\indic_{x_h\neq\term}}
  &= \sum_{t=1}^{T}\Enbar^{\pi\ind{t}}\brk*{
    \frac{P_{h-1}(x_h\mid{}x_{h-1},a_{h-1})}{\dtil\ind{t}_h(x_h)+P_{h-1}(x_h\mid{}x_{h-1},a_{h-1})}\indic_{x_h\neq\term}}\\
  &= \sum_{t=1}^{T}\En^{\pi\ind{t}}\brk*{
    \frac{P_{h-1}(x_h\mid{}x_{h-1},a_{h-1})}{\dtil\ind{t}_h(x_h)+P_{h-1}(x_h\mid{}x_{h-1},a_{h-1})}}\\
  &\leq 4\CpushhM[\Mstar]\log(2T),
\end{align*}
where the final bound follows from is
\cref{eq:pushforward_potential}. To simplify the constants, we note
that $\log(2T)\leq{}2\log(T)$ whenever $\veps\leq{}1/2$.

\end{proof}

\werror*
\begin{proof}[\pfref{lem:w_error}]
  We first bound the quantity
  \[
\Delbarweightoff=    \sum_{t=1}^{T}\sup_{\pi\in\Pibara}\Enbar^{\pi}\brk*{\prn*{\sqrt{\what_h\ind{t}(x_h\mid{}x_{h-1},a_{h-1})}-\sqrt{w_h\ind{t}(x_h\mid{}x_{h-1},a_{h-1})}}^2\indic_{x_h\neq\term}}.
\]
Observe that if $x_h\neq\term$, then the dynamics of the MDP $\Mbar$
imply that $x_{h-1}\neq\term, a_{h-1}\neq\term$. Consider an arbitrary
policy $\pi\in\Pibara$. Since $\pi(x_{h-1})\neq\term$, the dynamics in
\cref{eq:truncated} imply that
$\dbar^{\pi}_{h-1}(x_{h-1})/\dbar^{p_{h-1}}_{h-1}(x_{h-1}) \leq
\alpha$. Consequently, for any $t\in\brk{T}$, we can bound
\begin{align*} &\Enbar^{\pi}\brk*{\prn*{\sqrt{\what_h\ind{t}(x_h\mid{}x_{h-1},a_{h-1})}-\sqrt{w_h\ind{t}(x_h\mid{}x_{h-1},a_{h-1})}}^2\indic_{x_h\neq\term}}\\
  &= \sum_{x\in\cX: \pi(x)\neq\term,a\in\cA}\dbar_{h-1}^{\pi}(x,a)
    \Enbar\brk*{\prn*{\sqrt{\what_h\ind{t}(x_h\mid{}x,a)}-\sqrt{w_h\ind{t}(x_h\mid{}x,a)}}^2\mid{}x_{h-1}=x,a_{h-1}=a}\\
               &\leq \sum_{a\in\cA}\sum_{x\in\cX:
                 \pi(x)\neq\term}\dbar_{h-1}^{\pi}(x)
                 \Enbar\brk*{\prn*{\sqrt{\what_h\ind{t}(x_h\mid{}x,a)}-\sqrt{w_h\ind{t}(x_h\mid{}x,a)}}^2\mid{}x_{h-1}=x,a_{h-1}=a}\\
               &\leq \alpha\sum_{a\in\cA}\sum_{x\in\cX}\dbar_{h-1}^{p_{h-1}}(x)
\Enbar\brk*{\prn*{\sqrt{\what_h\ind{t}(x_h\mid{}x,a)}-\sqrt{w_h\ind{t}(x_h\mid{}x,a)}}^2\mid{}x_{h-1}=x,a_{h-1}=a}\\
                 &= \alpha\abs*{\cA}\cdot
                   \Enbar^{p_{h-1}\circ_{h-1}\piunif}\brk*{\prn*{\sqrt{\what_h\ind{t}(x_h\mid{}x_{h-1},a_{h-1})}-\sqrt{w_h\ind{t}(x_h\mid{}x_{h-1},a_{h-1})}}^2}\\
                   &= \alpha\abs*{\cA}\cdot
                     \Enbar^{p_{h-1}}\brk*{\prn*{\sqrt{\what_h\ind{t}(x_h\mid{}x_{h-1},a_{h-1})}-\sqrt{w_h\ind{t}(x_h\mid{}x_{h-1},a_{h-1})}}^2}\\
  &= \alpha\abs*{\cA}\cdot
    \En^{p_{h-1}}\brk*{\prn*{\sqrt{\what_h\ind{t}(x_h\mid{}x_{h-1},a_{h-1})}-\sqrt{w_h\ind{t}(x_h\mid{}x_{h-1},a_{h-1})}}^2}
=      \alpha\abs*{\cA}\cdot(\vepsweightoff\ind{t})^2,
\end{align*}
where the second-to-last equality uses that
$p_{h-1}\circ_{h-1}\piunif=p_{h-1}$ by construction, and the final
equality uses that policies in the support of $p_{h-1}$ never take the
optimal action. Summing over $t$ completes the proof.

We now bound the quantity $\Delbarweighton$. Note that since the
policy $\pi\ind{t}$ never chooses the terminal action $\term$, we can write
\begin{align*}
&\Delbarweighton=\sum_{t=1}^{T}\En^{\pi\ind{t}}\brk*{\prn*{\sqrt{\what_h\ind{t}(x_h\mid{}x_{h-1},a_{h-1})}-\sqrt{w_h\ind{t}(x_h\mid{}x_{h-1},a_{h-1})}}^2}.
\end{align*}
Observe that as a consequence of pushforward coverability, there
exists a distribution $\mu\in\Delta(\cX)$ such that
$d_{h-1}^{\pi}(x)\leq{}\CpushM[\Mstar]\mu(x)$ for all $x\in\cX$,
$\pi\in\PiRNS$. Hence, by
applying \cref{lem:linf_potential} with $g\ind{t}(x) = \En\brk*{\prn*{\sqrt{\what_h\ind{t}(x_h\mid{}x,\pi\ind{t}(x))}-\sqrt{w_h\ind{t}(x_h\mid{}x,\pi\ind{t}(x))}}^2\mid{}x_{h-1}=x,a_{h-1}=\pi\ind{t}(x)}$,
which has $g\ind{t}\in\brk{0,2}$ whenever
$\nrm*{\what\ind{t}}_{\infty},\nrm*{w\ind{t}}_{\infty}\leq{}1$, we
have
\begin{align*}
  &\Delbarweighton\\
  &\leq
  \sqrt{
  2\CpushM[\Mstar]\log(T)
  \sum_{\substack{t\in[T] \\ i<t}}
  \En^{\pi\ind{i}}\brk[\bigg]{
\prn[\Big]{\En\brk[\Big]{\prn[\Big]{\sqrt{\what_h\ind{t}(x_h\mid{}x_{h-1},a_{h-1})}-\sqrt{w_h\ind{t}(x_h\mid{}x_{h-1},a_{h-1})}}^2\mid{}x_{h-1},a_{h-1}=\pi\ind{t}(x_{h-1})}}^2
  }
  } \\
    &\qquad + 4\CpushM[\Mstar]\\
    &\leq
  \sqrt{
  2\CpushM[\Mstar]\log(T)
  \sum_{t=1}^{T}\sum_{i<t}
  \En^{\pi\ind{i}\circ_{h-1}\pi\ind{t}}\brk*{
\prn[\Big]{\sqrt{\what_h\ind{t}(x_h\mid{}x_{h-1},a_{h-1})}-\sqrt{w_h\ind{t}(x_h\mid{}x_{h-1},a_{h-1})}}^4
      }}
      + 4\CpushM[\Mstar]\\
      &\leq
  \sqrt{
  2\CpushM[\Mstar]\abs*{\cA}\log(T)
  \sum_{t=1}^{T}\sum_{i<t}
  \En^{\pi\ind{i}\circ_{h-1}\piunif}\brk*{
\prn[\Big]{\sqrt{\what_h\ind{t}(x_h\mid{}x_{h-1},a_{h-1})}-\sqrt{w_h\ind{t}(x_h\mid{}x_{h-1},a_{h-1})}}^4
      }}
      + 4\CpushM[\Mstar]\\
      &\leq
  \sqrt{
        8\CpushM[\Mstar]\abs*{\cA}\log(T)
  \sum_{t=1}^{T}\sum_{i<t}
  \En^{\pi\ind{i}\circ_{h-1}\piunif}\brk*{
\prn[\Big]{\sqrt{\what_h\ind{t}(x_h\mid{}x_{h-1},a_{h-1})}-\sqrt{w_h\ind{t}(x_h\mid{}x_{h-1},a_{h-1})}}^2
      }}
  + 4\CpushM[\Mstar].
\end{align*}
This proves the result.

\end{proof}

\opterror*
\begin{proof}[\pfref{lem:opt_error}]
Let $t\in\brk{T}$ be fixed, and recall that we can write
\begin{align*} &\sup_{\pi\in\Pibara}\Enbar^{\pi}\brk*{\what_h\ind{t}(x_h\mid{}x_{h-1},a_{h-1})\indic_{x_h\neq\term}}-\Enbar^{\pi\ind{t}}\brk*{\what_h\ind{t}(x_h\mid{}x_{h-1},a_{h-1})\indic_{x_h\neq\term}}\\
  &= \sup_{\pi\in\Pibara}\Enbar^{\pi}\brk*{\what_h\ind{t}(x_h\mid{}x_{h-1},a_{h-1})\indic_{x_{h-1},a_{h-1}\neq\term}}-\Enbar^{\pi\ind{t}}\brk*{\what_h\ind{t}(x_h\mid{}x_{h-1},a_{h-1})\indic_{x_{h-1},a_{h-1}\neq\term}},
\end{align*}
since $x_h\neq\term$ if and only if $x_{h-1},a_{h-1}\neq\term$. If we
define $\Qbar\ind{\pi}$ as the state-action value function for policy
$\pi$ in $\Mbar$ under the reward function given by
$r_{h-1}(x,a)=\Enbar\brk*{\what\ind{t}_h(x_h\mid{}x,a)\mid{}x_{h-1}=x,a_{h-1}=a}\indic_{x,a\neq\term}$
and $r_{h'}(x,a)=0$ for $h'<h-1$, the performance difference lemma
\citep{kakade2003sample} 
implies that
\begin{align*}
&\sup_{\pi\in\Pibara}\Enbar^{\pi}\brk*{\what_h\ind{t}(x_h\mid{}x_{h-1},a_{h-1})\indic_{x_{h-1},a_{h-1}\neq\term}}-\Enbar^{\pi\ind{t}}\brk*{\what_h\ind{t}(x_h\mid{}x_{h-1},a_{h-1})\indic_{x_{h-1},a_{h-1}\neq\term}}\\
  &=   \sup_{\pi\in\Pibara}\sum_{\ell=1}^{h-1}\Enbar^{\pi}\brk*{
    \Qbar_{\ell}^{\pi\ind{t}}(x_{\ell},\pi(x_\ell))
    -\Qbar_\ell^{\pi\ind{t}}(x_\ell,\pi\ind{t}(x_\ell))}.
\end{align*}
Consider an arbitrary policy $\pi\in\Pibara$, and fix
$\ell\in\brk{h-1}$, and write
\begin{align*}
  \Enbar^{\pi}\brk*{
    \Qbar_{\ell}^{\pi\ind{t}}(x_{\ell},\pi(x_\ell))
                 -\Qbar_\ell^{\pi\ind{t}}(x_\ell,\pi\ind{t}(x_\ell))}
  = \sum_{x\in\cX}\dbar_{\ell}^{\pi}(x)
    \prn*{\Qbar_{\ell}^{\pi\ind{t}}(x,\pi(x))
    -\Qbar_\ell^{\pi\ind{t}}(x,\pi\ind{t}(x))}
\end{align*}
For any $x\in\cX$, since $\pi\in\Pibara$, if
$\dbar^{\pi}_\ell(x)/\dbar_\ell^{p_\ell}(x)>\alpha$, then $\pi(x)=\term$,
which implies that
\begin{align*}
  \Qbar_{\ell}^{\pi\ind{t}}(x,\pi(x))
  -\Qbar_\ell^{\pi\ind{t}}(x,\pi\ind{t}(x))\leq -\Qbar_\ell^{\pi\ind{t}}(x,\pi\ind{t}(x))\leq0
\end{align*}
since the reward is non-negative, and since we can receive non-zero
reward only if $\pi(x_\ell)\neq\term$ for all $\ell\leq{}h-1$. Hence,
we can bound
\begin{align*}
  \sum_{x\in\cX}\dbar_{\ell}^{\pi}(x)
    \prn*{\Qbar_{\ell}^{\pi\ind{t}}(x,\pi(x))
  -\Qbar_\ell^{\pi\ind{t}}(x,\pi\ind{t}(x))}
&\leq  \sum_{x\in\cX}\dbar_{\ell}^{\pi}(x)
    \prn*{\Qbar_{\ell}^{\pi\ind{t}}(x,\pi(x))
                                               -\Qbar_\ell^{\pi\ind{t}}(x,\pi\ind{t}(x))}\indic\crl*{\dbar^{\pi}_\ell(x)/\dbar_\ell^{p_\ell}(x)\leq\alpha}\\
  &\leq  \sum_{x\in\cX}\dbar_{\ell}^{\pi}(x)
    \prn*{\max_{a\in\cA}\Qbar_{\ell}^{\pi\ind{t}}(x,a)
    -\Qbar_\ell^{\pi\ind{t}}(x,\pi\ind{t}(x))}\indic\crl*{\dbar^{\pi}_\ell(x)/\dbar_\ell^{p_\ell}(x)\leq\alpha}\\
  &\leq  \alpha\sum_{x\in\cX}\dbar_{\ell}^{p_\ell}(x)
    \prn*{\max_{a\in\cA}\Qbar_{\ell}^{\pi\ind{t}}(x,a)
    -\Qbar_\ell^{\pi\ind{t}}(x,\pi\ind{t}(x))}\indic\crl*{\dbar^{\pi}_\ell(x)/\dbar_\ell^{p_\ell}(x)\leq\alpha}\\
    &\leq  \alpha\sum_{x\in\cX}\dbar_{\ell}^{p_\ell}(x)
    \prn*{\max_{a\in\cA}\Qbar_{\ell}^{\pi\ind{t}}(x,a)
      -\Qbar_\ell^{\pi\ind{t}}(x,\pi\ind{t}(x))}\\
  &=  \alpha\Enbar^{p_\ell}
    \brk*{\max_{a\in\cA}\Qbar_{\ell}^{\pi\ind{t}}(x,a)
      -\Qbar_\ell^{\pi\ind{t}}(x,\pi\ind{t}(x))}.
\end{align*}
Above, the second inequality uses that i)
$\Qbar_\ell^{\pi\ind{t}}(x,a)=0$ for all $a\in\cAbar$ if $x=\term$ and ii) $\pi(x)\in\cA$ if
  $x\neq\term$ but
  $\dbar^{\pi}_\ell(x)/\dbar_\ell^{p_\ell}(x)\leq\alpha$. Finally, we note that
\begin{align*}
  \Enbar^{p_\ell}
    \brk*{\max_{a\in\cA}\Qbar_{\ell}^{\pi\ind{t}}(x_\ell,a)
  -\Qbar_\ell^{\pi\ind{t}}(x_\ell,\pi\ind{t}(x_\ell))}
  = \En^{p_\ell}
    \brk*{\max_{a\in\cA}Q_{\ell}^{\pi\ind{t}}(x_\ell,a;\what_h\ind{t})
  -\Qbar_\ell^{\pi\ind{t}}(x_\ell,\pi\ind{t}(x_\ell);\what_h\ind{t})},
\end{align*}
since i) policies in the support of $p_\ell$ never take the terminal
action, and ii) $\Qbar_\ell^{\pi\ind{t}}(x,a)
= Q_\ell^{\pi\ind{t}}(x,a; \what_h\ind{t})$ whenever $x,a\neq\term$,
since $\pi\ind{t}$ never takes the terminal action.
\end{proof}

\extendedtranslation*

\begin{proof}[\pfref{lem:extended_translation}]
  Since policies $\pi\in\Pi$ never take the terminal action, we can write
  \begin{align*}
&    \sup_{\pi\in\Pi}\En^{\pi}\brk*{\sum_{h=1}^{H}r_h}
    -
                   \sup_{\pi\in\Pibar_{\alpha}}\Enbar^{\pi}\brk*{\sum_{h=1}^{H}r_h}\\
    &=    \sup_{\pi\in\Pibar_{\alpha,0}}\Enbar^{\pi}\brk*{\sum_{h=1}^{H}r_h}
    -
      \sup_{\pi\in\Pibar_{\alpha,H}}\Enbar^{\pi}\brk*{\sum_{h=1}^{H}r_h}\\
          &=   \sum_{\ell=1}^{H}\prn*{\sup_{\pi\in\Pibar_{\alpha,\ell-1}}\Enbar^{\pi}\brk*{\sum_{h=1}^{H}r_h}
    -
    \sup_{\pi\in\Pibar_{\alpha,\ell}}\Enbar^{\pi}\brk*{\sum_{h=1}^{H}r_h}}
  \end{align*}
  by telescoping.
  Fix $\ell\in\brk*{H}$. Let $\pi\in\Pibar_{\alpha,\ell-1}$ be
  arbitrary, and let $\pi'\in\Pibar_{\alpha,\ell}$ denote the
  policy with $\pi'_{h}=\pi_{h}$ for $h\neq\ell$, and with
  \begin{align}
    \label{eq:trunc_reminder}
    \pi'_\ell(x)
      = \left\{
      \begin{array}{ll}
        \pi_\ell(x),&\quad \frac{\dbar^{\pi}_\ell(x)}{\dbar^{p_{\ell}}_\ell(x)} \leq \alpha,\\
        \term,&\quad \frac{\dbar^{\pi}_\ell(x)}{\dbar^{p_{\ell}}_\ell(x)} > \alpha.
      \end{array}
      \right.
      \end{align}
      Let $\Qbar^{\pi}_h(x,a)\ldef{}\Enbar^{\pi}\brk*{\sum_{h'=h}^{H}r_{h'}\mid{}x_h=x,a_h=a}$ denote the Q-function for $\crl*{r_h}$ in
      $\Mbar$. Then by the performance difference lemma
      \citep{kakade2003sample}, we have
      \begin{align*}
        &\Enbar^{\pi}\brk*{\sum_{h=1}^{H}r_h}
        -         \Enbar^{\pi'}\brk*{\sum_{h=1}^{H}r_h} \\
        &= \Enbar^{\pi}\brk*{\sum_{h=1}^{H}\Qbar^{\pi'}_h(x_h,\pi(x_h))
          - \Qbar^{\pi'}_h(x_h,\pi'(x_h))
          }\\
        &= \Enbar^{\pi}\brk*{\Qbar^{\pi'}_\ell(x_\ell,\pi(x_\ell))
          - \Qbar^{\pi'}_\ell(x_\ell,\pi'(x_\ell))
          },
      \end{align*}
      since the policies agree unless $h=\ell$. Since
      $\Qbar^{\pi'}_{\ell}\in\brk*{0,1}$ by the normalization assumption on the
      rewards and $\Qbar^{\pi'}_{\ell}(\term,a)=0$ for all $a\in\cAbar$, we have
      \begin{align*}
        \Enbar^{\pi}\brk*{\Qbar^{\pi'}_\ell(x_\ell,\pi(x_\ell))
          - \Qbar^{\pi'}_\ell(x_\ell,\pi'(x_\ell))
        }
        &\leq \bbPbar^{\pi}\brk*{\pi'(x_\ell)\neq\pi(x_\ell),x_\ell\neq\term}
           \\
        &= \bbPbar^{\pi}\brk*{
          \frac{\dbar^{\pi}_\ell(x_\ell)}{\dbar^{p_{\ell}}_\ell(x_\ell)} > \alpha,x_\ell\neq\term
          },
      \end{align*}
      where the final equality follows from \cref{eq:trunc_reminder}.
      Since this result holds uniformly for all
      $\pi\in\Pibar_{\alpha,\ell-1}$, we conclude that
      \begin{align*}
        &\sup_{\pi\in\Pibar_{\alpha,\ell-1}}\Enbar^{\pi}\brk*{\sum_{h=1}^{H}r_h}
        -
          \sup_{\pi\in\Pibar_{\alpha,\ell}}\Enbar^{\pi}\brk*{\sum_{h=1}^{H}r_h}\\
                &\leq \sup_{\pi\in\Pibar_{\alpha,\ell-1}}\bbPbar^{\pi}\brk*{
          \frac{\dbar^{\pi}_\ell(x_\ell)}{\dbar^{p_{\ell}}_\ell(x_\ell)} > \alpha,x_\ell\neq\term
                  }\\
        &= \sup_{\pi\in\Pibar_{\alpha}}\bbPbar^{\pi}\brk*{
          \frac{\dbar^{\pi}_\ell(x_\ell)}{\dbar^{p_{\ell}}_\ell(x_\ell)} > \alpha,x_\ell\neq\term
          },
      \end{align*}
      where the final equality uses that every policy in
      $\Pibar_{\alpha,\ell-1}$ has a counterpart in
      $\Pibar_{\alpha}=\Pibar_{\alpha,H}$ that takes identical actions
      for layers $1,\ldots,\ell-1$.
  
    \end{proof}

\pushforwardextended*
    
    \begin{proof}[\pfref{lem:pushforward_extended}]
  We follow a proof similar to \cref{prop:pushforward_relaxation}. For any
$\ell$, we can write
\begin{align*}
  &\sup_{\pi\in\Pibar_{\alpha}}\bbPbar^{\pi}\brk*{
          \frac{\dbar^{\pi}_\ell(x_\ell)}{\dbar^{p_{\ell}}_\ell(x_\ell)} > \alpha,x_\ell\neq\term
  }\\
  &=\sup_{\pi\in\Pibar_{\alpha}}\bbPbar^{\pi}\brk*{
          \frac{2\dbar^{\pi}_\ell(x_\ell)}{\dbar^{p_{\ell}}_\ell(x_\ell)+\alpha^{-1}\dbar^{\pi}_\ell(x_\ell)} > \alpha,x_\ell\neq\term
    }\\
    &\leq\frac{2}{\alpha}\sup_{\pi\in\Pibar_{\alpha}}\Enbar^{\pi}\brk*{
\frac{\dbar^{\pi}_\ell(x_\ell)}{\dbar^{p_{\ell}}_\ell(x_\ell)+\alpha^{-1}\dbar_\ell^{\pi}(x_\ell)}\indic_{x_\ell\neq\term}
      }
  =\frac{2}{\alpha}\sup_{\pi\in\Pibar_{\alpha}}\sum_{x\in\cXbar}
\frac{(\dbar^{\pi}_\ell(x))^2}{\dbar^{p_{\ell}}_\ell(x)+\alpha^{-1}\dbar_\ell^{\pi}(x)}\indic_{x\neq\term}.
\end{align*}
  By \cref{lem:convex_f}, the function
  \[
    d\mapsto{}
    \frac{(d)^2}{\dbar_\ell^{p_\ell}(x)+\alpha^{-1}\cdot{}d}
  \]
  is convex for all $x$. Hence, writing
  $\dbar^{\pi}_\ell(x)\indic_{x\neq\term}=\Enbar^{\pi}\brk*{\Pbar_{\ell-1}(x\mid{}x_{\ell-1},a_{\ell-1})\indic_{x\neq\term}}$,
  Jensen's inequality implies that for all $x$,
  \begin{align*}
    \frac{(\dbar^{\pi}_\ell(x))^2}{\dbar_\ell^{p_\ell}(x)+\alpha^{-1}\cdot{}\dbar_{\ell}^{\pi}(x)}\indic_{x\neq\term}
    \leq{}\Enbar^{\pi}\brk*{
    \frac{(\Pbar_{\ell-1}(x\mid{}x_{\ell-1},a_{\ell-1}))^2}{\dbar_\ell^{p_\ell}(x)+\alpha^{-1}\cdot{}\Pbar_{\ell-1}(x\mid{}x_{\ell-1},a_{\ell-1}) }
    \indic_{x\neq\term}}.
  \end{align*}
  We conclude that
  \begin{align*}
    \sup_{\pi\in\Pibar_{\alpha}}\bbPbar^{\pi}\brk*{
    \frac{\dbar^{\pi}_\ell(x_\ell)}{\dbar^{p_{\ell}}_\ell(x_\ell)} > \alpha,x_\ell\neq\term
    }
    \leq{} \frac{2}{\alpha}\sup_{\pi\in\Pibara}\Enbar^{\pi}\brk*{
    \frac{\Pbar_{\ell-1}(x_\ell\mid{}x_{\ell-1},a_{\ell-1})}{\dbar_\ell^{p_\ell}(x_\ell)+\alpha^{-1}\cdot{}\Pbar_{\ell-1}(x_\ell\mid{}x_{\ell-1},a_{\ell-1}) }
    \indic_{x_\ell\neq\term}}.
  \end{align*}
  
\end{proof}

\subsubsection{Proof of \creftitle{lem:weight_function_mf} (Guarantee
  for \estimateweight)}
        \newcommand{\wcheck}{\check{w}}%
        \newcommand{\wbar}{\wb{w}}%
        Let $\wbar_h\ind{t}\ldef{}t\cdot{}w_h\ind{t}$ and let
        $\wcheck_h\ind{t}\ldef{}t\cdot\what_h\ind{t}$. Observe that
        solving the optimization problem in \cref{line:weight_alg} of
        \cref{alg:estimate_weight} is equivalent to solving the
        optimization problem in \cref{eq:weight_estimator_log} over
        the class $t\cdot{}\cW_h$, which has $\nrm*{w'}_{\infty}\leq{}t$
        for all $w'\in\cW_h$. As such, we can appeal to
        \cref{thm:weight_estimator_log}
        (in particular, \cref{rem:contextual_weight}) with
        \[
          \mu(x'\mid{}x,a)=\Pmstar_{h-1}(x'\mid{}x,a),\quad\nu(x'\mid{}x,a)=\frac{1}{t}\prn*{\sum_{i<t}d_h^{\sMstar,\pi\ind{h,i}}(x')
            + \Pmstar_{h-1}(x'\mid{}x,a)},
        \]
        and
        \[
          \omega(x,a) =
          \frac{1}{2}\prn*{d_{h-1}^{\sMstar,p_{h-1}}(x,a) + \frac{1}{t-1}\sum_{i<t}d_{h-1}^{\sMstar,\pi\ind{h,i}\circ_{h-1}\piunif}(x,a)}.
        \]
        Under \cref{ass:weight_realizability_strong}, we have
        \begin{align*}
          \frac{\mu(x'\mid{}x,a)}{\nu(x'\mid{}x,a)}
          = \wbar_h\ind{t}(x'\mid{}x,a) \in{} t\cdot\cW_h,
        \end{align*}
        so \cref{thm:weight_estimator_log} and
        \cref{rem:contextual_weight} imply that
        \begin{align*}
          \En_{(x_{h-1},a_{h-1})\sim{}\omega}
\brk*{\prn*{\sqrt{\wcheck_h\ind{t}(x_h\mid{}x_{h-1},a_{h-1})}-\sqrt{\wbar_h\ind{t}(x_h\mid{}x_{h-1},a_{h-1})}}^2}
          \leq \frac{20t\log(\abs{\cW}\delta^{-1})}{tn}
          =           \frac{20\log(\abs{\cW}\delta^{-1})}{n},
        \end{align*}
        or equivalently,
        \begin{align*}
          \En_{(x_{h-1},a_{h-1})\sim{}\omega}
          \brk*{\prn*{\sqrt{\what_h\ind{t}(x_h\mid{}x_{h-1},a_{h-1})}-\sqrt{w_h\ind{t}(x_h\mid{}x_{h-1},a_{h-1})}}^2}
          \leq{} \frac{20\log(\abs{\cW}\delta^{-1})}{tn}.
        \end{align*}
        From the definition of $\omega$, it follows that setting $n=\nweight(\eps,\delta) =
        \frac{40\log(\abs{\cW}\delta^{-1})}{\eps^2}$ is sufficient to
        achieve the desired bound. The total number of episodes is at
        most $2t\cdot\nweight(\eps,\delta)$.

        \qed

}

\arxiv{
\section{Proofs and Additional Details from \creftitle{sec:model_free}}
\label{app:model_free}

\clearpage

\section{Proofs from \creftitle{sec:structural}}
\label{app:structural}

}

\end{document}